\documentclass{article} 
\usepackage{iclr2026_conference,times}

\usepackage{bbm, amsthm, amsmath,amsfonts,amssymb}
\newtheorem{theorem}{Theorem}
\newtheorem{lemma}{Lemma}

\newtheorem{remark}{Remark}
\newtheorem{assumption}{Assumption}

\newcommand{\calA}{{\mathcal{A}}}

\newcommand{\calC}{{\mathcal{C}}}

\newcommand{\calM}{{\mathcal{M}}}
\newcommand{\calN}{{\mathcal{N}}}
\newcommand{\calO}{{\mathcal{O}}}
\newcommand{\calP}{{\mathcal{P}}}
\newcommand{\calQ}{{\mathcal{Q}}}

\newcommand{\calS}{{\mathcal{S}}}

\newcommand{\calV}{{\mathcal{V}}}

\newcommand{\bbE}{\mathbb{E}}

\newcommand{\bbP}{\mathbb{P}}

\newcommand{\bbR}{\mathbb{R}}

\newcommand{\bbZ}{\mathbb{Z}}

\newcommand\AlgName{\text{Primal-Dual Policy Optimization for Linear CMDPs with Adversarial Losses}}

\newcommand\Regret{\mathrm{Regret}(K)}
\newcommand\Violation{\mathrm{Violation}(K)}

\newcommand{\piunif}{\pi_{\mathrm{unif}}}

\newcommand{\interior}{\operatorname{int}}
\newcommand{\floor}[1]{\left\lfloor #1 \right\rfloor}
\newcommand{\ceil}[1]{\left\lceil #1 \right\rceil}
\DeclareMathOperator*{\argmax}{arg\,max}
\DeclareMathOperator*{\argmin}{arg\,min}

\usepackage{xcolor}
\definecolor{darkblue}{RGB}{0,0,139}
\usepackage[colorlinks=true, citecolor=darkblue, linkcolor=darkblue, urlcolor=darkblue]{hyperref}
\usepackage{url}
\usepackage{algorithm}
\usepackage[pdftex]{graphicx}
\usepackage{mathtools,flexisym, amsthm, xcolor}
\usepackage{algpseudocode}
\usepackage{cleveref}
\usepackage{subcaption}

\title{Primal-Dual Policy Optimization for \\ Linear CMDPs with Adversarial Losses}

\iclrfinalcopy

\author{Kihyun Yu,\ \ \  Seoungbin Bae \\
Department of Industrial and Systems Engineering\\
KAIST \\
\texttt{\{khyu99,sbbae31\}@kaist.ac.kr}
\And
Dabeen Lee
\thanks{Additional Affiliations: Research Institute of Mathematics, Seoul National University; Interdisciplinary Program in Artificial Intelligence, Seoul National University; Korea Institute for Advanced Study}  
\\
Department of Mathematical Sciences\\
Seoul National University\\
\texttt{dabeenl@snu.ac.kr}
}

\begin{document}

\maketitle

\begin{abstract}
Existing work on linear constrained Markov decision processes (CMDPs) has primarily focused on stochastic settings, where the losses and costs are either fixed or drawn from fixed distributions. However, such formulations are inherently vulnerable to adversarially changing environments. To overcome this limitation, we propose a primal-dual policy optimization algorithm for online finite-horizon {adversarial} linear CMDPs, where the losses are adversarially chosen under full-information feedback and the costs are stochastic under bandit feedback. Our algorithm is the \emph{first} to achieve sublinear regret and constraint violation bounds in this setting, both bounded by $\widetilde{\mathcal{O}}(K^{3/4})$, where $K$ denotes the number of episodes. The algorithm introduces and runs with a new class of policies, which we call weighted LogSumExp softmax policies, designed to adapt to adversarially chosen loss functions. Our main result stems from the following key contributions: (i) a new covering number argument for the weighted LogSumExp softmax policies, and (ii) two novel algorithmic components---periodic policy mixing and a regularized dual update---which allow us to effectively control both the covering number and the dual variable. We also report numerical results that validate our theoretical findings on the performance of the algorithm.
\end{abstract}

\section{Introduction}\label{sec:intro}
Safe reinforcement learning (RL) studies sequential decision-making under safety constraints through interaction with an unknown environment. Many real-world applications have been explored under the safe RL framework, including autonomous driving~\citep{driving-isele2018safe}, robotics~\citep{robotics-achiam2017constrained}, and healthcare~\citep{healthcare-coronato2020reinforcement}. A common modeling framework for safe RL is the online constrained Markov decision process (CMDP) formulation, where the agent seeks a policy that minimizes (or maximizes) cumulative expected loss (or reward), while ensuring that the expected cumulative cost does not exceed a given budget~\citep{altman2021constrained}.

To better capture realistic scenarios, it is often necessary to model adversarial environments, where components may vary arbitrarily over time. For instance, in autonomous driving, the loss may reflect safety risks such as sudden braking, but it can also increase drastically due to unexpected traffic or hazardous weather. In service robotics, the loss may correspond to task failures or user dissatisfaction, which fluctuate with human preferences or rapidly changing tasks. In such applications, assuming a fixed loss signal is overly restrictive. Therefore, to model these scenarios, it is essential to consider CMDPs under adversarial settings.

Online adversarial CMDPs assume that the loss or cost functions can change arbitrarily across episodes, rather than being drawn from fixed stochastic distributions. Recently, adversarial CMDPs have been investigated under the tabular setting~\citep{qiu2020upper, stradi2024online, stradi2025learning, stradi2025policy, zhu2025optimistic}. Although these works achieved sublinear regret and constraint violation bounds, they focused only on environments where the state space is finite and relatively small. As a result, their algorithmic guarantees may not extend to settings with large state spaces. Such algorithms are often unsatisfactory in real-world applications, where the number of states is typically extremely large.

To capture settings with a large state space, safe RL with linear function approximation has been studied. \citet{ding2021provably} proposed a primal-dual policy optimization algorithm for linear mixture CMDPs, where the dynamics are expressed as a mixture of a finite set of basis kernels. For linear CMDPs, assuming linear structure in the loss and cost functions as well as in the dynamics, \citet{ghosh2022provably, ghosh2024towards} designed a primal-dual-type optimistic value iteration algorithm with a softmax policy. For the same setting, \citet{kitamura2025provably} developed an algorithm achieving zero constraint violation under the assumption of a known safe policy. Although these algorithms can handle large state spaces, they considered only stochastic settings, where the loss and cost functions are fixed or drawn from underlying distributions. That said, they fail to capture the aforementioned applications, where taking into account adversarial environments is essential when modeling safe RL algorithms.

To overcome these limitations, this paper proposes an algorithm for adversarial linear CMDPs. To handle adversarial losses with constraints, as online primal-dual mirror descent type algorithms become a standard choice, it is natural to consider primal-dual policy optimization for our setting~\citep{chen2021primal, ding2021provably}. However, when applying primal-dual policy optimization to adversarial linear CMDPs, additional challenges arise—most notably in bounding the covering number of the value function class~\citep{jin2020provably}.

To elaborate on this challenge, primal-dual policy optimization induces a more intricate policy class, necessitating a new covering number argument. In particular, slightly perturbing the primal variable before optimizing it---namely, the policy in our case---is commonly used in various settings~\citep{wei2020online, qiu2020upper, ding2021provably, stradi2025policy}. The purpose of this step is to derive a compact dual variable bound, which is essential for regret and violation analyses. However, this step breaks the recursive structure of policy optimization, so the resulting policy cannot be represented as a typical softmax policy. As a consequence, the covering number argument becomes non-trivial. In other words, while policy mixing is simple and common, it poses a critical issue for covering number arguments in linear CMDPs. Despite these challenges, we aim to answer the following question:
\begin{center}
    \emph{Can we design a primal-dual policy optimization algorithm for adversarial linear CMDPs that ensures sublinear regret and violation bounds?}
\end{center}

\paragraph{Main Contributions} We answer the question affirmatively with \Cref{alg:main-linear}, designed for finite-horizon adversarial linear CMDPs, where the losses are adversarially chosen under full-information feedback, and the costs are stochastic under bandit feedback. We summarize our main contributions.
\begin{itemize}   
    \item We present a primal-dual policy optimization algorithm (\Cref{alg:main-linear}) for adversarial linear CMDPs that achieves regret and constraint violation upper bounds of $\widetilde\calO(K^{3/4})$, where $K$ is the number of episodes. Our algorithm is the \emph{first} algorithm that achieves sublinear regret and violation in the adversarial linear CMDP setting. Moreover, the algorithm develops a new class of policies, which we refer to as weighted LogSumExp softmax policies, designed to adapt to adversarially chosen loss functions.  
    
    \item We establish a covering number argument for the novel class of {weighted LogSumExp softmax policies}, induced by primal-dual policy optimization algorithms. The main technical difficulty arises from the fact that the weight parameters across policies may differ, preventing direct application of standard properties of the LogSumExp function. Nevertheless, our analysis shows that the covering number under this policy class is bounded by $\widetilde\calO(n^2d^2)$, where $n$ is the maximum number of mixing steps, and $d$ is the feature dimension of the linear CMDP.
    
    \item Another challenge in designing a sublinear algorithm for adversarial linear CMDPs lies in the need to simultaneously control both the covering number and the dual variable. To address this, our algorithm incorporates two novel components: (i) {periodic policy mixing} and (ii) regularized dual updates. Since the covering number grows with the number of mixing steps, the purpose of periodic policy mixing is to regulate the frequency of mixing steps by applying it once in every specified mixing period, rather than in every episode. To incorporate periodic policy mixing, our dual update has to introduce an additional regularization term in order to obtain a compact bound on the dual variable. Together, these algorithmic components allow us to effectively control both the covering number and the dual variable, establishing a sublinear algorithm.
\end{itemize}

A more detailed review of related work is deferred to the appendix.

\section{Problem Setting}

\paragraph{Finite-Horizon Adversarial CMDP} A finite-horizon adversarial CMDP is defined by the tuple $\calM = (H,\calS, \calA, \{\bbP_h\}_{h=1}^H$, $\{f^k\}_{k=1}^K, \{g^k\}_{k=1}^K, s_1,b)$, where $H$ is the finite horizon, $\calS$ is the finite state space\footnote{For simplicity, we assume that the state space is finite. However, the state space may be arbitrarily large, as discussed in \citet{cassel2024near}, since the computational complexity of our algorithm---as well as the regret and constraint violation---does not scale with $|\calS|$, which will be presented in the following section.}, and $\calA$ is the finite action space.
$\{\bbP_h\}_{h=1}^H$ is a collection of transition kernels for each step $h\in[H]$, where $\bbP_h(s'\mid s,a)$ denotes the probability of transitioning from state $s$ to state $s'$ when action $a$ is taken at step $h$. $\{f^k\}_{k=1}^K$ and $\{g^k\}_{k=1}^K$ are the sequences of loss and cost functions over episodes $k\in[K]$, where $f^k = \{f_h^k\}_{h=1}^H$ and $g^k = \{g_h^k\}_{h=1}^H$ satisfy $f_h^k, g_h^k: \calS \times \calA \to [0,1]$. $s_1\in\calS$ is the fixed initial state, and $b\in [0,H]$ is the cost budget.

 We consider a setting where the loss functions are adversarial, while the cost functions are stochastic. Specifically, at the beginning of each episode $k\in [K]$, an adversary chooses the loss function $f^k$, which can be selected arbitrarily (i.e., not drawn from a distribution). In contrast, the cost function $g^k$ is sampled i.i.d. from a fixed distribution $G$, satisfying $\bbE[g_h^k(s,a)\mid s,a] = g_h(s,a)$.

 The interaction between the agent and the environment proceeds as follows. At the beginning of each episode $k\in [K]$, $f^k$ is adversarially chosen, which is not revealed to the agent. Next, the agent selects a collection of policies $\{\pi_h\}_{h=1}^H$, where $\pi_h(a\mid s)$ denotes the probability of taking action $a$ given state $s$ at step $h$. Once the episode begins, at each step $h\in[H]$, the agent samples an action $a_h\sim\pi_h(\cdot\mid s_h)$. Upon taking $a_h$, the agent observes $f_h^k$ and $g_h^k(s_h,a_h)$, which are full-information feedback for the adversarial loss and bandit feedback for the stochastic cost, respectively. Lastly, the next state is sampled as $s_{h+1}\sim \bbP_h(\cdot\mid s_h,a_h)$.

 We define the value function and the $Q$-function. Let $V_{\ell,h}^\pi(s)$ denote the value function at state $s$ and step $h$ with respect to a function $\ell = \{\ell_h\}_{h=1}^H$ and policy $\pi$, which is written as $V_{\ell,h}^\pi(s) = \bbE_{\bbP, \pi}[\sum_{j=h}^H \ell_j(s_j,a_j)\mid s_h=s]$. 
Similarly, the $Q$-function $Q_{\ell,h}^\pi(s,a)$ is defined
 as $Q_{\ell,h}^\pi(s,a) = \bbE_{\bbP, \pi}[\sum_{j=h}^H \ell_j(s_j,a_j)\mid s_h=s, a_h=a]$.

 We define the performance metrics---regret and constraint violation---as follows. Given a sequence of policies $\pi^1,\ldots,\pi^K$ generated by the agent, the regret and constraint violation for $K$ episodes are defined as $\Regret = \sum_{k=1}^K(V_{f^k,1}^{\pi^k}(s_1) - V_{f^k,1}^{\pi^*}(s_1))$ and $\Violation = \left[\sum_{k=1}^K(V_{g,1}^{\pi^k}(s_1) -b)\right]_+$, where $[\cdot]_+$ denotes $\max\{\cdot,0\}$. Here, $\pi^*$ is an optimal policy, defined as a solution to the following optimization problem over the set of all policies $\Pi$: $\pi^* \in\argmin_{\pi\in\Pi} \sum_{k=1}^K V_{f^k,1}^\pi(s_1) \ \text{s.t.}\ V_{g,1}^{\pi}(s_1) \leq b.$

\paragraph{Linear CMDP} We consider a class of CMDP instances with an underlying linear structure, referred to as linear CMDP~\citep{ghosh2022provably}. Let $\phi:\calS\times\calA\rightarrow \bbR^d$ denote the known feature mapping. With the feature $\phi$, the transition kernel is defined as $\bbP_h(s'\mid s,a) = \phi(s,a)^\top \psi_h(s')$
where $\psi_h(s')\in\bbR^d$ is an unknown signed measure. Similarly, the loss and cost functions are assumed to be linear in $\phi$ and are defined as $f_h^k(s,a) = \phi(s,a)^\top \theta_{f,h}^k$ and $g_h(s,a) = \phi(s,a)^\top \theta_{g,h}$, where $\theta_{f,h}^k, \theta_{g,h} \in \bbR^d$ are unknown parameters. Moreover, we further assume that the parameters for linear CMDPs are all bounded as follows. For all $(s,a,h,k)\in\calS\times\calA \times[H] \times[K]$, we have $\|\phi(s,a)\|_2 \leq 1$ and $\max\{\|\sum_{s'\in\calS}|\psi_h|(s')\|_2, \|\theta_{f,h}^k\|_2, \|\theta_{g,h}\|_2\}\leq \sqrt{d}$, where $|\psi_h|(s')$ denotes $(|(\psi_h(s'))_1|, |(\psi_h(s'))_2|, \ldots, |(\psi_h(s'))_d|)^\top \in \bbR^d$.

 Next, we introduce the Slater condition, which is a mild assumption commonly made in the CMDP literature~\citep{efroni2020exploration, liu2021learning, ding2021provably, ghosh2022provably}.
\begin{assumption}[Slater Condition]\label{ass:slater}
    We assume that there exists a Slater policy $\bar\pi\in\Pi$ such that $V_{g,1}^{\bar\pi}(s_1) +\gamma\leq b$ for some Slater constant $\gamma >0$. 
\end{assumption}

\section{Challenges and Novel Techniques}\label{sec:novelty}


\paragraph{Novelty 1: Analysis for Weighted LogSumExp Softmax} 
We construct a covering number argument with a new policy structure---weighted LogSumExp softmax policies---which arises from combining policy optimization with policy mixing. This policy is given by the weighted sum of exponentials of sums of $Q$-function estimates\footnote{We call this formulation the weighted LogSumExp softmax, as it is equivalent to $\widehat\pi^k\propto\exp(\log(\sum_{i}\zeta_i\exp(-\alpha \sum_{j}\widehat Q^j)))$---a softmax of weighted LogSumExp with respect to $-\alpha \sum_{j}\widehat Q^j$.}: given a step size $\alpha$, weight parameters $\zeta_i$, and $Q$-function estimates $\widehat Q^j$,
\begin{align}\label{eq:weighted LSE softmax}
    \widehat\pi^k \propto \sum_{i=1}^k\zeta_i \exp\left(-\alpha \sum_{j=k-i}^{k-1}\widehat Q^j\right).
\end{align}
Let us explain how \eqref{eq:weighted LSE softmax} arises in primal–dual policy optimization. Perturbing the primal variable before optimizing it is a simple yet effective technique for controlling the scale of the dual variable~\citep{wei2020online, qiu2020upper}. In the context of policy optimization, this technique translates into the following update: given a uniform policy $\piunif$ over $\calA$ and a mixing parameter $\theta$,
\begin{align*}
    \underbrace{\widehat\pi^{k-1} \leftarrow (1-\theta)\widehat\pi^{k-1} + \theta\piunif}_{\text{Policy Mixing}} \quad\text{and then}\quad \underbrace{\widehat\pi^k \propto \widehat\pi^{k-1}\exp(-\alpha \widehat Q^{k-1})}_{\text{Policy Optimization}}.
\end{align*}
Here, the additive relation (Policy Mixing) breaks the recursion in the proportional relation (Policy Optimization). As a consequence, the resulting policy takes the form of the weighted LogSumExp softmax. In particular, \eqref{eq:weighted LSE softmax} may assign different weights $\zeta_i$ to partial sums $\sum_{j=k-i}^{k-1}\widehat Q^j$ for $i\in[k]$. This yields a more expressive policy compared to the case without policy mixing, where the update simplifies to $\widehat\pi^k\propto \exp(-\alpha\sum_{j=0}^{k-1} \widehat Q^j)$.

Our first contribution is to provide a new covering number argument for the value function class, where the policy is given by \eqref{eq:weighted LSE softmax}. For comparison, \citet{jin2020provably} studied the greedy policy, where the policy is defined as $\argmax_{a\in\calA} \widehat Q^k$. Then the covering number can be analyzed since the $\max$ operation is a contraction mapping. Moreover, the simple softmax policy has been studied in several works, e.g., \citet{ghosh2022provably}. In that case, leveraging well-established Lipschitz properties of the softmax function is sufficient to analyze the covering number.

We note that constructing a covering number argument for \eqref{eq:weighted LSE softmax} is non-trivial. The main difficulty is that the weight parameters $\{\zeta_i\}_{i=1}^k$ depend not only on the mixing parameter $\theta$ but also on $Q$-function estimates. This means that different policies can have different weight parameters $\{\zeta_i\}_{i=1}^k$, and thus, well-known properties of LogSumExp cannot be applied. Despite these challenges, our analysis shows that the logarithm of the covering number under \eqref{eq:weighted LSE softmax}, denoted by $\log\calN_\epsilon$, grows quadratically with $n$, where $n$ is the maximum number of mixing steps during the learning process:
\begin{align} \label{eq:covering informal}
    \log \calN_\epsilon =\widetilde\calO(n^2d^2).
\end{align}

\paragraph{Novelty 2: Periodic Policy Mixing}
However, deriving an upper bound on the covering number alone is not sufficient to guarantee sublinear regret and violation. In particular, if mixing is applied in every episode, then $\log\calN_\epsilon$ grows to the order of $\widetilde\calO(K^2d^2)$, which is too large to yield a sublinear guarantee. On the other hand, if mixing is performed insufficiently, then the dual variable cannot be bounded, which is critical for violation analysis. These observations highlight an inherent trade-off between the covering number and the size of dual variables, both of which heavily depend on the frequency of mixing. This necessitates a new algorithmic component to balance the two.

The aforementioned trade-off motivates our second contribution
---periodic policy mixing---which applies the policy mixing every $K^B$ episodes\footnote{For simplicity, we assume that $K^B, K^{1-B}$ are integers to avoid additional notation such as $\floor{K^B}$.}, where $B$ is a period parameter between 0 and 1. The purpose of the periodic policy mixing is to balance the covering number and the size of dual variables. The covering number can be easily observed from \eqref{eq:covering informal}, since the number of mixing steps is at most $K^{1-B}$ (i.e., the number of episodes divided by the mixing period). However, it remains unclear whether periodic policy mixing is effective in controlling the dual variable. To address this, in the next paragraph, we show that the dual variable can indeed be bounded when periodic policy mixing is combined with a new dual update rule.

\paragraph{Novelty 3: Regularized Dual Update}
Our third contribution is another algorithmic component---a new dual update rule with additional regularization. When combined with periodic policy mixing, the dual variable $Y_k$ is bounded by $\widetilde\calO(\eta K^B)$, where $\eta$ is the step size and $K^B$ is the mixing period. For clarity, this bound omits the dependence on $\gamma, H$ to highlight the impact of the mixing period.

To elaborate on our dual update method, it takes the following form: given regularization parameters $c_1,c_2 > 0$ and the cost value function estimate $\widehat V_{g,1}^k$,
\begin{align*} 
    Y_{k+1} \leftarrow [Y_k + \eta(\widehat V_{g,1}^k(s_1) - b) +\underbrace{(-c_1Y_k - c_2)}_{\text{Regularization}}]_+.
\end{align*}
For interpretation, $Y_k + \eta(\widehat V_{g,1}^k(s_1) - b)$ corresponds to the standard online gradient ascent step for dual updates in primal-dual algorithms, while the regularization term $(-c_1Y_k - c_2)$ pulls the dual variable towards $0$, keeping it compact. 
The regularization parameters $c_1,c_2$ will be specified in the following paragraph, along with the intuition for our design.

The intuition behind our regularization is that it serves as a crucial ingredient for a drift-based analysis, a well-known method for bounding dual variables (see, e.g., \citet{yu2017online, wei2020online} in constrained online convex optimization). To enable drift analysis, these works typically incorporate an inner product term in the dual update, determined by the decision variables and the gradient, i.e., $\langle x_{t+1}-x_t, \nabla_t \rangle$. In primal-dual policy optimization, we realize that this translates to a term involving the transition kernel, i.e., $\bbE_{\bbP}[\langle \widehat\pi^{k+1} - \widehat\pi^k, \widehat Q^k\rangle]$. However, since the transition kernel is unknown, this term cannot be directly incorporated into our algorithm. Instead, we take a lower bound on this term, which becomes our regularizing component with the choice of $c_1=4\alpha\eta H^3$ and $c_2 = 4\alpha\eta H^3 + 4\theta\eta H^2$. In this way, our dual update can be viewed as a key adaptation that enables drift analysis in primal-dual policy optimization for adversarial linear CMDPs. 

\section{Algorithm}\label{sec:algorithm}

We present \AlgName (\Cref{alg:main-linear}). The algorithm consists of four main components: epoch initialization (lines 2-7), policy execution and estimation (lines 8-19), policy optimization with periodic policy mixing (lines 20-26), and updating the dual variable (line 27).
\begin{algorithm}[t]
\caption{\AlgName}
\label{alg:main-linear}
\textbf{Input:} $\delta\in (0,1),\beta_b=2\sqrt{2d\log(6KH/\delta)} + 50(K^{1/4}+1)dH\sqrt{\log(5H^2K^2|\calA|/\delta)}, \beta_w=4\beta_b\log K, \alpha=H^{-1}K^{-3/4}, \eta=H^{-2}K^{-3/4}, \theta=K^{-1}$ \\
\textbf{Initialization:} $Y_1\leftarrow 0, \ e \leftarrow 0,\ \Lambda_h^1 \leftarrow I,\ \widehat V_{\ell,H+1}^k(s)\leftarrow 0 \quad \forall (h,k,s,\ell) \in [H]\times[K]\times \calS \times \{f,g\}$ 
\begin{algorithmic}[1]
    \For{$k=1,\ldots,K$}

        \If{$k=1$ or $\exists h' \in [H]$ such that $\det(\Lambda_{h'}^k) \geq 2 \det(\Lambda_{h'}^{k_e})$} 
            \State $e \leftarrow e+1$ and $k_e \leftarrow k$
            \State $\widehat\pi_h^{k_e}(\cdot \mid s) \leftarrow \piunif(\cdot\mid s)$ $\ \forall h\in[H]$ 
            \State $Y_{k_e} \leftarrow 0$
            \State $\bar\phi_h^{k_e}(\cdot,\cdot) = \phi(\cdot,\cdot) \cdot \sigma(-\beta_w \|\phi(\cdot,\cdot)\|_{(\Lambda_h^{k_e})^{-1}} + \log K)$ $\ \forall h\in [H]$ 
            \Comment{Feature Contraction}
        \EndIf

        \For{$h=1,\ldots,H$} 
            \State Take $a_h^k \sim \widehat\pi_{h}^k(\cdot\mid s_h^k)$, and observe $\theta_{f,h}^k,\ g_h^k(s_h^k, a_h^k),\ s_{h+1}^k\sim\bbP_h(\cdot\mid s_h^k,a_h^k)$ 
        \EndFor

        \For{$h=H,\ldots,1$} 
            \State $\Lambda_h^{k+1} \leftarrow I + \sum_{\tau\in[k]} \phi(s_h^\tau, a_h^\tau) \phi(s_h^\tau, a_h^\tau)^\top$
            \State $\widehat\theta_{f,h}^k \leftarrow \theta_{f,h}^k$
            \State $\widehat \theta_{g,h}^k \leftarrow  (\Lambda_h^k)^{-1} \sum_{\tau \in [k-1]} \phi(s_h^\tau, a_h^\tau) g_h^\tau(s_h^\tau, a_h^\tau)$
            \For{$\ell \in \{f,g\}$}
                \State $\widehat\psi_h^k \widehat V_{\ell,h+1}^k = (\Lambda_h^k)^{-1} \sum_{\tau\in[k-1]}\phi(s_h^\tau, a_h^\tau) \widehat V_{\ell,h+1}^k(s_{h+1}^\tau)$
                \State $\widehat Q_{\ell,h}^k(\cdot,\cdot) \leftarrow \bar\phi_h^{k_e}(\cdot,\cdot)^\top \left[\widehat \theta_{\ell,h}^k + \widehat\psi_h^k \widehat V_{\ell,h+1}^k\right] - \beta_b \|\bar\phi_h^{k_e}(\cdot,\cdot)\|_{(\Lambda_h^{k_e})^{-1}}$
                \State $\widehat V_{\ell,h}^k(\cdot) \leftarrow \sum_{a\in\calA} \widehat\pi_h^k(a\mid \cdot) \widehat Q_{\ell,h}^k(\cdot,a)$
            \EndFor
            \If{$k-k_e\equiv 0 \mod K^{3/4}$} \Comment{Periodic Policy Mixing}
                \State $\widetilde\pi_h^k(\cdot\mid s) \leftarrow (1-\theta)\widehat\pi_h^k(\cdot\mid s) + \theta\piunif(\cdot\mid s)$
            \Else
                \State $\widetilde\pi_h^k(\cdot\mid s) \leftarrow \widehat\pi_h^k(\cdot\mid s)$
            \EndIf
            \State $\widehat\pi_h^{k+1}(\cdot\mid s) \propto \widetilde\pi_h^k(\cdot\mid s)\exp\left(-\alpha( \widehat Q_{f,h}^{k}(s,\cdot) + Y_{k} \widehat Q_{g,h}^{k}(s,\cdot))\right)$ \Comment{Policy Optimization}
        \EndFor

        \State $Y_{k+1} \leftarrow \left[(1-4\alpha\eta H^3)Y_{k}+\eta\left(\widehat V_{g,1}^{k}(s_1)- b 
        -4\alpha H^3 - 4\theta H^2\right) \right]_+$ \Comment{Dual Update}
    \EndFor
\end{algorithmic}
\end{algorithm} 

In lines 2-7, the algorithm initializes a new epoch when the determinant of the design matrix $\Lambda_{h'}^k$ increases by a multiplicative factor compared to that of $\Lambda_{h'}^{k_e}$ for some $h'$. Once the initialization procedure begins, the algorithm sets the policy to the uniform policy and initializes the dual variable to $0$. Furthermore, it defines a contracted feature $\bar\phi_h^{k_e}$ by shrinking the original feature $\phi$. The multiplicative contraction factor is determined by $\sigma(-\beta_w \|\phi(\cdot,\cdot)\|_{(\Lambda_h^{k_e})^{-1}}+ \log K)$, where $\sigma$ denotes the sigmoid function, and $\|\phi(\cdot,\cdot)\|_{(\Lambda_h^{k_e})^{-1}}$ quantifies the current uncertainty of least-squares estimators. This contracted feature is then used in the estimation of $Q$-functions.

\begin{remark}\label{remark:feature contraction}
{\rm
The feature contraction---originally proposed by \citet{cassel2024warm} for adversarial linear (unconstrained) MDPs---is necessary for the following reason. Specifically, it provides a simpler expression for the policy, which is useful in covering number arguments, via a low-dimensional representation of the sum of $Q$-function estimates. For this, they omitted a clipping operation in the definition of $Q$-function estimates, so the sum collapses into a simple inner product with an optimistic bonus. Instead of clipping, they properly contracted the feature to prevent $Q$-function estimates from expanding uncontrollably. This technique can be replaced with other approaches with the same purpose, such as \citet{sherman2024rate}, but it may lead to higher dependence on $d,H$ in regret and violation bounds.
}    
\end{remark}

In lines 8-10, the algorithm takes action $a_h^k\sim\widehat\pi_h^k(\cdot|s_h^k)$ for each step $h\in [H]$ and observes $\theta_{f,h}^k, g_h^k(s_h^k,a_h^k)$, and $s_{h+1}^k\sim \bbP_h(\cdot|s_h^k,a_h^k)$. In lines 11-14, the design matrix $\Lambda_h^{k+1}$ is updated, and the parameters for the loss and cost functions are estimated, denoted by $\widehat\theta_{f,h}^k$ and $\widehat \theta_{g,h}^k$, respectively. Based on these, in lines 15-19, for each $\ell=f,g$, it computes the $Q$-function estimates $\widehat Q_{\ell,h}^k(s,a)$ using the contracted feature and the value function estimates $\widehat V_{\ell,h}^k(s)$, which are defined by the inner product of $\widehat\pi_h^k(\cdot|s)$ and $\widehat Q_{\ell,h}^k(s,\cdot)$ for each $s\in\calS$. 

In lines 20-24, the algorithm applies the policy mixing every $K^{3/4}$ episodes. Here, the mixed policy is obtained by taking a convex combination of $\widehat\pi_h^k$ and $\piunif$ with coefficients $1-\theta$ and $\theta$, respectively. 
After this, the algorithm performs policy optimization---equivalently, an online mirror descent step with Kullback-Leibler (KL) divergence over the policy space.

In line 27, the algorithm updates the dual variable, $Y_k$. First, it scales down the dual variable by a factor of $1-4\alpha\eta H^3$, and then adds $\eta(\widehat V_{g,1}^k(s_1) - b - 4\alpha H^3 - 4\theta H^2)$. Finally, it takes $[\cdot]_+$ to ensure that the dual variable remains nonnegative. 
 
The computational complexity of \Cref{alg:main-linear} is $\calO(d^3HK + d^2 |\calA|HK^2)$, which is independent of $|\calS|$. Specifically, in lines 2-7, computing determinants simply takes $\calO(d^3 HK)$ and contracting features takes $\calO(d^2K \cdot |\calA|\cdot HK)$, since the inverse of the design matrices can be computed in $\calO(d^2K)$ by applying the Sherman-Morrison formula. In lines 8-29, the dominant step is function estimation: lines 16-18 take $\calO(d^2|\calA|HK^2)$.

\subsection{Comparison of Dual Updates}
Since our algorithm is designed for adversarial linear CMDPs, it is worth comparing our dual update with that of algorithms for (i) stochastic linear CMDPs~\citep{ghosh2022provably} and (ii) tabular CMDPs with adversarial losses~\citep{qiu2020upper}.

First, we compare with \citet{ghosh2022provably}, which focused on stochastic linear CMDPs. The key difference comes from the way the dual variable is regularized. Specifically, while both approaches adopt the standard online gradient ascent procedure, their update rule truncates the dual variable at $2H/\gamma$ to ensure that it never exceeds this threshold. In contrast, our update incorporates an extra regularization term to keep the dual variable compact. Although their update is simple and effective in the stochastic setting, it cannot be extended to the adversarial setting, since their analysis relies on the fact that the loss and cost functions are fixed over episodes. This justifies the need for our design of dual update steps in handling adversarial losses.

Second, we compare with \citet{qiu2020upper}, which proposed an occupancy measure-based algorithm for adversarial tabular CMDPs. The main difference in the dual updates arises from the choice of primal variable: policy-based mirror descent versus occupancy measure-based mirror descent. Before elaborating on this, we recall a dual update from the constrained online convex optimization literature, proposed by \citet{wei2020online} with minor modifications: given a convex cost function $\ell:\bbR^d \rightarrow \bbR$ and primal variables $x^k, x^{k+1}\in\bbR^d$,
\[
    Y_{k+1} \leftarrow \left[Y_k + \eta\left(\ell(x^{k}) - b + \langle \nabla \ell(x^k), x^{k+1} - x^{k} \rangle  \right)\right]_+.
\]
Based on this update, let us show how the dual update for occupancy measure-based algorithms can be derived. Since the occupancy measure serves as the primal variable, we take $x^k\leftarrow q^k$, where $q^k$ denotes an occupancy measure in episode $k$. Furthermore, in CMDPs, note that the expected cost is given by $\langle g, q^k \rangle$\footnote{For simplicity, we assume that $g$ is deterministic and known, and $q^k$ is induced by the true transition kernel.}, where $g\in\bbR^{|\calS|\times|\calA|\times H}$ denotes a vector representation of the cost function. Then we can take $\ell(x^k) \leftarrow \langle g,q^k \rangle$ and $\nabla \ell(x^k) \leftarrow g$. This leads to $Y_{k+1}\leftarrow [Y_k + \eta (\langle g,q^{k+1}\rangle - b)]_+$, which is the key intuition behind the dual update in \citet{qiu2020upper}. 

However, this argument does not apply to policy-based mirror descent. This is because even if we take $x^k\leftarrow \pi^k$, the expected cost is not linear in $\pi^k$, unlike in the occupancy measure case. That said, the dual updates for occupancy measure-based algorithms can be extended from the online convex optimization literature, whereas extending this to policy-based algorithms is non-trivial. This highlights the significance of our proposed design.


\subsection{Main Result}
Finally, we present upper bounds on regret and constraint violation under \Cref{alg:main-linear}.
\begin{theorem}\label{thm:main} Let $H^2\leq K$ and Assumption \ref{ass:slater} hold. Suppose that we run \Cref{alg:main-linear}. Given $\delta >0$, with probability at least $1-\delta$, then we have
    \begin{align*}
        &\Regret = \widetilde\calO\left(\sqrt{d^3 H^4}K^{3/4} + dH^3 K^{3/4} + d^3 H^4 K^{1/2} + \frac{H^6}{\gamma^2} K^{1/4} + \frac{dH^6}{\gamma^2}\right),\\
        &\Violation = \widetilde\calO\left(\frac{dH^5}{\gamma}K^{3/4} + \sqrt{d^3 H^4}K^{3/4} +  d^3 H^4 K^{1/2}\right)
    \end{align*}
    where $\widetilde\calO \left(\cdot\right)$ hides polynomial factors in $\log(dHK|\calA|/(\delta\gamma))$.
\end{theorem}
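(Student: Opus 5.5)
The proof proceeds epoch by epoch and combines three ingredients: optimism of the estimates, an online-mirror-descent (OMD) regret bound for the Lagrangian, and a drift bound on $Y_k$.

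\textbf{Step 1: number of epochs.} Since $\det(\Lambda_h^k)\le (1+K/d)^d$ and a new epoch starts only when some determinant doubles, the number of epochs is $E=\calO(dH\log K)$. Within an epoch $[k_e,k_{e+1})$ the contracted features and bonuses are frozen.

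\textbf{Step 2: uniform concentration via covering numbers.} Unrolling line 25, $\widehat\pi_h^k$ has the weighted LogSumExp softmax form \eqref{eq:weighted LSE softmax} with $\widehat Q^j=\widehat Q_{f,h}^j+Y_j\widehat Q_{g,h}^j$. Because there is no clipping and the features are frozen, each partial sum $\sum_j\widehat Q^j$ equals $\bar\phi^\top w-(\sum_j(1+Y_j))\beta_b\|\bar\phi\|$, which is a low-dimensional parametrization. Mixing happens at most $n=K^{1/4}$ times per epoch. I would prove the covering bound \eqref{eq:covering informal}, i.e. $\log\calN_\epsilon=\widetilde\calO(n^2d^2)=\widetilde\calO(K^{1/2}d^2)$, by bounding the Lipschitz constant of the policy in each parameter separately, including the weights $\zeta_i$, in log-space. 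This is the main obstacle: the $\zeta_i$ differ across policies, so the standard LogSumExp Lipschitz argument does not apply directly. My first attempt would be to treat $(\zeta_i)$ as extra covered parameters and use that $\log\sum_i\zeta_ie^{x_i}$ is $1$-Lipschitz in $\log\zeta$ and in $x$ (sup norm). A self-normalized concentration bound with this covering gives $|(\widehat\psi-\psi)\widehat V|\le \beta_b\|\phi\|_{\Lambda^{-1}}$ with $\beta_b=\widetilde\calO(K^{1/4}dH)$, matching the input choice. The feature contraction argument of \citet{cassel2024warm} then shows that $\widehat Q$ stays bounded by $\calO(H)$ and is optimistic up to a small contraction error. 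Summing the bonuses with the elliptical potential lemma gives an estimation error of $\widetilde\calO(\beta_b\sqrt{dH^2K}\cdot H)$ for both $\ell=f,g$. This produces the $\sqrt{d^3H^4}K^{3/4}$ and $d^3H^4K^{1/2}$ terms, the latter coming from lower-order terms in the covering.

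\textbf{Step 3: Lagrangian OMD bound and drift.} I apply the performance difference lemma to $V_{f^k}+Y_kV_g$ and run the KL mirror descent analysis per state against the mixed comparator $(1-\theta)\pi^*+\theta\piunif$. Periodic mixing contributes $\calO(\theta KH^2)$, and the KL telescopes within each mixing block because mixing caps $\log(1/\pi)\le\log(|\calA|/\theta)$. This gives
$\sum_k\langle\widehat\pi^k-\pi^*,\widehat Q_f^k+Y_k\widehat Q_g^k\rangle\lesssim \tfrac{E K^{1/4}H\log(|\calA|K)}{\alpha}+\alpha H^3\sum_k(1+Y_k)^2$.
Next I take the Lyapunov function $Y_k^2/2$. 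The regularizer $-c_1Y_k-c_2$ lower bounds the drift term $\bbE_\bbP\langle\widehat\pi^{k+1}-\widehat\pi^k,\widehat Q^k\rangle$. Combining this with Slater's condition through $\bar\pi$ gives negative drift $-\eta\gamma/2$ whenever $Y_k\gtrsim \eta H^2/\gamma$ plus the accumulated excess. Applying this drift lemma blockwise over a mixing period $K^{3/4}$ yields $Y_k=\widetilde\calO(\eta K^{3/4}H^2/\gamma+H/\gamma)$ uniformly.

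\textbf{Step 4: combine.} For regret, I sum the OMD bound with the dual telescoping $\sum_kY_k(V_g-b)\le Y_1^2/(2\eta)+\eta\sum_k H^2+(c\text{-terms})$ and the estimation error from Step 2. Plugging in $\alpha,\eta,\theta$ gives the $dH^3K^{3/4}$ term, and the $Y$-bound squared gives $H^6K^{1/4}/\gamma^2$ and $dH^6/\gamma^2$. For violation, the dual update gives $\sum_k(\widehat V_g^k-b)\le \sum_k[Y_{k+1}-(1-c_1)Y_k]/\eta+K(4\alpha H^3+4\theta H^2)$. This is bounded by $E\max_kY_k/\eta+c_1K\max Y/\eta$. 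I then convert $\widehat V_g\to V_g$ with the estimation error, obtaining $\widetilde\calO(dH^5K^{3/4}/\gamma)$ plus the estimation terms. A union bound over all failure events completes the proof with probability $1-\delta$.
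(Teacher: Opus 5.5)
\textbf{Gap 1: the mirror-descent bound in Step 3 gives linear regret.} You restart the KL potential at every mixing block and pay $H\log(|\calA|/\theta)/\alpha$ per block. There are $|E|K^{1/4}$ blocks, and with $\theta=K^{-1}$ and $\alpha=H^{-1}K^{-3/4}$ this term equals $|E|H^2K\log(|\calA|K)=\widetilde\calO(dH^3K)$. That is linear regret, so Step 4 cannot extract $dH^3K^{3/4}$ from it.

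The missing observation is that mixing barely moves the KL to a \emph{fixed} comparator. For any $p\in\Delta(\calA)$, in particular $\pi_h^*(\cdot\mid s)$, we have $D(p\|\widetilde\pi_h^k)-D(p\|\widehat\pi_h^k)\le\theta\log|\calA|$ (\Cref{lem:KL mixing}), so no mixed comparator is needed. The only other price of a mixing step is $\|\widehat Q_{f,h}^k+Y_k\widehat Q_{g,h}^k\|_\infty\|\widetilde\pi_h^k-\widehat\pi_h^k\|_1\le 2\theta\|\widehat Q_{f,h}^k+Y_k\widehat Q_{g,h}^k\|_\infty$. Hence the potential telescopes over the whole epoch starting from $\widehat\pi_h^{k_e}=\piunif$. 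Each epoch pays $\log|\calA|/\alpha$ once, giving $|E|H\log|\calA|/\alpha=\widetilde\calO(dH^3K^{3/4})$ as in \Cref{lem:omd with mixing}.

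The cruder cap $D(\bar\pi_h\|\widetilde\pi_h^k)\le\log(|\calA|/\theta)$ belongs only in the drift argument. There the per-block cost $\frac{\eta}{\alpha}H\log(|\calA|/\theta)$ is divided by $K^B\eta\gamma$. That term alone yields a threshold of order $H^2\log(|\calA|K)/\gamma$. So the drift gives $Y_k=\widetilde\calO(H^2/\gamma)$, not the $\widetilde\calO(\eta K^{3/4}H^2/\gamma+H/\gamma)=\widetilde\calO(H/\gamma)$ you state in Step 3; your Step 4 in fact uses $H^2/\gamma$.

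\textbf{Gap 2: the covering step.} You correctly flag this as the crux, but your proposed fix does not work. You propose to cover the weights $\zeta_i$ as extra parameters and use that $\log\sum_i\zeta_ie^{x_i}$ is $1$-Lipschitz in $\log\zeta$. The problem is that the $\zeta_i$ are not numbers: unrolling a mixing step inserts the per-state normalizers of the intermediate softmax steps. Already for two steps, writing $Q_i=\widehat Q(s,\cdot;\beta_i,w_i,\Lambda)$, one has $\widehat\pi_2(\cdot\mid s)\propto\frac{1-\theta}{Z_1(s)}e^{Q_0+Q_1}+\frac{\theta}{|\calA|}e^{Q_1}$ with $Z_1(s)=\sum_a e^{Q_0(s,a)}$. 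So $\zeta_i=\zeta_i(s)$ are functions on $\calS$ and admit no finite-dimensional net.

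You must keep only $\{\beta_i,w_i\}_{i=0}^n,\Lambda$ as parameters and prove Lipschitzness in them uniformly in $s$. This forces a recursion over mixing steps, which is \Cref{lem:lip pol}:
\begin{itemize}
\item Mixed policies are bounded below by $\theta/|\calA|$, so $\log$ is $|\calA|/\theta$-Lipschitz on them.
\item Softmax Lipschitzness then gives $\|\widehat\pi_{i+1}^1-\widehat\pi_{i+1}^2\|_1\le\frac{8|\calA|}{\theta}\|\widehat\pi_i^1-\widehat\pi_i^2\|_1+8\|Q_i^1-Q_i^2\|_\infty$.
\item The resulting constant $(8|\calA|/\theta)^n$ enters $\log\calN_\epsilon$ only as $n\log(8|\calA|/\theta)$. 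Multiplied by the $\calO(nd^2)$ parameter count, this gives $\widetilde\calO(n^2d^2)$.
\end{itemize}

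\textbf{Two further points your sketch skips.}
\begin{itemize}
\item \emph{Circularity with $Y_k$.} The parameter norms contain $\sum_{j\in S_i}(1+Y_j)$, so the covering needs a bound on $Y_k$. The drift lemma cannot supply it, because it itself requires the good event. The paper breaks this circularity by putting the crude bound $Y_k\le 11\eta H^3K$ into $E_2$. It proves that bound by induction on $k$, jointly with policy-class membership (\Cref{lem:good event}).
\item \emph{Optimism for the comparators.} Optimism is exact only in the $\rho$-contracted MDP. So the comparator terms for $\pi^*$ (regret) and $\bar\pi$ (drift) must go through $\bar V(\cdot;\rho)\le V$ (\Cref{lem:cassel 2}). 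They cannot be handled as a ``small contraction error'' under $\bbP$, because the elliptical potential controls that error only along the learner's own trajectory.
\end{itemize}
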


\section{Analysis}\label{sec:analysis}
In this section, we present the proof outline of \Cref{thm:main}, where the details of the proofs can be found in the appendix. As a first step, we introduce two key ingredients: (i) a high-probability good event and (ii) bounding the dual variable. We note that our covering number argument plays a central role in showing that the good event holds with high probability. Furthermore, to bound the dual variable, the key part is to consider periodic policy mixing and the regularized dual update.

\paragraph{Good Event}
We first introduce a high-probability event, denoted by $E_g$, whose formal definition is provided in the appendix. Basically, the event captures estimation errors for the loss, the cost, and the transition kernel. In addition, it guarantees the boundedness of $Q$-function estimates, reflecting the usefulness of the feature contraction. The following lemma shows that $E_g$ holds with high probability. 
\begin{lemma}\label{lem:good event main-text} Let $\beta_w \leq K$. Then for any $\delta\in(0,1)$, $\Pr[E_g] \geq 1-\delta$.
\end{lemma}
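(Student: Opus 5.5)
The event $E_g$ is an intersection of several sub-events, and I would bound the failure probability of each separately and take a union bound. I would split $\delta$ into a few pieces, roughly $\delta/3$ each, matching the logarithmic terms that appear in $\beta_b$.

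The sub-events I have in mind are as follows.

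\emph{(i) Cost parameter estimation.} I aim to show $\|\widehat\theta_{g,h}^k-\theta_{g,h}\|_{\Lambda_h^k}\le 2\sqrt{2d\log(6KH/\delta)}$ for all $(h,k)$. This follows from the self-normalized martingale bound of Abbasi-Yadkori et al., with a union bound over $h$ and the regularization bias absorbed using $\|\theta_{g,h}\|_2\le\sqrt d$.

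\emph{(ii) Transition estimation.} I aim to show that $\|\sum_{\tau<k}\phi(s_h^\tau,a_h^\tau)\big(\widehat V_{\ell,h+1}^k(s_{h+1}^\tau)-[\bbP_h\widehat V_{\ell,h+1}^k](s_h^\tau,a_h^\tau)\big)\|_{(\Lambda_h^k)^{-1}}$ is at most the second term of $\beta_b$, up to a $\log K$-type factor.

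\emph{(iii) Boundedness.} I aim to show $|\widehat Q_{\ell,h}^k|\le \calO(H)$. This holds deterministically on (i) and (ii) by induction backwards in $h$. The feature contraction $\sigma(-\beta_w\|\phi\|_{(\Lambda_h^{k_e})^{-1}}+\log K)$ kills directions of large uncertainty. Where $\|\phi\|_{(\Lambda_h^{k_e})^{-1}}$ is small, the estimate stays within $H$ plus a bonus of size $\beta_b\cdot\log K/\beta_w\le 1$. The hypothesis $\beta_w\le K$ enters here, in the Lipschitz/size control of the sigmoid factor, and also in keeping the covering radius polynomial in $K$.

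Item (ii) is the main obstacle, because $\widehat V_{\ell,h+1}^k$ is data dependent. I would handle it with the standard uniform concentration over a function class (Jin et al., Lemma D.4). Fix an $\epsilon$-net of the class $\calV$ of all value functions the algorithm can produce, apply the self-normalized bound to each element, and add a discretization error of $\calO(\epsilon k)$ with $\epsilon=1/K$. The bound then has the form $\calO\big(H\sqrt{d\log(K/\delta)+\log\calN_\epsilon}\big)$. It therefore suffices to show $\log\calN_\epsilon=\widetilde\calO(n^2d^2)$ with $n\le K^{1/4}+1$, the number of mixing steps per epoch: mixing happens only when $k-k_e\equiv0 \bmod K^{3/4}$. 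The square root of this is $\widetilde\calO((K^{1/4}+1)dH)$, matching $\beta_b$.

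For the covering argument itself, I would first unroll lines 20--26 within an epoch. Since no clipping is used, $\sum_{j}\widehat Q^j_{f,h}+Y_j\widehat Q^j_{g,h}$ collapses to $\bar\phi^\top w - \beta_b c\,\|\bar\phi\|_{\Lambda^{-1}}$ for a vector $w$ and scalar $c$. The policy then has the weighted LogSumExp form \eqref{eq:weighted LSE softmax} with at most $n+1$ distinct partial sums. The weights $\zeta_i$ are products of $(1-\theta)$, $\theta$, and normalizing constants of the intermediate softmaxes, so they depend on the $Q$-estimates and cannot be treated as fixed.

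I would therefore parametrize a value function by:
\begin{itemize}
\item the fixed matrix $\Lambda^{k_e}$, shared within an epoch, which contributes $\calO(d^2)$;
\item the next-step parameter vector $w'$;
\item for each of the $\le n+1$ partial sums, a vector $w_i\in\bbR^d$ and scalar $c_i$;
\item the $n+1$ weights, handled by writing $\zeta_i$ through the normalizers $Z$ evaluated at the $n$ mixing points.
\end{itemize}
Each normalizer is itself a weighted LogSumExp of the earlier partial sums. I would show that $\log\widehat\pi$ is Lipschitz in all these parameters, with constant $\mathrm{poly}(\alpha,H,K)$. The key tool is that a softmax of mixtures, viewed as a map from logits to log-probabilities, is $1$-Lipschitz in sup-norm, applied inductively across the mixing steps. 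At step $i$ the dependence involves the $i$ earlier blocks, and the blocks accumulate across steps, giving about $n^2$ parameter blocks of size $d$ (and one $d^2$ block). This yields $\log\calN_\epsilon=\widetilde\calO(n^2d^2)$.

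The final union bound over the three sub-events gives $\Pr[E_g]\ge1-\delta$.
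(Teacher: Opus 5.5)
\textbf{Gap: the circularity in step (ii).} You apply uniform concentration to ``the class of all value functions the algorithm can produce.'' In this algorithm that class has no usable a priori bound, and the bounds you need are themselves consequences of the event you are proving. The self-normalized bound (\Cref{lem:cassel 22}) has radius proportional to a sup-norm bound $\beta_Q$ that must hold for every member of the class. The covering radius, and the $\Lambda$-Lipschitz constant of $\widehat Q(\cdot,\cdot;\beta,w,\Lambda)$, both scale with $|\beta|$ and $\|w\|$.

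Nothing here is bounded a priori. Since $\widehat Q$ is not clipped, the only deterministic bound on $\|\widehat V_{\ell,1}^k\|_\infty$ is of order $K^{H}$ (recursing $\|\widehat\psi_h^k\widehat V\|_2\le K\|\widehat V\|_\infty$ over $h$), which makes the concentration radius useless. Moreover, the policy parameters are $w_i=-\alpha\sum_{j\in S_i}(w_{f,h}^j+Y_jw_{g,h}^j)$ and $\beta_i=-\alpha\beta_b\sum_{j\in S_i}(1+Y_j)$, where the dual variable $Y_j$ is never truncated, and you never bound it.

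Your item (iii) closes the loop only inside one episode, backward in $h$. It does not show that the iterates lie in a bounded class, and it gives no bound on $Y_k$. In fact $Y_k\le 11\eta H^3K$ is one of the conditions defining $E_g$ (it sits in $E_2$).

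The missing idea is the paper's proxy-event argument:
\begin{itemize}
\item Define $\bar E_2$ as concentration uniformly over the \emph{fixed} class $\widehat\calV(\beta_b,2K\beta_Q,\beta_Q,11\eta H^3K)$, whose last argument is an a priori dual bound $C_Y$.
\item Get $\Pr[E_1\cap\bar E_2\cap E_3]\ge1-\delta$ from the covering bound.
\item Show deterministically that this event implies $E_g$, by a forward induction on $k$ inside each epoch (base case $\widehat\pi_h^{k_e}=\piunif$, $Y_{k_e}=0$), nested with your backward induction in $h$ (\Cref{lem:cassel 12}).
\end{itemize}
The induction step runs as follows. Suppose $\widehat\pi_h^{k'}$ lies in the policy class and $Y_{k'}\le 11\eta H^3k'$ for all $k'<k$. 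Then all earlier $\widehat Q^{k'}$ and $\widehat V^{k'}$ lie in the covered classes with sup-norm at most $2H$, so \Cref{lem:pol param} places $\widehat\pi_h^k$ in the class. The dual update, together with $|\widehat V_{g,1}^{k-1}(s_1)|\le 2H$ and $4\alpha\eta H^3\le1$, gives $Y_k\le Y_{k-1}+11\eta H^3\le 11\eta H^3k$.

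\textbf{Missing $E_3$.} $E_g$ also contains the bound $\sum_k\bbE_{\bbP,\widehat\pi^k}[W_k]\le2\sum_kW_k+4H(3\beta_b+8\beta_Q\beta_w^2)\log(6K/\delta)$, which your plan omits. It follows from \Cref{lem:rosen d4} since $0\le W_k\le H(3\beta_b+8\beta_Q\beta_w^2)$. This is what consumes the third $\delta/3$; your boundedness sub-event is deterministic given the others and costs no probability.

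\textbf{On the covering argument.} Your route differs from the paper's. The paper recurses in $\ell_1$ via $\|\widehat\pi^1_{i+1}-\widehat\pi^2_{i+1}\|_1\le(8|\calA|/\theta)\|\widehat\pi^1_i-\widehat\pi^2_i\|_1+8\|\widehat Q^1_i-\widehat Q^2_i\|_\infty$, using that $\log$ is $|\calA|/\theta$-Lipschitz on mixed policies. There the $n^2$ comes from $n+2$ parameter blocks times the logarithm of a Lipschitz constant $(8|\calA|/\theta)^n$.

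Working in log-space is fine, with two corrections:
\begin{itemize}
\item Log-softmax is $2$-Lipschitz in sup-norm, not $1$. It is non-expansive in the span seminorm, and $u\mapsto\log((1-\theta)e^u+\theta/|\calA|)$ is a coordinatewise contraction; tracking the span of $\log\widehat\pi^1-\log\widehat\pi^2$, which must change sign, even avoids any compounding.
\item Your count of ``about $n^2$ parameter blocks'' is spurious. The weights $\zeta_i$ and the normalizers are state-dependent functions fully determined by $\{\beta_i,w_i\}_{i=0}^n$ and $\Lambda$, so they add no parameters.
\end{itemize}
Neither issue breaks the $\widetilde\calO(n^2d^2)$ upper bound; the circularity above is the real problem.
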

In the proof of \Cref{lem:good event main-text}, the main distinction from previous works arises in our covering number argument. Specifically, our case incorporates weighted LogSumExp softmax policies, induced by mixing the policy. Then we have to derive a Lipschitz property of the policies that have been mixed $n$ times, as the number of mixing steps is a key factor in determining the policy structure. Note that the Lipschitzness of policies is fundamentally required in most covering number arguments. 


To attain this property, we prove the following recursion in $n$, presented here in simplified form:
\[
    \|\widehat\pi_1^n - \widehat\pi_2^n\|_1 \leq c\|\widehat\pi_1^{n-1}-\widehat\pi_2^{n-1}\|_1 + \|\calP_1^n - \calP_2^n\|_2
\]
where $\widehat\pi_1^n, \widehat\pi_2^n$ denote the policies that have been mixed $n$ times, $\calP_1^n, \calP_2^n$ denote the subsets of the corresponding parameters, and $c$ is a constant. By applying this recursion repeatedly, we can bound the difference between policies using the sum of differences in their parameters, establishing the Lipschitzness. Based on this, we can show that the covering number is bounded as $\widetilde\calO(n^2d^2)$.


\paragraph{Dual Variable Bound} 
Under the good event $E_g$, we can establish another ingredient of our analysis---a drift analysis for bounding the dual variable.
\begin{lemma}\label{lem:Yk bound main-text}
    Assume that the good event $E_g$ holds. Let $H^2 \leq K$. Let $Y_k$ be the dual variable generated by \Cref{alg:main-linear} for each $k\in [K]$. 
    Then, we have $Y_k = \widetilde\calO(H^2/\gamma)$.
\end{lemma}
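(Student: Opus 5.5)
The plan is a drift argument for $Y_k$ in the style of \citet{yu2017online, wei2020online}, run separately inside each epoch, since $Y_{k_e}$ is reset to $0$ at every epoch start. Fix an epoch beginning at $k_e$. For a window length $T_0$ (to be chosen of order the mixing period $K^{3/4}$), I will show a \emph{negative drift}: whenever $Y_k$ exceeds a threshold $\Theta=\widetilde\calO(H^2/\gamma)$, we have $Y_{k+T_0}\le Y_k - \eta\gamma T_0/2$. In addition, the per-step change satisfies $|Y_{k+1}-Y_k|\le \eta(H+1)+c_1Y_k+c_2$. These two facts together give $Y_k\le \Theta+\calO(\eta H T_0)$, deterministically on $E_g$; the usual argument is to look at the last time before $k$ that $Y$ was below $\Theta$. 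With $\eta=H^{-2}K^{-3/4}$ and $T_0\approx K^{3/4}$, the additive term is $\calO(1/H)$, so the bound is $\widetilde\calO(H^2/\gamma)$.

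To obtain the drift, I compare the algorithm's policy with the Slater policy $\bar\pi$ through the standard mirror-descent (KL) inequality for policy optimization. For each $k$ inside a window, the performance difference lemma applied under the true dynamics $\bbP$, together with the optimistic estimates on $E_g$ (so $\widehat Q$ is bounded by roughly $H$ and $\widehat V_{g,1}^k(s_1)$ is close to $V^{\widehat\pi^k}$ up to bonus terms), gives
\[
\widehat V_{f,1}^k(s_1)+Y_k\widehat V_{g,1}^k(s_1)-\big(V_{f,1}^{\bar\pi}(s_1)+Y_kV_{g,1}^{\bar\pi}(s_1)\big)\lesssim \tfrac{1}{\alpha}\bbE_{\bar\pi}\big[\mathrm{KL}(\bar\pi\|\widetilde\pi^k)-\mathrm{KL}(\bar\pi\|\widehat\pi^{k+1})\big]+\alpha H^3(1+Y_k)^2 .
\]
Policy mixing, applied every $K^{3/4}$ episodes, keeps $\mathrm{KL}(\bar\pi\|\widetilde\pi)\le H\log(|\calA|/\theta)$. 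This is what makes the telescoped KL sum over one mixing period bounded by $\widetilde\calO(H)$, rather than unbounded. Outside the mixing steps the KL telescopes exactly.

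Next I square the dual update and use $[x]_+^2\le x^2$. This yields $Y_{k+1}^2-Y_k^2\le 2\eta Y_k(\widehat V_{g,1}^k-b)-2c_1Y_k^2-2c_2Y_k+\calO(\eta^2H^2)$. Substituting $Y_k\widehat V_{g,1}^k$ from the inequality above and using the Slater condition $V_{g,1}^{\bar\pi}\le b-\gamma$ leaves three pieces: a term $-2\eta\gamma Y_k$; the loss difference $\widehat V_f-V_f^{\bar\pi}$, bounded by $H$; and error terms $\eta\alpha H^3Y_k^2$ and $\theta H^2\eta Y_k$. The last two are exactly cancelled by the regularization $-c_1Y_k^2-c_2Y_k$ with $c_1=4\alpha\eta H^3$ and $c_2=4\alpha\eta H^3+4\theta\eta H^2$; this is the intended role of the regularizer. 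A subtlety is that $Y_k$ varies across the window. I will handle this by replacing $Y_k$ with $Y_{k_0}$ at the window start, at a cost of $\calO(\eta H T_0)$ per step. Summing over the window gives
\[
Y_{k_0+T_0}^2\le Y_{k_0}^2-\eta\gamma T_0Y_{k_0}+\calO\!\big(\eta H T_0+\tfrac{\eta}{\alpha}H\log(|\calA|K)+\eta^2H^2T_0^2\big),
\]
which forces a decrease once $Y_{k_0}\gtrsim \tfrac{H}{\gamma}+\tfrac{H\log K}{\alpha\gamma T_0}$. With $\alpha=H^{-1}K^{-3/4}$ and $T_0=K^{3/4}$, this threshold is $\widetilde\calO(H^2/\gamma)$.

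I expect the main obstacle to be the alignment between the drift windows and the mixing schedule and epoch resets. The window must contain a mixing step so that the KL telescopes to a bounded amount. Windows that get cut by an epoch restart must also be handled, which is harmless because $Y$ resets to $0$. A second delicate point is ensuring that the $\alpha H^3 Y_k^2$ variance term, which comes from $\widehat Q$ scaled by $(1+Y_k)H$, is dominated by the regularizer rather than by the $\gamma$ drift. I would verify this first, using the boundedness of $\widehat Q$ guaranteed on $E_g$ via the feature contraction, and the condition $H^2\le K$.
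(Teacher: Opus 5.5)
Your overall architecture matches the paper's. You derive a one-step drift bound by comparing with the Slater policy $\bar\pi$ through the mirror-descent three-point inequality, and the regularizer $-c_1Y_k-c_2$ absorbs the $Y_k$-dependent stability and mixing terms. Centering the inequality at $\widehat\pi^k$, as you do, is equivalent bookkeeping to the paper's use of the regularizer as a lower bound on $\sum_h\langle\widehat\pi_h^{k+1}-\widehat\pi_h^k,\widehat Q_{g,h}^k\rangle$, and the constants do cancel. You then sum over a mixing period so that the KL terms telescope to $H\log(|\calA|/\theta)$.

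However, the step that produces $-\eta\gamma Y_k$ has a genuine gap. You apply the performance-difference lemma under the true kernel $\bbP$ and invoke optimism. On $E_g$, though, the estimates are optimistic only with respect to the contracted feature: Lemma~\ref{lem:optimism} gives $\widehat Q_{g,h}^k(s,a)\le\bar\phi_h^{k_e}(s,a)^\top w$ with $w=\theta_{g,h}+\psi_h\widehat V_{g,h+1}^k$. Under $\bbP$ the residual in the value-difference identity is
\[
\widehat Q_{g,h}^k(s,a)-g_h(s,a)-\bbP_h\widehat V_{g,h+1}^k(s,a)=\bigl(\widehat Q_{g,h}^k(s,a)-\bar\phi_h^{k_e}(s,a)^\top w\bigr)+(1-\rho)\bigl(-\phi(s,a)^\top w\bigr),
\]
where $\rho=\sigma(-\beta_w\|\phi(s,a)\|_{(\Lambda_h^{k_e})^{-1}}+\log K)<1$. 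Since $\widehat Q_g$ is not clipped, $\widehat V_{g,h+1}^k$ can be negative. Then $\phi(s,a)^\top w=g_h(s,a)+\bbP_h\widehat V_{g,h+1}^k(s,a)$ can be negative, and the second term positive and up to order $H$ on poorly explored pairs. This term lives on $\bar\pi$'s trajectory, so nothing in the algorithm's data controls it. It is also multiplied by $Y_k$, so it can swamp $-\eta\gamma Y_k$.

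The missing idea is to make the comparison in the $\rho$-contracted sub-MDP $\bar\bbP^{k_e}$ (Lemma~\ref{lem:extended value diff}):
\begin{itemize}
\item The residual there is exactly $\widehat Q_{g,h}^k-\bar\phi_h^{k_e}(s,a)^\top w\le 0$.
\item The Slater condition transfers through $\bar V_{g,1}^{\bar\pi}(s_1)\le V_{g,1}^{\bar\pi}(s_1)\le b-\gamma$ (Lemma~\ref{lem:cassel 2}, using $g\ge0$).
\item The three-point inequality holds pointwise in $s_h$, so it can be integrated against the sub-probability measure; nonnegative constants only shrink, by \eqref{eq:contracted expectation}.
\item Because $\rho$ depends only on $\Lambda_h^{k_e}$, this measure is the same for every episode of the epoch, so the KL terms still telescope.
\end{itemize}
The $f$-part is handled the same way, or crudely by H\"older, as the paper does with the bound $4H^2$.

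Second, the alignment condition you state is not the right one: a window that merely contains a mixing step is not enough. Telescoping over $[k_0,k_0+T_0)$ leaves $D(\bar\pi_h\|\widetilde\pi_h^{k_0})-D(\bar\pi_h\|\widehat\pi_h^{k_0+T_0})$, plus at most $\theta\log|\calA|$ per interior mixing step (Lemma~\ref{lem:KL mixing}). The term $D(\bar\pi_h\|\widetilde\pi_h^{k_0})$ is bounded by $\log(|\calA|/\theta)$ only when $k_0$ is itself a mixing episode. Otherwise it equals $D(\bar\pi_h\|\widehat\pi_h^{k_0})$, which has no a priori bound.

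Hence your claim that $Y_{k+T_0}\le Y_k-\eta\gamma T_0/2$ whenever $Y_k>\Theta$ is available only for $k=k_e+nK^{3/4}$. The ``last time below the threshold'' argument must be run on the subsequence $Z_n=Y_{k_e+nK^{3/4}}$. It is then extended to intermediate episodes via a one-step increment bound, e.g.\ $Y_{k+1}\le Y_k+2\eta H$ on $E_g$. This is exactly the paper's route.

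The paper's version also shows that freezing $Y_\tau$ at the window start is unnecessary. If $Z_n$ is the first to exceed $2C_4/(K^B\eta\gamma)+K^B\delta_{\max}$, then every $Y_\tau$ in the preceding window already exceeds $2C_4/(K^B\eta\gamma)$. With these two repairs, your argument yields the same $\widetilde\calO(H^2/\gamma)$ bound.
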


Let us briefly explain our proof strategy for \Cref{lem:Yk bound main-text}. Although the regularization in our dual update enables drift analysis, we cannot directly apply the previous proofs proposed by \citet{wei2020online, qiu2020upper}. This is because their analyses rely on applying policy mixing in every episode, whereas our algorithm applies it only sparsely. To exploit this sparse structure, we instead consider a subsequence of dual variables corresponding to the mixing episodes, denoted by $\{Z_n\}_{n\geq 1}$ where $Z_n = Y_{k_e + nK^B}$ for each epoch $e$. We first bound $Z_n$ for all $n$, and consequently extend the result to derive a bound on $Y_k$ for all $k$.

Next, we introduce decompositions of both $\Regret$ and $\Violation$. Let $E$ be the set of all epochs, and let $K_e$ be the set of episodes in epoch $e\in E$. We have
{\allowdisplaybreaks
\begin{align}\label{eq:decomposition}
\begin{aligned}
    \Regret 
    &\leq \underbrace{\sum_{k=1}^K \left(V_{f^k,1}^{\pi^k}(s_1) - \widehat V_{f,1}^k(s_1)\right)}_{\text{(I)}}+ \underbrace{\sum_{k=1}^K Y_k\left(b - \widehat  V_{g,1}^k(s_1)\right)}_{\text{(II)}}\\
    &\quad+
    \underbrace{
     \sum_{k=1}^K \left(\widehat V_{f,1}^k(s_1) + Y_k \widehat V_{g,1}^k(s_1) - V_{f^k,1}^{\pi^*} - Y_k V_{g,1}^{\pi^*}(s_1)\right)
    }_{\text{(III)}},
    \\
    \Violation 
    &\leq \underbrace{\sum_{k=1}^K\left( V_{g,1}^{\pi^k}(s_1)-\widehat V_{g,1}^{k}(s_1)\right)}_{\text{(IV)}}+\underbrace{\sum_{k=1}^K \left(\widehat V_{g,1}^k(s_1) - b\right)}_{\text{(V)}}.
\end{aligned}
\end{align}}

Terms (I), (IV) arise from the difference between the true value function and its optimistic estimates, which are closely related to the optimistic bonus $-\beta_b\|\bar\phi_h^{k_e}(\cdot,\cdot)\|_{(\Lambda_h^{k_e})^{-1}}$. Since our parameter for optimistic bonus is set as $\beta_b = \widetilde\calO(K^{1/4}dH)$,  where $K^{1/4}$ comes from the covering number argument, these terms are bounded by $\widetilde\calO(K^{3/4})$, as stated in the following lemma. 
\begin{lemma}\label{lem:cost of opt}
    Let $H^2\leq K$. Suppose that $E_g$ holds. For all $\ell\in\{f,g\}$,
    $$\sum_{k=1}^K(V_{\ell,1}^{\pi^k}(s_1) - \widehat V_{\ell,1}^{k}(s_1)) =\widetilde\calO\left(\sqrt{d^3H^4}K^{3/4} + d^3H^4 K^{1/2}\right).$$
\end{lemma}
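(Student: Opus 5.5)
Under $E_g$ I will run a standard optimism-plus-value-difference argument, adapted to the contracted features and epoch-frozen bonuses, as in \citet{cassel2024warm}. Fix $\ell\in\{f,g\}$ and an episode $k$ in epoch $e$. The extended value difference (simulation) lemma, applied to the policy $\pi^k=\widehat\pi^k$ that is both executed and used in $\widehat V^k_{\ell,h}=\langle\widehat\pi_h^k,\widehat Q^k_{\ell,h}\rangle$, gives
\[
V_{\ell,1}^{\pi^k}(s_1)-\widehat V_{\ell,1}^k(s_1)=\bbE_{\bbP,\pi^k}\Big[\sum_{h=1}^H\big(\ell_h+\bbP_h\widehat V^k_{\ell,h+1}-\widehat Q^k_{\ell,h}\big)(s_h,a_h)\Big].
\]
Write $\sigma_h(s,a)=\sigma(-\beta_w\|\phi\|_{(\Lambda_h^{k_e})^{-1}}+\log K)$, so that $\bar\phi=\sigma_h\phi$. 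I then split the one-step error as
\[
\ell_h+\bbP_h\widehat V_{h+1}-\widehat Q_h=(1-\sigma_h)\,\phi^\top(\theta+\psi\widehat V_{h+1})+\sigma_h\,\phi^\top\big(\theta-\widehat\theta+(\psi-\widehat\psi)\widehat V_{h+1}\big)+\beta_b\|\bar\phi\|_{(\Lambda_h^{k_e})^{-1}}.
\]

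\textbf{Step 1 (bound the one-step error).} On $E_g$, the estimation-error event is the place where the covering argument for the weighted LogSumExp softmax enters, since it supplies the $K^{1/4}$ inside $\beta_b$. It bounds the middle term by $\beta_b\|\bar\phi\|_{(\Lambda_h^{k})^{-1}}$. Because $\Lambda_h^k\succeq\Lambda_h^{k_e}$ and determinants at most double within an epoch, this is at most $\sqrt2\,\beta_b\|\bar\phi\|_{(\Lambda_h^{k_e})^{-1}}$. The same determinant argument converts epoch-frozen norms back to current norms up to a factor $\sqrt2$. Also on $E_g$, the estimates satisfy $|\widehat Q|\le \calO(H)$, so the contraction term is at most $\calO(H)(1-\sigma_h)$. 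Hence each step's error is $\calO(\beta_b\|\phi(s_h,a_h)\|_{(\Lambda_h^k)^{-1}}+H(1-\sigma_h))$.

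\textbf{Step 2 (contraction term).} I need $1-\sigma_h(s,a)\le \min\{1,K e^{\beta_w x}\cdot K^{-1}\ldots\}$. More concretely, if $x=\|\phi\|_{(\Lambda^{k_e})^{-1}}\le(\log K)/(2\beta_w)$ then $1-\sigma\le K^{-1/2}$. Otherwise $x\beta_w\gtrsim\log K$, so $1\le 2\beta_w x/\log K$. This gives $1-\sigma\le K^{-1/2}+\calO(\beta_w x/\log K)=K^{-1/2}+\calO(\beta_b x)$, using $\beta_w=4\beta_b\log K$. The contraction term is therefore absorbed into a bonus of the same order, with an extra $H$ factor, and the leftover $HK^{-1/2}$ summed over $H$ steps and $K$ episodes is lower order.

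\textbf{Step 3 (sum via elliptical potential).} Replace the expectations over trajectories by the realized trajectories $(s_h^k,a_h^k)$. This costs a martingale term of size $\widetilde\calO(H^2\sqrt K)$ by Azuma (or the corresponding part of $E_g$), since each summand is $\calO(H)$. Next apply $\min\{1,\cdot\}$ truncation together with the elliptical potential lemma, $\sum_k\min\{1,\|\phi_h^k\|_{(\Lambda_h^k)^{-1}}\}\le\sqrt{2dK\log K}$ per step $h$. The contribution is then $\widetilde\calO(H\cdot H\beta_b\sqrt{dK})$ with $\beta_b=\widetilde\calO(K^{1/4}dH)$, i.e. roughly $d^{3/2}H^{2}K^{3/4}$, matching the $\sqrt{d^3H^4}K^{3/4}$ term once the $H$ factors from Step 2 are tracked. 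Episodes where the norm exceeds $1$ number at most $\widetilde\calO(dH)$ per step, each costing at most $\calO(\beta_b H)$; summed, this produces the $d^3H^4K^{1/2}$ term.

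\textbf{Main obstacle.} I expect the main obstacle to be Step 2 together with the boundedness of $\widehat Q$. Without clipping, one must show that on $E_g$ the estimates stay in $[-\calO(H),\calO(H)]$, so that the contraction penalty is dominated by the bonus. I would take this boundedness directly from the definition of $E_g$ stated in \Cref{lem:good event main-text}.
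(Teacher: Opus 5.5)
Your decomposition matches the paper's. You apply the value-difference lemma and split the one-step error into two parts. The first is the $\bar\phi_h^{k_e}$ estimation error plus bonus, which $E_1,E_2$ and determinant doubling control. The second is the contraction remainder $(\phi-\bar\phi_h^{k_e})^\top(\theta_{\ell,h}^k+\psi_h\widehat V_{\ell,h+1}^k)$.

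The gap is in Step 2: you bound the contraction remainder only to first order, and this loses a factor of $H$ in the leading term. Since $|\phi^\top(\theta_{\ell,h}^k+\psi_h\widehat V_{\ell,h+1}^k)|=\calO(H)$, your bound $1-\sigma_h\le K^{-1/2}+8\beta_b\|\phi\|_{(\Lambda_h^{k_e})^{-1}}$ makes the remainder cost $\calO(H\beta_b)\|\phi(s_h^k,a_h^k)\|_{(\Lambda_h^k)^{-1}}$ per step. That is $H$ times the bonus. Summing over $h$ and $k$ gives $\widetilde\calO(H^2\beta_b\sqrt{dK})=\widetilde\calO(d^{3/2}H^3K^{3/4})$. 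Your own expression $\widetilde\calO(H\cdot H\beta_b\sqrt{dK})$ equals exactly this once you substitute $\beta_b=\widetilde\calO(K^{1/4}dH)$, so ``roughly $d^{3/2}H^2K^{3/4}$'' is an arithmetic slip. No bookkeeping of the $H$ factors from Step 2 removes it, so as written the argument does not give the stated $\sqrt{d^3H^4}K^{3/4}$.

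The missing ingredient is a \emph{second-order} bound on the contraction penalty, \Cref{lem:cassel 3}: $(\phi-\bar\phi_h^{k_e})^\top v\le(4\beta_w^2\|\phi\|^2_{(\Lambda_h^k)^{-1}}+2K^{-1})|\phi^\top v|$. You can get this inside your own framework by splitting at $\beta_w x=1$ instead of $\beta_w x=(\log K)/2$, with $x=\|\phi\|_{(\Lambda_h^{k_e})^{-1}}$. If $\beta_w x\le 1$, then $1-\sigma_h=(1+Ke^{-\beta_w x})^{-1}\le e/K$. Otherwise $1-\sigma_h\le 1<\beta_w^2x^2\le 2\beta_w^2\|\phi\|^2_{(\Lambda_h^k)^{-1}}$ by determinant doubling.

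Then sum squared norms using $\sum_k\|\phi(s_h^k,a_h^k)\|^2_{(\Lambda_h^k)^{-1}}\le 2d\log(K+1)$. The contraction cost becomes $\widetilde\calO(dH^2\beta_w^2)=\widetilde\calO(d^3H^4K^{1/2})$, which is where the lemma's second term actually comes from. Only the bonus contributes at order $K^{3/4}$, namely $\widetilde\calO(\beta_b H\sqrt{dK})=\widetilde\calO(\sqrt{d^3H^4}K^{3/4})$.

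Two further corrections. First, your account of the $d^3H^4K^{1/2}$ term via episodes with norm exceeding $1$ is spurious. Since $\Lambda_h^k\succeq I$ and $\|\phi\|_2\le 1$, you always have $\|\phi\|_{(\Lambda_h^k)^{-1}}\le 1$, so no truncation is needed; such a count would only produce $\widetilde\calO(K^{1/4})$-order terms anyway. Second, the lemma is conditional on $E_g$, so the passage from expectations to realized trajectories should use $E_3$, not a fresh Azuma event, which is not part of $E_g$. Note that $W_k$ in $E_3$ is built precisely from $\beta_b\|\phi\|_{(\Lambda_h^k)^{-1}}$ and $\beta_Q\beta_w^2\|\phi\|^2_{(\Lambda_h^k)^{-1}}$, that is, from the quadratic contraction bound.
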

Term (II) arises from the dual update in the sense that if the dual variable is not updated (i.e., $Y_k=0$ for all $k$), then this term vanishes. It can be bounded using the following lemma.
\begin{lemma}\label{lem:proj gd}
    Let $H^2\leq K$. Suppose that $E_g$ holds. Then we have
    $$ \sum_{k=1}^K Y_k(b-\widehat V_{g,1}^k(s_1)) =\widetilde\calO\left(K^{1/4} + \frac{dH^6}{\gamma^2}\right).$$
\end{lemma}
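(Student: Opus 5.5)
We bound term (II) with the standard quadratic expansion of the projected dual update, carried out epoch by epoch. Write the update as $Y_{k+1}=[Y_k+\eta u_k]_+$, where
\[
u_k := \widehat V_{g,1}^k(s_1)-b-4\alpha H^3 Y_k-4\alpha H^3-4\theta H^2 .
\]
Since $[\cdot]_+$ is a projection onto $[0,\infty)$ and $0$ lies in that set, we have $Y_{k+1}^2\le (Y_k+\eta u_k)^2$. Expanding gives
\[
Y_k\big(b-\widehat V_{g,1}^k(s_1)\big)\le \frac{Y_k^2-Y_{k+1}^2}{2\eta}+\frac{\eta}{2}u_k^2-Y_k\big(4\alpha H^3Y_k+4\alpha H^3+4\theta H^2\big).
\]
The last term is nonpositive because $Y_k\ge 0$, so we drop it.

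We sum this within each epoch $e\in E$. The first term telescopes to $(Y_{k_e}^2-Y_{k_{e+1}}^2)/(2\eta)$, and since $Y_{k_e}=0$ it is at most $0$. One caveat: the reset in line 5 means that at an epoch boundary the quantity $Y_{k_{e+1}}$ produced by the update is overwritten by $0$. Using the update value instead only contributes $-Y^2/(2\eta)\le 0$, so each epoch's telescoped sum is nonpositive. The number of epochs $|E|=\calO(dH\log K)$ (from the determinant-doubling rule) therefore never enters through the telescoping term, which would have cost $Y^2/\eta$ per epoch.

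It remains to bound $\frac{\eta}{2}\sum_k u_k^2$. On $E_g$ the $Q$-estimates are bounded by $\widetilde\calO(H)$; this is where feature contraction is used, and it gives $|\widehat V_{g,1}^k(s_1)|=\widetilde\calO(H)$, with $b\le H$. Lemma~\ref{lem:Yk bound main-text} gives $Y_k=\widetilde\calO(H^2/\gamma)$, so $4\alpha H^3 Y_k=\widetilde\calO(H^4K^{-3/4}/\gamma)$. Hence
\[
u_k^2=\widetilde\calO\big(H^2+H^8K^{-3/2}/\gamma^2\big),
\]
and with $\eta=H^{-2}K^{-3/4}$,
\[
\frac{\eta}{2}\sum_{k=1}^K u_k^2=\widetilde\calO\big(K^{1/4}+H^6K^{-1/2}/\gamma^2\big)=\widetilde\calO\big(K^{1/4}+H^6/\gamma^2\big).
\]
This is within the claimed $\widetilde\calO(K^{1/4}+dH^6/\gamma^2)$. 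The factor $d$ absorbs a possible dependence of the $E_g$ bound on $\widehat Q$, which may involve $\beta_b$ through the bonus and hence $d$; it also absorbs an $|E|$-fold accounting of boundary terms if the reset is handled less carefully.

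The main obstacle is controlling $\widehat V_{g,1}^k$. Because clipping is omitted, the estimate includes the bonus $\beta_b\|\bar\phi\|$ with $\beta_b=\widetilde\calO(K^{1/4}dH)$, so a naive bound would give $u_k^2\sim K^{1/2}$ and break the $K^{1/4}$ rate. I would rely on the good-event boundedness of $\widehat Q$ implied by the sigmoid contraction, namely $\beta_w\|\phi\|\gtrsim\log K$ making $\bar\phi$ small wherever the bonus is large, to get $|\widehat Q|=\widetilde\calO(H)$. If only a weaker bound such as $\widetilde\calO(dH)$ holds, the cross term $4\alpha H^3Y_k$ still gives $\widetilde\calO(dH^6/\gamma^2)$ after summation, which matches the stated form.
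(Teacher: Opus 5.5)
Your argument is correct and is essentially the paper's. You expand $Y_{k+1}^2\le (Y_k+\eta u_k)^2$, telescope within each epoch from $Y_{k_e}=0$, drop the nonpositive regularization cross term, and bound $u_k$ using $H^2\le K$, Lemma~\ref{lem:Yk bound main-text}, and $\|\widehat Q_{g,h}^k\|_\infty\le 2H$; that last bound is an explicit part of $E_g$, so your hedge about a weaker $\widetilde\calO(dH)$ bound is unnecessary (and such a bound would in fact give $d^2K^{1/4}$, not the stated form).

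The only difference is at epoch boundaries. The paper stops the telescoping at $k_{e+1}-2$ and pays $3HY_{k_{e+1}-1}$ separately in each of the $\widetilde\calO(dH)$ epochs, which is where its $d$ enters. Your use of the pre-reset update value instead makes that boundary term nonpositive and yields the slightly sharper $\widetilde\calO(K^{1/4}+H^6K^{-5/4}/\gamma^2)$; your $K^{-1/2}$ is a harmless arithmetic slip.
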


To bound term (III), we further decompose it into two parts—optimism terms and an online mirror descent term—and then bound each individually. This leads to the following lemma.
\begin{lemma}\label{lem:omd+optimism}
    Let $H^2\leq K$. Suppose that $E_g$ holds. Then we have
    $$\sum_{k=1}^K \left(\widehat V_{f,1}^k(s_1) + Y_k \widehat V_{g,1}^k(s_1) - (V_{f^k,1}^{\pi^*} + Y_k V_{g,1}^{\pi^*}(s_1))\right)=\widetilde\calO \left(dH^3 K^{3/4} + \frac{H^6}{\gamma^2}K^{1/4} + \frac{dH^5}{\gamma}\right).$$
\end{lemma}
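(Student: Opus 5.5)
Our plan is to split term (III) into an optimism part and an online mirror descent (OMD) part, handling each epoch $e\in E$ separately since the policy and $Y_k$ are reset at the start of every epoch. Write $\widehat Q_h^k = \widehat Q_{f,h}^k + Y_k\widehat Q_{g,h}^k$ for the Lagrangian $Q$-estimate used in the policy update. The extended value difference lemma gives, for each $k$,
\begin{align*}
\widehat V_{f,1}^k(s_1) + Y_k\widehat V_{g,1}^k(s_1) - V_{f^k,1}^{\pi^*}(s_1) - Y_kV_{g,1}^{\pi^*}(s_1)
&= \sum_{h=1}^H \bbE_{s_h\sim \pi^*}\big[\langle \widehat\pi_h^k(\cdot\mid s_h) - \pi^*_h(\cdot\mid s_h), \widehat Q_h^k(s_h,\cdot)\rangle\big] \\
&\quad + \sum_{h=1}^H \bbE_{(s_h,a_h)\sim\pi^*}\big[\widehat Q_h^k(s_h,a_h) - (f_h^k + Y_k g_h)(s_h,a_h) - \bbP_h\widehat V_{h+1}^k(s_h,a_h)\big].
\end{align*}
The second line is the optimism term. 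Under $E_g$, the bonus $-\beta_b\|\bar\phi\|_{(\Lambda_h^{k_e})^{-1}}$ dominates the estimation error for both $\ell=f$ and $\ell=g$. This makes the term nonpositive up to the feature-contraction error, i.e.\ the gap between $\bar\phi$ and $\phi$, which is at most $1/K$ per step by the sigmoid factor. The $g$-part carries the factor $Y_k$, so summing over $k$ gives at most $\widetilde\calO(KH\cdot\max_k Y_k/K)=\widetilde\calO(H^3/\gamma)$ by \Cref{lem:Yk bound main-text}, plus a similar $\widetilde\calO(dH^5/\gamma)$-type remainder from the range of $\widehat Q$ guaranteed by $E_g$.

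For the OMD part, we run the standard KL mirror-descent analysis state-wise and apply it to each $s$ under $d^{\pi^*}_h$. A non-mixing step $\widehat\pi^{k+1}\propto\widehat\pi^k\exp(-\alpha\widehat Q^k)$ gives
\[
\langle\widehat\pi^k-\pi^*,\widehat Q^k\rangle\le \frac{\mathrm{KL}(\pi^*\|\widehat\pi^k)-\mathrm{KL}(\pi^*\|\widehat\pi^{k+1})}{\alpha}+\frac{\alpha}{2}\|\widehat Q^k\|_\infty^2 .
\]
At a mixing step, $\widetilde\pi^k=(1-\theta)\widehat\pi^k+\theta\piunif$, and we must control the extra potential jump:
\begin{itemize}
\item $\mathrm{KL}(\pi^*\|\widetilde\pi^k)\le \mathrm{KL}(\pi^*\|\widehat\pi^k)+\log\frac1{1-\theta}$;
\item $\langle\widehat\pi^k-\widetilde\pi^k,\widehat Q^k\rangle\le 2\theta\|\widehat Q^k\|_\infty$.
\end{itemize}
The mixing also keeps $\mathrm{KL}(\pi^*\|\widetilde\pi)\le\log(|\calA|/\theta)$, which bounds the initial divergence at epoch starts. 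Under $E_g$ we have $\|\widehat Q^k\|_\infty\lesssim H(1+Y_k)$.

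Summing over $h$ and over the episodes of each epoch telescopes the KL terms. Epoch starts contribute $H\log(|\calA|K)/\alpha$ per epoch; with $|E|=\widetilde\calO(dH)$ epochs and $\alpha=H^{-1}K^{-3/4}$, this gives $\widetilde\calO(dH^3K^{3/4})$. The at most $K^{1/4}$ mixing steps add only $\widetilde\calO(K^{1/4}\cdot H\theta/\alpha)$, which is lower order. The variance term sums to $\sum_k \alpha H\cdot H^2(1+Y_k)^2 \le \alpha H^3K\,\widetilde\calO(H^4/\gamma^2)=\widetilde\calO(\frac{H^6}{\gamma^2}K^{1/4})$, again using \Cref{lem:Yk bound main-text}. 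The mixing-bias terms contribute at most $\sum_k 2\theta H^2(1+Y_k)=\widetilde\calO(H^4/\gamma)$.

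The main obstacle is that $\widehat Q_h^k$ is not confined to a fixed range: the estimates are unclipped and only kept in check by feature contraction. We must therefore invoke the boundedness clause of $E_g$ with the right power of $H$, and $d$, to avoid an extra polynomial blow-up in the variance term. A second subtlety is that $Y_k$ varies inside the exponent, so the OMD regret is against a time-varying linear loss. This is harmless, since the KL analysis only needs the per-round loss, but the reset of $Y$ and $\widehat\pi$ at each epoch start must be tracked so that the telescoping restarts cleanly. Collecting the terms yields the claimed $\widetilde\calO(dH^3K^{3/4}+\frac{H^6}{\gamma^2}K^{1/4}+\frac{dH^5}{\gamma})$.
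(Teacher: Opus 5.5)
Your mirror-descent half is essentially the paper's \Cref{lem:omd with mixing} and gives the right orders. The optimism half, however, has a genuine gap.

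You apply the value-difference lemma in the true MDP and assert that the residual $\widehat Q_{\ell,h}^k - \ell_h - \bbP_h\widehat V_{\ell,h+1}^k$ is nonpositive up to a contraction error of at most $1/K$ per step. Write $\rho = \sigma(-\beta_w\|\phi(s,a)\|_{(\Lambda_h^{k_e})^{-1}} + \log K)$. Then the residual is
\[
\Big[\widehat Q_{\ell,h}^k(s,a) - \bar\phi_h^{k_e}(s,a)^\top\big(\theta_{\ell,h}^k + \psi_h\widehat V_{\ell,h+1}^k\big)\Big] - (1-\rho)\Big(\ell_h(s,a) + \bbP_h\widehat V_{\ell,h+1}^k(s,a)\Big).
\]
The bracket is $\le 0$ by \Cref{lem:optimism}. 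The second factor is the problem: $1-\rho = 1/(1+Ke^{-\beta_w\|\phi(s,a)\|_{(\Lambda_h^{k_e})^{-1}}})$ equals $1/(K+1)$ only when $\|\phi(s,a)\|_{(\Lambda_h^{k_e})^{-1}}=0$. It is already $\ge 1/2$ once $\|\phi(s,a)\|_{(\Lambda_h^{k_e})^{-1}} \ge \log K/\beta_w = 1/(4\beta_b)$, a tiny threshold.

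The estimates are unclipped and carry a subtracted bonus, so $\widehat V_{\ell,h+1}^k$ can be negative; $E_g$ only gives $|\widehat V_{\ell,h+1}^k|\le 2H$. Hence the second term can be a positive quantity of order $H$ at poorly explored pairs. The only tool for $\phi-\bar\phi$, \Cref{lem:cassel 3}, controls it through $\|\phi(s_h,a_h)\|^2_{(\Lambda_h^k)^{-1}}$. That quantity can be summed by the elliptical potential argument only along the learner's own trajectories, as in \Cref{lem:cost of opt}. It cannot be summed under the occupancy measure of $\pi^*$, which may keep visiting directions the data never covers. So your optimism remainder is not $\widetilde\calO(H^3/\gamma)$. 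As written it could be linear in $K$, and in the cost part it is further multiplied by $Y_k$.

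The missing idea is to change the comparator rather than the estimate.

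\begin{itemize}
\item Because $f^k,g\in[0,1]$ and $Y_k\ge 0$, \Cref{lem:cassel 2} gives $V_{f^k,1}^{\pi^*}(s_1) + Y_kV_{g,1}^{\pi^*}(s_1) \ge \bar V_{f^k,1}^{\pi^*}(s_1;\rho) + Y_k\bar V_{g,1}^{\pi^*}(s_1;\rho)$, with the epoch-$e$ contraction $\rho$. So the left-hand side of the lemma is upper bounded by the same sum with contracted comparator values.
\item Now apply \Cref{lem:extended value diff} in the sub-MDP $\bar\bbP^{k_e}$. The residual becomes $\widehat Q_{\ell,h}^k - \bar\phi_h^{k_e}(s,a)^\top\big(\theta_{\ell,h}^k+\psi_h\widehat V_{\ell,h+1}^k\big)$ exactly, which is $\le 0$ with no error term at all.
\item The mirror-descent term is then an expectation under $\bar\bbP^{k_e},\pi^*$. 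This is a sub-probability measure fixed within each epoch, so your per-state telescoping still applies via \eqref{eq:contracted expectation}.
\end{itemize}

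Incidentally, the $dH^5/\gamma$ term in the paper does not come from an optimism remainder. It comes from handling the last episode of each epoch separately in the mirror-descent sum: $2HC_5$ per epoch with $C_5=\widetilde\calO(H^3/\gamma)$, times $|E|=\widetilde\calO(dH)$.
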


Finally, term (V) can be bounded via the dual variable bound. This is because $\eta(\widehat V_{g,1}^k(s_1) - b)$ accumulates in the dual variable, as it is repeatedly added in the dual update. Based on this idea, we show the following lemma, which bounds term (V).
\begin{lemma}\label{lem:violation optimism}
    Let $H^2\leq K$. Suppose that $E_g$ holds. Then we have
    $$\sum_{k=1}^K (\widehat V_{g,1}^k(s_1)-b) =\widetilde\calO\left(\frac{dH^5}{\gamma}K^{3/4} + \frac{H^4}{\gamma}K^{1/4}\right).$$
\end{lemma}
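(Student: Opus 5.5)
We bound term (V) by telescoping the dual update, epoch by epoch. Fix an epoch $e\in E$ with episodes $K_e=\{k_e,\ldots,k_{e+1}-1\}$. Since $[x]_+\geq x$, the update in line 27 gives, for every $k\in K_e$,
\[
Y_{k+1}\;\geq\;(1-4\alpha\eta H^3)Y_k+\eta\bigl(\widehat V_{g,1}^k(s_1)-b-4\alpha H^3-4\theta H^2\bigr).
\]
Rearranging,
\[
\eta\bigl(\widehat V_{g,1}^k(s_1)-b\bigr)\;\leq\; Y_{k+1}-Y_k+4\alpha\eta H^3 Y_k+\eta(4\alpha H^3+4\theta H^2).
\]

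We sum this over $k\in K_e$ and divide by $\eta$. The differences telescope, and $Y_{k_e}=0$ by line 5, so
\[
\sum_{k\in K_e}\bigl(\widehat V_{g,1}^k(s_1)-b\bigr)\;\leq\;\frac{Y_{k_{e+1}}}{\eta}+4\alpha H^3\sum_{k\in K_e}Y_k+|K_e|\bigl(4\alpha H^3+4\theta H^2\bigr).
\]
One subtlety: $Y_{k_{e+1}}$ here is the value computed by line 27 at the last episode of epoch $e$, before the reset in line 5. Lemma~\ref{lem:Yk bound main-text} bounds every iterate the algorithm produces, so this pre-reset value is also $\widetilde\calO(H^2/\gamma)$. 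I would check in the appendix proof that the bound does cover the value just before the reset; since the drift argument runs within an epoch, it should.

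Next we sum over epochs. Feature contraction and the doubling rule in line 2 cap the number of epochs at $|E|=\calO(dH\log K)$, by the standard determinant argument: $\det(\Lambda_h^k)\leq(1+K/d)^d$, and each new epoch doubles the determinant for some $h$. The first term therefore contributes
\[
|E|\cdot\frac{\widetilde\calO(H^2/\gamma)}{\eta}=\widetilde\calO\!\left(dH\cdot\frac{H^2}{\gamma}\cdot H^2K^{3/4}\right)=\widetilde\calO\!\left(\frac{dH^5}{\gamma}K^{3/4}\right),
\]
using $\eta=H^{-2}K^{-3/4}$. This is exactly the leading term of the lemma.

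The remaining terms are lower order:
\begin{itemize}
\item The regularization term is $4\alpha H^3\cdot K\cdot\widetilde\calO(H^2/\gamma)$. With $\alpha=H^{-1}K^{-3/4}$ this is $\widetilde\calO(H^4K^{1/4}/\gamma)$, the second term of the lemma.
\item The constant terms total $4\alpha H^3K+4\theta H^2K=\calO(H^2K^{1/4}+H^2)$, using $\theta=K^{-1}$. This is absorbed into the stated bound.
\end{itemize}

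The main obstacle is bounding the number of epochs. The dual variable is reset at each epoch, so every epoch can contribute up to $Y/\eta$ to the sum. The argument therefore needs $|E|$ to be polylogarithmic in $K$ and linear in $dH$. That is the source of the factor $d$ in $\frac{dH^5}{\gamma}$, and it needs a careful statement of the doubling lemma.
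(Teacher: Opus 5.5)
Your proposal is correct and follows essentially the same route as the paper: drop the $[\cdot]_+$, telescope the dual update within each epoch starting from $Y_{k_e}=0$, bound the terminal dual variable by \Cref{lem:Yk bound main-text}, and multiply by the $\widetilde\calO(dH)$ epoch count (\Cref{lem:cassel 8 num of epoch}). The only difference is that the paper telescopes only up to $k_{e+1}-2$, so its terminal value is the iterate $Y_{k_{e+1}-1}$ with index in $[K]$, and it bounds the last episode's term $\widehat V_{g,1}^{k_{e+1}-1}(s_1)-b$ crudely by $3H$; this sidesteps your pre-reset subtlety, which in your version is also easy to settle, since under $E_g$ the update rule gives a pre-reset value of at most $Y_{k_{e+1}-1}+2\eta H$.
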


\section{Numerical Experiment}\label{sec:numerical}
We evaluate \Cref{alg:main-linear} on a job-scheduling CMDP~\citep{ghosh2022provably}, modified to incorporate adversarial losses. We conduct $10$ simulations with different random seeds, each running for $K=10^5$ episodes. Additional details about the experimental setup are deferred to the appendix.
\begin{figure}[H]
    \centering
    \begin{subfigure}{0.35\textwidth}
        \centering
        \includegraphics[width=\linewidth]{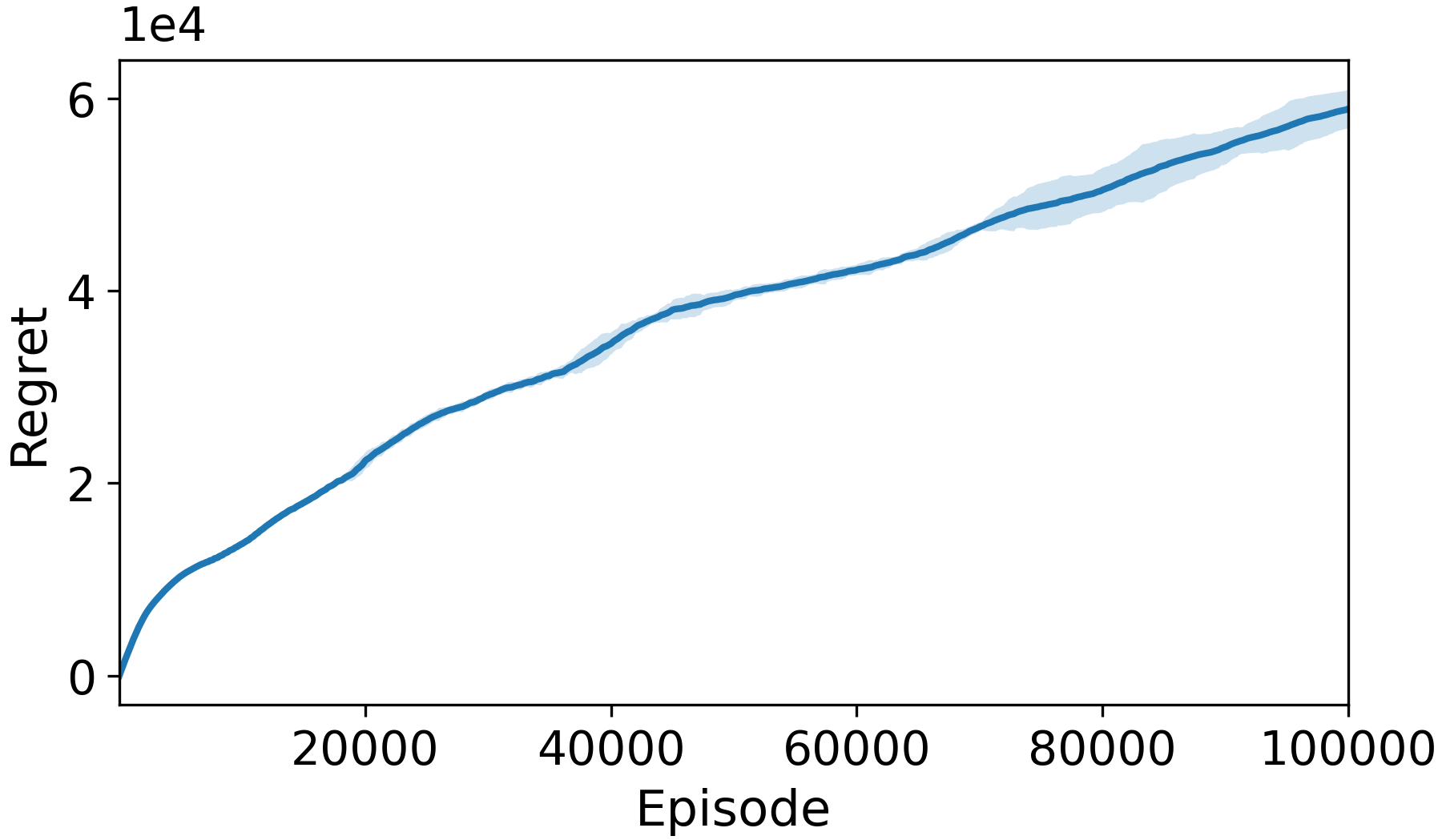}
        \caption{Regret}
        \label{fig:regret}
    \end{subfigure}%
    \hspace{0.06\textwidth}
    \begin{subfigure}{0.35\textwidth}
        \centering
        \includegraphics[width=\linewidth]{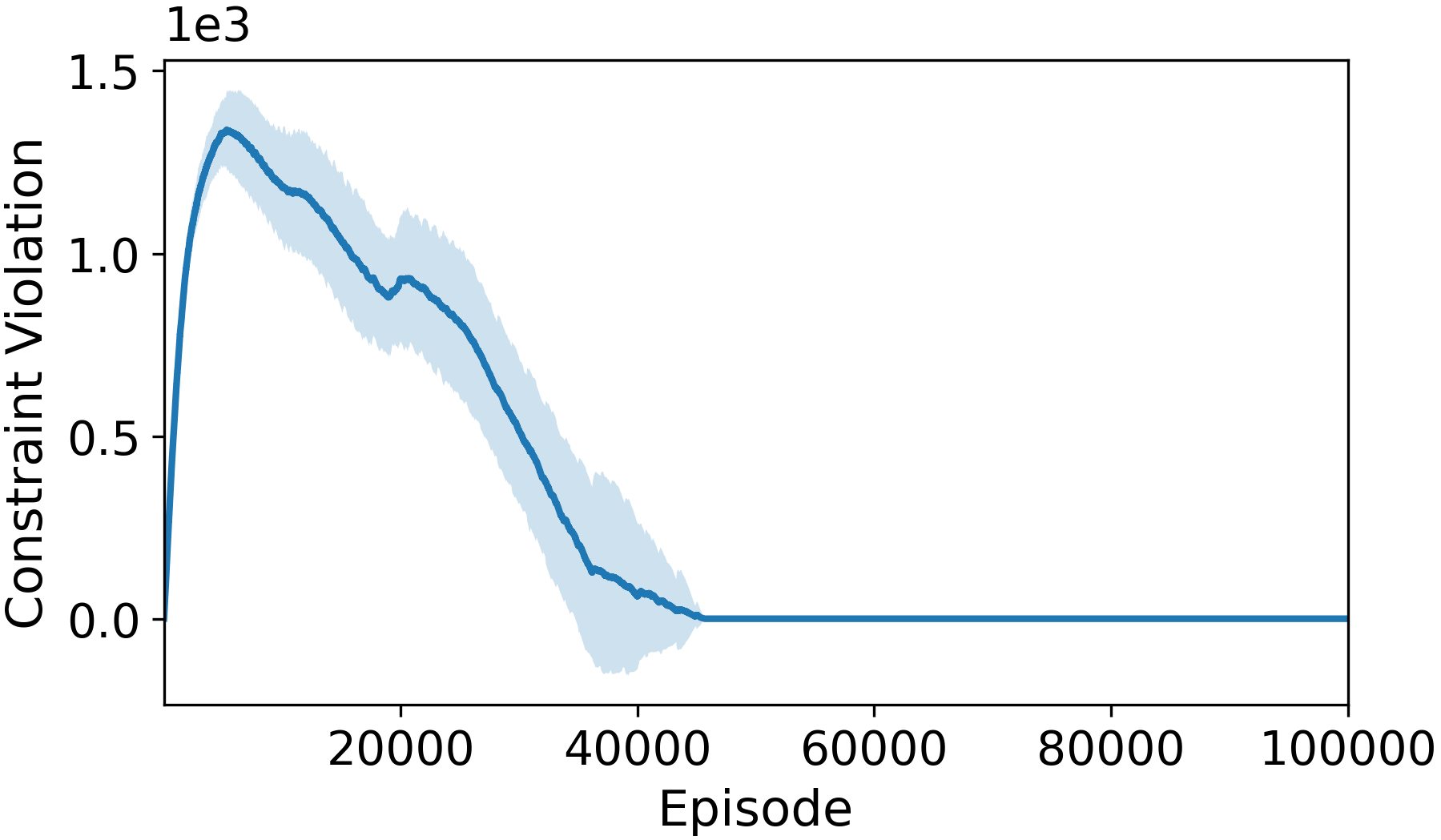}
        \caption{Constraint Violation}
        \label{fig:violation}
    \end{subfigure}
    \caption{Plots of regret and constraint violation for $K=100{,}000$ episodes. Each plot represents the average over $10$ trials with random seeds, and shaded regions indicate $95\%$ confidence intervals.}
    \label{fig:main}
\end{figure}
Figure~\ref{fig:main} summarizes the results. As shown in \Cref{fig:regret}, the regret grows sublinearly in $K$, and \Cref{fig:violation} shows that while the constraint violation grows rapidly in the early phase, it eventually converges to $0$. These results support our theoretical claims.

\section{Conclusion}\label{sec:conclusion}
This paper studies adversarial linear CMDPs, where the losses are adversarially chosen under full-information feedback and the costs are stochastic under bandit feedback. We propose a primal-dual policy optimization algorithm—the first provably efficient algorithm for safe RL with linear function approximation in adversarial settings. We establish a new covering number argument for weighted LogSumExp softmax policies, along with novel algorithmic components that jointly control the covering number and the dual variable. Building on these, we show that the proposed algorithm achieves $\widetilde\calO(K^{3/4})$ regret and violation bounds. Moreover, our numerical experiments support this. As directions for future work, it remains open to investigate the following challenges: (i) whether $\widetilde\calO(\sqrt{K})$ regret and violation bounds can be achieved in our setting, and (ii) whether a sample-efficient algorithm can be designed for linear CMDPs with adversarial losses under bandit feedback.


\subsubsection*{Acknowledgments}


We would like to thank the review team for their careful review and valuable feedback. This work was supported by the National Research Foundation of Korea (NRF) grant (No. RS-2024-00350703) and the Institute of Information \& communications Technology Planning \& evaluation (IITP) grants (No. IITP-2026-RS-2024-00437268) and (No. RS-2021-II211343, Artificial Intelligence Graduate School Program (Seoul National University)) funded by the Korea government (MSIT).

\bibliography{iclr2026_conference}
\bibliographystyle{iclr2026_conference}
\appendix
\newpage

\section{Notation}
\begin{table}[H]
    \caption{Summary of notation} 
    \label{tab:notation}
    \begin{center}
    \begin{tabular}{cl}
        \multicolumn{1}{c}{\bf NOTATION}  &\multicolumn{1}{c}{\bf DEFINITION}
        \\ \hline \\
        $H,\calS,\calA,K$ & The finite-horizon, the state and action spaces, and the number of episodes\\
        $d$ & The feature dimension \\
        $\bbP$ & The transition kernel \\
        $f,g$ & The loss and the cost functions\\
        $b$ & The budget \\
        $E$ & The set of epochs \\
        $k_e$ & The first episode of epoch $e\in E$ \\
        $\phi$ & The feature \\
        $\bar\phi_h^{k_e}$ & The contracted feature at step $h$ in epoch $e$\\
        $\widehat \theta_{\ell,h}^k$ & The estimate of $\theta_{\ell,h}^k$ \\ 
        $\Lambda_h^{k}$ & The design matrix at step $h$ in episode $k$\\
        $\psi_h V$ & $\sum_{s'\in\calS} \psi_h(s')V(s')$ for $V:\calS \to \bbR$\\
        $\widehat\psi_h^k V$ & The estimate of $\psi_h V$ at step $h$ in episode $k$\\
        $[n]$ & The set $\{1,2, \ldots, n\}$ for a positive integer $n$\\
        $\bbZ_+$ & The set $\{0,1,2,\ldots\}$\\
        $\bbR_+$ & The set $\{z\in\bbR: z\geq 0\}$\\
        $\|\cdot\|_2$ & The $\ell_2$-norm for vectors and the operator norm for matrices \\
        $\|\cdot\|_\infty$ & The $\ell_\infty$-norm\\
        $\|\cdot\|_F$ & The Frobenius norm \\
        $\|\cdot\|_\Lambda$ & $\|x\|_\Lambda = \sqrt{x^\top \Lambda x}$ for $\Lambda \succ 0$\\
        $\Delta(\calA)$ & The probability simplex over $\calA$ \\
        $I$ & The $d\times d$ identity matrix\\
        $D(\cdot||\cdot)$ & Kullback-Leibler divergence \\
        $\sigma$ & The sigmoid function \\
        $\gamma,\bar\pi$ & The Slater constant and the Slater policy\\
        $Q_{\ell,h}^{\pi}(s,a)$ & The $Q$-function\\
        $\widehat Q_{\ell,h}^k(s,a)$ & The $Q$-function estimate\\
        $V_{\ell,h}^\pi(s)$ & The value function\\
        $\widehat V_{\ell,h}^k(s)$ & the value function estimate\\
        $\bar V_{\ell,h}^\pi(s;\rho)$ & The value function with respect to a $\rho$-contracted MDP\\
        $\widehat\pi_h^k$ & The policy at step $h$ in episode $k$\\
        $\piunif$ & $\piunif(a\mid s) = 1/|\calA|$ for all $(s,a)\in \calS \times \calA$ \\
        $Y_k$ & The dual variable in episode $k$\\
        $K^B$ & The mixing period, $K^{3/4}$ \\
        $\theta$ & The mixing parameter, $\theta = K^{-1}$  \\
        $\alpha$ & The step size for the mirror descent, $\alpha = H^{-1}K^{-3/4}$ \\
        $\eta$ & The step size for the dual update, $\eta = H^{-2}K^{-3/4}$ \\
        $\beta_r$ & $2\sqrt{2d\log(6KH/\delta)}$\\
        $\beta_p$ & $50(K^{1/4}+1)dH\sqrt{\log(5H^2K^2|\calA|/\delta)}$\\
        $\beta_b$ & $\beta_r + \beta_p$\\
        $\beta_w$ & $4\beta_b\log K$\\
        $\beta_{Q,h}$ & $2(H-h+1)$\\
        $\calN_\epsilon(\widehat \calV)$ & The $\epsilon$-covering number of $\widehat\calV$ with respect to the $\ell_\infty$-norm\\
    \end{tabular}
    \end{center}
\end{table}

\section{Limitations of Prior Work and Na\"ive Extension}

In this section, we clarify why previous works---and their na\"ive extensions---fail in our setting, where the losses are adversarially chosen in each episode. In particular, we address the following two questions: (i) why the algorithm of \citet{ghosh2022provably} fails in the adversarial setting; and (ii) why simply adapting a mirror-descent update is insufficient for adversarial linear CMDPs.

\paragraph{Limitation of Prior Work} While \citet{ghosh2022provably} proposed a primal-dual algorithm for linear CMDPs, their analysis is limited to the setting with stochastic losses and constraints. The fundamental reason is that their algorithm is value-based, whose policy is determined solely by the current $Q$-function estimates in each episode; that is $\widehat\pi^{k}(\cdot\mid s)\propto \exp(-\alpha \widehat Q^{k-1}).$ Such a policy cannot adapt to time-varying environments, and in particular, cannot handle adversarially chosen losses.

Another limitation is that the dual update of \citet{ghosh2022provably} fails to handle adversarial environments. Their key technique is a dual clipping technique---cutting off the dual variable when it exceeds $2H/\gamma$---which enables them to leverage the strong duality of CMDPs. Moreover, to leverage strong duality in their analysis, they reformulate the weighted sum of regret and violation into a simple Lagrangian form (e.g., Appendix D of \citet{ghosh2022provably}), i.e., there exists a policy $\pi'$ such that
$$
    \frac{1}{K}\bigg(\sum_{k=1}^K (V_{f,1}^{\widehat \pi^k} - V_{f,1}^{\pi^\star}) + Y\sum_{k=1}^K (V_{g,1}^{\widehat \pi^k} - b)\bigg) = (V_{f,1}^{\pi'} - V_{f,1}^{\pi^\star}) + Y (V_{g,1}^{\pi'} - b).
$$
In our case, when losses are adversarially chosen in each episode, reformulating the sum into a simple Lagrangian form is not allowed, i.e.,
$$
    \frac{1}{K}\bigg(\sum_{k=1}^K (V_{f^k,1}^{\widehat \pi^k} - V_{f^k,1}^{\pi^\star}) + Y\sum_{k=1}^K (V_{g,1}^{\widehat \pi^k} - b)\bigg) \neq (V_{f',1}^{\pi'} - V_{f',1}^{\pi^\star}) + Y (V_{g,1}^{\pi'} - b).
$$
In turn, leveraging strong duality in our setting is non-trivial.

\paragraph{Na\"ive Extension} To overcome the limitations of value-based algorithms in adversarial settings, a natural approach is to adopt a mirror-descent update, whose regularizer is given by a KL divergence. In our case, this corresponds to a policy optimization, i.e.,
\[
    \widehat\pi^k = \argmin_{\pi\in \Pi}\ \langle \pi, \widehat Q^{k-1} \rangle + \frac{1}{\alpha} D(\pi|| \widehat \pi^{k-1}) \quad\Rightarrow\quad \widehat \pi^{k} \propto \widehat\pi^{k-1} \exp\left(-\alpha \widehat Q^{k-1}\right).
\]
Due to its recursive formulation, we can see that the resulting policy depends on the sum of all previous $Q$-function estimates, namely $\widehat\pi^k \propto \exp(-\alpha \sum_{j=1}^{k-1} \widehat Q^j)$. This is the key difference compared with value-based algorithms.

More technically, let us attempt to adapt the algorithm of \citet{wei2020online} to the linear CMDP setting. Their method is a mirror-descent type algorithm---in our case, policy optimization---designed for constrained online convex optimization with adversarial losses and stochastic constraints. Beyond policy optimization, there are two additional distinctions compared with \citet{ghosh2022provably}:
\begin{itemize}
    \item (Drift Analysis) As previously mentioned, since strong duality is difficult to use in the adversarial setting, the dual clipping technique may not works in the adversarial setting. To address this, \citet{wei2020online} came up with a dual update that admits a \emph{Lyapunov drift analysis}. In particular, Lyapunov drift analysis is a standard tool for bounding the dual variable. For this, we first derive an upper bound on the Lyapunov drift term defined as $\Delta(k):= (Y_{k+1}^2 - Y_k^2)/2$, and then utilize it to bound $Y_k$. The key point is to make $\Delta(k)$ small enough so that $Y_k$ stays stable, as $\Delta(k)$ captures the difference between successive dual variables.
    \item (Policy Mixing) Another key technique of \citet{wei2020online} is \emph{policy mixing}---perturbing the policy before applying the policy optimization update. The motivation behind this technique is to make $\Delta(k)$ small. In particular, a typical bound on $\Delta(k)$ in policy optimization involves KL divergence terms as follows:
    \begin{align*}
        \Delta(k) \leq - c_1 Y_k + c_2 + D(\pi||\widetilde\pi_h^k) - D(\pi||\widehat\pi_h^{k+1})
    \end{align*}
    The key issue is that $D(\pi || \widetilde\pi_h^k)$ can become arbitrarily large when $\widetilde\pi_h^k(a) \approx 0$ for some $a\in\mathcal{A}$, because the KL divergence is unbounded near the simplex boundary. In contrast, when a mixing step is applied, we ensure that $\widetilde\pi_h^k(a) \geq \theta / |\mathcal{A}|$ for all $a$, where $\theta$ denotes the level of mixing. In this case, we can easily show that $D(\pi||\widetilde\pi_h^k) \leq \log(|\mathcal{A}|/\theta)$ (\Cref{lem:KL mixing}). Hence, the mixing step guarantees that the KL term remains bounded, which in turn prevents the dual variable from blowing up.
\end{itemize}

\paragraph{Insufficiency of Na\"ive Extension} While policy mixing is essential to keep the dual variable stable, it becomes problematic in the linear CMDP setting, where the main difficulty arises in the covering number argument. In particular, as described in \Cref{sec:novelty}, we have to derive an upper bound on the covering number of the weighted LogSumExp softmax policy. Establishing such a bound is one of our key challenges and is highly non-trivial. Thus, although adapting \citet{wei2020online} to the framework of \citet{ghosh2022provably} is a natural step toward handling adversarial losses, obtaining sublinear regret and constraint violation bounds becomes unclear.

\section{Additional Discussions}
\paragraph{Intuition on Covering Number}
    We provide an intuition on why the weighted LogSumExp softmax policy yields $\widetilde\calO(n^2 d^2)$, where $n$ denotes the number of mixing steps, while the covering number under greedy or softmax policies is just $\widetilde\calO(d^2)$. Before explaining this, we note that the covering number of a function class depends on (i) how many parameters are required and (ii) how close these parameters must be for the functions to be sufficiently similar. Based on this high-level idea, the number of parameters needed to determine a weighted LogSumExp softmax policy is $\calO(nd^2)$, as each $\sum_{j}\widehat Q^j$ requires $\calO(d^2)$ parameters once the feature contraction is applied. Furthermore, we observe that the impact of parameters decreases exponentially as mixing continues, meaning that the parameters must be chosen increasingly close (see the proof of \Cref{lem:lip pol}). This leads to an additional multiplicative factor $n$, resulting in $\widetilde\calO(n^2 d^2)$.

\paragraph{Choice of Mixing Period} 
To justify the choice of $K^{3/4}$, we first clarify how $\Regret$ and $\Violation$ depend on the mixing period $K^B$. Note that the covering number is closely related to terms (I) and (IV) of the decompositions in \eqref{eq:decomposition}, and the dual variable directly affects term (V). This yields the following simplified regret and constraint violation bounds.
\begin{align*}
    &\Regret = \widetilde\calO(\sqrt{K\log \calN_\epsilon}),\quad \Violation = \widetilde\calO(\sqrt{K\log \calN_\epsilon} + Y_k / \eta).
\end{align*}
We emphasize that $\Violation$ can be bounded by $\widetilde\calO(\sqrt{K^{3-2B}} + K^B)$. This is because the log covering number is bounded by $\widetilde\calO(K^{2-2B}d^2)$, and the dual variable is bounded by $\widetilde\calO(\eta K^B)$. Thus, to minimize the dependency on $K$, we set $B = 3/4$.

\paragraph{Discussion on Lower Bound}
We note that the regret lower bound of $\Omega(\sqrt{H^3 d^2 K})$ also applies to our setting, which is for stochastic linear unconstrained MDPs~\citep{zhou2021nearly, he2022near}. This is because by taking the loss to be fixed across episodes and using a trivial constraint (i.e., taking $b = H$), our problem reduces to a stochastic linear unconstrained MDP. Therefore, we conjecture that there remains room for improving our regret bound by a factor of $\widetilde{\mathcal{O}}({K^{1/4}})$.

Additionally, we outline a promising direction toward achieving $\widetilde{\mathcal{O}}(\sqrt{K})$ regret and violation in our setting. The main challenge is analyzing the constraint violation without relying on mixing steps. In our current analysis, the mixing step is inevitable to mitigate KL divergence terms in the drift upper bound (\Cref{lem:lyap 1}); these KL divergence terms arise from mirror-descent type updates to control adversarial losses. However, mixing becomes problematic for linear CMDPs because it enlarges the covering number. If one can design an approach that controls the dual variable without mixing, then achieving optimal bounds may become possible.




\section{Related Work}\label{appendix:related work}

\paragraph{Online Tabular CMDP}
Starting from the seminal work of \citet{efroni2020exploration}, minimizing regret and constraint violation in online tabular CMDPs has been studied under various settings. Several works~\citep{liu2021learning, bura2022dope, yu2025improved} considered the case of zero constraint violation under the assumption of a known safe policy. Under the same assumption, \citet{muller2023cancellation} studied hard constraint violation---the sum of only positive constraint violations. Without this assumption, the hard constraint violation was studied by \citet{muller2024truly, stradi2025optimal}. Moreover, \citet{wei2022triple} proposed a model-free algorithm for finite-horizon CMDPs, and \citet{wei2022provably, chen2022learning} proposed algorithms for infinite-horizon average-reward CMDPs. However, these works assume stationary environments. To relax this assumption, \citet{qiu2020upper} studied adversarial losses under full-information feedback. For both adversarial losses and costs, \citet{stradi2024online, stradi2025policy} considered full-information feedback and bandit feedback, respectively. More recently, several papers proposed algorithms for adversarial CMDPs with hard constraint violation guarantees. \citet{stradi2025learning} proposed an algorithm for adversarial losses and stochastic costs under bandit feedback, and \citet{zhu2025optimistic} studied stochastic losses and adversarial costs under full-information feedback.




\paragraph{Online Linear CMDP} 
For finite-horizon linear CMDPs, \citet{ghosh2022provably,ghosh2024towards} studied cumulative and hard constraint violations, respectively. Similarly, \citet{ghosh2023achieving} developed several algorithms for the infinite-horizon average-reward setting. \citet{kitamura2025provably} studied achieving zero constraint violation under the assumption of a known safe policy, and \citet{liu2025sample} studied sample complexity under the assumption of a generative model. \citet{wei2023provably} studied non-stationary CMDPs, where components of the environment may change subject to bounded total variation. However, this setting differs from adversarial settings, in which the variation of functions is not assumed to be bounded by some factor. \citet{amani2021safe, wei2024safe, roknilamouki2025provably} investigated hard instantaneous constraints, where unsafe actions must not be taken in each step. We also note additional works that are not linear CMDPs but incorporate linear function approximation. There are several works for linear mixture CMDPs with various settings~\citep{ding2021provably, ding2023provably, shi2023near}. More generally, the $q_\pi$-realizable setting was studied by \citet{tian2024confident}, which only assumes that value functions can be represented as an inner product of a given feature. However, adversarial environments have not been considered in these settings.


\paragraph{Online Adversarial Linear MDP}
Online adversarial linear MDPs have been studied under full-information feedback~\citep{zhong2023theoretical,sherman2024rate,cassel2024warm} and bandit feedback~\citep{neu2021online,luo2021policy,dai2023refined,sherman2023improved,kong2024improved, liu2024towards}. Specifically, in the full-information feedback setting, \citet{zhong2023theoretical} proposed a multi-batched policy optimization algorithm, achieving a $\widetilde\calO(K^{3/4})$ regret bound. \citet{sherman2024rate} achieved a $\widetilde\calO(\sqrt{K})$ regret bound, adopting a warm-up phase to obtain a simple expression for policies. In addition, \citet{cassel2024warm} proposed a warm-up free policy optimization algorithm with an improved regret bound. In the bandit feedback setting, \citet{luo2021policy} introduced the notion of dilated bonus, and \citet{liu2024towards} proposed two algorithms: one achieved a $\widetilde\calO(\sqrt{K})$ regret bound but was computationally inefficient, and the other achieved $\widetilde\calO(K^{3/4})$ and was computationally efficient.

\section{Contracted MDP}\label{appendix:contracted mdp}
In this section, we explain the notion of a contracted MDP~\citep{cassel2024warm}, which is essential for deriving our main results. A contracted MDP is defined by the tuple $\bar\calM = (H,\calS, \calA, \{\bar\bbP_h\}_{h=1}^H$, $\{\bar \ell_h\}_{h=1}^H, s_1, \rho)$. Here, $\rho:\calS \times\calA\times[H]\rightarrow  [0,1]$ specifies the level of contraction. In particular, the loss function and transition kernel are defined as
\begin{align*}
    &\bar \ell_h(s,a) = \rho(s,a,h) \ell_h(s,a),\\
    &\bar \bbP_h(s'\mid s,a) = \rho(s,a,h) \bbP_h(s'\mid s,a).
\end{align*}
Since $\rho(s,a,h)\in [0,1]$, it follows that $\bar \ell_h(s,a) \in [0,1]$, meaning that the contraction preserves the boundedness of the original loss function. On the other hand, $\sum_{s'\in\calS}\bar\bbP_h(s'\mid s,a) \leq  1$, which implies that $\bar\bbP$ defines a sub-probability measure. Although this does not satisfy the definition of a probability measure, it is sufficient for our purposes, as the contracted transition kernel is only used in the analysis. Furthermore, it is often called a sub-MDP because its transition kernel is a sub-probability measure.

Accordingly, $\bar V_{\ell, h}^\pi(s;\rho)$ denotes the $\rho$-contracted value function with respect to a policy $\pi$ and a contracted MDP $\bar\calM$. Given $\bar V_{\ell, H+1}^\pi(s;\rho) = 0$ for all $s\in\calS$, $\bar V_{\ell, h}^\pi(s;\rho)$ is defined recursively as
\begin{align}\label{eq:contracted value function}
    \bar V_{\ell, h}^\pi(s;\rho) = \bbE_{\bar\bbP,\pi}\left[\sum_{j=h}^H \bar\ell_j(s_j,a_j)|s_h=s\right] = \bbE_{a\sim\pi(\cdot\mid s)}\left[\bar\ell_h(s,a) + \sum_{s'\in\calS}\bar\bbP_h(s'\mid s,a) \bar V_{\ell,h+1}^\pi(s';\rho)\right].
\end{align}
We next introduce a lemma that highlights a key property of contracted MDPs, which states a $\rho$-contracted value function is less than or equal to its original value function. 
\begin{lemma} [Lemma 2 of \citet{cassel2024warm}] \label{lem:cassel 2}
    For any $\rho:\calS \times \calA\times[H] \to [0,1]$, $\pi\in\Pi$, $h\in[H]$, $s\in\calS$, and $\ell:\calS\times\calA \times[H] \to[0,1]$, we have $\bar V_{\ell,h}^\pi(s;\rho) \leq V_{\ell,h}^\pi(s)$.
\end{lemma}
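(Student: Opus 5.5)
The plan is backward induction on $h$, from $H+1$ down to $1$, proving the pointwise statement $\bar V_{\ell,h}^\pi(s;\rho) \leq V_{\ell,h}^\pi(s)$ for every $s\in\calS$ simultaneously. Along the way we also record that $\bar V_{\ell,h}^\pi(s;\rho)\geq 0$. Nonnegativity holds because $\bar\ell_h \geq 0$ and $\bar\bbP_h \geq 0$, so the recursion \eqref{eq:contracted value function} only adds and averages nonnegative quantities.

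The base case $h=H+1$ is immediate: both value functions are zero by convention.

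For the inductive step, fix $h$ and assume $0 \leq \bar V_{\ell,h+1}^\pi(s';\rho) \leq V_{\ell,h+1}^\pi(s')$ for all $s'$. Write the true value function through its Bellman recursion,
\[
V_{\ell,h}^\pi(s) = \bbE_{a\sim\pi(\cdot\mid s)}\Big[\ell_h(s,a) + \sum_{s'}\bbP_h(s'\mid s,a)V_{\ell,h+1}^\pi(s')\Big],
\]
and compare it term by term with \eqref{eq:contracted value function}, fixing each action $a$.

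\emph{Loss term.} Since $\rho(s,a,h)\in[0,1]$ and $\ell_h(s,a)\geq 0$, we get $\bar\ell_h(s,a) = \rho(s,a,h)\ell_h(s,a) \leq \ell_h(s,a)$.

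\emph{Transition term.} We have
\[
\sum_{s'}\rho(s,a,h)\bbP_h(s'\mid s,a)\bar V_{\ell,h+1}^\pi(s';\rho) \leq \sum_{s'}\bbP_h(s'\mid s,a)\bar V_{\ell,h+1}^\pi(s';\rho) \leq \sum_{s'}\bbP_h(s'\mid s,a)V_{\ell,h+1}^\pi(s').
\]
The first inequality uses $\rho\leq 1$ together with nonnegativity of $\bar V_{\ell,h+1}^\pi(\cdot;\rho)$. This is why nonnegativity must be carried through the induction: without it, multiplying by $\rho\le1$ could increase a negative quantity. The second inequality uses the inductive hypothesis and $\bbP_h\geq 0$.

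Taking the expectation over $a\sim\pi(\cdot\mid s)$ preserves both inequalities and yields the claim at step $h$. Nonnegativity at step $h$ follows directly from the same recursion, which closes the induction.

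There is no real obstacle here; the only subtle point is the sign bookkeeping in the transition term, which the nonnegativity invariant handles.
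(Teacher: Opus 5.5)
Your backward induction with the invariant $0 \le \bar V_{\ell,h}^\pi(\cdot;\rho) \le V_{\ell,h}^\pi(\cdot)$ is correct and is essentially the standard argument. The paper does not reprove this lemma; it imports it from \citet{cassel2024warm}, but its own proof of the companion inequality \eqref{eq:contracted expectation} runs the same backward induction, replacing $\bar\bbP_h$ by $\bbP_h$. That step relies on the nonnegativity of the continuation value, which the paper leaves implicit and your proof tracks explicitly.
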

Since $\bar\bbP$ is a sub-probability, we note that $\bbE_{\bar\bbP,\pi}$ applied to a constant $c\geq 0$ could be less than $c$ itself. Although this is trivial, we state it formally below for completeness, as this relation is used frequently in our analysis.
\begin{align}\label{eq:contracted expectation}
    \bbE_{\bar\bbP,\pi}[c\mid s_1 = s] \leq \bbE_{\bbP,\pi}[c\mid s_1=s] = c \quad \forall s\in\calS
\end{align}
\begin{proof}[Proof of \eqref{eq:contracted expectation}]
First, we prove $\bbE_{\bar\bbP,\pi}[\sum_{h\in[H]}\ell_h(s,a)|s_1=s] \leq \bbE_{\bbP,\pi}[\sum_{h\in[H]}\ell_h(s,a)|s_1=s]$ for any $s\in\calS$ and $\{\ell_h\}_{h=1}^H$ where $\ell_h:\calS\times\calA \to \bbR_+$, using induction on $h$. For the base case, $h=H$, $\bbE_{\bar\bbP,\pi}[\ell_H(s_H,a_H)|s_H]= \bbE_{\bbP,\pi}[\ell_H(s_H,a_H)|s_H] = \bbE_{a_H \sim \pi(\cdot|s_H)}[\ell_H(s_H,a_H)]$. Assuming the statement is true for $h+1$, we have
\begin{align*}
    \bbE_{\bar\bbP,\pi}\left[\sum_{j=h}^H \ell_j(s_j,a_j)|s_h=s\right] &= \bbE_{a\sim\pi(\cdot|s)}\left[\ell_h(s_h,a_h) + \sum_{s'\in\calS} \bar\bbP_h(s'\mid s_h,a_h)\bbE_{\bar\bbP,\pi}\left[\sum_{j=h+1}^H \ell_j(s_j,a_j)|s'\right]\right]\\
    &\leq \bbE_{a\sim\pi(\cdot|s)}\left[\ell_h(s_h,a_h) + \sum_{s'\in\calS} \bbP_h(s'\mid s_h,a_h)\bbE_{\bbP,\pi}\left[\sum_{j=h+1}^H \ell_j(s_j,a_j)|s'\right]\right]\\
    &= \bbE_{\bbP,\pi}\left[\sum_{j=h}^H \ell_j(s_j,a_j)|s_h=s\right].
\end{align*}
This completes the induction. Furthermore, by taking $\ell_h(s,a) = c/H$ for all $(s,a,h)\in\calS\times\calA\times [H]$, we have $\sum_{h\in[H]}\ell_h(s,a) = c$, and thus $\bbE_{\bar\bbP,\pi}[c|s] \leq \bbE_{\bbP,\pi}[c|s]$ for any $s\in\calS$. Since $\bbP$ is not contracted, we know that $\bbE_{\bbP,\pi}[c|s] = c$. This completes the proof.
\end{proof}
 
The following lemma is an extension of a well-known value difference lemma to incorporate contracted MDPs. 
\begin{lemma}[Lemma 1 of \citet{shani2020optimistic} and Lemma 14 of \cite{cassel2024warm}]\label{lem:extended value diff}
Let $\pi, \widehat\pi$ be two policies, and let $\mathcal{M} = (H, \mathcal{S}, \mathcal{A}, \{\bbP_h\}_{h=1}^H, \{\ell_h\}_{h=1}^H,s_1)$ be a (possibly sub) MDP. 
For all $h\in [H]$, let $\widehat{Q}_{\ell, h}:\calS\times\calA \rightarrow \bbR$ be an arbitrary function, and let
$\widehat{V}_{\ell,h}(s) = \left\langle \widehat{Q}_{\ell,h}(s, \cdot), \widehat\pi_h(\cdot \mid s) \right\rangle$ for all $s\in\calS$.
Then,
\begin{align*}
V_{\ell,1}^{\pi}(s_1) - \widehat{V}_{\ell,1}(s_1)
&=  \bbE_{\bbP,\pi} \left[ \sum_{h=1}^H
\left\langle  \widehat{Q}_{\ell,h}(s_h, \cdot), \pi_h(\cdot \mid s_h) - \widehat\pi_h(\cdot \mid s_h) \right\rangle \,\middle|\, s_1\right] \\
&\quad+ \bbE_{\bbP,\pi} \left[ \sum_{h=1}^H
\ell_h(s_h, a_h) + \bbP_h\widehat{V}_{\ell,h+1}(s_h, a_h) - \widehat{Q}_{\ell,h}(s_h, a_h) 
\,\middle|\, s_1\right],
\end{align*}
where $V_{\ell,1}^\pi$ is the value function of $\pi$, and $\bbP_h\widehat{V}_{\ell,h+1}(s, a) = \sum_{s'\in\calS} \bbP_h(s'\mid s,a)\widehat V_{\ell,h+1}(s')$.
\end{lemma}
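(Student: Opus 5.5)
The plan is a backward induction on $h$ that unrolls a one-step identity. It is exactly the standard performance-difference argument; the only extra care is to avoid ever using that $\bbP_h(\cdot\mid s,a)$ sums to one, so the same argument covers sub-MDPs.

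\textbf{Setting up the expectation.} First I fix the meaning of $\bbE_{\bbP,\pi}[\cdot\mid s_1]$ when $\bbP$ is only a sub-probability kernel. As in \eqref{eq:contracted value function}, it is the iterated operator: for functions $x_h:\calS\times\calA\to\bbR$,
\[
\bbE_{\bbP,\pi}\Big[\sum_{j=h}^H x_j(s_j,a_j)\,\Big|\, s_h=s\Big]=\sum_a\pi_h(a\mid s)\Big(x_h(s,a)+\sum_{s'}\bbP_h(s'\mid s,a)\,\bbE_{\bbP,\pi}\Big[\sum_{j=h+1}^H x_j(s_j,a_j)\,\Big|\, s_{h+1}=s'\Big]\Big).
\]
This operator is linear in $(x_h)_h$, and that linearity is all we use. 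With this convention, $Q^\pi_{\ell,h}(s,a)=\ell_h(s,a)+\bbP_hV^\pi_{\ell,h+1}(s,a)$ and $V^\pi_{\ell,h}(s)=\langle Q^\pi_{\ell,h}(s,\cdot),\pi_h(\cdot\mid s)\rangle$, with $V^\pi_{\ell,H+1}=\widehat V_{\ell,H+1}=0$.

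\textbf{One-step identity.} For each $h$ and $s$, adding and subtracting $\langle\widehat Q_{\ell,h}(s,\cdot),\pi_h(\cdot\mid s)\rangle$ gives
\[
V^\pi_{\ell,h}(s)-\widehat V_{\ell,h}(s)=\big\langle \widehat Q_{\ell,h}(s,\cdot),\pi_h(\cdot\mid s)-\widehat\pi_h(\cdot\mid s)\big\rangle+\big\langle Q^\pi_{\ell,h}(s,\cdot)-\widehat Q_{\ell,h}(s,\cdot),\pi_h(\cdot\mid s)\big\rangle .
\]
Next I decompose the Q-difference:
\[
Q^\pi_{\ell,h}-\widehat Q_{\ell,h}=\big(\ell_h+\bbP_h\widehat V_{\ell,h+1}-\widehat Q_{\ell,h}\big)+\bbP_h\big(V^\pi_{\ell,h+1}-\widehat V_{\ell,h+1}\big).
\]
Substituting the second identity into the first writes $V^\pi_{\ell,h}-\widehat V_{\ell,h}$ at $s$ as $\sum_a\pi_h(a\mid s)[\,\cdot\,]$ of a one-step integrand plus $\sum_{s'}\bbP_h(s'\mid s,a)(V^\pi_{\ell,h+1}-\widehat V_{\ell,h+1})(s')$.

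\textbf{Unrolling.} I then prove by backward induction on $h$ (base case $h=H+1$, where both sides vanish) that $V^\pi_{\ell,h}(s)-\widehat V_{\ell,h}(s)$ equals $\bbE_{\bbP,\pi}[\sum_{j=h}^H(\cdot)\mid s_h=s]$ of the two claimed integrands. The inductive step plugs the hypothesis for $h+1$ into the recursion above and matches it with the definition of the iterated expectation, using linearity. In that step the term $\langle\widehat Q_{\ell,h}(s_h,\cdot),\pi_h-\widehat\pi_h\rangle$ does not depend on $a_h$, so averaging it over $a_h\sim\pi_h$ leaves it unchanged. Taking $h=1$ and $s=s_1$ gives the statement.

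\textbf{Main obstacle.} There is no real analytical difficulty. The step needing care is the sub-MDP case: one must never replace $\sum_{s'}\bbP_h(s'\mid s,a)c$ by $c$, and must treat $\bbE_{\bbP,\pi}$ purely as the linear iterated operator. The argument above satisfies both requirements, since every step is an exact algebraic identity with no normalization of $\bbP_h$.
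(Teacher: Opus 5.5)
Your proof is correct and takes essentially the same approach the paper relies on. The paper gives no proof of its own and imports the lemma from \citet{shani2020optimistic} and \citet{cassel2024warm}, whose argument is exactly this backward unrolling of the one-step decomposition. As you note, the sub-MDP case works by treating $\bbE_{\bbP,\pi}$ as the linear iterated operator of \eqref{eq:contracted value function} and normalizing only $\pi_h$, never $\bbP_h$.
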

Since we assume that the initial state $s_1$ is fixed, we omit it when clear from the context for simplicity.

\section{Parameterizations and Function Classes}\label{appendix:parameterization}
In this section, we introduce the parameterizations of $\widehat Q$ and $\widehat \pi$. Following this, we define the function classes to which the value function estimates and policies belong. 

\paragraph{Parameterization}
For the parameterization of $\widehat Q$, given $\beta\in\bbR, w \in \bbR^d, \Lambda\in \bbR^{d\times d}$, we define $\widehat Q(\cdot,\cdot; \beta,w,\Lambda)$ as
\begin{align}\label{eq:param Q}
    \widehat Q(\cdot,\cdot; \beta,w,\Lambda) = \left(\phi(\cdot,\cdot)^\top w - \beta \|\phi(\cdot,\cdot)\|_\Lambda\right) \cdot \sigma(-\beta_w\|\phi(\cdot,\cdot)\|_\Lambda + \log K).
\end{align}
Next, we consider the parameterization of $\widehat\pi$. Let $n \in \bbZ_+$ denote the number of mixing steps. 

For the parameterization of $\widehat\pi$, given $n, \{\beta_i, w_i\}_{i=0}^{n}, \Lambda$ and a mixing parameter $\theta \in (0,1)$, we first generate policies $\{\widehat\pi_i\}_{i=0}^{n+1}$ recursively, and define the final policy as $\widehat \pi(\cdot|\cdot; \{\beta_i, w_i\}_{i=0}^{n}, \Lambda)$:
\begin{align}\label{eq:param pi}
\begin{aligned}
    \text{Generate $\widehat\pi_i$}: \quad 
    &\widehat \pi_0(\cdot\mid s) = \piunif(\cdot\mid s),\\
    &\widetilde\pi_i (\cdot\mid s) = (1-\theta)\widehat\pi_i(\cdot\mid s) + \theta \piunif(\cdot\mid s) \quad i=0,\ldots,n,\\
    &\widehat\pi_{i+1}(\cdot\mid s) \propto \widetilde\pi_i(\cdot\mid s) \exp\left(\widehat Q(s,\cdot; \beta_{i},w_{i},  \Lambda)\right) \quad i=0,\ldots,n,\\
    \text{Define}: \quad 
    &\widehat \pi(\cdot\mid s; \{\beta_i, w_i\}_{i=0}^{n}, \Lambda) = \widehat\pi_{n+1}(\cdot\mid s).
\end{aligned}
\end{align}
We keep the policy parameterization in its recursive form for the following reason. Although we can easily show that $\widehat\pi(\cdot | \cdot; \{\beta_i, w_i\}_{i=0}^{n},\Lambda)$ follows the weighted LSE softmax, i.e., $\sum_{i}\zeta_i \exp(\sum_{j}\widehat Q_j)$, the weight parameters $\zeta_i$ depend on $\{\beta_i,w_i\}_{i=0}^{n}$ and $\Lambda$, which makes analyzing this form difficult. Thus, obtaining the closed form of $\widehat\pi(\cdot|\cdot;\{\beta_i,w_i\}_{i=0}^n, \Lambda)$ is intractable, as specifying exact $\zeta_i$ is difficult.

Since $\widehat\pi_{n+1}(\cdot\mid s)$ induces a probability distribution over $\calA$ for each $s\in\calS$, \eqref{eq:param pi} indeed defines a valid policy. Furthermore, the following lemma shows that $Q$-function estimates and policies generated by \Cref{alg:main-linear} can be parameterized using \eqref{eq:param Q}, \eqref{eq:param pi}.
\begin{lemma}\label{lem:pol param}
    For any $e\in E$, consider $k\in K_e$. For some $n\geq 0$, let $k_e + nK^B$ be the last index that the mixing is applied before episode $k$, i.e., $n=\max\{0, \floor{(k-1-k_e)/K^B}\}$. Let $\widehat\pi_h^k, \{\widehat Q_{f,h}^j,\widehat Q_{g,h}^j\}_{j=k_e}^{k-1}$ be the policy and $Q$-function estimates generated by \Cref{alg:main-linear}, respectively. Let $S_i$ be the index set defined as
    \begin{align}\label{eq:Si}
        S_i &= \begin{cases}
            \{k_e + iK^B, \ldots, k_e+(i+1)K^B - 1\} & \text{for $i=0,\ldots, n-1$},\\
            \{k_e + nK^B, \ldots, k-1\} & \text{for $i=n$}.
        \end{cases}
    \end{align}
    Then there exists $\{w_{f,h}^j, w_{g,h}^j\}_{j=k_e}^{k-1}, \Lambda$ such that for $j=k_e,\ldots, k-1$,
    \begin{align*}
        \widehat Q_{f,h}^j(\cdot,\cdot) = \widehat Q(\cdot,\cdot; \beta_b, w_{f,h}^j, \Lambda),\quad \widehat Q_{g,h}^j(\cdot,\cdot) = \widehat Q(\cdot,\cdot; \beta_b, w_{g,h}^j, \Lambda).
    \end{align*}
    Furthermore, we have
    \begin{align*}
        \widehat\pi_h^k(\cdot|\cdot) = \widehat\pi(\cdot\mid \cdot; \{\beta_i,w_i\}_{i=0}^n, \Lambda)
    \end{align*}
    where $\beta_i = -\alpha\beta_b\sum_{j\in S_i} (1+Y_j), w_i = -\alpha\sum_{j\in S_i} (w_{f,h}^j + Y_j w_{g,h}^j)$ for $i=0,\ldots,n$, and $Y_j$ is the dual variable in episode $j$.
\end{lemma}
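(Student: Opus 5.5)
Fix an epoch $e$ and take $\Lambda = (\Lambda_h^{k_e})^{-1}$; this matrix does not change within the epoch. The first claim, about the $Q$-function estimates, follows directly from line 17 of \Cref{alg:main-linear}. Since $\bar\phi_h^{k_e}(s,a) = \phi(s,a)\,\sigma(-\beta_w\|\phi(s,a)\|_\Lambda + \log K)$, we have $\|\bar\phi_h^{k_e}\|_\Lambda = \|\phi\|_\Lambda\,\sigma(\cdot)$, because the sigmoid is positive. Hence
\[
\widehat Q_{\ell,h}^j = \left(\phi^\top w_{\ell,h}^j - \beta_b\|\phi\|_\Lambda\right)\sigma(-\beta_w\|\phi\|_\Lambda + \log K),
\qquad
w_{\ell,h}^j := \widehat\theta_{\ell,h}^j + \widehat\psi_h^j\widehat V_{\ell,h+1}^j .
\]
This is exactly $\widehat Q(\cdot,\cdot;\beta_b,w_{\ell,h}^j,\Lambda)$ in \eqref{eq:param Q}. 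The form is linear in the pair $(w,\beta)$ once $\Lambda$ is fixed. So for any nonnegative coefficients $c_j$,
\[
\sum_j c_j\,\widehat Q(\cdot;\beta_b,w_j,\Lambda) = \widehat Q\Big(\cdot;\ \beta_b\sum_j c_j,\ \sum_j c_j w_j,\ \Lambda\Big).
\]
Scaling by $-\alpha$ is handled the same way; the resulting negative $\beta$ is allowed, since $\beta\in\bbR$.

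For the policy claim, I would unroll the updates between mixings. Consider a block $S_i$ during which no mixing happens except possibly at its first index. The multiplicative updates compose, giving
\[
\widehat\pi_h^{\max S_i+1} \propto \widetilde\pi_h^{\min S_i}\exp\Big(-\alpha\sum_{j\in S_i}\big(\widehat Q_{f,h}^j + Y_j\widehat Q_{g,h}^j\big)\Big).
\]
Each normalization constant depends only on $s$, so the constants cancel up to the final proportionality. By the linearity observation, the exponent equals $\widehat Q(s,\cdot;\beta_i,w_i,\Lambda)$ with the stated $\beta_i$ and $w_i$.

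Next I check that mixing happens exactly at the indices $\min S_i = k_e + iK^B$. The mixing condition in line 20 is $k - k_e \equiv 0 \bmod K^B$, and this holds precisely at those indices. I should also verify that mixing at the first episode of the epoch $k_e$ (the case $i=0$) is consistent with \eqref{eq:param pi}. There $\widehat\pi_0 = \piunif$ and $\widetilde\pi_0 = (1-\theta)\piunif + \theta\piunif = \piunif$, which matches line 4 of the algorithm.

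The proof then proceeds by induction on $i$. The hypothesis is that $\widehat\pi_h^{k_e + iK^B}$ equals $\widehat\pi_i$ from \eqref{eq:param pi} built with parameters $\{\beta_{i'},w_{i'}\}_{i'<i}$. The inductive step is one mix followed by the block product shown above. The last block $S_n$ may be partial, ending at $k-1$. It still begins with a mixing step, because $n$ was chosen as the last mixing index below $k$. One edge case needs care: if $k = k_e$, then $S_0$ is empty, the sum is zero, and we get $\widehat\pi = \piunif$, which is consistent.

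The main obstacle is bookkeeping rather than any analysis. One point is to make sure that within an epoch the algorithm uses the fixed $\Lambda_h^{k_e}$, and not $\Lambda_h^j$, in both the bonus and the contraction; otherwise the parameters would not share a single $\Lambda$. The other point is to handle the definition $n = \max\{0,\floor{(k-1-k_e)/K^B}\}$ correctly at block boundaries. Finally, the dual weights $Y_j$ must enter only through the coefficients $(1+Y_j)$ on $\beta_b$ and $Y_j$ on $w_{g,h}^j$. This works because both $\widehat Q_f$ and $\widehat Q_g$ carry the same bonus $\beta_b$.
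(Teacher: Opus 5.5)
Your proposal is correct and follows the paper's proof: the same choices $\Lambda=(\Lambda_h^{k_e})^{-1}$ and $w_{\ell,h}^j=\widehat\theta_{\ell,h}^j+\widehat\psi_h^j\widehat V_{\ell,h+1}^j$, the same block-by-block unrolling of the multiplicative updates between mixing episodes back to $\widehat\pi_h^{k_e}=\piunif$, and the same collapse of the exponent using linearity of $\widehat Q(\cdot,\cdot;\beta,w,\Lambda)$ in $(\beta,w)$. Your forward induction on $i$ is the paper's backward unrolling read in the opposite direction. Your explicit treatment of the case $k=k_e$ and of the (trivial) mixing at $k_e$ is a useful check that the paper leaves implicit.
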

\begin{proof}
    Note that for any $j$, by algorithm
    \begin{align*}
        \widehat Q_{f,h}^j(\cdot,\cdot) = \left(\phi(\cdot,\cdot)^\top \left[\widehat\theta_{f,h}^j + \widehat\psi_{h}^j\widehat V_{f,h+1}^j\right] - \beta_b \|\phi(\cdot,\cdot)\|_{(\Lambda_h^{k_e})^{-1}}\right)\sigma(-\beta_w \|\phi(\cdot,\cdot)\|_{(\Lambda_h^{k_e})^{-1}} + \log K).
    \end{align*}
    Then we can take $w_{f,h}^j = \widehat\theta_{f,h}^j + \widehat\psi_{h}^j\widehat V_{f,h+1}^j$ and $\Lambda = (\Lambda_h^{k_e})^{-1}$. Here, it is clear that $(2K)^{-1}I\preceq (\Lambda_h^{k_e})^{-1}\preceq I$. Also, we can apply the same argument to $\widehat Q_{g,h}^j$. Then the first statement is proved. 
    
    Let us prove the second statement. By the definition of $n$, the mixing is not applied from episode $k_e + nK^B + 1$ to $k-1$. Then we have
    \begin{align*}
        \widehat\pi_h^k(\cdot\mid s) \propto \widetilde\pi_h^{k_e+nK^B}(\cdot\mid s)\exp\left(-\alpha \sum_{j\in S_n} (\widehat Q_{f,h}^j(s,\cdot) + Y_j\widehat Q_{g,h}^j(s,\cdot) )\right).
    \end{align*}
    Furthermore, since $(k_e + nK^B) - k_e \equiv 0 \mod K^B$, we know that $\widetilde\pi_h^{k_e+nK^B}$ is mixed. Then we have
    \begin{align*}
        \widetilde\pi_h^{k_e+nK^B}(\cdot\mid s) = (1-\theta)\widehat\pi_h^{k_e+nK^B}(\cdot\mid s) + \theta \piunif(\cdot\mid s).
    \end{align*}
    Similarly, we can deduce $\widehat\pi_h^{k_e+nK^B}$ using the fact that the mixing is not applied from episode $k_e + (n-1)K^B$ to $k_e + nK^B - 1$.
    \begin{align*}
        \widehat\pi_h^{k_e + nK^B}(\cdot\mid s) \propto \widetilde\pi_h^{k_e + (n-1)K^B}(\cdot\mid s)\exp\left(-\alpha \sum_{j\in S_{n-1}}(\widehat Q_{f,h}^j(s,\cdot) + Y_j\widehat Q_{g,h}^j(s,\cdot) )\right).
    \end{align*}
    We repeatedly apply these steps until episode $k_e$. Then we have for $i=0,\ldots,n-1$,
    \begin{align*}
        &\widetilde\pi_h^{k_e + iK^B}(\cdot\mid s) = (1-\theta)\widehat\pi_h^{k_e+iK^B}(\cdot\mid s) + \theta\piunif(\cdot\mid s), \\
        &\widehat\pi_h^{k_e + (i+1)K^B}(\cdot\mid s) \propto \widetilde\pi_h^{k_e + iK^B}(\cdot\mid s)\exp\left(-\alpha \sum_{j\in S_i}(\widehat Q_{f,h}^j(s,\cdot) + Y_j\widehat Q_{g,h}^j(s,\cdot) )\right).
    \end{align*}
    Note that $\widehat\pi_h^{k_e}=\piunif$. Then we have $\widehat\pi_h^k = \widehat\pi_{n+1}$, where $\widehat\pi_{n+1}$ is recursively defined as
    \begin{align*}
        &\widehat\pi_0(\cdot\mid s) = \piunif(\cdot\mid s)\\
        &\widetilde\pi_i(\cdot\mid s) = (1-\theta)\widehat\pi_i(\cdot\mid s) + \theta\piunif(\cdot\mid s) \quad \forall i=0,\ldots,n\\
        &\widehat\pi_{i+1}(\cdot\mid s) \propto \widetilde\pi_i(\cdot\mid s)\exp\left(-\alpha \sum_{j\in S_i } (\widehat Q_{f,h}^j(s,\cdot) + Y_j \widehat Q_{g,h}^j(s,\cdot))\right) \quad \forall i=0,\ldots,n.
    \end{align*}
    Note that 
    \begin{align*}
        &-\alpha \sum_{j\in S_i } (\widehat Q_{f,h}^j(s,a) + Y_j \widehat Q_{g,h}^j(s,a))\\ 
        &=-\alpha \sum_{j\in S_i} \left(\phi(s,a)^\top(w_{f,h}^j + Y_j w_{g,h}^j) - \beta_b(1+Y_j)\|\phi(s,a)\|_{\Lambda}\right)\sigma(-\beta_w \|\phi(s,a)\|_{\Lambda} + \log K) \\
        &= (\phi(s,a)^\top w_i - \beta_i\|\phi(s,a)\|_{\Lambda})\sigma(-\beta_w\|\phi(s,a)\|_{\Lambda} + \log K)\\
        &=\widehat Q(s,a;\beta_i, w_i, \Lambda)
    \end{align*}
    where $\beta_i = -\alpha\beta_b\sum_{j\in S_i} (1+Y_j), w_i = -\alpha\sum_{j\in S_i} (w_{f,h}^j + Y_j w_{g,h}^j)$. This completes the proof for the second statement.
\end{proof}

\paragraph{Function Class}
Now, we define the function classes as follows. Given some boundedness constants $C_\beta, C_w, C_Q \geq 0$,
\begin{align*}
    &\widehat\calQ(C_\beta, C_w, C_Q) = \left\{\widehat Q(\cdot, \cdot; \beta, w, \Lambda): |\beta|\leq C_\beta, \|w\|_2\leq C_w, (2K)^{-1}I\preceq \Lambda \preceq I, \|\widehat Q(\cdot,\cdot; \beta,w, \Lambda)\|_\infty \leq C_Q\right\}.
\end{align*}
Unlike $\widehat\calQ$, the class of policies has to be defined based on the number of mixing steps, since the formulation is determined by this number. Let $\widehat \Pi_n$ denote the set of policies that involve exactly $n$ mixing operations. Furthermore, we consider a boundedness constant $C_Y$ to incorporate the scale of the dual variable. Given $n\in\bbZ_+$ and some boundedness constants $C_\beta, C_w, C_Q, C_Y \geq 0$,
\begin{align*}
    &\widehat \Pi_n(C_\beta, C_w, C_Q, C_Y) \\
    &\quad= \left\{
        \widehat \pi(\cdot\mid s; \{\beta_i, w_i\}_{i=0}^{n}, \Lambda):
        \begin{aligned}
        & |\beta_i| \leq (1+C_Y)C_\beta, \
          \|w_i\|_2 \leq (1+C_Y)C_w, \\
        & (2K)^{-1}I \preceq \Lambda \preceq I, \
          \|\widehat Q(\cdot,\cdot; \beta_i,w_i, \Lambda)\|_\infty 
           \leq (1+C_Y)C_Q,
        \end{aligned}\ i=0,\ldots,n
        \right\}.
\end{align*}
Similar to $\widehat \Pi_n$, since $\widehat V$ is defined by $\widehat Q$ and $\widehat \pi$, we define the function class of $\widehat V$ for each $n$.
\begin{align*}
    &\widehat \calV_n(C_\beta, C_w, C_Q, C_Y)= \left\{\widehat V(\cdot): \widehat V(\cdot) = \sum_{a\in\calA}\widehat\pi(a\mid \cdot) \widehat Q(\cdot,a),\quad
    \begin{aligned}
        &\widehat Q \in \widehat\calQ(C_\beta,C_w, C_Q),\\
        &\widehat\pi \in \widehat \Pi_n(KC_\beta, K C_w, KC_Q, C_Y)
    \end{aligned} \right\}.
\end{align*}
Note that if we apply the policy mixing every $K^B$ episodes, then the number of mixing steps is at most $K^{1-B} = K^L$, where $L=1-B$. Thus, we define $\widehat \calV(C_\beta, C_w, C_Q, C_Y), \widehat \Pi(C_\beta, C_w, C_Q, C_Y)$ as the unions over $n = 0,\ldots, K^L$ as follows.
\begin{align*}
    &\widehat \calV(C_\beta, C_w, C_Q, C_Y) = \bigcup_{n=0}^{K^L}\widehat \calV_n(C_\beta, C_w, C_Q, C_Y),\\
    &\widehat \Pi(C_\beta, C_w, C_Q, C_Y) = \bigcup_{n=0}^{K^L} \widehat \Pi_n(C_\beta, C_w, C_Q, C_Y).
\end{align*}

\section{Covering Number}\label{appendix:covering number}
In this section, we show an upper bound on the covering number of  $\widehat\calV(C_\beta, C_w, C_Q, C_Y)$, which is crucial to analyze linear CMDPs. As a first step, we show that any $\widehat Q \in \widehat\calQ(C_\beta, C_w, C_Q)$ is Lipschitz, i.e., the $\ell_\infty$-norm between $Q$-function estimates is bounded by the $\ell_2$-norm between their parameters. We closely follow the proof of Lemma 10 of \citet{cassel2024warm}. 
\begin{lemma}[Lipschitz $\widehat Q$] \label{lem:lip Q}
Let $1\leq \beta_w, C_w, C_\beta$. For any $\widehat Q(\cdot,\cdot;\beta^1, w^1, \Lambda^1),\widehat Q(\cdot,\cdot;\beta^2, w^2, \Lambda^2) \in \widehat\calQ(C_\beta,C_w, C_Q)$, we have
\[
    \left\|\widehat Q(\cdot,\cdot; \beta^1,w^1,\Lambda^1) - \widehat Q(\cdot,\cdot;\beta^2,w^2,\Lambda^2)\right\|_\infty \leq 4\sqrt{K}\beta_w\max\{C_w, C_\beta\} \left\|(\beta^1,w^1,\Lambda^1) - (\beta^2, w^2, \Lambda^2)\right\|_2
\]
where $\left\|(\beta^1,w^1,\Lambda^1) - (\beta^2, w^2, \Lambda^2)\right\|_2$ is defined in \eqref{eq:lem:lip Q:l2}.
\end{lemma}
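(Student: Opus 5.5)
We fix $(s,a)$ and write $x=\phi(s,a)$. The plan is to split the difference of the two $Q$-estimates into a linear part, a bonus part, and a sigmoid part, and bound each in terms of the parameter distance. Set $u_j = \|x\|_{\Lambda^j}$, $\sigma_j = \sigma(-\beta_w u_j + \log K)$ and $L_j = x^\top w^j - \beta^j u_j$. Then $\widehat Q^j = L_j\sigma_j$, and adding and subtracting $L_1\sigma_2$ gives
\[
|\widehat Q^1-\widehat Q^2| \le |L_1-L_2|\,\sigma_2 + |L_1|\,|\sigma_1-\sigma_2|.
\]
We read the norm in \eqref{eq:lem:lip Q:l2} as $\sqrt{|\beta^1-\beta^2|^2+\|w^1-w^2\|_2^2+\|\Lambda^1-\Lambda^2\|_F^2}$. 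At the end, each term is bounded by a multiple of this quantity and the constants are collected.

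\emph{Step 1: the bonus map.} The map $\Lambda\mapsto\|x\|_\Lambda$ is not Lipschitz near $u=0$. We avoid this using $\Lambda\succeq(2K)^{-1}I$ and $\|x\|_2\le 1$. If $x\neq 0$, then $u_j\ge \|x\|_2/\sqrt{2K}$, and
\[
|u_1-u_2| = \frac{|x^\top(\Lambda^1-\Lambda^2)x|}{u_1+u_2} \le \frac{\|x\|_2^2\,\|\Lambda^1-\Lambda^2\|_F}{2\|x\|_2/\sqrt{2K}} \le \sqrt{K}\,\|\Lambda^1-\Lambda^2\|_F .
\]
If $x=0$, both sides vanish. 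This step is where the factor $\sqrt{K}$ comes from.

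\emph{Step 2: the first term.} Since $\sigma_2\le 1$ and $u_j\le 1$ (because $\Lambda\preceq I$),
\[
|L_1-L_2| \le \|w^1-w^2\|_2 + |\beta^1-\beta^2| + C_\beta\,|u_1-u_2| \le \|w^1-w^2\|_2 + |\beta^1-\beta^2| + C_\beta\sqrt{K}\,\|\Lambda^1-\Lambda^2\|_F .
\]

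\emph{Step 3: the sigmoid term, which is the main obstacle.} The naive bound is $|L_1|\le C_w+C_\beta$ together with the fact that $\sigma$ is $1/4$-Lipschitz, giving
\[
|L_1|\,|\sigma_1-\sigma_2| \le (C_w+C_\beta)\,\tfrac{\beta_w}{4}\,|u_1-u_2| \le (C_w+C_\beta)\,\tfrac{\beta_w}{4}\sqrt{K}\,\|\Lambda^1-\Lambda^2\|_F .
\]
This avoids needing $C_Q$ at all, and the condition $1\le\beta_w,C_w,C_\beta$ lets every term be dominated by $\sqrt{K}\beta_w\max\{C_w,C_\beta\}$ times the parameter norm. (Following \citet{cassel2024warm}, one could instead use $\sigma'\le\sigma$ to trade the Lipschitz constant for the $\sigma$-weight, but that is not needed here.)

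\emph{Step 4: collecting terms.} The three contributions are:
\begin{itemize}
\item the $\|w^1-w^2\|_2$ and $|\beta^1-\beta^2|$ terms from Step 2, each with coefficient $1$;
\item the $\Lambda$-term from Step 2, with coefficient $C_\beta\sqrt{K}$;
\item the $\Lambda$-term from Step 3, with coefficient $2\max\{C_w,C_\beta\}\cdot\beta_w\sqrt{K}/4$.
\end{itemize}
Each coefficient is at most $\sqrt{K}\beta_w\max\{C_w,C_\beta\}$, since $\beta_w,C_w,C_\beta\ge 1$. Bounding each of the three component distances by the joint norm then gives a total coefficient of at most $4\sqrt{K}\beta_w\max\{C_w,C_\beta\}$, which proves the claim.
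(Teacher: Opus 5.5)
Your proof is correct. It follows the same overall skeleton as the paper: triangle inequality, Lipschitzness of $\sigma$ and of the bonus in $\Lambda$ using $\Lambda\succeq(2K)^{-1}I$, then collecting terms. The difference is in how you handle the $\Lambda$-dependence.

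\textbf{The paper's route.} It swaps one parameter at a time ($\beta$, then $w$, then $\Lambda$). It splits the $\Lambda$-swap into three pieces, using only the $1$-Lipschitz constant of $\sigma$. It controls the bonus via $\bigl|\|\phi\|_{\Lambda^1}-\|\phi\|_{\Lambda^2}\bigr|\le\|(\Lambda^1)^{1/2}-(\Lambda^2)^{1/2}\|_2$, followed by the matrix square-root perturbation bound \Cref{lem:cassel 17} with $\lambda=(2K)^{-1}$. It then merges the three terms by Cauchy--Schwarz, giving the constant $3\sqrt{3/2}\approx 3.67$.

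\textbf{Your route.} You use the product split $L_1(\sigma_1-\sigma_2)+\sigma_2(L_1-L_2)$. You prove Lipschitzness of $\Lambda\mapsto\|x\|_\Lambda$ directly from $u_1-u_2=x^\top(\Lambda^1-\Lambda^2)x/(u_1+u_2)$ and the pointwise lower bound $u_j\ge\|x\|_2/\sqrt{2K}$. This yields exactly the same factor $\sqrt{K/2}$ with no operator-square-root lemma. With the sharper $1/4$-Lipschitz constant of $\sigma$, you sum the coefficients. There are four terms, not three as your Step 4 says, but this is harmless: each coefficient is at most $\sqrt{K}\beta_w\max\{C_w,C_\beta\}$, and their total is at most $3.5$ times it, so you land under the required $4$.

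\textbf{What each buys.} Your argument is more elementary and self-contained, and it only needs the quadratic-form bound at the specific $x$. The paper's argument establishes a uniform operator-norm statement about $\Lambda^{1/2}$, imported from \citet{cassel2024warm}, which lets it reuse their template directly. Neither argument needs $C_Q$.
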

\begin{proof}
    Consider 
    \begin{align*}
        &|\widehat Q(s,a; \beta^1,w^1, \Lambda^1) - \widehat Q(s,a; \beta^2,w^2, \Lambda^2)| \\
        &\leq \underbrace{|\widehat Q(s,a; \beta^1,w^1, \Lambda^1) - \widehat Q(s,a; \beta^2, w^1, \Lambda^1)|}_{\text{(I)}}
         + \underbrace{|\widehat Q(s,a; \beta^2,w^1, \Lambda^1) - \widehat Q(s,a; \beta^2,w^2, \Lambda^1)|}_{\text{(II)}}\\
        &\quad+ \underbrace{|\widehat Q(s,a; \beta^2,w^2, \Lambda^1) - \widehat Q(s,a; \beta^2,w^2, \Lambda^2)|}_{\text{(III)}}.
    \end{align*}
    We bound each term individually. Note that $(2K)^{-1}I\preceq\Lambda^1 \preceq I$. For (I), since $\|\phi(s,a)\|_{\Lambda^1}\leq \|(\Lambda^1)^{1/2}\|_2 \|\phi(s,a)\|_2\leq 1$ and $|\sigma(z)|\leq 1$ for any $z\in\bbR$,
    \begin{align*}
        \text{(I)}
        &= |\beta^1 - \beta^2| \cdot \|\phi(s,a)\|_{\Lambda^1} \sigma(-\beta_w \|\phi(s,a)\|_{\Lambda^1} + \log K)\\
        &\leq |\beta^1 - \beta^2|.
    \end{align*}
    For (II), by the Cauchy-Schwarz inequality,
    \begin{align*}
        \text{(II)}
        &= |\phi(s,a)^\top(w^1-w^2)| \sigma(-\beta_w\|\phi(s,a)\|_{\Lambda^1} + \log K)\\
        &\leq \|w^1 - w^2\|_2
    \end{align*}
    For (III), by the triangle inequality,
    \begin{align*}
        \text{(III)}&=|\widehat Q(s,a; \beta^2,w^2, \Lambda^1) - \widehat Q(s,a; \beta^2,w^2, \Lambda^2)| \\
        &\leq |\phi(s,a)^\top w^2|\cdot \left|\sigma(-\beta_w\|\phi(s,a)\|_{\Lambda^1} + \log K) - \sigma(-\beta_w\|\phi(s,a)\|_{\Lambda^2} + \log K)\right| \\
        &\quad + \beta^2 \left|\|\phi(s,a)\|_{\Lambda^1} - \|\phi(s,a)\|_{\Lambda^2}\right| \cdot \sigma(-\beta_w\|\phi(s,a)\|_{\Lambda^1} + \log K) \\
        &\quad + \beta^2 \|\phi(s,a)\|_{\Lambda^2}\cdot \left|\sigma(-\beta_w\|\phi(s,a)\|_{\Lambda^1} + \log K) - \sigma(-\beta_w\|\phi(s,a)\|_{\Lambda^2} + \log K)\right|.
    \end{align*}
    Note that the sigmoid function is $1$-Lipschitz on $\bbR$, and the triangle inequality implies that $\left|\|\phi(s,a)\|_{\Lambda^1} - \|\phi(s,a)\|_{\Lambda^2}\right| \leq \|(\Lambda^1)^{1/2} - (\Lambda^2)^{1/2}\|_2$.
    Then we can deduce that
    \begin{align*}
        &|\widehat Q(s,a; \beta^2,w^2, \Lambda^1) - \widehat Q(s,a; \beta^2,w^2, \Lambda^2)| \\
        &\leq C_w\beta_w {\|(\Lambda^1)^{1/2} - (\Lambda^2)^{1/2}\|_2} + C_\beta \|(\Lambda^1)^{1/2}- (\Lambda^2)^{1/2}\|_2 + C_\beta \beta_w {\|(\Lambda^1)^{1/2} - (\Lambda^2)^{1/2}\|_2}\\
        &\leq 3\max\{C_w \beta_w, C_\beta, C_\beta\beta_w\}{\|(\Lambda^1)^{1/2} - (\Lambda^2)^{1/2}\|_2}\\
        &\leq 3\max\{C_w\beta_w, C_\beta, C_\beta\beta_w\} \cdot \frac{1}{2\sqrt{1/(2K)}} {\|\Lambda^1 - \Lambda^2\|_2}\\
        &\leq (3/\sqrt{2})\sqrt{K}\beta_w\max\{C_w, C_\beta\} \|\Lambda^1 - \Lambda^2\|_2
    \end{align*}
    where the third inequality is due to \Cref{lem:cassel 17}, and the last inequality is because we assumed that $\beta_w \geq 1$. Note that we know $\|\Lambda^1 - \Lambda^2\|_2\leq \|\Lambda^1 - \Lambda^2\|_F$ where $\|\cdot\|_F$ denotes the Frobenius norm. Finally, we show that for any $(s,a) \in \calS \times \calA$,
    \begin{align*}
        &|\widehat Q(s,a; \beta^1,w^1, \Lambda^1) - \widehat Q(s,a; \beta^2,w^2, \Lambda^2)|\\
        &\leq \|w^1 -w^2\|_2 + |\beta^1 - \beta^2| + (3/\sqrt{2})\sqrt{K}\beta_w\max\{C_w, C_\beta\} \|\Lambda^1 - \Lambda^2\|_F\\
        &\leq \sqrt{3\left(\|w^1 -w^2\|_2^2 + |\beta^1 - \beta^2|^2 + ((3/\sqrt{2})\sqrt{K}\beta_w\max\{C_w, C_\beta\})^2 \|\Lambda^1 - \Lambda^2\|_F^2\right)}\\
        &\leq 4\sqrt{K}\beta_w\max\{C_w, C_\beta\}\left\|(\beta^1,w^1,\Lambda^1) - (\beta^2, w^2, \Lambda^2)\right\|_2
    \end{align*}
    where the second inequality follows from the Cauchy-Schwarz inequality, and the last inequality is due to $1 \leq (3/\sqrt{2})\sqrt{K}\beta_w\max\{C_w,C_\beta\}$, as we assumed that $1\leq \beta_w, C_w, C_\beta$. Here, $\left\|(\beta^1,w^1,\Lambda^1) - (\beta^2, w^2, \Lambda^2)\right\|_2$ denotes
    \begin{align}\label{eq:lem:lip Q:l2}
        \left\|(\beta^1,w^1,\Lambda^1) - (\beta^2, w^2, \Lambda^2)\right\|_2 = \sqrt{|\beta^1 - \beta^2|^2 + \|w^1 - w^2\|_2^2 + \|\Lambda^1 - \Lambda^2\|_F^2}.
    \end{align}
\end{proof}

Now, we have to show that the policy parameterization given in \eqref{eq:param pi} satisfies a Lipschitz property, i.e., $\ell_1$-norm between any two policies is bounded by the $\ell_2$-norm between their parameters. 
\begin{lemma}[Lipschitz $\widehat\pi$]\label{lem:lip pol}
    Let $1\leq \beta_w, C_w, C_\beta$ and $0\leq C_Y$. Suppose that two policies $\widehat\pi^1,\widehat\pi^2 \in \widehat\Pi_n(C_\beta, C_w, C_Q, C_Y)$ are parameterized by $\{\beta_i^1, w_i^1\}_{i=0}^{n}, \Lambda^1$ and $\{\beta_i^2, w_i^2\}_{i=0}^{n}, \Lambda^2$, respectively. Then the following holds for any $s\in\calS$.
    \begin{align*}
        &\|\widehat \pi^1(\cdot\mid s) - \widehat\pi^2(\cdot\mid s)\|_1 \\
        &\leq 32\sqrt{(n+1)K}\beta_w(1+C_Y)\max\{C_w,C_\beta\}\left(\frac{8|\calA|}{\theta}\right)^n \sqrt{\sum_{i=0}^n \|(\beta_{i}^1,w_{i}^1,\Lambda^1) - (\beta_{i}^2, w_{i}^2, \Lambda^2)\|_2^2}.
    \end{align*}
\end{lemma}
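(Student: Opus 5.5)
The plan is to prove a one-step recursion along the generating sequence in \eqref{eq:param pi} and then unroll it. Fix $s$ and write $\widehat\pi^j_i$, $\widetilde\pi^j_i$ for the intermediate policies of policy $j\in\{1,2\}$. Set $\widehat Q_i^j(a) := \widehat Q(s,a;\beta_i^j,w_i^j,\Lambda^j)$ and $\Delta_i := \|\widehat Q_i^1-\widehat Q_i^2\|_\infty$.

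The parameters of $\widehat\Pi_n(C_\beta,C_w,C_Q,C_Y)$ satisfy $|\beta_i|\le(1+C_Y)C_\beta$ and $\|w_i\|_2\le(1+C_Y)C_w$. Applying \Cref{lem:lip Q} with constants $(1+C_Y)C_\beta,(1+C_Y)C_w$ gives
\[
\Delta_i\le 4\sqrt K\beta_w(1+C_Y)\max\{C_w,C_\beta\}\,\|(\beta_i^1,w_i^1,\Lambda^1)-(\beta_i^2,w_i^2,\Lambda^2)\|_2 .
\]

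\textbf{One-step bound.} For a softmax-reweighting $p\mapsto q\propto p\,e^{x}$, I will show
\[
\|q^1-q^2\|_1\le c_1\|p^1-p^2\|_1\cdot(\text{ratio factor})+2\|x^1-x^2\|_\infty .
\]
I will split this via the intermediate distribution $q'\propto p^1e^{x^2}$. The exponent change contributes at most $2\|x^1-x^2\|_\infty$, by the standard log-ratio argument: $|\log(q^1/q')|\le 2\Delta$, so $\|q^1-q'\|_1\le e^{2\Delta}-1$. To keep things linear, I instead use the mean-value derivative bound for the softmax in the logit direction, which gives a total variation change of at most $2\Delta$.

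The base-measure change $p^1\to p^2$ with fixed $x$ is where mixing matters. We have $q'(a)-q^2(a)$ written in terms of $(p^1(a)-p^2(a))e^{x(a)}/Z$, where $Z=\sum_b p(b)e^{x(b)}$. Since $p^j=\widetilde\pi_i^j\ge\theta/|\calA|$ componentwise, the relative perturbation is bounded by $\frac{|\calA|}{\theta}\|p^1-p^2\|_1$. The issue is that $e^{x(a)}/Z$ can be as large as $1/\min_b p(b)$, so the natural bound is
\[
\|q'-q^2\|_1\le 2\max_a\frac{|p^1(a)-p^2(a)|}{p^1(a)}\le\frac{2|\calA|}{\theta}\|p^1-p^2\|_1 .
\]
This follows by writing $q'(a)/q^2(a)=(p^1(a)/p^2(a))\cdot(Z^2/Z^1)$ and bounding both ratios by $1\pm$ the max relative deviation. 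Combining with $\|\widetilde\pi_i^1-\widetilde\pi_i^2\|_1\le\|\widehat\pi_i^1-\widehat\pi_i^2\|_1$, this yields
\[
\|\widehat\pi^1_{i+1}-\widehat\pi^2_{i+1}\|_1\le\frac{8|\calA|}{\theta}\|\widehat\pi_i^1-\widehat\pi_i^2\|_1+c\,\Delta_i .
\]
Here I allow a constant slack, such as $8$, to absorb the case where the relative deviation is not small: total variation is at most $2$ anyway, so when the relative deviation exceeds $1/2$ the bound holds trivially.

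\textbf{Unrolling.} Since $\widehat\pi_0^1=\widehat\pi_0^2=\piunif$, iterating the recursion gives
\[
\|\widehat\pi^1-\widehat\pi^2\|_1\le c\sum_{i=0}^n(8|\calA|/\theta)^{n-i}\Delta_i\le c(8|\calA|/\theta)^n\sum_i\Delta_i .
\]
By Cauchy–Schwarz, $\sum_i\Delta_i\le\sqrt{n+1}\sqrt{\sum_i\Delta_i^2}$. Plugging in the Lipschitz bound on $\Delta_i$ produces the stated form with constant $32$ (taking $c\le 8$).

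\textbf{Main obstacle.} The main obstacle is the base-measure step: getting a clean multiplicative factor $O(|\calA|/\theta)$ uniformly, without an exponential dependence on $\|x\|_\infty$. This is exactly where the lower bound $\widetilde\pi\ge\theta/|\calA|$ from mixing is used. I expect to handle it with the ratio argument above plus the trivial bound of $2$ on total variation.
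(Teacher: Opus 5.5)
Your proof is correct and has the same structure as the paper's: a one-step recursion whose $\calO(|\calA|/\theta)$ factor comes from the mixing floor $\widetilde\pi_i^j\ge\theta/|\calA|$, unrolled from $\widehat\pi_0^1=\widehat\pi_0^2=\piunif$, then Cauchy--Schwarz and \Cref{lem:lip Q} with the $(1+C_Y)$-scaled constants. The only difference is the one-step inequality: the paper writes $\widehat\pi_{i+1}^j\propto\exp(\log\widetilde\pi_i^j+\widehat Q_i^j)$ and combines \Cref{lem:softmax lipschitz} (constant $8$) with \Cref{lem:lip log}, whereas you split through $q'\propto p^1e^{x^2}$ and bound the two perturbations directly. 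Your route gives smaller constants: $2$ for the logit change, and at most $4|\calA|/\theta$ for the base-measure change, because your sketched ratio argument actually gives $2\epsilon/(1-\epsilon)$ rather than $2\epsilon$ and the fallback to the trivial bound $2$ is needed for $\epsilon>1/2$. So the stated bound follows with room to spare.
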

\begin{proof}
    Fix $s\in\calS$. Let $\{\widehat\pi_i^1, \widetilde\pi_i^1\}_{i=0}^{n+1}, \{\widehat\pi_i^2,\widetilde\pi_i^2\}_{i=0}^{n+1}$ be the sequences of policies recursively generated by \eqref{eq:param pi} to define $\widehat\pi^1, \widehat\pi^2$, respectively. Then it follows that
    \begin{align*}
        \widehat\pi^1(\cdot\mid s) =\widehat\pi_{n+1}^1(\cdot\mid s) \propto \widetilde\pi_{n}^1(\cdot\mid s)\exp( \widehat Q(s,\cdot; \beta_{n}^1, w_{n}^1, \Lambda^1)),\\
        \widehat\pi^2(\cdot\mid s) = \widehat\pi_{n+1}^2(\cdot\mid s) \propto \widetilde\pi_{n}^2(\cdot\mid s)\exp( \widehat Q(s,\cdot; \beta_{n}^2, w_{n}^2, \Lambda^2)).
    \end{align*}
    Note that $\widetilde\pi_n^1(a\mid s), \widetilde\pi_n^2(a\mid s) > 0$ for all $a\in\calA$, since they are perturbed. Then we can define $\log\widetilde\pi_n^1(a\mid s), \log\widetilde\pi_n^2(a\mid s)$, and it leads to
    \begin{align}\label{eq:lem:lip pol:1}
    \begin{aligned}
        \widehat\pi_{n+1}^1(\cdot\mid s) \propto \exp\left(\log\widetilde\pi_{n}^1(\cdot\mid s) + \widehat Q(s,\cdot; \beta_{n}^1, w_{n}^1, \Lambda^1)\right),\\
        \widehat\pi_{n+1}^2(\cdot\mid s) \propto \exp\left(\log\widetilde\pi_{n}^2(\cdot\mid s) + \widehat Q(s,\cdot; \beta_{n}^2, w_{n}^2, \Lambda^2)\right).
    \end{aligned}
    \end{align}
    By \Cref{lem:softmax lipschitz},
    \begin{align*}
        &\|\widehat\pi_{n+1}^1(\cdot\mid s) - \widehat\pi_{n+1}^2(\cdot\mid s)\|_1 \\
        &\leq 8\left\|\log\widetilde\pi_{n}^1(\cdot\mid s) + \widehat Q(s,\cdot; \beta_{n}^1, w_{n}^1, \Lambda^1)- \log\widetilde\pi_{n}^2(\cdot\mid s)- \widehat Q(s,\cdot; \beta_{n}^2, w_{n}^2, \Lambda^2)\right\|_\infty\\
        &\leq 8\left\|\log\widetilde\pi_{n}^1(\cdot\mid s)- \log\widetilde\pi_{n}^2(\cdot\mid s)\right\|_\infty
        + 8\left\|\widehat Q(s,\cdot; \beta_{n}^1, w_{n}^1, \Lambda^1)- \widehat Q(s,\cdot; \beta_{n}^2, w_{n}^2, \Lambda^2)\right\|_\infty.
    \end{align*}
    Note that $\widetilde\pi_n^1(a\mid s), \widetilde\pi_n^2(a\mid s) \geq \theta / |\calA|$ for all $a\in\calA$ due to the definition. Then we can utilize the Lipschitzness of $\log$ function in $\left[\theta/|\calA|, \infty\right)^{|\calA|}$. Thus, by \Cref{lem:lip log}, 
    \begin{align}\label{eq:lem:lip pol:2}
    \begin{aligned}
        \left\|\log\widetilde\pi_{n}^1(\cdot\mid s)- \log\widetilde\pi_{n}^2(\cdot\mid s)\right\|_\infty 
        &\leq \frac{|\calA|}{\theta}\|\widetilde\pi_n^1(\cdot\mid s) - \widetilde\pi_n^2(\cdot\mid s)\|_1\\
        &= \frac{|\calA|}{\theta}(1-\theta)\|\widehat\pi_n^1(\cdot\mid s) - \widehat\pi_n^2(\cdot\mid s)\|_1\\
        &\leq \frac{|\calA|}{\theta}\|\widehat\pi_n^1(\cdot\mid s) - \widehat\pi_n^2(\cdot\mid s)\|_1\\
    \end{aligned}
    \end{align}
    where the equality is due to $\widetilde\pi_n^1(\cdot\mid s) = (1-\theta)\widehat\pi_n^1(\cdot\mid s) + \theta\piunif(\cdot\mid s)$ and $\widetilde\pi_n^2(\cdot\mid s) = (1-\theta)\widehat\pi_n^2(\cdot\mid s) + \theta\piunif(\cdot\mid s)$. Plugging \eqref{eq:lem:lip pol:2} into \eqref{eq:lem:lip pol:1}, we have a recursive relation, and it leads to
    \begin{align*}
        &\|\widehat\pi_{n+1}^1(\cdot\mid s) - \widehat\pi_{n+1}^2(\cdot\mid s)\|_1 \\
        &\leq \frac{8|\calA|}{\theta}\left\|\widehat\pi_{n}^1(\cdot\mid s)- \widehat\pi_{n}^2(\cdot\mid s)\right\|_1
        + 8\left\|\widehat Q(s,\cdot; \beta_{n}^1, w_{n}^1, \Lambda^1)- \widehat Q(s,\cdot; \beta_{n}^2, w_{n}^2, \Lambda^2)\right\|_\infty\\
        &\leq \left(\frac{8|\calA|}{\theta}\right)^2\left\|\widehat\pi_{n-1}^1(\cdot\mid s)- \widehat\pi_{n-1}^2(\cdot\mid s)\right\|_1 \\
        &\quad+ 8 \sum_{i=0}^1 \left(\frac{8|\calA|}{\theta}\right)^i \left\|\widehat Q(s,\cdot; \beta_{n-i}^1, w_{n-i}^1, \Lambda^1)- \widehat Q(s,\cdot; \beta_{n-i}^2, w_{n-i}^2, \Lambda^2)\right\|_\infty\\
        &\qquad \vdots\\
        &\leq \left(\frac{8|\calA|}{\theta}\right)^{n+1}\left\|\widehat\pi_{0}^1(\cdot\mid s)- \widehat\pi_{0}^2(\cdot\mid s)\right\|_1 \\
        &\quad+ 8 \sum_{i=0}^n \left(\frac{8|\calA|}{\theta}\right)^i \left\|\widehat Q(s,\cdot; \beta_{n-i}^1, w_{n-i}^1, \Lambda^1)- \widehat Q(s,\cdot; \beta_{n-i}^2, w_{n-i}^2, \Lambda^2)\right\|_\infty\\
        &= 8 \sum_{i=0}^n \left(\frac{8|\calA|}{\theta}\right)^i \left\|\widehat Q(s,\cdot; \beta_{n-i}^1, w_{n-i}^1, \Lambda^1)- \widehat Q(s,\cdot; \beta_{n-i}^2, w_{n-i}^2, \Lambda^2)\right\|_\infty
    \end{align*}
    where the equality is due to $\widehat\pi_0^1(\cdot\mid s) = \widehat\pi_0^2(\cdot\mid s) = \piunif(\cdot\mid s)$. Furthermore, by the Cauchy-Schwarz inequality, 
    \begin{align*}
        &\|\widehat\pi_{n+1}^1(\cdot\mid s) - \widehat\pi_{n+1}^2(\cdot\mid s)\|_1 \\
        &\leq 8 \sqrt{\sum_{i=0}^n \left(\frac{8|\calA|}{\theta}\right)^{2i}}\sqrt{\sum_{i=0}^n \left\|\widehat Q(s,\cdot; \beta_{i}^1, w_{i}^1, \Lambda^1)- \widehat Q(s,\cdot; \beta_{i}^2, w_{i}^2, \Lambda^2)\right\|_\infty^2}\\
        &\leq 8 \sqrt{(n+1)\left(\frac{8|\calA|}{\theta}\right)^{2n}}\sqrt{\sum_{i=0}^n \left\|\widehat Q(s,\cdot; \beta_{i}^1, w_{i}^1, \Lambda^1)- \widehat Q(s,\cdot; \beta_{i}^2, w_{i}^2, \Lambda^2)\right\|_\infty^2}.
    \end{align*}
    Note that $\widehat Q(s,\cdot; \beta_{i}^1, w_{i}^1, \Lambda^1), \widehat Q(s,\cdot; \beta_{i}^2, w_{i}^2, \Lambda^2) \in \widehat\calQ((1+C_Y)C_\beta, (1+C_Y)C_w, (1+C_Y)C_Q)$. By the Lipschitzness of $\widehat Q$ (\Cref{lem:lip Q}),
    \begin{align*}
        &\|\widehat\pi_{n+1}^1(\cdot\mid s) - \widehat\pi_{n+1}^2(\cdot\mid s)\|_1 \\
        &\leq 8 \sqrt{(n+1)\left(\frac{8|\calA|}{\theta}\right)^{2n}} \sqrt{\sum_{i=0}^n \left(4\sqrt{K}\beta_w (1+C_Y)\max\{C_w,C_\beta\}\right)^2\|(\beta_{i}^1,w_{i}^1,\Lambda^1) - (\beta_{i}^2, w_{i}^2, \Lambda^2)\|_2^2}\\
        &=32\sqrt{(n+1)K}\beta_w(1+C_Y)\max\{C_w,C_\beta\}\left(\frac{8|\calA|}{\theta}\right)^n \sqrt{\sum_{i=0}^n \|(\beta_{i}^1,w_{i}^1,\Lambda^1) - (\beta_{i}^2, w_{i}^2, \Lambda^2)\|_2^2}
    \end{align*} as desired.
\end{proof}

Based on the Lipschitz properties that we have shown, we show a Lipschitz property of $\widehat V$, and it leads to an upper bound on the covering number of $\widehat \calV_n(C_\beta, C_w, C_Q, C_Y)$.
\begin{lemma} \label{lem:cov Vn}
Let $1\leq \beta_w, C_\beta, C_w, C_Q$ and $0\leq C_Y$. Given $\epsilon>0$, let $\calN_{\epsilon}(\widehat \calV_n(C_\beta, C_w, C_Q, C_Y))$ denote the $\epsilon$-covering number of $\widehat \calV_n(C_\beta, C_w, C_Q, C_Y)$ with respect to the $\ell_\infty$-norm. Then we have
\begin{align*}
        &\log\calN_{\epsilon}(\widehat \calV_n(C_\beta, C_w, C_Q, C_Y))\leq 3(n+2)^2d^2\log ((8|\calA|/\theta)(1+2C_1C_2/\epsilon))
    \end{align*}
    where
    \begin{align}\label{eq:C1 C2}
    \begin{aligned}
        &C_1 = 33\sqrt{(n+1)K^3}\beta_w\max\{C_w,C_\beta\} C_Q(1+C_Y),\\
        &C_2 = (n+2)(1+C_Y)K(C_\beta + C_w) + (n+2)\sqrt{d}.
    \end{aligned}
    \end{align}
\end{lemma}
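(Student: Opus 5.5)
The plan is to prove a Lipschitz bound for the map from parameters to $\widehat V$, and then cover the parameter space with an $\ell_2$-ball net. An element $\widehat V\in\widehat\calV_n(C_\beta,C_w,C_Q,C_Y)$ is determined by two pieces of data. The first is the $Q$-parameters $(\beta,w,\Lambda)$ with $|\beta|\le C_\beta$, $\|w\|_2\le C_w$ and $(2K)^{-1}I\preceq\Lambda\preceq I$. The second is the policy parameters $\{\beta_i,w_i\}_{i=0}^n$ together with $\Lambda'$, where $|\beta_i|\le(1+C_Y)KC_\beta$ and $\|w_i\|_2\le(1+C_Y)KC_w$.

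For two such functions $\widehat V^1,\widehat V^2$ and any $s$, I would split the difference as
\[
|\widehat V^1(s)-\widehat V^2(s)|\le \|\widehat\pi^1(\cdot\mid s)-\widehat\pi^2(\cdot\mid s)\|_1\,\|\widehat Q^1\|_\infty+\|\widehat Q^1(s,\cdot)-\widehat Q^2(s,\cdot)\|_\infty .
\]
The first term is handled by using $\|\widehat Q^1\|_\infty\le C_Q$ and \Cref{lem:lip pol}, applied with constants $KC_\beta,KC_w$. This contributes a factor $32\sqrt{(n+1)K}\beta_w(1+C_Y)K\max\{C_w,C_\beta\}(8|\calA|/\theta)^n$. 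The second term is handled by \Cref{lem:lip Q}, which contributes $4\sqrt K\beta_w\max\{C_w,C_\beta\}$. That factor can be absorbed after multiplying by $C_Q(1+C_Y)(8|\calA|/\theta)^n\ge1$.

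Next I stack all parameters into one vector $p=(\beta,w,\Lambda,\{\beta_i,w_i\}_{i\le n},\Lambda')$, whose dimension is at most $(n+2)(d+1)+2d^2\le 3(n+2)^2d^2$ in a crude count. Combining the two pieces with Cauchy–Schwarz gives $\|\widehat V^1-\widehat V^2\|_\infty\le C_1(8|\calA|/\theta)^n\|p^1-p^2\|_2$, and the constant $33\sqrt{(n+1)K^3}\ldots$ in \eqref{eq:C1 C2} is exactly designed to absorb $32+4/\ldots$. The parameter vector lies in a Euclidean ball of radius at most $C_2$. The scalars and vectors contribute at most $(n+2)(1+C_Y)K(C_\beta+C_w)$. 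The matrices have $\|\Lambda\|_F\le\sqrt d$, and summing these contributions to the norm with slack gives the $(n+2)\sqrt d$ term.

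For the covering step I use the standard bound that an $\epsilon'$-net of a radius-$R$ ball in $\bbR^D$ has size at most $(1+2R/\epsilon')^D$. I take $\epsilon'=\epsilon/(C_1(8|\calA|/\theta)^n)$. Net points outside the admissible set are handled in the usual way, by projecting them or by using a net with centers inside the set at radius $2\epsilon'$. This gives
\[
\log\calN_\epsilon\le D\log\bigl(1+2C_2C_1(8|\calA|/\theta)^n/\epsilon\bigr).
\]
Since $8|\calA|/\theta\ge1$, I would bound $1+2C_1C_2x^n/\epsilon\le \bigl(x(1+2C_1C_2/\epsilon)\bigr)^{n+1}$. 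This gives $\log\le (n+1)\log((8|\calA|/\theta)(1+2C_1C_2/\epsilon))$. Multiplying by $D\le 3(n+2)d^2$ then yields the stated $3(n+2)^2d^2$ factor.

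The main obstacle is bookkeeping rather than a conceptual step. Three things have to be checked: that the policy class in $\widehat\calV_n$ carries the inflated constants $KC_\beta,KC_w$, which explains the $K^{3/2}$ in $C_1$; that the hypothesis $C_\beta,C_w\ge1$ of \Cref{lem:lip pol} still holds; and that the exponential factor $(8|\calA|/\theta)^n$ goes inside the logarithm to produce the extra factor $n$ rather than leaking into $C_1$.
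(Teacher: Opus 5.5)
Your proposal is correct and follows essentially the same route as the paper: the H\"older/triangle split of $\widehat V^1-\widehat V^2$, \Cref{lem:lip Q} and \Cref{lem:lip pol} (the latter with the inflated constants $KC_\beta, KC_w$), Cauchy--Schwarz to merge the two factors into $C_1(8|\calA|/\theta)^n$, a parameter ball of radius $C_2$ in dimension at most $3(n+2)d^2$, and pulling $(8|\calA|/\theta)^n$ into the logarithm to produce the extra factor $n+1$. One bookkeeping slip: your vector $p$ contains the policy matrix only once, but the right-hand side of \Cref{lem:lip pol} counts $\|\Lambda^1-\Lambda^2\|_F^2$ once per block ($n+1$ times), so your Lipschitz constant with respect to $\|p^1-p^2\|_2$ is really $\sqrt{n+1}\,C_1(8|\calA|/\theta)^n$ --- the paper avoids this by repeating the matrix in each of the $n+1$ policy blocks (which is where its dimension $(n+2)(1+d+d^2)$ and the $(n+2)\sqrt d$ term in $C_2$ come from), and in your version the slack from the smaller dimension $(n+2)(d+1)+2d^2$ absorbs the extra $\tfrac12\log(n+1)$ anyway.
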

\begin{proof}
    Recall that $\widehat V\in \widehat\calV_n(C_\beta, C_w, C_Q, C_Y)$ can be expressed as $\widehat V(\cdot) = \sum_{a\in\calA} \widehat\pi(a\mid \cdot) \widehat Q(\cdot,a)$, where $\widehat Q \in \widehat\calQ(C_\beta, C_w, C_Q),\ \widehat\pi \in \widehat \Pi_n(KC_\beta, K C_w, KC_Q, C_Y)$.
    Then consider $\widehat V_1, \widehat V_2 \in \widehat\calV_n(C_\beta, C_w, C_Q, C_Y)$ such that $\widehat V^1(\cdot) = \sum_{a\in\calA} \widehat\pi^1(a\mid \cdot) \widehat Q^1(\cdot,a)$ and $\widehat V^2(\cdot) = \sum_{a\in\calA} \widehat\pi^2(a\mid \cdot) \widehat Q^2(\cdot,a)$. Suppose that each of those are parameterized as follows.
    \begin{align*}
        &\widehat \pi^1 = \widehat \pi(\cdot|\cdot; \{\beta_i^{\pi,1}, w_i^{\pi,1}\}_{i=0}^{n}, \Lambda^{\pi,1})\in \widehat \Pi_n(KC_\beta, K C_w, KC_Q, C_Y),\\
        &\widehat \pi^2 = \widehat \pi(\cdot|\cdot; \{\beta_i^{\pi,2}, w_i^{\pi,2}\}_{i=0}^{n}, \Lambda^{\pi,2})\in \widehat \Pi_n(KC_\beta, K C_w, KC_Q, C_Y),\\
        &\widehat Q^1 = \widehat Q(\cdot,\cdot; \beta^{Q,1}, w^{Q,1}, \Lambda^{Q,1})\in \widehat\calQ(C_\beta, C_w, C_Q),\\
        &\widehat Q^2 = \widehat Q(\cdot,\cdot; \beta^{Q,2}, w^{Q,2}, \Lambda^{Q,2})\in \widehat\calQ(C_\beta, C_w, C_Q).
    \end{align*}
    Then for any $s\in\calS$,
    \begin{align*}
        &\left|\widehat V^1(s) - \widehat V^2(s)\right| \\
        &= \left|\sum_{a\in\calA} \widehat\pi^1(a\mid s) \widehat Q^1(s,a) - \sum_{a\in\calA} \widehat\pi^2(a\mid s) \widehat Q^2(s,a)\right|\\
        &\leq \left|\sum_{a\in\calA} \widehat\pi^1(a\mid s) \widehat Q^1(s,a) - \sum_{a\in\calA} \widehat\pi^1(a\mid s) \widehat Q^2(s,a)\right| + \left|\sum_{a\in\calA} \widehat\pi^1(a\mid s) \widehat Q^2(s,a) - \sum_{a\in\calA} \widehat\pi^2(a\mid s) \widehat Q^2(s,a)\right|\\
        &\leq \|\widehat\pi^1(\cdot\mid s)\|_1\|\widehat Q^1(s,\cdot) - \widehat Q^2(s,\cdot)\|_\infty + \|\widehat \pi^1(\cdot\mid s) - \widehat \pi^2(\cdot\mid s)\|_1\|\widehat Q^2(s,\cdot)\|_\infty\\
        &=\|\widehat Q^1(s,\cdot) - \widehat Q^2(s,\cdot)\|_\infty + \|\widehat \pi^1(\cdot\mid s) - \widehat \pi^2(\cdot\mid s)\|_1\|\widehat Q^2(s,\cdot)\|_\infty
    \end{align*}
    where the first inequality is due to the triangle inequality, and the second inequality is due to H\"older's inequality. 
    By \Cref{lem:lip Q}, 
    \begin{align*}
        \|\widehat Q^1(s,\cdot) - \widehat Q^2(s,\cdot)\|_\infty \leq 4\sqrt{K}\beta_w\max\{C_w, C_\beta\} \left\|(\beta^{Q,1},w^{Q,1},\Lambda^{Q,1}) - (\beta^{Q,2}, w^{Q,2}, \Lambda^{Q,2})\right\|_2.
    \end{align*}
    Furthermore, for the second term,
    \begin{align*}
        &\|\widehat \pi^1(\cdot\mid s) - \widehat \pi^2(\cdot\mid s)\|_1\|\widehat Q^2(s,\cdot)\|_\infty \\
        &\leq C_Q \|\widehat \pi^1(\cdot\mid s) - \widehat \pi^2(\cdot\mid s)\|_1\\
        &\leq C_Q \cdot 32\sqrt{(n+1)K}\beta_w(1+C_Y)\max\{KC_w,KC_\beta\}\left(\frac{8|\calA|}{\theta}\right)^n \\
        &\quad\times\sqrt{\sum_{i=0}^{n} \|(\beta_{i}^{\pi,1},w_{i}^{\pi,1},\Lambda^{\pi,1}) - (\beta_{i}^{\pi,2}, w_i^{\pi,2}, \Lambda^{\pi,2})\|_2^2}
    \end{align*}
    where the first inequality is due to $\|\widehat Q^2\|_\infty \leq C_Q$ for any $\widehat Q^2 \in \widehat \calQ(C_\beta,C_w, C_Q)$, and the second inequality is due to \Cref{lem:lip pol}. Then we deduce that
    \begin{align*}
        &\max_{s\in\calS} \left|\widehat V^1(s) - \widehat V^2(s)\right|\\
        &\leq 4\sqrt{K}\beta_w\max\{C_w, C_\beta\} \left\|(\beta^{Q,1},w^{Q,1},\Lambda^{Q,1}) - (\beta^{Q,2}, w^{Q,2}, \Lambda^{Q,2})\right\|_2 \\
        &\quad+ 32C_Q\sqrt{(n+1)K^3}\beta_w(1+C_Y)\max\{C_w, C_\beta\}\left(\frac{8|\calA|}{\theta}\right)^n\sqrt{\sum_{i=0}^{n} \|(\beta_{i}^{\pi,1},w_{i}^{\pi,1},\Lambda^{\pi,1}) - (\beta_{i}^{\pi,2}, w_{i}^{\pi,2}, \Lambda^{\pi,2})\|_2^2}\\
        &\leq \sqrt{(4\sqrt{K}\beta_w\max\{C_w, C_\beta\})^2 + (32C_Q\sqrt{(n+1)K^3}\beta_w(1+C_Y)\max\{C_w, C_\beta\}(8|\calA|/\theta)^n)^2}\\
        &\quad\times \sqrt{\left\|(\beta^{Q,1},w^{Q,1},\Lambda^{Q,1}) - (\beta^{Q,2}, w^{Q,2}, \Lambda^{Q,2})\right\|_2^2 + \sum_{i=0}^{n} \|(\beta_{i}^{\pi,1},w_{i}^{\pi,1},\Lambda^{\pi,1}) - (\beta_{i}^{\pi,2}, w_{i}^{\pi,2}, \Lambda^{\pi,2})\|_2^2}\\
        &\leq 33\sqrt{(n+1)K^3}\beta_w(1+C_Y)\max\{C_w,C_\beta\} C_Q\left(\frac{8|\calA|}{\theta}\right)^n\\
        &\quad\times \sqrt{\left\|(\beta^{Q,1},w^{Q,1},\Lambda^{Q,1}) - (\beta^{Q,2}, w^{Q,2}, \Lambda^{Q,2})\right\|_2^2 + \sum_{i=0}^{n} \|(\beta_{i}^{\pi,1},w_{i}^{\pi,1},\Lambda^{\pi,1}) - (\beta_{i}^{\pi,2}, w_{i}^{\pi,2}, \Lambda^{\pi,2})\|_2^2}
    \end{align*}
    where the second inequality is due to the Cauchy-Schwarz inequality, and the last inequality is due to the assumption that $1\leq C_Q$ and $\theta \in (0,1)$.

    Note that 
    \begin{align*}
        \|\beta^{Q,1},w^{Q,1},\Lambda^{Q,1},\{\beta_i^{\pi,1},w_i^{\pi,1},\Lambda^{\pi,1}\}_{i=0}^{n}\|_2 
        &\leq |\beta^{Q,1}| + \|w^{Q,1}\|_2 + \|\Lambda^{Q,1}\|_F+ \sum_{i=0}^{n}(|\beta_i^{\pi,1}| + \|w_i^{\pi,1}\|_2 + \|\Lambda^{\pi,1}\|_F)\\
        &\leq C_\beta + C_w + \sqrt{d} + (n+1)(1+C_Y)K(C_\beta + C_w) + (n+1)\sqrt{d}\\
        &\leq (n+2)(1+C_Y)K(C_\beta + C_w) + (n+2)\sqrt{d}.
    \end{align*}
    Note that $(\beta^{Q,1},w^{Q,1},\Lambda^{Q,1},\{\beta_i^{\pi,1},w_i^{\pi,1},\Lambda^{\pi,1}\}_{i=0}^{n})$ can be viewed as a $(n+2)(1+d+d^2)$-dimensional vector. Since $1+d+d^2 \leq 3d^2$, by \Cref{lem:cassel 24},
    \begin{align*} 
        &\log\calN_{\epsilon}(\widehat \calV_n(C_\beta, C_w, C_Q, C_Y))\leq 3(n+2)d^2\log\left(1+2(8|\calA|/\theta)^n C_1 C_2 / \epsilon\right)
    \end{align*}
    where
    \begin{align*}
        &C_1 = 33\sqrt{(n+1)K^3}\beta_wC_Q(1+C_Y)\max\{C_w,C_\beta\},\\
        &C_2 = (n+2)(1+C_Y)K(C_\beta + C_w) + (n+2)\sqrt{d}.
    \end{align*}
    Furthermore, the $\log$ term contains an exponential term in $n$, we further deduce as follows.
    \begin{align*}
        \log\left(1+2(8|\calA|/\theta)^n C_1 C_2 / \epsilon\right) 
        &\leq n\log (8|\calA|/\theta) + \log (1+2C_1C_2/\epsilon)\\
        &\leq (n+1)\log ((8|\calA|/\theta)(1+2C_1C_2/\epsilon)).
    \end{align*}
    Finally, we have
    \begin{align*}
        &\log\calN_{\epsilon}(\widehat \calV_n(C_\beta, C_w, C_Q, C_Y))\leq 3(n+2)^2d^2\log ((8|\calA|/\theta)(1+2C_1C_2/\epsilon)).
    \end{align*}
\end{proof}

Finally, we show an upper bound on the covering number of $\widehat \calV(C_\beta, C_w, C_Q, C_Y)$.
\begin{lemma} \label{lem:cov V}
Let $1\leq \beta_w, C_\beta, C_w, C_Q$ and $0\leq C_Y$. Given $\epsilon>0$, let $\calN_{\epsilon}(\widehat \calV(C_\beta, C_w, C_Q, C_Y))$ denote the $\epsilon$-covering number of $\widehat \calV(C_\beta, C_w, C_Q, C_Y)$ with respect to the $\ell_\infty$-norm, where $\widehat \calV(C_\beta, C_w, C_Q, C_Y) = \bigcup_{n=0}^{K^L} \widehat \calV_n(C_\beta, C_w, C_Q, C_Y)$. Then we have
\begin{align*}
        &\log\calN_\epsilon(\widehat\calV(C_\beta, C_w, C_Q, C_Y)) \leq 3(K^L+2)^2d^2 \log \left((K^L+1)\frac{8|\calA|}{\theta}(1+\frac{2C_1 C_2}{\epsilon})\right)
    \end{align*}
    where $C_1,C_2$ are defined in \eqref{eq:C1 C2} with $n= K^L$.
\end{lemma}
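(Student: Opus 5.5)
The plan is to reduce the bound for the union to the per-$n$ bounds of \Cref{lem:cov Vn}. A union of $K^L+1$ sets is covered by the union of their individual covers. Hence
\[
\calN_\epsilon(\widehat\calV(C_\beta, C_w, C_Q, C_Y)) \leq \sum_{n=0}^{K^L}\calN_\epsilon(\widehat\calV_n(C_\beta, C_w, C_Q, C_Y)) \leq (K^L+1)\max_{0\le n\le K^L}\calN_\epsilon(\widehat\calV_n(C_\beta, C_w, C_Q, C_Y)).
\]
Taking logarithms gives $\log\calN_\epsilon(\widehat\calV)\leq \log(K^L+1)+\max_n \log\calN_\epsilon(\widehat\calV_n)$.

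Next I bound each $\log\calN_\epsilon(\widehat\calV_n)$ uniformly in $n\le K^L$. By \Cref{lem:cov Vn},
\[
\log\calN_\epsilon(\widehat\calV_n)\le 3(n+2)^2d^2\log\bigl((8|\calA|/\theta)(1+2C_1(n)C_2(n)/\epsilon)\bigr),
\]
where $C_1(n),C_2(n)$ are the constants in \eqref{eq:C1 C2}. Both are nondecreasing in $n$: $C_1$ grows through $\sqrt{n+1}$ and $C_2$ through $(n+2)$. The factor $(n+2)^2$ is also increasing. So every term is at most the corresponding quantity evaluated at $n=K^L$, and the maximum is at most $3(K^L+2)^2d^2\log\bigl((8|\calA|/\theta)(1+2C_1C_2/\epsilon)\bigr)$, with $C_1,C_2$ taken at $n=K^L$. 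The logarithm is nonnegative because $8|\calA|/\theta\ge 1$, so the monotonicity argument is legitimate.

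Finally, I absorb the additive $\log(K^L+1)$ term. Since $3(K^L+2)^2d^2\ge 1$ and $\log(K^L+1)\ge 0$, we have $\log(K^L+1)\le 3(K^L+2)^2d^2\log(K^L+1)$. Adding this to the previous bound and merging the two logarithms into one gives
\[
\log\calN_\epsilon(\widehat\calV)\leq 3(K^L+2)^2d^2\log\Bigl((K^L+1)\tfrac{8|\calA|}{\theta}\bigl(1+\tfrac{2C_1C_2}{\epsilon}\bigr)\Bigr),
\]
which is the claim.

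There is no real obstacle here. The only points needing care are checking that the bound of \Cref{lem:cov Vn} is monotone in $n$, so that $n=K^L$ dominates, and that the cover of a union may be taken as the union of the covers. The latter holds because we use external $\epsilon$-covers in $\ell_\infty$, and these need not lie inside the set being covered.
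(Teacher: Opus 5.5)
Your proof is correct and takes essentially the paper's route: cover the union by the union of the per-$n$ covers from \Cref{lem:cov Vn}, bound each per-$n$ term by its $n=K^L$ value using that $(n+2)^2$, $C_1$, $C_2$ are increasing in $n$ and the logarithm is nonnegative, and absorb the factor $K^L+1$ into the exponent (your step $\log(K^L+1)\le 3(K^L+2)^2d^2\log(K^L+1)$ is the paper's $z\le z^y$ step), and you make the monotonicity step explicit where the paper leaves it implicit. One small correction: the union-of-covers step does not rely on using external covers, since the paper takes internal covers $\calC_n\subseteq\widehat\calV_n$ and their union is still a valid internal cover of $\widehat\calV$.
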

\begin{proof}
    For each $n=0,\ldots, K^L$, let $\calC_n \subseteq \widehat\calV_n(C_\beta, C_w, C_Q, C_Y)$ be an $\epsilon$-cover of $\widehat\calV_n(C_\beta, C_w, C_Q, C_Y)$ with respect to the $\ell_\infty$-norm. By \Cref{lem:cov Vn}, suppose that the covers satisfy
    \begin{align*}
        \log|\calC_n| \leq 
        3(n+2)^2d^2\log ((8|\calA|/\theta)(1+2C_1C_2/\epsilon))
        \quad \forall n = 0,\ldots, K^L
    \end{align*}
    where $C_1, C_2$ are defined in \eqref{eq:C1 C2} with $n=K^L$. Furthermore, let
    \begin{align*}
        \calC = \bigcup_{n=0}^{K^L} \calC_n.    
    \end{align*}
    Then we claim that $\calC$ is an $\epsilon$-cover of $\widehat\calV(C_\beta,C_w, C_Q, C_Y)$. For any $\widehat V\in \widehat \calV(C_\beta, C_w, C_Q, C_Y)$, since $\widehat\calV(C_\beta, C_w, C_Q, C_Y)$ is defined as the union, there exists $m\in\{0,\ldots,K^L\}$ such that
    \begin{align*}
        \widehat V \in \widehat\calV_m(C_\beta, C_w, C_Q, C_Y).
    \end{align*}
    Since $\calC_m$ is an $\epsilon$-cover of $\widehat\calV_m(C_\beta, C_w, C_Q, C_Y)$, there exists $\widehat V_m \in \calC_m \subseteq \calC$ such that
    \begin{align*}
        \|\widehat V - \widehat V_m\|_\infty \leq \epsilon.
    \end{align*}
    This implies that $\calC$ is an $\epsilon$-cover of $\widehat\calV(C_\beta, C_w, C_Q, C_Y)$ with respect to the $\ell_\infty$-norm. Furthermore, we have
    \begin{align*}
        \calN_\epsilon(\widehat\calV(C_\beta, C_w, C_Q, C_Y)) \leq |\calC| \leq \sum_{n=0}^{K^L} |\calC_n| 
        &\leq  (K^L+1)\left(\frac{8|\calA|}{\theta}(1+\frac{2C_1 C_2}{\epsilon})\right)^{3(K^L+2)^2d^2}\\
        &\leq  \left((K^L+1)\frac{8|\calA|}{\theta}(1+\frac{2C_1 C_2}{\epsilon})\right)^{3(K^L+2)^2d^2}
    \end{align*}
    where the second inequality is true because $z \leq z^y$ for any $z,y \geq 1$. Finally, by taking $\log$ on both sides, we have
    \begin{align*}
        \log\calN_\epsilon(\widehat\calV(C_\beta, C_w, C_Q, C_Y)) 
        &\leq 3(K^L+2)^2d^2 \log \left((K^L+1)\frac{8|\calA|}{\theta}(1+\frac{2C_1 C_2}{\epsilon})\right)
    \end{align*} as desired.
\end{proof}


\section{Good Event}
In this section, we introduce a high probability good event, denoted by $E_g$, which simplifies our analysis. We begin by presenting the formal definition of $E_g$. We define $E_g$ as
\begin{equation}\label{def:good event}
    E_g = E_1 \cap E_2 \cap E_3.
\end{equation}
$E_1, E_2, E_3$ are defined as
\begin{align}
    &E_1 = \left\{\forall (k,h) \in [K] \times [H]: \|\theta_{f,h}^k  - \widehat \theta_{f,h}^k\|_{\Lambda_h^k} \leq \beta_r,\ \|\theta_{g,h}  - \widehat \theta_{g,h}^k\|_{\Lambda_h^k} \leq \beta_r\right\},\label{def:E1}\\
    &E_2 = \left\{\forall (k,h,\ell)\in [K]\times [H] \times\{f,g\}: \|(\psi_h - \widehat\psi_h^k) \widehat V_{\ell,h+1}^k\|_{\Lambda_h^k} \leq \beta_p, \|\widehat Q_{\ell,h}^k\|_\infty \leq \beta_Q, Y_k\leq 11\eta H^3 K\right\},\label{def:E2}\\
    &E_3 = \left\{\sum_{k\in[K]} \bbE_{\bbP,\widehat\pi^k}[W_k] \leq 2\sum_{k\in[K]} W_k + 4H(3\beta_b + 8\beta_Q\beta_w^2) \log\frac{6K}{\delta}\right\},\label{def:E3}
\end{align}
where
\begin{align*}
    &\beta_r = 2\sqrt{2d\log(6KH/\delta)},\\
    &\beta_p = 50(K^{1/4} + 1)dH \sqrt{\log(5H^2K^2|\calA|/\delta)},\\
    &\beta_Q = 2H,\\
    &\beta_b = \beta_r + \beta_p,\\
    &\beta_w = 4\beta_b \log K,\\
    &W_k = \sum_{h\in[H]} \left(3\beta_b\|\phi(s_h^k,a_h^k)\|_{(\Lambda_h^k)^{-1}} + 8\beta_Q \beta_w^2 \|\phi(s_h^k,a_h^k)\|_{(\Lambda_h^k)^{-1}}^2\right).
\end{align*}
We note that one of the key differences from \citet{cassel2024warm} is that $E_2$ involves an upper bound of $Y_k$. This is because a (possibly polynomial in $d,H,K$) upper bound of $Y_k$ is required to prove that $E_g$ holds with high probability. In contrast, since we do not truncate $\widehat Q_{g,h}^k$, its trivial upper bound cannot be obtained. Thus, to avoid circular logic, we include it in $E_2$, and use induction to show (Step 3-2 of \Cref{lem:good event}).

However, since directly proving $E_g$ holds with high probability is difficult, we instead consider a proxy good event and then show that it implies $E_g$. Here, we define the proxy good event $\bar E_g$ as
\begin{equation}\label{def:proxy good event}
    \bar E_g = E_1 \cap \bar E_2 \cap E_3,
\end{equation}
where $\bar E_2$ is defined as
\begin{align*}
    \bar E_2 = \left\{\forall (k,h, \widehat V)\in [K]\times [H] \times \widehat \calV(\beta_b, 2K \beta_Q , \beta_{Q}, 11\eta H^3 K): \|(\psi_h - \widehat\psi_h^k) \widehat V\|_{\Lambda_h^k} \leq \beta_p \right\}.
 \end{align*}

Based on the upper bound of the covering number of $\widehat \calV(C_\beta, C_w, C_Q, C_Y)$,  we prove that $\bar E_g$ holds with high probability.
\begin{lemma}[Proxy Good Event, $\bar E_g$]\label{lem:proxy good event}
   Let $1 \leq \beta_w \leq K$, and let $\eta,\alpha\leq 1$. Then $\Pr[\bar E_g] \geq 1-\delta$ for any $\delta\in(0,1)$.
\end{lemma}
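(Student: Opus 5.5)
The plan is to split the failure probability as $\delta/3$ for each of $E_1$, $\bar E_2$, $E_3$ and finish with a union bound.

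\textbf{Event $E_1$.} For the loss, $\widehat\theta_{f,h}^k=\theta_{f,h}^k$, so the $f$-part holds deterministically. For the cost, $g_h^\tau(s_h^\tau,a_h^\tau)-g_h(s_h^\tau,a_h^\tau)$ is a martingale difference bounded in $[-1,1]$, hence $1$-sub-Gaussian, with respect to the filtration generated by the history up to $(s_h^\tau,a_h^\tau)$. The ridge error then decomposes as
\[
\widehat\theta_{g,h}^k-\theta_{g,h}=(\Lambda_h^k)^{-1}\sum_{\tau<k}\phi_h^\tau\varepsilon_h^\tau-(\Lambda_h^k)^{-1}\theta_{g,h}.
\]
The first term is handled by the self-normalized concentration inequality of Abbasi-Yadkori et al., using $\log\det\Lambda_h^k\le d\log(1+K)$. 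The second is at most $\|\theta_{g,h}\|_{(\Lambda_h^k)^{-1}}\le\sqrt d$. A union bound over $h\in[H]$ and $k\in[K]$ gives $\beta_r=2\sqrt{2d\log(6KH/\delta)}$ with probability at least $1-\delta/3$.

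\textbf{Event $E_3$.} The quantity $W_k$ is a nonnegative function of trajectory $k$ and is bounded by $H(3\beta_b+8\beta_Q\beta_w^2)$, since $\|\phi\|_{(\Lambda)^{-1}}\le1$. The conditional law of trajectory $k$ given the past is exactly $\bbE_{\bbP,\widehat\pi^k}$, because $\widehat\pi^k$ and $\Lambda_h^k$ are determined before the episode. I will apply the standard multiplicative Freedman-type inequality $\sum_k\bbE_{k-1}[X_k]\le 2\sum_k X_k+4R\log(1/\delta')$ for $X_k\in[0,R]$ (e.g.\ Lemma E.2 of Efroni et al.\ or the version in \citet{cassel2024warm}), with $\delta'=\delta/(6K)$ to match the stated constant.

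\textbf{Event $\bar E_2$ (main work).} This follows the standard linear-MDP argument of \citet{jin2020provably}. For a fixed $\widehat V$ with $\|\widehat V\|_\infty\le R$, write
\[
(\psi_h-\widehat\psi_h^k)\widehat V=(\Lambda_h^k)^{-1}\psi_h\widehat V-(\Lambda_h^k)^{-1}\sum_{\tau<k}\phi_h^\tau\bigl(\widehat V(s_{h+1}^\tau)-\bbP_h\widehat V(s_h^\tau,a_h^\tau)\bigr),
\]
using $\bbP_h\widehat V=\phi^\top\psi_h\widehat V$. The bias term is at most $\|\psi_h\widehat V\|_2\le\sqrt d R$, by the bound on $\sum_{s'}|\psi_h|(s')$. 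For the noise term I will take an $\epsilon$-net of $\widehat\calV(\beta_b,2K\beta_Q,\beta_Q,11\eta H^3K)$ in $\ell_\infty$ via \Cref{lem:cov V}, apply the self-normalized bound to each net element, and take a union bound over net elements and over $h$. Any $\widehat V$ in the class is within $\epsilon$ of a net element, which adds at most $2\epsilon\sqrt{k}\cdot\sqrt{k}\le 2\epsilon K$ to the $\Lambda$-norm. Choosing $\epsilon=1/K$ makes this $O(1)$. The sup norm satisfies $R\le\beta_Q=2H$, since $\widehat V$ is a $\widehat\pi$-average of $\widehat Q$ with $\|\widehat Q\|_\infty\le\beta_Q$.

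This gives $\beta_p\lesssim R\sqrt{d\log K+\log\calN_{1/K}+\log(H/\delta)}$. To check the $K^{1/4}$ scaling, apply \Cref{lem:cov V} with $L=1/4$ and $\theta=1/K$: then $\log\calN=O(K^{1/2}d^2\log(\cdots))$. The logarithmic factor involves $C_1C_2$, which is polynomial in $K,d,H,\beta_w$ given $\beta_w\le K$, $\eta,\alpha\le1$, and $C_Y=11\eta H^3K\le 11H^3K$. So $\sqrt{\log\calN}=O(K^{1/4}d\sqrt{\log(HK|\calA|/\delta)})$, and multiplying by $R=2H$ gives $O(K^{1/4}dH\sqrt{\log})$.

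The main obstacle is this last step: verifying that all constants inside the logarithm are polynomial, so that $\log$ of them is $O(\log(5H^2K^2|\calA|/\delta))$. Using $\beta_w\le K$, $\beta_b\le\mathrm{poly}(dHK)$, and $H^2\le K$-type absorption where needed, the explicit constant $50$ should then cover the bias term, the discretization term, and the union-bound logs. This yields probability at least $1-\delta/3$, and the union bound over the three events completes the proof.
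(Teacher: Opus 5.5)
Your proposal is correct and follows the paper's proof. It uses the same $\delta/3$ split with the loss part of $E_1$ trivial, and for the cost part of $E_1$, for $\bar E_2$, and for $E_3$ you unpack exactly the arguments behind the results the paper simply cites (\Cref{lem:cassel 21}, \Cref{lem:cassel 22} combined with \Cref{lem:cov V}, and \Cref{lem:rosen d4}), taking $\epsilon=1/K$ where the paper takes $\epsilon=\sqrt{d}/(2K)$. The constant check you leave open does go through on your route, since the direct self-normalized bound has prefactor $\|\widehat V\|_\infty\le 2H$ and lands comfortably within the constant $50$. Note, however, that it is $\beta_b\le\beta_w\le K$, not a $\mathrm{poly}(dHK)$ bound, that removes the $\sqrt{\log(1/\delta)}$ factor of $\beta_b$ from inside the covering logarithm (and, via $d\le K$, any stray $d$ coming from your choice of $\epsilon$).
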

\begin{proof}
    We prove the statement by showing $\Pr[E_1] \geq 1-\delta/3$, $\Pr[\bar E_2] \geq 1-\delta/3$, and $\Pr[E_3] \geq 1-\delta/3$. For $E_1,$ by \Cref{lem:cassel 21}, we have for all $h\in[H],k\geq 1$ with probability at least $1-\delta/3$,
    \[
        \|\theta_{g, h} - \widehat \theta_{g, h}^k\|_{\Lambda_h^k} \leq 2\sqrt{2d\log(6KH/\delta)}:= \beta_r.
    \]
    Furthermore, it is clear that $\|\theta_{f,h}^k - \widehat\theta_{f,h}^k\|_{\Lambda_h^k} = 0 \leq \beta_r$ because we take $\widehat\theta_{f,h}^k = \theta_{f,h}^k$. Then for any $h,k,\ell\in\{f,g\}$, $E_1$ holds with probability at least $1-\delta/3$. 
    
    Now we consider $\bar E_2$.  By \Cref{lem:cassel 22}, for all $\widehat V \in \widehat \calV(\beta_b, 2K \beta_Q , \beta_{Q}, 11\eta H^3 K)$, with probability at least $1-\delta/3$, 
    \begin{align}\label{eq:lem:proxy:E2}
        \|(\psi_h - \widehat\psi_h^k) \widehat V\|_{\Lambda_h^k} \leq 4\beta_{Q,h} \sqrt{d\log(K+1) + 2\log(3H^2 / \delta) + 2\log \calN_\epsilon(\widehat \calV(\beta_b, 2K \beta_Q , \beta_{Q}, 11\eta H^3 K))}.
    \end{align}
    The parameters in \Cref{alg:main-linear} satisfy $1\leq \beta_b, 2K\beta_Q, \beta_{Q}$. Then \Cref{lem:cov V} can be applied to deduce the covering number. It follows that
    \begin{align*}
        \log \calN_\epsilon(\widehat \calV(\beta_b, 2K \beta_Q , \beta_{Q}, 11\eta H^3 K)) \leq 3(K^L+2)^2d^2 \log \left((K^L+1)\frac{8|\calA|}{\theta}(1+\frac{2C_1 C_2}{\epsilon})\right).
    \end{align*}
    Since we assume $1 \leq \beta_w \leq K, \ \beta_Q \leq 2H,\ \eta,\alpha \leq 1$, and $K^L \leq K$, we have the following bounds on $C_1,\ C_2$ with $n=K^L$.
    \begin{align*}
        C_1 
        &= 33\sqrt{(K^L+1)K^3}\beta_w\max\{2K\beta_Q, \beta_b\} \beta_{Q}(1+11\eta H^3K)\\
        &\leq 4481 H^5 K^6,\\
        C_2
        &=(K^L+2)(1+11\eta H^3K)K(\beta_b + 2K\beta_Q) + (K^L+2)\sqrt{d}\\
        &\leq 242\sqrt{d}H^4 K^4.
    \end{align*}
    By \Cref{lem:cassel 22}, we can take $\epsilon = \sqrt{d}/(2K)$, and thus the covering number is bounded as
    \begin{align*}
        \log \calN_\epsilon(\widehat \calV(\beta_b, 2K \beta_Q , \beta_{Q}, 11\eta H^3 K)) \leq 36(K^L+2)^2d^2\log(5HK|\calA|/\theta).
    \end{align*}
    Applying this to \eqref{eq:lem:proxy:E2}, since $\theta = K^{-1}$,
    \begin{align*}
        \|(\psi_h - \widehat\psi_h^k) \widehat V\|_{\Lambda_h^k} 
        &\leq 8H\sqrt{d\log(K+1) + 2\log(3H^2/\delta) + 36(K^L+2)^2d^2\log(5HK|\calA|/\theta)}\\
        &\leq 50(K^L+1)dH\sqrt{\log(5H^2K^2|\calA|/\delta)}\\
        &:= \beta_p.
    \end{align*}
    Thus, we showed that $\Pr[\bar E_2] \geq 1-\delta/3$ holds.
    For $E_3$, note that for any $(s,a)\in\calS \times\calA$,
    \begin{align*}
        \sum_{h\in [H]} 3\beta_b\|\phi(s,a)\|_{(\Lambda_h^{k})^{-1}} + 8\beta_Q \beta_w^2 \|\phi(s,a)\|_{(\Lambda_h^k)^{-1}}^2 \leq H\left(3\beta_b + 8\beta_Q\beta_w^2\right).
    \end{align*}
    Furthermore, $s_h^k,a_h^k$ are generated under $\bbP,\widehat\pi^k$. Then, by \Cref{lem:rosen d4} with probability at least $1-\delta/3$,
    \begin{align*}
        \sum_{k\in[K]} \bbE_{\bbP,\widehat\pi^k}[W_k] \leq 2 \sum_{k\in[K]} W_k + 4H(3\beta_b + 8\beta_Q\beta_w^2)\log\frac{6K}{\delta}.
    \end{align*}
    Consequently, by union bound, we have $\Pr[\bar E_g] \geq 1-\delta$.
\end{proof}

Before proving that $E_g$ holds with high probability, we show the following lemma, which is a modification of Lemma 12 of \citet{cassel2024warm} to our CMDP setting. This lemma plays a crucial role in establishing the connection between $\bar E_g$ and $E_g$.
\begin{lemma}\label{lem:cassel 12}
    Suppose that $\bar E_g$ holds. Given $k\in [K]$, if $\widehat\pi_h^k \in \widehat \Pi(K\beta_b, 2K^2 \beta_Q , K\beta_{Q,h}, 11\eta H^3 K)$ for all $h \in [H]$, then $\widehat Q_{f,h}^{k}, \widehat Q_{g,h}^{k} \in \widehat \calQ(\beta_b, 2K \beta_Q , \beta_{Q,h})$, and $\widehat V_{f,h}^{k}, \widehat V_{g,h}^{k} \in \widehat \calV(\beta_b, 2K \beta_Q , \beta_{Q,h}, 11\eta H^3 K)$ for all $h\in[H+1]$.
\end{lemma}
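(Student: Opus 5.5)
The plan is a backward induction on $h$ from $H+1$ down to $1$, fixing $k$ and $\ell\in\{f,g\}$. The base case $h=H+1$ holds because $\widehat V_{\ell,H+1}^k\equiv 0$. This function lies in $\widehat\calV(\beta_b,2K\beta_Q,\beta_{Q,H+1},11\eta H^3K)$ by taking $w=0$, $\beta=0$ (so $\widehat Q\equiv 0$) together with any admissible policy, since $\beta_{Q,H+1}=0$. For the inductive step, assume $\widehat V_{\ell,h+1}^k\in\widehat\calV(\beta_b,2K\beta_Q,\beta_{Q,h+1},11\eta H^3K)$. By Lemma~\ref{lem:pol param}, $\widehat Q_{\ell,h}^k=\widehat Q(\cdot,\cdot;\beta_b,w_{\ell,h}^k,\Lambda)$ with $w_{\ell,h}^k=\widehat\theta_{\ell,h}^k+\widehat\psi_h^k\widehat V_{\ell,h+1}^k$ and $\Lambda=(\Lambda_h^{k_e})^{-1}$, and we already know $(2K)^{-1}I\preceq\Lambda\preceq I$. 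So three things remain to check: $|\beta_b|\le\beta_b$ (trivial), $\|w_{\ell,h}^k\|_2\le 2K\beta_Q$, and $\|\widehat Q_{\ell,h}^k\|_\infty\le\beta_{Q,h}$.

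For the weight bound, use $\|\widehat\theta_{\ell,h}^k\|_2\le \|(\Lambda_h^k)^{-1}\|_2\sum_{\tau<k}\|\phi\|\,|g|\le K$, or simply $\sqrt d$ for $\ell=f$. For the second term, $\|\widehat\psi_h^k\widehat V\|_2\le\sum_{\tau<k}\|\phi\|\,\|\widehat V\|_\infty\le K\beta_{Q,h+1}$, where the inductive hypothesis gives $\|\widehat V_{\ell,h+1}^k\|_\infty\le\beta_{Q,h+1}$ since it is an average of $\widehat Q$'s bounded by $\beta_{Q,h+1}$. The sum is at most $K+K\cdot 2H\le 2K\beta_Q$ after adjusting constants; this step is routine.

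The main obstacle is $\|\widehat Q_{\ell,h}^k\|_\infty\le\beta_{Q,h}=2(H-h+1)$, because no clipping is applied. I would follow Lemma~12 of \citet{cassel2024warm}. First, since $\widehat V_{\ell,h+1}^k\in\widehat\calV(\cdots)$, event $\bar E_2$ gives $\|(\psi_h-\widehat\psi_h^k)\widehat V_{\ell,h+1}^k\|_{\Lambda_h^k}\le\beta_p$, and $E_1$ gives $\|\theta_{\ell,h}-\widehat\theta_{\ell,h}^k\|_{\Lambda_h^k}\le\beta_r$. Writing $\rho=\sigma(-\beta_w\|\phi\|_{\Lambda}+\log K)$, we get
\[
|\widehat Q_{\ell,h}^k(s,a)|\le \rho\bigl(|\ell_h(s,a)+\psi_h\widehat V_{\ell,h+1}^k(s,a)|+2\beta_b\|\phi\|_{(\Lambda_h^k)^{-1}}\bigr),
\]
after using $\Lambda_h^k\succeq\Lambda_h^{k_e}$ and the doubling-determinant epoch rule to compare $(\Lambda_h^{k})^{-1}$ with $(\Lambda_h^{k_e})^{-1}$ up to a factor $2$. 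The first part is at most $1+\beta_{Q,h+1}$, using $\|\sum|\psi_h|\|$ and the fact that $\psi_h\widehat V$ is a (signed) expectation under $\bbP_h$, which is bounded by $\|\widehat V\|_\infty$. The bonus part is controlled by the sigmoid: $\rho\cdot 2\beta_b x\le 1$ whenever $x=\|\phi\|_\Lambda$. Indeed, if $\beta_w x\ge 2\log K$ then $\rho\le K^{-1}$-scale and kills the factor $\beta_b x\le\beta_b\le K$; otherwise $x\le 2\log K/\beta_w=1/(2\beta_b)$, so $2\beta_b x\le 1$. Hence $|\widehat Q|\le 2+\beta_{Q,h+1}=\beta_{Q,h}$.

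Finally, $\widehat V_{\ell,h}^k=\sum_a\widehat\pi_h^k(a\mid\cdot)\widehat Q_{\ell,h}^k(\cdot,a)$, where $\widehat Q_{\ell,h}^k\in\widehat\calQ(\beta_b,2K\beta_Q,\beta_{Q,h})$ by the step just completed, and $\widehat\pi_h^k\in\widehat\Pi(K\beta_b,2K^2\beta_Q,K\beta_{Q,h},11\eta H^3K)$ by hypothesis. These are exactly the constants $(KC_\beta,KC_w,KC_Q)$ required in the definition of $\widehat\calV_n$, with $n\le K^L$. Thus $\widehat V_{\ell,h}^k\in\widehat\calV(\beta_b,2K\beta_Q,\beta_{Q,h},11\eta H^3K)$, which closes the induction.
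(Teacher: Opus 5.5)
Your architecture matches the paper's proof. It does a backward induction on $h$ and applies $E_1$ and $\bar E_2$ to $\widehat V_{\ell,h+1}^k$. The sigmoid absorbs the bonus to give $\|\widehat Q_{\ell,h}^k\|_\infty\le 2+\beta_{Q,h+1}$, the weight bound comes via $\|(\Lambda_h^k)^{-1}\|_2\le 1$, and $\widehat\calV$-membership is read off from the policy hypothesis. The trouble is in the one step where the constants must be exact. Because $\beta_{Q,h}=2(H-h+1)$, the bonus-plus-error contribution must be at most $1$, not merely $\calO(1)$, or the induction does not close. Two of your sub-steps do not deliver this.

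\textbf{The detour through $(\Lambda_h^k)^{-1}$.} Going through $(\Lambda_h^k)^{-1}$ and the determinant-doubling rule does not give your displayed bound with $2\beta_b\|\phi\|_{(\Lambda_h^k)^{-1}}$. The error terms are at most $\beta_b\|\bar\phi_h^{k_e}\|_{(\Lambda_h^k)^{-1}}$. However, the bonus $\beta_b\|\bar\phi_h^{k_e}\|_{(\Lambda_h^{k_e})^{-1}}$ can only be converted at a cost of $\sqrt2$, since \Cref{lem:abbasi 12} bounds the ratio of \emph{squared} norms by $2$. That route therefore yields $(1+\sqrt2)\beta_b$.

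The epoch rule is not needed at all. Do Cauchy--Schwarz in the $\Lambda_h^{k_e}$ geometry and use $\|v\|_{\Lambda_h^{k_e}}\le\|v\|_{\Lambda_h^{k}}$, which holds because $\Lambda_h^{k_e}\preceq\Lambda_h^k$. Then $E_1$ and $\bar E_2$ bound the error terms by $(\beta_r+\beta_p)\|\bar\phi_h^{k_e}(s,a)\|_{(\Lambda_h^{k_e})^{-1}}$. Since $\beta_b=\beta_r+\beta_p$, the total perturbation is exactly $2\beta_b\rho x$ with $x=\|\phi(s,a)\|_{(\Lambda_h^{k_e})^{-1}}$. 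This is the quantity your sigmoid step actually uses.

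\textbf{The case $\beta_w x\ge 2\log K$.} Here you get $\rho\le 1/(K+1)$, hence $2\beta_b\rho x\le 2\beta_b/(K+1)$. The bound $\beta_b\le K$ only makes this $<2$, not $\le 1$. Moreover, no relation between $\beta_b$ and $K$ is among the lemma's hypotheses.

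The paper avoids the case split by invoking \Cref{lem:cassel 18}, namely $\max_{y\ge 0} y\,\sigma(-\beta_w y+\log K)\le 2\log K/\beta_w$. This gives $2\beta_b\rho x\le 4\beta_b\log K/\beta_w=1$ for every $\beta_w$, because $\beta_w=4\beta_b\log K$. Alternatively, the standing assumption $\beta_w\le K$ of \Cref{lem:good event} gives $\beta_b\le K/(4\log K)$, which rescues your case analysis.

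With these two repairs your proof coincides with the paper's.
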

\begin{proof}
To show the statement, we apply induction on $h$ for fixed $k$. For the base case, consider $h=H+1$. As we initialize as $\widehat Q_{f,H+1}^{k}(s,a)=\widehat Q_{g,H+1}^{k}(s,a) = \widehat V_{f,H+1}^{k}(s) = \widehat V_{g,H+1}^{k}(s) = 0$ for all $(s,a)$, it is clear that 
    $\widehat Q_{f,H+1}^{k}, \widehat Q_{g,H+1}^{k} \in \widehat\calQ(\beta_b, 2K \beta_Q , \beta_{Q,H+1})$ and $\widehat V_{f,H+1}^{k}, \widehat V_{g,H+1}^{k} \in \widehat\calV(\beta_b, 2K \beta_Q , \beta_{Q,H+1}, 11\eta H^3 K)$.
    Next, we assume that the statement is true for $h+1$, i.e., $\widehat Q_{f,h+1}^{k}, \widehat Q_{g,h+1}^{k} \in \widehat\calQ(\beta_b, 2K \beta_Q , \beta_{Q,h+1})$, $\widehat V_{f,h+1}^{k}, \widehat V_{g,h+1}^{k} \in \widehat\calV(\beta_b, 2K \beta_Q , \beta_{Q,h+1}, 11\eta H^3 K)$. It follows that for any $\ell\in\{f,g\}$,
    \begin{align*}
        |\widehat Q_{\ell,h}^{k}(s,a)| 
        &= \left|\bar\phi_h^{k_e}(s,a)^\top \left(\widehat \theta_{\ell,h}^{k} + \widehat\psi_h^k \widehat V_{\ell,h+1}^{k} \right) - \beta_b \left\|\bar \phi_h^{k_e}(s,a)\right\|_{(\Lambda_h^{k_e})^{-1}}\right|\\
        &\leq \left|\bar\phi_h^{k_e}(s,a)^\top \left(\theta_{\ell,h}^k + \psi_h \widehat V_{\ell,h+1}^{k} \right)\right| \\
        &\quad + \left(\beta_b + \left\|\widehat\theta_{\ell,h}^{k} - \theta_{\ell,h}^k\right\|_{\Lambda_h^{k_e}} + \left\|(\widehat\psi_h^k - \psi_h) \widehat V_{\ell,h+1}^{k}\right\|_{\Lambda_h^{k_e}}\right)\left\| \bar\phi_h^{k_e}(s,a)\right\|_{(\Lambda_h^{k_e})^{-1}}\\
        &\leq \left|\bar\phi_h^{k_e}(s,a)^\top \left(\theta_{\ell,h}^k + \psi_h \widehat V_{\ell,h+1}^{k} \right)\right| \\
        &\quad + \left(\beta_b + \left\|\widehat\theta_{\ell,h}^{k} - \theta_{\ell,h}^k\right\|_{\Lambda_h^{k}} + \left\|(\widehat\psi_h^k - \psi_h) \widehat V_{\ell,h+1}^{k}\right\|_{\Lambda_h^{k}}\right)\left\| \bar\phi_h^{k_e}(s,a)\right\|_{(\Lambda_h^{k_e})^{-1}}
    \end{align*}
    where the first inequality is due to the triangle inequality and the Cauchy-Schwarz inequality, and the second inequality is due to the fact that $\Lambda_h^{k_e} \preceq \Lambda_h^k$. 
    
    We bound each term individually. For the first term, for all $\ell\in\{f,g\}$,
    \begin{align*}
        \left|\bar\phi_h^{k_e}(s,a)^\top \left(\theta_{\ell,h}^k + \psi_h \widehat V_{\ell,h+1}^{k}\right)\right| &= \sigma\left(-\beta_w \|\phi(s,a)\|_{(\Lambda_h^{k_e})^{-1}} + \log K\right) \left| \phi(s,a)^\top (\theta_{\ell,h}^k + \psi_h \widehat V_{\ell,h+1}^{k}) \right|\\
        &\leq \left| \phi(s,a)^\top (\theta_{\ell,h}^k + \psi_h \widehat V_{\ell,h+1}^{k}) \right|\\
        &=\left| \ell_h^k(s,a) + \sum_{s'\in\calS} \bbP_h(s'\mid s,a) \widehat V_{\ell,h+1}^{k}(s') \right|\\
        &\leq 1 + \|\widehat V_{\ell,h+1}^{k}\|_\infty
    \end{align*}
    where the first and second equality are due to the definition of $\bar\phi_h^{k_e}$ and linear MDPs, respectively. Next, we can bound the second term, since the proxy good event $\bar E_g$ is assumed.
    \begin{align*}
         \left(\beta_b + \left\|\widehat\theta_{\ell,h}^{k} - \theta_{\ell,h}^k\right\|_{\Lambda_h^{k}} + \left\|(\widehat\psi_h^k - \psi_h) \widehat V_{\ell,h+1}^{k}\right\|_{\Lambda_h^{k}}\right) \left\| \bar\phi_h^{k_e}(s,a)\right\|_{(\Lambda_h^{k_e})^{-1}} \leq (\beta_b + \beta_r + \beta_p) \left\| \bar\phi_h^{k_e}(s,a)\right\|_{(\Lambda_h^{k_e})^{-1}}.
    \end{align*}
    Recall that $\|\bar\phi_h^{k_e}(s,a)\|_{(\Lambda_h^{k_e})^{-1}} = \|\phi(s,a)\|_{(\Lambda_h^{k_e})^{-1}}\sigma(-\beta_w \|\phi(s,a)\|_{(\Lambda_h^{k_e})^{-1}} + \log K) \leq \max_{y\geq 0} y\cdot\sigma(-\beta_w y + \log K)$. It follows that
    \begin{align*}
        |\widehat Q_{\ell,h}^{k}(s,a)| 
        &\leq 1 + \|\widehat V_{\ell,h+1}^{k}\|_\infty + (\beta_b + \beta_r + \beta_p) \left\| \bar\phi_h^{k_e}(s,a)\right\|_{(\Lambda_h^{k_e})^{-1}} \\
        &\leq 1 + \|\widehat V_{\ell,h+1}^{k}\|_\infty + (\beta_b + \beta_r + \beta_p) \max_{y\geq 0}\left[y\cdot\sigma(-\beta_w y + \log K)\right]\\
        &\leq 1 + \|\widehat V_{\ell,h+1}^{k}\|_\infty + \frac{2\log K}{\beta_w} (\beta_r + \beta_p + \beta_b)\\
        &= 2 + \|\widehat V_{\ell,h+1}^{k}\|_\infty\\
        &\leq 2 + \beta_{Q,h+1}\\
        &= \beta_{Q,h}
    \end{align*}
    where the third inequality follows from \Cref{lem:cassel 18}, the equality is due to $\beta_w = 2(\beta_r + \beta_p + \beta_b)\log K$, and the last inequality holds because of the induction hypothesis. 
    
    So far, we have shown that $\|\widehat Q_{\ell,h}^k\|_\infty \leq \beta_{Q,h}$. To show $\widehat Q_{\ell,h}^{k} \in \widehat\calQ (\beta_b, 2K \beta_Q , \beta_{Q,h})$, it remains to show that the corresponding parameters are upper bounded. Recall that $\widehat Q_{\ell,h}^{k}$ is defined as
    \[
        \widehat Q_{\ell,h}^{k}(s,a) 
        = \bar\phi_h^{k_e}(s,a)^\top w_{\ell,h}^{k} - \beta_b \left\|\bar \phi_h^{k_e}(s,a)\right\|_{(\Lambda_h^{k_e})^{-1}}
    \]
    where $w_{\ell,h}^{k} = \widehat\theta_{\ell,h}^{k} + \widehat\psi_{h}^{k} \widehat V_{\ell,h+1}^{k}$. Note that 
    \begin{align*}
        \|\widehat \theta_{\ell, h}^{k}\|_2 \leq \left\| (\Lambda_h^k)^{-1} \sum_{\tau \in [k-1]} \phi(s_h^\tau,a_h^\tau) \ell_h^\tau(s_h^\tau, a_h^\tau) \right\|_2 \leq \left\| (\Lambda_h^k)^{-1} \right\|_2\left\| \sum_{\tau \in [k-1]} \phi(s_h^\tau,a_h^\tau) \ell_h^\tau(s_h^\tau, a_h^\tau) \right\|_2 \leq K.
    \end{align*}
    Furthermore, the induction hypothesis implies that
    \[
        \left\|\widehat\psi_{h}^{k} \widehat V_{\ell,h+1}^{k}\right\|_2 = \left\|(\Lambda_h^k)^{-1} \sum_{\tau \in [k-1]} \phi(s_h^\tau,a_h^\tau) \widehat V_{\ell,h+1}^{k}(s_{h+1}^\tau)\right\|_2 \leq \beta_{Q,h+1}K.
    \]
    It follows that
    \[
        \|w_{\ell,h}^{k}\|_2 \leq \|\widehat \theta_{\ell,h}^{k}\|_2 + \left\|\widehat\psi_{h}^{k} \widehat V_{\ell,h+1}^{k}\right\|_2 \leq 2\beta_{Q} K.
    \]
    Furthermore, we have $(2K)^{-1}I\preceq(\Lambda_h^{k_e})^{-1} \preceq I$. Thus, we have
    \[
        \widehat Q_{\ell,h}^{k} \in \widehat\calQ(\beta_b, 2K \beta_Q , \beta_{Q,h}).
    \]
    By definition, since we have $\widehat V_{\ell,h}^{k}(s) = \sum_{a\in\calA} \widehat\pi_h^k(a\mid s) \widehat Q_{\ell,h}^{k}(s,a)$ and the assumption $\widehat\pi_h^k \in \widehat \Pi(K\beta_b, 2K^2\beta_Q, K\beta_{Q,h}, 11\eta H^3 K)$, it follows that
    \[
        \widehat V_{\ell,h}^{k} \in \widehat\calV(\beta_b, 2K \beta_Q , \beta_{Q,h}, 11\eta H^3 K).
    \]
    This completes the proof.
\end{proof}

Finally, we prove that $E_g$ holds with high probability. The proof closely follows Lemma 6 of \citet{cassel2024warm}, with modifications for the CMDP setting.
\begin{lemma}[Restatement of \Cref{lem:good event main-text}]\label{lem:good event} Let $1 \leq \beta_w \leq K$, let $\eta,\alpha \leq 1$ and $4\alpha \eta H^3 \leq 1$. Then $\Pr[E_g] \geq 1-\delta$ for any $\delta\in(0,1)$.
\end{lemma}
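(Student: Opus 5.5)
Since \Cref{lem:proxy good event} gives $\Pr[\bar E_g]\geq 1-\delta$, the plan is to show the deterministic inclusion $\bar E_g\subseteq E_g$. $E_1$ and $E_3$ are shared by both events, so it suffices to show that on $\bar E_g$ every clause of $E_2$ holds for all $(k,h,\ell)$. These are: the transition-estimation bound $\|(\psi_h-\widehat\psi_h^k)\widehat V_{\ell,h+1}^k\|_{\Lambda_h^k}\leq\beta_p$, the boundedness $\|\widehat Q_{\ell,h}^k\|_\infty\leq\beta_Q$, and $Y_k\leq 11\eta H^3K$. The difficulty is circularity. The first clause follows from $\bar E_2$ only once $\widehat V_{\ell,h+1}^k$ is known to lie in $\widehat\calV(\beta_b,2K\beta_Q,\beta_Q,11\eta H^3K)$. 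Membership in that class requires $\widehat\pi^k_h$ to lie in the policy class with dual scale $C_Y=11\eta H^3K$, which in turn requires bounds on the past $Y_j$ and $\widehat Q^j$. I therefore argue by strong induction on $k$ within each epoch; the base case $k=k_e$ is handled by the reset $\widehat\pi^{k_e}=\piunif$ and $Y_{k_e}=0$.

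\textbf{Step 1 (policy membership).} Assume that for all $j<k$ in the current epoch we have $\|\widehat Q^j_{\ell,h}\|_\infty\leq\beta_{Q,h}$, $\widehat Q^j_{\ell,h}\in\widehat\calQ(\beta_b,2K\beta_Q,\beta_{Q,h})$, and $Y_j\leq 11\eta H^3K$. By \Cref{lem:pol param}, $\widehat\pi_h^k$ equals $\widehat\pi(\cdot\mid\cdot;\{\beta_i,w_i\}_{i=0}^n,\Lambda)$ with $\beta_i=-\alpha\beta_b\sum_{j\in S_i}(1+Y_j)$ and $w_i=-\alpha\sum_{j\in S_i}(w^j_f+Y_jw^j_g)$, where $n\leq K^L$. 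Using $|S_i|\leq K$, $\alpha\leq1$ and the inductive bounds, we get $|\beta_i|\leq(1+C_Y)K\beta_b$, $\|w_i\|_2\leq(1+C_Y)2K^2\beta_Q$, and $\|\widehat Q(\cdot;\beta_i,w_i,\Lambda)\|_\infty\leq(1+C_Y)K\beta_{Q,h}$. The last bound holds because $\widehat Q(\cdot;\beta_i,w_i,\Lambda)$ is $-\alpha$ times a sum of at most $K$ terms of the form $\widehat Q_f^j+Y_j\widehat Q_g^j$. Hence $\widehat\pi_h^k\in\widehat\Pi(K\beta_b,2K^2\beta_Q,K\beta_{Q,h},11\eta H^3K)$.

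\textbf{Step 2 (Q/V bounds and the transition clause).} Apply \Cref{lem:cassel 12} at episode $k$. It gives $\widehat Q^k_{\ell,h}\in\widehat\calQ(\beta_b,2K\beta_Q,\beta_{Q,h})$, so $\|\widehat Q^k_{\ell,h}\|_\infty\leq\beta_{Q,h}\leq\beta_Q$. It also gives $\widehat V^k_{\ell,h+1}\in\widehat\calV(\cdot)$, and then $\bar E_2$ yields the $\beta_p$ bound for $\widehat V^k_{\ell,h+1}$. The only part of $\bar E_g$ used inside \Cref{lem:cassel 12} is $\bar E_2$ applied to $\widehat V^k_{\ell,h+1}$ (through backward induction on $h$), so no circularity remains at episode $k$.

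\textbf{Step 3 (dual bound).} Bound $Y_{k+1}$ directly from the update rule, without any drift argument. Since $4\alpha\eta H^3\leq1$, we have $0\leq 1-4\alpha\eta H^3\leq1$. Also $|\widehat V^k_{g,1}(s_1)|\leq\beta_Q=2H$ by Step 2. Therefore
\[
Y_{k+1}\leq Y_k+\eta\bigl(2H+b+4\alpha H^3+4\theta H^2\bigr)\leq Y_k+11\eta H^3,
\]
using $b\leq H$, $\alpha,\theta\leq1$ and $H\geq1$. Summing from the reset $Y_{k_e}=0$ over at most $K$ episodes gives $Y_{k+1}\leq 11\eta H^3K$, which closes the induction. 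When a new epoch starts, the policy and the dual variable are reset, so the hypotheses restart trivially.

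I expect the main obstacle to be the bookkeeping in Step 1: checking that the aggregated parameters fall exactly into the class $\widehat\Pi(K\beta_b,2K^2\beta_Q,K\beta_{Q,h},C_Y)$ that \Cref{lem:cassel 12} requires. This includes the factor $(1+Y_j)$, the window sizes $|S_i|\leq K$, and the condition $(2K)^{-1}I\preceq\Lambda\preceq I$. A secondary point is to order the induction carefully: $Y_k$ and all $\widehat Q^j$ with $j<k$ must be controlled before $\widehat\pi^k$ is used.
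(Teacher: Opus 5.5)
Your proposal is correct and follows essentially the same argument as the paper. You reduce to the deterministic inclusion $\bar E_g\subseteq E_g$ via \Cref{lem:proxy good event} and then induct on $k$ within each epoch: \Cref{lem:pol param} places $\widehat\pi_h^k$ in $\widehat\Pi(K\beta_b,2K^2\beta_Q,K\beta_{Q,h},11\eta H^3K)$, \Cref{lem:cassel 12} (together with $\bar E_2$) gives the $\widehat Q$/$\widehat V$ bounds and the $\beta_p$ clause, and the one-step bound $Y_{k+1}\le Y_k+11\eta H^3$ from the dual update controls $Y_k$. The only cosmetic difference is the inductive invariant: you carry the $\widehat Q$-bounds, while the paper carries policy-class membership and $Y_{k'}\le 11\eta H^3k'$, and the two are equivalent.
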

\begin{proof}
    We assume $\bar E_g$, which holds with probability at least $1-\delta$ by \Cref{lem:proxy good event}. Next, under $\bar E_g$, we focus on showing $E_2$. As a first step, we show that $\widehat\pi_h^k \in \widehat \Pi(K\beta_b, 2K^2\beta_Q , K\beta_{Q,h}, 11\eta H^3 K)$ and $Y_k \in [0, 11\eta H^3 k]$ for all $k,h$ using induction on $k\in K_e$ for each epoch $e\in E$. Finally, based on this induction, we prove that $E_2$ holds.

    \paragraph{Step 1: Base Case}
    First, let us fix $e\in E$. For the base case, consider $k=k_e$. Since $\widehat\pi_h^{k_e} = \piunif$ for all $h$, it follows that $\widehat \pi_h^{k_e} \in \widehat \Pi(K\beta_b, 2K^2 \beta_Q , K\beta_{Q,h}, 11\eta H^3 K)$, as $\piunif$ can be viewed as $\pi(a\mid s; 0, 0, I)$ with $n=0$. Furthermore, we initialize $Y_{k_e} = 0$. Thus, the base case holds.

    \paragraph{Step 2: Induction Hypothesis}
    For $k\in K_e$, we assume that $\widehat\pi_h^{k'} \in \widehat \Pi(K\beta_b, 2K^2 \beta_Q , K\beta_{Q,h}, 11\eta H^3 K)$ and $Y_{k'}\in [0, 11\eta H^3 k']$ for all $h$ and $k_e \leq k' < k$. Then, by \Cref{lem:cassel 12}, it follows that for all $(h,k',\ell) \in [H]\times \{k_e,\ldots,k-1\} \times\{f,g\}$,
    \begin{align}\label{eq:lem:good event:lemma12}
    \begin{aligned}
        &\widehat Q_{\ell, h}^{k'} \in \widehat \calQ(\beta_b, 2K \beta_Q , \beta_{Q,h}),\quad \widehat V_{\ell, h}^{k'} \in \widehat\calV(\beta_b, 2K \beta_Q , \beta_{Q,h}, 11\eta H^3 K).
    \end{aligned}
    \end{align}
    Furthermore, let $\beta_b, w_{\ell,h}^{k'},\Lambda$ denote the parameters that specify $\widehat Q_{\ell,h}^{k'}$, i.e., for all $h,k'<k, \ell\in \{f,g\}$, $\widehat Q_{\ell,h}^{k'}(\cdot,\cdot) = \widehat Q(\cdot,\cdot; \beta_b, w_{\ell,h}^{k'}, \Lambda)$.
    
    \paragraph{Step 3-1: Induction Step ($\widehat\pi_h^k$)}
    Next, we show that $\widehat\pi_h^k \in \widehat\Pi(K\beta_b, 2K^2\beta_Q, K\beta_{Q,h}, 11\eta H^3 K)$ for all $h$. By \Cref{lem:pol param}, 
    \begin{align*}
        \widehat \pi_h^k(\cdot\mid \cdot) = \widehat\pi(\cdot\mid \cdot; \{\beta_i, w_i\}_{i=0}^n, \Lambda)
    \end{align*}
    where $w_i = -\alpha \sum_{j\in S_i} (w_{f,h}^{j} + Y_{j} w_{g,h}^{j}),\ \beta_i = -\alpha \beta_b \sum_{j\in S_i} (1+Y_{j}).$ Note that $j \in S_i$ satisfies $j < k$ for each $i$, thus we can use \eqref{eq:lem:good event:lemma12} to bound the parameters. For $w_i$,
    \begin{align}\label{eq:lem:good event:w}
    \begin{aligned}
        \|w_i\|_2 
        &\leq \sum_{j\in S_i} \|w_{f,h}^{j} + Y_{j} w_{g,h}^{j}\|_2\\
        &\leq \sum_{j\in S_i} \| w_{f,h}^{j}\|_2 + Y_{j} \|w_{g,h}^{j}\|_2\\
        &\leq \sum_{j\in S_i}(1+ 11\eta H^3 K)2K\beta_Q \\
        &\leq (1+ 11\eta H^3 K)2K^2\beta_Q
    \end{aligned}
    \end{align}
    where the first inequality is due to $|\alpha|\leq1$, and the third inequality is due to the induction hypothesis ($Y_{k'}<11\eta H^3 k'$ for all $k'<k$) and that \eqref{eq:lem:good event:lemma12} implies $\|w_{\ell,h}^{j}\|_2 \leq 2K\beta_Q$. Similarly,
    \begin{align}\label{eq:lem:good event:beta}
        |\beta_i| \leq (1+11\eta H^3 K)K\beta_b.
    \end{align}
    Again, by \eqref{eq:lem:good event:lemma12}, we have $\|\widehat Q_{f,h}^j\|_\infty, \|\widehat Q_{g,h}^j\|_\infty \leq \beta_Q$. Then, for any $(s,a)\in\calS\times\calA$ and $i=0,\ldots,n$,
    \begin{align}\label{eq:lem:good event:Q}
        \left|-\alpha \sum_{j\in S_i} (\widehat Q_{f,h}^{j}(s,a) + Y_{j}\widehat Q_{g,h}^{j}(s,a))\right| \leq (1+11\eta H^3 K)K\beta_{Q,h}.
    \end{align}
    Note that $n = \max\{0,\floor{(k-1-k_e)/K^B}\} \leq \floor{K/K^B} \leq K^{1-B}=K^L$. Furthermore, its parameters are bounded by \eqref{eq:lem:good event:w}, \eqref{eq:lem:good event:beta}, and \eqref{eq:lem:good event:Q}, and the same argument can be applied for all $h\in [H]$. Thus, for all $h\in [H]$,
    \begin{align*}
        \widehat\pi_h^k \in \widehat \Pi(K\beta_b, 2K^2 \beta_Q , K\beta_{Q,h}, 11\eta H^3 K).
    \end{align*}
    \paragraph{Step 3-2: Induction Step ($Y_k$)} To bound $Y_k$, 
    \begin{align*}
        Y_k &= \left[(1-4\alpha\eta H^3)Y_{k-1}+\eta\left(\widehat V_{g,1}^{k-1}(s_1)- b -4\alpha H^3 - 4\theta H^2\right) \right]_+\\
        &\leq \left|(1-4\alpha\eta H^3)Y_{k-1}+\eta\left(\widehat V_{g,1}^{k-1}(s_1)- b -4\alpha H^3 - 4\theta H^2\right) \right|\\
        &\leq (1-4\alpha\eta H^3) |Y_{k-1}| + \eta |\widehat V_{g,1}^{k-1}(s_1)- b -4\alpha H^3 - 4\theta H^2|\\
        &\leq |Y_{k-1}| + \eta |\widehat V_{g,1}^{k-1}(s_1)- b -4\alpha H^3 - 4\theta H^2|\\
        &\leq 11\eta H^3(k-1) + 11\eta H^3\\
        &\leq 11\eta H^3k
    \end{align*}
    where the first inequality is due to the fact that $\max\{0,z\}\leq |z|$ for all $z\in\bbR$, the second and third inequality follows from the triangle inequality and $0\leq1-4\alpha\eta H^3 \leq 1$, and the fourth inequality is due the induction hypothesis, i.e., $Y_{k'}\leq 11\eta H^3 k'$ and $\|\widehat V_{g,1}^{k-1}
    \|_\infty \leq 2H$ for all $k'<k$.
    
    These complete the induction, i.e., $\widehat\pi_h^k \in \widehat \Pi(K\beta_b, 2K^2\beta_Q , K\beta_{Q,h}, 11\eta H^3 K)$ and $Y_k \in [0, 11\eta H^3 k]$ for all $(h,k) \in [H] \times K_e$. Furthermore, we can apply the same argument for all $e\in E$. Thus, it holds for all $(h,k) \in [H]\times [K]$.

    \paragraph{Step 4: Showing $E_g$}
    By \Cref{lem:cassel 12}, we have for all $h,k$,
    \begin{align*}
        &\widehat Q_{f,h}^k, \widehat Q_{g,h}^k \in \widehat\calQ(\beta_b, 2K\beta_Q, \beta_{Q,h}),\quad \widehat V_{f,h}^{k}, \widehat V_{g,h}^{k} \in \widehat \calV(\beta_b, 2K \beta_Q , \beta_{Q,h}, 11\eta H^3 K).
    \end{align*}
    As a result, since $\bar E_2$ is assumed, we have $\|(\psi_h - \widehat\psi_h^k) \widehat V_{\ell,h+1}^k\|_{\Lambda_h^k} \leq \beta_p$. Thus, $E_2$ holds. Furthermore, $E_1,E_3$ hold by $\bar E_g$. This completes the proof.
\end{proof}

\section{Lyapunov Drift Analysis}\label{appendix:drift}
In this section, we upper bound the dual variable based on a Lyapunov drift analysis. As a first step, we bound the Lyapunov drift $(Y_{k+1}^2 - Y_k^2)/2$.
\begin{lemma}\label{lem:lyap 1}
    Assume that the good event $E_g$ holds. For all $e \in E$ and $k, k+1\in K_e$, the Lyapunov drift is bounded as
    \begin{align*}
    \begin{aligned}
        \frac{Y_{k+1}^2 - Y_k^2}{2} 
        &\leq -\eta\gamma Y_k + \frac{\eta}{\alpha} \bbE_{\bar\bbP^{k_e},\bar\pi}\left[\sum_{h\in [H]}D(\bar\pi_h(\cdot|s_h) || \widetilde\pi_h^k(\cdot|s_h)) - D(\bar\pi_h(\cdot|s_h)||\widehat\pi_h^{k+1}(\cdot|s_h))\right]\\
        &\quad+\eta(2\alpha H^3 + 4H^2 \theta + 4H^2) + 2\eta^2(9H^2 + 16\alpha^2 H^6 + 1936\alpha^2 \eta^2 H^{12}K^2  + 16\theta^2 H^4).
    \end{aligned}
    \end{align*}
\end{lemma}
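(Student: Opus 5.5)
Starting from the dual update, since $[\cdot]_+$ is nonexpansive toward $0$ and $Y_{k+1}^2\le \big((1-4\alpha\eta H^3)Y_k+\eta(\widehat V_{g,1}^k(s_1)-b-4\alpha H^3-4\theta H^2)\big)^2$, expanding the square gives
\begin{align*}
\frac{Y_{k+1}^2-Y_k^2}{2}\le \eta Y_k\big(\widehat V_{g,1}^k(s_1)-b\big) - Y_k\eta(4\alpha H^3 Y_k+4\alpha H^3+4\theta H^2) + (\text{second-order terms}),
\end{align*}
where the second-order terms are $\tfrac12(4\alpha\eta H^3Y_k)^2$, cross terms, and $\tfrac{\eta^2}{2}(\widehat V_{g,1}^k-b-4\alpha H^3-4\theta H^2)^2$. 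Using $\|\widehat V_{g,1}^k\|_\infty\le 2H$, $b\le H$ and $Y_k\le 11\eta H^3K$ from $E_2$, these are bounded by $2\eta^2(9H^2+16\alpha^2H^6+1936\alpha^2\eta^2H^{12}K^2+16\theta^2H^4)$ via $(a+b+c+d)^2\le 4(a^2+\dots)$. The main task is then to upper bound $\eta Y_k(\widehat V_{g,1}^k(s_1)-b)$ by $-\eta\gamma Y_k$ plus the KL telescoping terms plus slack that is exactly cancelled by the regularization $-Y_k\eta(4\alpha H^3Y_k+4\alpha H^3+4\theta H^2)$.

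For this, I would write $b\ge V_{g,1}^{\bar\pi}(s_1)+\gamma\ge \bar V_{g,1}^{\bar\pi}(s_1;\rho^{k_e})+\gamma$ by Assumption \ref{ass:slater} and Lemma \ref{lem:cassel 2}, where $\rho^{k_e}$ is the contraction factor $\sigma(-\beta_w\|\phi\|_{(\Lambda_h^{k_e})^{-1}}+\log K)$. Then apply Lemma \ref{lem:extended value diff} in the contracted MDP $\bar\bbP^{k_e}$ with $\pi=\bar\pi$ and $\widehat Q=\widehat Q_{g}^k$: the Bellman-error term $\bar g_h+\bar\bbP_h\widehat V^k_{g,h+1}-\widehat Q^k_{g,h}$ is nonnegative under $E_1,E_2$ (optimism, since the bonus $\beta_b\|\bar\phi\|$ dominates the estimation errors $\beta_r+\beta_p$). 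Hence $\widehat V_{g,1}^k(s_1)-\bar V^{\bar\pi}_{g,1}\le \bbE_{\bar\bbP^{k_e},\bar\pi}[\sum_h\langle \widehat Q^k_{g,h},\widehat\pi_h^k-\bar\pi_h\rangle]$. Doing the same for $f$ is not needed; instead I add and subtract $\langle \widehat Q^k_{f,h},\widehat\pi_h^k-\bar\pi_h\rangle$, whose absolute value is at most $2\cdot 2H$ per step, contributing $\eta\cdot 4H^2$ (this explains the $4H^2\eta$ term).

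The key step is the mirror-descent inequality for $\widehat\pi_h^{k+1}\propto\widetilde\pi_h^k\exp(-\alpha G_h)$ with $G_h=\widehat Q_{f,h}^k+Y_k\widehat Q_{g,h}^k$: pointwise in $s$,
\begin{align*}
\alpha\langle G_h,\widehat\pi_h^{k+1}-\bar\pi_h\rangle\le D(\bar\pi_h\|\widetilde\pi_h^k)-D(\bar\pi_h\|\widehat\pi_h^{k+1})-D(\widehat\pi_h^{k+1}\|\widetilde\pi_h^k).
\end{align*}
I then need to replace $\widehat\pi^{k+1}$ by $\widehat\pi^k$: $\langle G_h,\widehat\pi_h^k-\widehat\pi_h^{k+1}\rangle\le \|G_h\|_\infty(\|\widehat\pi^k_h-\widetilde\pi^k_h\|_1+\|\widetilde\pi_h^k-\widehat\pi^{k+1}_h\|_1)$, with $\|\widehat\pi^k_h-\widetilde\pi^k_h\|_1\le2\theta$ and, by Pinsker against the $-D(\widehat\pi^{k+1}\|\widetilde\pi^k)$ term (or the standard bound $\|\widehat\pi^{k+1}-\widetilde\pi^k\|_1\le 2\alpha\|G_h\|_\infty$), a term of order $\alpha\|G_h\|_\infty^2$. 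With $\|G_h\|_\infty\le 2H(1+Y_k)$, summing over $h$ and taking the contracted expectation (which by \eqref{eq:contracted expectation} does not inflate constants) yields slack of order $\eta(\alpha H^3(1+Y_k)^2+\theta H^2(1+Y_k))$. This is the main obstacle: the quadratic-in-$Y_k$ slack must be matched precisely by the regularization $4\alpha\eta H^3Y_k^2+4\alpha\eta H^3Y_k+4\theta\eta H^2Y_k$, leaving only $\eta(2\alpha H^3+4\theta H^2)$ as $Y_k$-free residue; I would carefully track constants in the $\|G_h\|_\infty$ bound (using $\|\widehat Q\|_\infty\le\beta_{Q,h}\le 2H$ and $(1+Y)^2\le 2+2Y^2$ as needed) to confirm this matching. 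Finally, dividing by $\alpha$ and multiplying by $\eta$ gives the $\frac{\eta}{\alpha}$ KL difference, completing the bound.
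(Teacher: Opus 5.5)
Your argument is correct, and it shares the paper's skeleton. The common steps are: the squared expansion with the four-term bound on the $\eta^2$ part; Slater, Lemma~\ref{lem:cassel 2} and Lemma~\ref{lem:extended value diff} with optimism in the $\rho^{k_e}$-contracted MDP; the pushback inequality with comparator $\bar\pi$; and the $4H^2$ bound on the $\widehat Q_f$ comparator term. Where you differ is the stability term.

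The paper splits $G_h=\widehat Q_{f,h}^k+Y_k\widehat Q_{g,h}^k$. Only the $\widehat Q_f$ part is played against $-\frac1\alpha D(\widehat\pi_h^{k+1}\|\widetilde\pi_h^k)$ via Pinsker (third statement of Lemma~\ref{lem:policy inequality}). The regularizer is then read as a lower bound on the drift, $\bbE_{\bar\bbP^{k_e},\bar\pi}[\sum_h\langle\widehat\pi_h^{k+1}-\widehat\pi_h^k,\widehat Q_{g,h}^k\rangle]\ge-(4\alpha H^3(1+Y_k)+4\theta H^2)$. That lower bound comes from the cruder movement bound $\|\widehat\pi_h^{k+1}-\widetilde\pi_h^k\|_1\le\alpha\|G_h\|_\infty$ (second statement).

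You instead apply Pinsker to all of $G_h$ at once. This gives, per step, $\frac{\alpha}{2}\|G_h\|_\infty^2+2\theta\|G_h\|_\infty\le 2\alpha H^2(1+Y_k)^2+4\theta H(1+Y_k)$. The constant matching you left unchecked does close if you expand $(1+Y_k)^2$ exactly:
\[
\eta\bigl(2\alpha H^3(1+Y_k)^2+4\theta H^2(1+Y_k)\bigr)-\eta Y_k\bigl(4\alpha H^3Y_k+4\alpha H^3+4\theta H^2\bigr)=\eta\bigl(2\alpha H^3+4\theta H^2\bigr)-2\alpha\eta H^3Y_k^2 .
\]
This is at most the stated residue $\eta(2\alpha H^3+4\theta H^2)$.

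What each route buys:
\begin{itemize}
\item Your route needs no symmetric-KL movement bound, and it has slack to spare: half of the $Y_k^2$ regularization goes unused.
\item The paper's split is what supports its interpretation of the regularizer as a computable surrogate for the unknown inner-product term $\bbE_{\bbP}[\langle\widehat\pi^{k+1}-\widehat\pi^k,\widehat Q^k\rangle]$ in the drift template of \citet{wei2020online}.
\end{itemize}

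Do commit to the Pinsker version, though. Your parenthetical alternative drops $-\frac1\alpha D(\widehat\pi_h^{k+1}\|\widetilde\pi_h^k)$ and uses a movement bound $\|\widehat\pi_h^{k+1}-\widetilde\pi_h^k\|_1\le c\,\alpha\|G_h\|_\infty$ instead. That does not yield the stated inequality: even with $c=1$ it leaves an uncancelled $4\alpha\eta H^3Y_k$. Similarly, using $(1+Y_k)^2\le 2+2Y_k^2$ instead of expanding exactly would inflate the residue from $2\alpha H^3$ to $4\alpha H^3$.
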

\begin{proof}
    Recall that the dual variable follows $Y_{k+1} = [(1-4\alpha\eta H^3)Y_{k}+\eta(\widehat V_{g,1}^{k}(s_1)- b -4\alpha H^3 - 4\theta H^2)]_+$. It can be rewritten as
    $$Y_{k+1} = \left[Y_{k} + \eta \left(\widehat V_{g,1}^{k}(s_1) - b - 4\alpha H^3(1+Y_k) - 4\theta H^2\right)\right]_+.$$
    Note that $\max\{0,z\}^2 \leq z^2$ for any $z\in\bbR$. Then, if we square both sides, we have
    \begin{align*}
        Y_{k+1}^2 \leq Y_k^2 + 2 Y_k \eta\left(\widehat V_{g,1}^{k}(s_1) - b - 4\alpha H^3(1+Y_k) - 4\theta H^2\right) + \eta^2\left(\widehat V_{g,1}^{k}(s_1) - b - 4\alpha H^3(1+Y_k) - 4\theta H^2\right)^2.
    \end{align*}
    It can be rewritten as
    \begin{align}\label{eq:lyap 1}
    \begin{aligned}
        &\frac{Y_{k+1}^2 - Y_{k}^2}{2} \\ 
        &\leq \underbrace{Y_{k}\eta\left( \widehat V_{g,1}^{k}(s_1)-b - 4\alpha H^3(1+Y_k) - 4\theta H^2\right)}_{\text{(I)}} + \underbrace{\frac{\eta^2}{2}\left(\widehat V_{g,1}^{k}(s_1) - b - 4\alpha H^3(1+Y_k) - 4\theta H^2\right)^2}_{\text{(II)}}.
    \end{aligned}
    \end{align}
    (II) can be bounded as follows.
    \begin{align*}
        \text{(II)} 
        &\leq \frac{\eta^2}{2}\cdot 4((\widehat V_{g,1}^k(s_1) - b)^2 + (4\alpha H^3)^2 + (4\alpha H^3Y_k)^2 + (4\theta H^2)^2) \\
        &\leq \frac{\eta^2}{2}\cdot 4(9H^2 + 16\alpha^2 H^6 + 16\alpha^2 H^6 Y_k^2 + 16\theta^2 H^4)\\
        &\leq \frac{\eta^2}{2}\cdot 4(9H^2 + 16\alpha^2 H^6 + 1936\alpha^2 \eta^2 H^{12}K^2  + 16\theta^2 H^4)
    \end{align*}
    where the first inequality follows from the Cauchy-Schwarz inequality, the second and third inequalities follow from that $E_g$ implies $|\widehat V_{g,1}^k(s_1)| \leq 2H$ and $0\leq Y_k \leq 11\eta H^3 k$ for all $k$.

    Next, we bound (I). To obtain a negative drift, we first deduce a bound on $Y_k\langle \widehat\pi^{k+1}_h(\cdot|s_h) - \widehat\pi_h^{k}(\cdot|s_h), \widehat Q_{g,h}^k(s_h,\cdot)\rangle$. Since we assumed $k,k+1 \in K_e$, $\widehat\pi_h^{k+1}\neq \piunif$. Then $\widehat\pi_h^{k+1}$ satisfies
    \begin{equation*}
        \widehat\pi_h^{k+1}(\cdot\mid s) = \argmin_{\pi(\cdot\mid s)\in\Delta(\calA)}\ \langle \pi, \widehat Q_{f,h}^{k} + Y_{k} \widehat Q_{g,h}^{k} \rangle + \frac{1}{\alpha}D(\pi || \widetilde\pi_h^{k}).
    \end{equation*}
    Applying \Cref{lem:pushback} and letting $z=\bar\pi_h$, we have for any $s_h \in \calS$,
    \begin{align*}
        &\left\langle \widehat\pi_h^{k+1}(\cdot|s_h), \widehat Q_{f,h}^{k}(s_h,\cdot) + Y_{k} \widehat Q_{g,h}^{k}(s_h,\cdot)\right\rangle + \frac{1}{\alpha}D(\widehat\pi_h^{k+1}(\cdot|s_h)|| \widetilde\pi_h^{k}(\cdot|s_h))\\
        &\leq \left\langle \bar \pi_h(\cdot|s_h), \widehat Q_{f,h}^{k}(s_h,\cdot) + Y_{k}\widehat Q_{g,h}^{k}(s_h,\cdot)\right\rangle + \frac{1}{\alpha} D(\bar \pi_h(\cdot|s_h) || \widetilde\pi_h^{k}(\cdot|s_h)) - \frac{1}{\alpha} D(\bar \pi_h(\cdot|s_h) || \widehat\pi_h^{k+1}(\cdot|s_h))
    \end{align*}
    where $\bar\pi$ is the Slater policy satisfying $V_{g,1}^{\bar \pi}(s_1) \leq b - \gamma$ for some $\gamma > 0$. Next, summing over $h$ and rearranging terms yield
    \begin{align}\label{eq:lyap 2}
    \begin{aligned}
        &Y_k\sum_{h\in [H]} \langle \widehat\pi^{k+1}_h(\cdot|s_h) - \widehat\pi_h^{k}(\cdot|s_h), \widehat Q_{g,h}^k(s_h,\cdot)\rangle \\
        &\leq \frac{1}{\alpha} \sum_{h\in [H]}D(\bar\pi_h(\cdot|s_h) || \widetilde\pi_h^k(\cdot|s_h)) - \frac{1}{\alpha} \sum_{h\in [H]}D(\bar\pi_h(\cdot|s_h)||\widehat\pi_h^{k+1}(\cdot|s_h)) \\
        &\quad+ \sum_{h\in [H]}\langle \widehat\pi_h^k(\cdot|s_h) - \widehat\pi_h^{k+1}(\cdot|s_h), \widehat Q_{f,h}^{k}(s_h,\cdot) \rangle - \frac{1}{\alpha}\sum_{h\in [H]}D(\widehat\pi_h^{k+1}(\cdot|s_h)||\widetilde\pi_h^k(\cdot|s_h)) \\
        &\quad+ \sum_{h\in [H]}\langle \bar\pi_h(\cdot|s_h) - \widehat\pi_h^k(\cdot|s_h), \widehat Q_{f,h}^k(s_h,\cdot) \rangle + Y_k\sum_{h\in [H]}\langle \bar\pi_h(\cdot|s_h) - \widehat\pi_h^k(\cdot|s_h), \widehat Q_{g,h}^k(s_h,\cdot) \rangle.
    \end{aligned}
    \end{align}
    Now, we take $\bbE_{\bar\bbP^{k_e}, \bar\pi}$, which is taken over $\{s_h\}_{h=1}^H$ under $\bar\bbP^{k_e}, \bar\pi$ for a fixed $s_1$. Note that since $\bar\bbP^{k_e}$ is a transition kernel of a contracted MDP, it could be $\sum_{s'}\bar\bbP^{k_e}(s'|s,a) \leq 1$. However, taking $\bbE_{\bar\bbP^{k_e},\bar\pi}$ can be viewed as a linear combination, where its coefficients is in the form of a sub-probability measure defined as $\Pr[s_1=s, \ldots, s_H=s' \mid s_1, \bar\bbP^{k_e},\bar\pi] \in [0,1]$. This implies that taking $\bbE_{\bar\bbP^{k_e}, \bar\pi}$ guarantees monotonicity, i.e.,
    \begin{align}\label{eq:lyap 2.5}
    \begin{aligned}
        &Y_k\bbE_{\bar\bbP^{k_e},\bar\pi}\left[\sum_{h\in [H]} \langle \widehat\pi^{k+1}_h(\cdot|s_h) - \widehat\pi_h^{k}(\cdot|s_h), \widehat Q_{g,h}^k(s_h,\cdot)\rangle\right] \\
        &\leq \frac{1}{\alpha} \bbE_{\bar\bbP^{k_e},\bar\pi}\left[\sum_{h\in [H]}D(\bar\pi_h(\cdot|s_h) || \widetilde\pi_h^k(\cdot|s_h)) - D(\bar\pi_h(\cdot|s_h)||\widehat\pi_h^{k+1}(\cdot|s_h))\right] \\
        &\quad+ \underbrace{\bbE_{\bar\bbP^{k_e},\bar\pi}\left[\sum_{h\in [H]}\langle \widehat\pi_h^k(\cdot|s_h) - \widehat\pi_h^{k+1}(\cdot|s_h), \widehat Q_{f,h}^{k}(s_h,\cdot) \rangle - \frac{1}{\alpha}D(\widehat\pi_h^{k+1}(\cdot|s_h)||\widetilde\pi_h^k(\cdot|s_h))\right]}_{\text{(III)}} \\
        &\quad+ \underbrace{\bbE_{\bar\bbP^{k_e},\bar\pi}\left[\sum_{h\in [H]}\langle \bar\pi_h(\cdot|s_h) - \widehat\pi_h^k(\cdot|s_h), \widehat Q_{f,h}^k(s_h,\cdot) \rangle\right]}_{\text{(IV)}} \\
        &\quad+ \underbrace{Y_k\bbE_{\bar\bbP^{k_e},\bar\pi}\left[\sum_{h\in [H]}\langle \bar\pi_h(\cdot|s_h) - \widehat\pi_h^k(\cdot|s_h), \widehat Q_{g,h}^k(s_h,\cdot) \rangle\right]}_{\text{(V)}}.
    \end{aligned}
    \end{align}
    We bound (III). By the third statement of \Cref{lem:policy inequality}, for any $s_h \in \calS$,
    we have 
    \begin{align*}
        \langle \widehat\pi_h^k(\cdot|s_h) - \widehat\pi_h^{k+1}(\cdot|s_h), \widehat Q_{f,h}^{k}(s_h,\cdot) \rangle - \frac{1}{\alpha}D(\widehat\pi_h^{k+1}(\cdot|s_h)||\widetilde\pi_h^k(\cdot|s_h)) \leq 2\alpha H^2 + 4H \theta.
    \end{align*}
    Thus, summing over $h\in [H]$ and taking $\bbE_{\bar\bbP^{k_e},\bar\pi}$, we have
    \begin{align*}
        \text{(III)} \leq \bbE_{\bar\bbP^{k_e},\bar\pi}\left[2\alpha H^3 + 4H^2 \theta\right] \leq 2\alpha H^3 + 4H^2 \theta
    \end{align*}
    where the second inequality is due to \eqref{eq:contracted expectation}. To bound (IV),
    \begin{align*}
        \text{(IV)}
        &\leq \bbE_{\bar\bbP^{k_e},\bar\pi}\left[\sum_{h\in [H]}\|\bar\pi_h(\cdot|s_h) - \widehat\pi_h^k(\cdot|s_h)\|_1  \|\widehat Q_{f,h}^k(s_h,\cdot)\|_\infty\right] \\
        &\leq \bbE_{\bar\bbP^{k_e},\bar\pi}\left[4H^2\right] \\
        &\leq 4H^2
    \end{align*}
    where the first inequality is due to H\"older's inequality, and the last inequality is due to \eqref{eq:contracted expectation}. 
    
    To bound (V), we observe the following. Let $\bar V_{g,1}^{\bar\pi}(s)$ denote the $\rho$-contracted value function, where $\rho(s,a,h) = \sigma(-\beta_w\|\phi(s,a)\|_{(\Lambda_h^{k_e})^{-1}} + \log K)$. By Lemmas \ref{lem:cassel 2} and \ref{lem:extended value diff}, we have
    \begin{align*}
        V_{g,1}^{\bar\pi}(s_1) - \widehat V_{g,1}^k(s_1) 
        &\geq \bar V_{g,1}^{\bar\pi}(s_1) - \widehat V_{g,1}^k(s_1) \\
        &= \bbE_{\bar\bbP^{k_e},\bar\pi}\left[\sum_{h\in[H]}\langle \bar\pi_h(\cdot\mid s_h) - \widehat\pi_h^k(\cdot\mid s_h) ,\widehat Q_{g,h}^k(\cdot\mid s_h) \rangle\right] \\
        &\quad+ \bbE_{\bar\bbP^{k_e},\bar\pi}\left[\sum_{h\in[H]} \bar g_h(s_h,a_h) + \sum_{s'\in\calS} \bar \bbP_h^{k_e}(s'\mid s_h, a_h)\widehat V_{g,h+1}^k(s') - \widehat Q_{g,h}^k(s_h,a_h)\right].
    \end{align*}
    To bound the latter term, we have for any $(s_h,a_h)\in\calS \times \calA$,
    \begin{align*}
        &\bar g_h(s_h,a_h) + \sum_{s'\in\calS} \bar \bbP_h(s'\mid s_h, a_h)\widehat V_{g,h+1}^k(s') - \widehat Q_{g,h}^k(s_h,a_h) \\
        &= \bar\phi_h^{k_e}(s_h,a_h)^\top \theta_{g,h} + \bar\phi_h^{k_e}(s_h,a_h)^\top\psi_h \widehat V_{g,h+1}^k - \bar\phi_h^{k_e}(s_h,a_h)^\top \left[\widehat\theta_{g,h}^k + \widehat\psi_h^k\widehat V_{g,h+1}^k\right] + \beta_b\|\bar\phi_h^{k_e}(s_h,a_h)\|_{(\Lambda_h^{k_e})^{-1}}\\
        &\geq - \|\theta_{g,h} - \widehat \theta_{g,h}^k\|_{\Lambda_h^{k_e}} \|\bar\phi_h^{k_e}(s_h,a_h)\|_{(\Lambda_h^{k_e})^{-1}} - \|(\psi_h- \widehat\psi_h^k)\widehat V_{g,h+1}^k\|_{\Lambda_h^{k_e}} \|\bar\phi_h^{k_e}(s_h,a_h)\|_{(\Lambda_h^{k_e})^{-1}} + \beta_b \|\bar\phi_h^{k_e}(s_h,a_h)\|_{(\Lambda_h^{k_e})^{-1}}\\
        &\geq - \|\theta_{g,h} - \widehat \theta_{g,h}^k\|_{\Lambda_h^{k}} \|\bar\phi_h^{k_e}(s_h,a_h)\|_{(\Lambda_h^{k_e})^{-1}} - \|(\psi_h- \widehat\psi_h^k)\widehat V_{g,h+1}^k\|_{\Lambda_h^{k}} \|\bar\phi_h^{k_e}(s_h,a_h)\|_{(\Lambda_h^{k_e})^{-1}} + \beta_b \|\bar\phi_h^{k_e}(s_h,a_h)\|_{(\Lambda_h^{k_e})^{-1}}\\
        &\geq -\beta_r \|\bar\phi_h^{k_e}(s_h,a_h)\|_{(\Lambda_h^{k_e})^{-1}} - \beta_p \|\bar\phi_h^{k_e}(s_h,a_h)\|_{(\Lambda_h^{k_e})^{-1}} + \beta_b \|\bar\phi_h^{k_e}(s_h,a_h)\|_{(\Lambda_h^{k_e})^{-1}}\\
        &= 0
    \end{align*}    
    where the first equality is due to the definition of contracted MDP, the first inequality is due to the Cauchy-Schwarz  inequality, the second inequality is due to $\Lambda_h^{k_e} \preceq \Lambda_h^k$, the third inequality is due to $E_g$, and the last equality is due to $\beta_b = \beta_r + \beta_p$. This implies that the latter term is nonnegative, as $\bbE_{\bar\bbP^{k_e},\pi}[0] = 0$. Thus, it follows that
    \begin{align*}
        \text{(V)} &= Y_k\bbE_{\bar\bbP^{k_e},\bar\pi}\left[\sum_{h\in[H]}\langle \bar\pi_h(\cdot\mid s_h) - \widehat\pi_h^k(\cdot\mid s_h) ,\widehat Q_{g,h}^k(\cdot\mid s_h) \rangle\right] \\
        &\leq Y_k(V_{g,1}^{\bar\pi}(s_1) - \widehat V_{g,1}^k(s_1)) \\
        &\leq Y_k(b-\gamma - \widehat V_{g,1}^k(s_1))
    \end{align*}
    where the last inequality is due to the Slater condition and $Y_k \geq0$. Finally, plugging the bounds on (III),(IV), and (V) into \eqref{eq:lyap 2.5}, we have
    \begin{align*}
        &Y_k\bbE_{\bar\bbP^{k_e},\bar\pi}\left[\sum_{h\in [H]} \langle \widehat\pi^{k+1}_h(\cdot|s_h) - \widehat\pi_h^{k}(\cdot|s_h), \widehat Q_{g,h}^k(s_h,\cdot)\rangle\right] \\
        &\leq \frac{1}{\alpha} \bbE_{\bar\bbP^{k_e},\bar\pi}\left[\sum_{h\in [H]}D(\bar\pi_h(\cdot|s_h) || \widetilde\pi_h^k(\cdot|s_h)) - D(\bar\pi_h(\cdot|s_h)||\widehat\pi_h^{k+1}(\cdot|s_h))\right]\\
        &\quad+ 2\alpha H^3 + 4H^2 \theta + 4H^2 + Y_k(b-\gamma - \widehat V_{g,1}^k(s_1)).
    \end{align*}
    Then we can bound (I) as follows.
    \begin{align*}
        \text{(I)} 
        &= Y_{k}\eta\left( \widehat V_{g,1}^{k}(s_1)-b - 4\alpha H^3(1+Y_k) - 4\theta H^2\right) \\
        &\leq Y_{k}\eta\left( \widehat V_{g,1}^{k}(s_1)-b + \bbE_{\bar\bbP^{k_e},\bar\pi}\left[\sum_{h\in [H]} \langle \widehat\pi^{k+1}_h(\cdot|s_h) - \widehat\pi_h^{k}(\cdot|s_h), \widehat Q_{g,h}^k(s_h,\cdot)\rangle\right]\right)\\
        &\leq Y_k\eta(\widehat V_{g,1}^{k}(s_1)-b) + \frac{\eta}{\alpha} \bbE_{\bar\bbP^{k_e},\bar\pi}\left[\sum_{h\in [H]}D(\bar\pi_h(\cdot|s_h) || \widetilde\pi_h^k(\cdot|s_h)) - D(\bar\pi_h(\cdot|s_h)||\widehat\pi_h^{k+1}(\cdot|s_h))\right]\\
        &\quad+ \eta(2\alpha H^3 + 4H^2 \theta + 4H^2) + \eta Y_k(b-\gamma - \widehat V_{g,1}^k(s_1))\\
        &=-\eta\gamma Y_k + \frac{\eta}{\alpha} \bbE_{\bar\bbP^{k_e},\bar\pi}\left[\sum_{h\in [H]}D(\bar\pi_h(\cdot|s_h) || \widetilde\pi_h^k(\cdot|s_h)) - D(\bar\pi_h(\cdot|s_h)||\widehat\pi_h^{k+1}(\cdot|s_h))\right]\\
        &\quad+\eta(2\alpha H^3 + 4H^2 \theta + 4H^2).
    \end{align*}
    Here, the first inequality is true as follows. For any $s_1,\ldots, s_H\in\calS$,
    \begin{align*}
        \left|\sum_{h\in [H]} \langle \widehat\pi_h^{k+1}(\cdot|s_h) - \widehat\pi_h^{k}(\cdot|s_h), \widehat Q_{g,h}^k(s_h,\cdot) \rangle\right| 
        &\leq \sum_{h\in [H]} \left|\langle \widehat\pi_h^{k+1}(\cdot|s_h) - \widehat\pi_h^{k}(\cdot|s_h), \widehat Q_{g,h}^k(s_h,\cdot) \rangle\right|\\
        &\leq \sum_{h\in [H]} (4\alpha H^2(1+Y_k) + 4\theta H)\\
        &= 4\alpha H^3(1+Y_k) + 4\theta H^2
    \end{align*}
    where the first inequality is due to the triangle inequality, and the second inequality is due to the second statement of \Cref{lem:policy inequality}. Note that the second inequality holds regardless of whether $\widetilde\pi_h^k$ is perturbed, because when $\widetilde\pi_h^k$ is not perturbed, it can be viewed as $\theta=0$. It follows that
    \begin{align*}
        \bbE_{\bar\bbP^{k_e}, \bar\pi}\left[\sum_{h\in [H]} \langle \widehat\pi_h^{k+1}(\cdot|s_h) - \widehat\pi_h^{k}(\cdot|s_h), \widehat Q_{g,h}^k(s_h,\cdot) \rangle\right] 
        &\geq -\bbE_{\bar\bbP^{k_e}, \bar\pi}[4\alpha H^3(1+Y_k) + 4\theta H^2]\\
        &\geq -(4\alpha H^3(1+Y_k) + 4\theta H^2)
    \end{align*}
    where the second inequality is due to \eqref{eq:contracted expectation}. 
    
    Consequently, plugging the bounds on (I) and (II) into \eqref{eq:lyap 1}, the Lyapunov drift is bounded as
    \begin{align*}
        \frac{Y_{k+1}^2 - Y_k^2}{2} 
        &\leq -\eta\gamma Y_k + \frac{\eta}{\alpha} \bbE_{\bar\bbP^{k_e},\bar\pi}\left[\sum_{h\in [H]}D(\bar\pi_h(\cdot|s_h) || \widetilde\pi_h^k(\cdot|s_h)) - D(\bar\pi_h(\cdot|s_h)||\widehat\pi_h^{k+1}(\cdot|s_h))\right]\\
        &\quad+\eta(2\alpha H^3 + 4H^2 \theta + 4H^2) + 2\eta^2(9H^2 + 16\alpha^2 H^6 + 1936\alpha^2 \eta^2 H^{12}K^2  + 16\theta^2 H^4).
    \end{align*}
\end{proof}
\begin{lemma}[Restatement of \Cref{lem:Yk bound main-text}]\label{lem:Yk bound}
    Assume that the good event $E_g$ holds. Let $H^{2}\leq K$. For $k\in [K]$, we have
    \begin{align*}
        Y_k \leq \frac{2C_4}{K^B\eta\gamma}+2K^B \delta_\max.
    \end{align*}
    where $\delta_\max$ and $C_4$ are given in \eqref{eq:lem:lyap 3:delta max} and \eqref{eq:lem:lyap 3:C4}, respectively. Furthermore, under the parameter choice of \Cref{alg:main-linear}, we have
    \begin{align*}
        Y_{k}&=\widetilde\calO(H^2/\gamma).
    \end{align*} 
\end{lemma}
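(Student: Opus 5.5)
Throughout, fix an epoch $e\in E$. The plan is to run the drift argument on the subsequence of mixing episodes $Z_n = Y_{k_e+nK^B}$ and then interpolate to all $k\in K_e$.

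\textbf{Telescoping over one period.} Sum the one-step drift bound of \Cref{lem:lyap 1} over $k=k_e+nK^B,\ldots,k_e+(n+1)K^B-1$. The KL terms telescope within the period: for non-mixing episodes $\widetilde\pi_h^k=\widehat\pi_h^k$, so $D(\bar\pi_h\|\widetilde\pi_h^{k})$ cancels the previous $-D(\bar\pi_h\|\widehat\pi_h^{k})$. What survives is
\[
\frac{\eta}{\alpha}\bbE_{\bar\bbP^{k_e},\bar\pi}\Big[\sum_h D(\bar\pi_h\|\widetilde\pi_h^{k_e+nK^B}) - D(\bar\pi_h\|\widehat\pi_h^{k_e+(n+1)K^B})\Big].
\]
At the mixing episode, $\widetilde\pi_h\ge\theta/|\calA|$, so \Cref{lem:KL mixing} bounds the first divergence by $\log(|\calA|/\theta)$. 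The negative divergence is dropped, and \eqref{eq:contracted expectation} lets the expectation be replaced by its argument. This gives
\[
\frac{Z_{n+1}^2-Z_n^2}{2}\le -\eta\gamma\sum_{k\in\text{period}}Y_k + \frac{\eta H}{\alpha}\log\frac{|\calA|}{\theta} + K^B\,c_0,
\]
where $c_0$ collects the constant terms in \Cref{lem:lyap 1}.

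\textbf{Turning this into a drift condition.} Define $\delta_{\max}$ as a bound on $|Y_{k+1}-Y_k|$. From the update, $|Y_{k+1}-Y_k|\le \eta(3H+4\alpha H^3(1+Y_k)+4\theta H^2)$, and using $Y_k\le 11\eta H^3K$ from $E_g$ this is $\calO(\eta H)$. Within a period, $Y_k\ge Z_n-K^B\delta_{\max}$, so
\[
\frac{Z_{n+1}^2-Z_n^2}{2}\le -\eta\gamma K^B Z_n + C_4,
\]
with $C_4=\eta\gamma K^{2B}\delta_{\max}+\frac{\eta H}{\alpha}\log\frac{|\calA|}{\theta}+K^Bc_0$. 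Consequently, whenever $Z_n\ge C_4/(K^B\eta\gamma)$ we get $Z_{n+1}\le Z_n$. Otherwise $Z_{n+1}\le Z_n+K^B\delta_{\max}$. Since $Z_0=Y_{k_e}=0$, induction on $n$ gives $Z_n\le C_4/(K^B\eta\gamma)+K^B\delta_{\max}$ for every $n$. Any $k\in K_e$ lies within $K^B$ steps after some $Z_n$, so $Y_k\le Z_n+K^B\delta_{\max}$. This yields the claimed bound $\tfrac{2C_4}{K^B\eta\gamma}+2K^B\delta_{\max}$ with room to spare. The factor $2$ also absorbs the last, possibly incomplete, period, and epoch resets only help.

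\textbf{Plugging in the parameters.} Use $\alpha=H^{-1}K^{-3/4}$, $\eta=H^{-2}K^{-3/4}$, $\theta=K^{-1}$, $K^B=K^{3/4}$, and $H^2\le K$. Then:
\begin{itemize}
\item $K^B\delta_{\max}=\calO(K^{3/4}\eta H)=\calO(1/H)$.
\item $\frac{\eta H\log(|\calA|K)}{\alpha K^B\eta\gamma}=\widetilde\calO\big(\frac{H^2}{\gamma K^{3/4}}\big)$, which is small.
\item The dominant piece of $C_4/(K^B\eta\gamma)$ is $c_0/(\eta\gamma)$ with $c_0\approx 4\eta H^2$, giving $\calO(H^2/\gamma)$.
\item The term $\eta^2\alpha^2\eta^2H^{12}K^2$ inside $c_0$ contributes $\calO(\eta^3\alpha^2H^{12}K^2/\gamma)$. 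This is tiny given $H^2\le K$.
\end{itemize}
Altogether $Y_k=\widetilde\calO(H^2/\gamma)$.

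\textbf{Main obstacle.} The delicate step is the telescoping across mixing boundaries. One must check that \Cref{lem:lyap 1} applies at every $k$ with $k,k+1\in K_e$, and that the expectation $\bbE_{\bar\bbP^{k_e},\bar\pi}$ uses the same contracted kernel throughout the epoch, which holds because $\bar\phi^{k_e}$ is fixed within the epoch. Only under these two conditions do the KL terms telescope exactly, leaving just one bounded term per period.
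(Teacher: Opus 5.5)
Your proof is correct and follows the paper's route. You telescope \Cref{lem:lyap 1} over one mixing period so that only $D(\bar\pi_h\|\widetilde\pi_h^{k_e+nK^B})\le\log(|\calA|/\theta)$ survives, run a drift argument on $Z_n=Y_{k_e+nK^B}$ from $Z_0=0$, and interpolate with $|Y_{k+1}-Y_k|\le\delta_{\max}$.

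The only difference is in the drift step. You lower-bound $\sum_\tau Y_\tau\ge K^BZ_n-K^{2B}\delta_{\max}$ and induct forward. The paper instead uses a first-hitting-time contradiction, showing that every $Y_\tau$ in the preceding period exceeds $2C_4/(K^B\eta\gamma)$. As a result, your constant carries an extra $\eta\gamma K^{2B}\delta_{\max}$. In terms of the paper's $C_4=\frac{\eta}{\alpha}H\log(|\calA|/\theta)+K^BC_3$, your bound reads $\frac{C_4}{K^B\eta\gamma}+3K^B\delta_{\max}$. This is of the same order as the displayed bound, but it is not literally that inequality, so "with room to spare" holds only relative to your own $C_4$.

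Two slips in the parameter plug-in leave the $\widetilde\calO(H^2/\gamma)$ conclusion intact but should be corrected:
\begin{itemize}
\item Since $\alpha K^B=H^{-1}$, the KL contribution is $\frac{H\log(|\calA|K)}{\alpha K^B\gamma}=\frac{H^2\log(|\calA|K)}{\gamma}$. This is a leading term, not $\widetilde\calO(H^2/(\gamma K^{3/4}))$.
\item The piece $44\eta^2\alpha H^6K$ of $\delta_{\max}$, which comes from $Y_k\le 11\eta H^3K$, gives $K^B\delta_{\max}=\calO(HK^{-1/2}+H^{-1})=\calO(1)$ rather than $\calO(1/H)$. This is still dominated by $H^2/\gamma\ge H$, since $\gamma\le b\le H$.
\end{itemize}
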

\begin{proof}
    For ease of notation, let $N_e = \max\{n\in\bbZ_+: k_e+nK^B \in K_e\}$, and let
    \[
        Z_n = Y_{k_e + nK^B}.
    \]
    Fix $e \in E$. We first upper bound $Z_n$ for $0 \leq  n \leq  N_e$. Note that $|\max\{z_1,0\} - z_2| \leq |z_1-z_2|$ for any $z_1 \in \bbR$ and $z_2 \in \bbR_+$. Then it follows that
    \begin{align}\label{eq:lem:lyap 3:delta max}
    \begin{aligned}
        |Y_{k+1} - Y_k|
        &\leq\left|-4\alpha \eta H^3 Y_k + \eta(\widehat V_{g,1}^k(s_1) - b - 4\alpha H^3 - 4\theta H^2)\right|\\
        &\leq  4\eta\alpha H^3(11\eta H^3 K) + 3\eta H + 4\eta\alpha H^3 + 4\eta\theta H^2\\
        &:= \delta_\max.
    \end{aligned}
    \end{align}
    where the second inequality is due to the triangle inequality and the fact that $Y_k \leq 11\eta H^3 K$ and $\|\widehat V_{g,1}^k\|_\infty \leq 2H$ under $E_g$. Thus, by the triangle inequality,
    \begin{equation}\label{eq:lem:lyap 3:KB delta max}
        |Z_{n+1} - Z_n|= \left|\sum_{\tau = k_e+nK^B}^{k_e+(n+1)K^B-1} (Y_{\tau+1} - Y_\tau)\right| \leq K^B\delta_\max.
    \end{equation}
    By \Cref{lem:lyap 1}, we deduce the Lyapunov drift of $Z_n$ as
    \begin{align}\label{eq:lem:lyap 3:drift n0 decomp}
    \begin{aligned}
        \frac{Z_{n+1}^2 - Z_{n}^2}{2}
        &=\sum_{\tau=k_e+nK^B}^{k_e+(n+1)K^B-1}\frac{Y_{\tau+1}^2 - Y_\tau^2}{2}\\
        &\leq \sum_{\tau = k_e+nK^B}^{k_e+(n+1)K^B-1} -\eta\gamma Y_\tau\\
        &\quad+ \underbrace{\sum_{\tau = k_e+nK^B}^{k_e+(n+1)K^B-1}\frac{\eta}{\alpha} \bbE_{\bar\bbP^{k_e},\bar\pi}\left[\sum_{h\in [H]}D(\bar\pi_h(\cdot|s_h) || \widetilde\pi_h^\tau(\cdot|s_h)) - D(\bar\pi_h(\cdot|s_h)||\widehat\pi_h^{\tau+1}(\cdot|s_h))\right]}_{\text{(I)}}\\
        &\quad+ K^B C_3
    \end{aligned}
    \end{align}
    where
    \begin{align}\label{eq:C3}
        C_3 = \eta(2\alpha H^3 + 4H^2 \theta + 4H^2) + 2\eta^2(9H^2 + 16\alpha^2 H^6 + 1936\alpha^2 \eta^2 H^{12}K^2  + 16\theta^2 H^4)
    \end{align}
    To bound (I),
    \begin{align*}
        \text{(I)} 
        &= \frac{\eta}{\alpha}\sum_{h\in [H]} \bbE_{\bar\bbP^{k_e},\bar\pi}\left[\sum_{\tau=k_e+nK^B}^{k_e+(n+1)K^B-1} D(\bar\pi_h(\cdot|s_h) || \widetilde\pi_h^\tau(\cdot|s_h)) - D(\bar\pi_h(\cdot|s_h)||\widehat\pi_h^{\tau+1}(\cdot|s_h))\right]\\
        &= \frac{\eta}{\alpha}\sum_{h\in [H]} \bbE_{\bar\bbP^{k_e},\bar\pi}\left[D(\bar\pi(\cdot|s_h)||\widetilde\pi_h^{k_e+nK^B}(\cdot|s_h))-D(\bar\pi(\cdot|s_h)||\widehat\pi_h^{k_e+(n+1)K^B}(\cdot|s_h))\right]\\
        &\quad+ \frac{\eta}{\alpha}\sum_{h\in [H]} \bbE_{\bar\bbP^{k_e},\bar\pi}\left[\sum_{\tau=k_e+nK^B+1}^{k_e+(n+1)K^B-1} D(\bar\pi_h(\cdot|s_h) || \widetilde\pi_h^\tau(\cdot|s_h)) - D(\bar\pi_h(\cdot|s_h)||\widehat\pi_h^{\tau}(\cdot|s_h))\right]\\
        &= \frac{\eta}{\alpha}\sum_{h\in [H]} \bbE_{\bar\bbP^{k_e},\bar\pi}\left[D(\bar\pi(\cdot|s_h)||\widetilde\pi_h^{k_e+nK^B}(\cdot|s_h))-D(\bar\pi(\cdot|s_h)||\widehat\pi_h^{k_e+(n+1)K^B}(\cdot|s_h))\right]\\
        &\leq \frac{\eta}{\alpha}\sum_{h\in [H]} \bbE_{\bar\bbP^{k_e},\bar\pi}\left[\log(|\calA|/ \theta)\right]\\
        &\leq \frac{\eta}{\alpha}H\log(|\calA|/ \theta)
    \end{align*}
    where the last equality is because $\widetilde \pi_h^\tau = \widehat\pi_h^\tau$ for all $\tau$ such that $\tau - k_e \not\equiv 0 \mod K^B$ by algorithm. The first inequality is because the KL divergence is nonnegative, and we apply \Cref{lem:KL mixing}, as we know $\widetilde\pi_h^{k_e+nK^B} = (1-\theta)\widehat\pi_h^{k_e+nK^B} + \theta\piunif$ by algorithm. The last inequality is due to \eqref{eq:contracted expectation}. Then we have
    \begin{align}\label{eq:lem:lyap 3:drift n0}
        \frac{Z_{n+1}^2 - Z_{n}^2}{2} 
        &\leq -\eta\gamma\sum_{\tau = k_e+nK^B}^{k_e+(n+1)K^B-1}  Y_\tau + C_4
    \end{align}
    where
    \begin{equation}\label{eq:lem:lyap 3:C4}
        C_4 = \frac{\eta}{\alpha}H\log(|\calA|/ \theta) + K^B C_3
    \end{equation}
    Suppose that there exists $n \in \{0,\ldots,N_e\}$ such that $Z_n > \frac{2C_4}{K^B\eta\gamma} + K^B \delta_\max$. Then we can define the first $n$ that exceeds the threshold $\frac{2C_4}{K^B\eta\gamma}+ K^B \delta_\max$ as
    \[
        n_{\text{hit}} = \min\left\{n \in \{0,\ldots,N_e\}:  Z_n > \frac{2C_4}{K^B\eta\gamma}+K^B \delta_\max\right\}.
    \]
    Note that $n_{\text{hit}}\neq 0$, as we set $Y_{k_e} = 0$. Since $n_{\text{hit}}$ is the first time, we have $Z_{n_{\text{hit}}-1} \leq \frac{2C_4}{K^B\eta\gamma}+ K^B \delta_\max$. Moreover, we have $Y_{k_e + (n_{\text{hit}}-1)K^B}, \ldots , Y_{k_e + n_{\text{hit}}K^B -1} > \frac{2C_4}{K^B\eta\gamma}$. If not, $Z_{n_{\text{hit}}} = Y_{k_e + n_{\text{hit}}K^B}$ cannot be larger than $\frac{2C_4}{K^B\eta\gamma}+K^B \delta_\max$. By \eqref{eq:lem:lyap 3:drift n0}, this implies that
    \begin{align*}
        \frac{Z_{n_{\text{hit}}}^2 - Z_{n_{\text{hit}}-1}^2}{2}\leq -\eta\gamma\sum_{\tau = k_e+(n_{\text{hit}}-1)K^B}^{k_e+n_{\text{hit}}K^B-1}  Y_\tau + C_4 \leq -\eta\gamma K^B \frac{2C_4}{K^B\eta\gamma} + C_4 = -C_4 <0.
    \end{align*}
    This implies that $Z_{n_{\text{hit}}} < Z_{n_{\text{hit}}-1} < \frac{2C_4}{K^B\eta\gamma}+ K^B \delta_\max$. This contradicts the definition of $n_{\text{hit}}$. Therefore, we have for all $e\in[E]$ and $n \in \{0,\ldots,N_e\}$,
    \begin{align*}
        Y_{k_e + nK^B} \leq \frac{2C_4}{K^B\eta\gamma}+K^B \delta_\max.
    \end{align*}
    Moreover, by \eqref{eq:lem:lyap 3:delta max}, we have for all $k\in [K]$,
    \begin{align*}
        Y_k \leq \frac{2C_4}{K^B\eta\gamma}+2K^B \delta_\max.
    \end{align*}
    This completes the first statement. Next, we carefully plug our parameter choice into the upper bound on $Y_k$. Recall that the definitions of $C_3,C_4,\delta_\max$, and our parameter choice such that
    \begin{align*}
        B=\frac{3}{4}, \eta = H^{-2}K^{-B}, \alpha = H^{-1}K^{-B}, \theta = K^{-1}.
    \end{align*}
    $\delta_\max$ is bounded as
    \begin{align*}
        \delta_\max 
        &= 4\eta\alpha H^3(11\eta H^3 K) + 3\eta H + 4\eta\alpha H^3 + 4\eta\theta H^2 \\
        &= 44\eta^2\alpha H^6 K + 3\eta H + 4\eta\alpha H^3 + 4\eta \theta H^2\\
        &=44HK^{-3B+1} + 3H^{-1}K^{-B} + 4K^{-2B}+4K^{-1-B}\\
        &=\widetilde\calO\left(HK^{-5/4} + H^{-1}K^{-3/4}\right)
    \end{align*}
    Since we assumed $H^2\leq K$, it follows that $HK^{-1/2} \leq 1$. Then we have
    \begin{align*}
        \delta_\max =\widetilde\calO\left(K^{-3/4}\right)
    \end{align*}
    $C_3$ is bounded as
    \begin{align*}
        C_3 
        &= \eta(2\alpha H^3 + 4H^2 \theta + 4H^2) + 2\eta^2(9H^2 + 16\alpha^2 H^6 + 1936\alpha^2 \eta^2 H^{12}K^2  + 16\theta^2 H^4)\\
        &=2K^{-2B} + 4K^{-1-B} + 4K^{-B} + 18H^{-2}K^{-2B} + 32K^{-4B} + 3872H^2K^{2-6B} + 32K^{-2-2B}.
    \end{align*}
    $2C_4/(\eta K^B \gamma)$ is bounded as
    \begin{align*}
        \frac{2C_4}{\eta K^B \gamma} 
        &\leq \frac{2H\log(|\calA|/\theta)}{K^B\gamma \alpha} + \frac{2C_3}{\eta\gamma}\\
        &=\frac{2}{\gamma}H^2\log(|\calA|K)\\
        &\quad+\frac{2}{\gamma}(2H^2K^{-B} + 4H^2K^{-1} + 4H^2 + 18K^{-B} + 32H^2K^{-3B} + 3872H^4K^{2-5B} + 32H^2K^{-2-B})\\
        &=\widetilde\calO\left(H^2/\gamma + H^4K^{-7/4}/\gamma\right).
    \end{align*}
    Since we assumed $H^{2}\leq K$, we have $H^4K^{-7/4} \leq H^2$. Then we can drop $\widetilde\calO(H^4K^{-7/4}/\gamma)$.
    $2K^B\delta_\max$ is bounded as
    \begin{align*}
        2K^B\delta_\max &= 88HK^{-2B+1} + 6H^{-1} + 8K^{-2B}+8K^{-1}\\
        &=\widetilde\calO\left(HK^{-1/2}\right)
    \end{align*}
    Finally, $Y_k$ is bounded as
    \begin{align*}
        Y_k = \widetilde\calO\left(H^2/\gamma\right).
    \end{align*}
\end{proof}

\section{Detailed Proofs for the Analysis}\label{appendix:main resutls}
In this section, we first introduce lemmas, which bound an online mirror descent term and optimism terms, and these are useful to prove \Cref{lem:omd+optimism}. Then we present the proofs of Lemmas \ref{lem:cost of opt}, \ref{lem:proj gd}, \ref{lem:omd+optimism}, \ref{lem:violation optimism}.  Then we conclude the section by providing the proof of \Cref{thm:main}.

The following lemma is to bound the regret due to online mirror descent. Here, the main difference with the standard online mirror descent lemma (e.g., \citet{hazan2016introduction, lattimore2020bandit}) comes from the periodic policy mixing, which requires a modified analysis.
\begin{lemma}\label{lem:omd with mixing}
Let $H^2\leq K$. Suppose that $E_g$ holds. Then we have
    \begin{align*}
        &\sum_{e\in E}\sum_{k\in K_e} \bbE_{\bar\bbP^{k_e},\pi^*}\left[\sum_{h\in [H]}\langle\widehat Q_{f,h}^{k}(s_h,\cdot) + Y_k \widehat Q_{g,h}^k(s_h,\cdot), \widehat\pi_h^{k}(\cdot\mid s_h) - \pi_h^*(\cdot\mid s_h) \rangle\right] \\
         &=\widetilde\calO \left(dH^3 K^{3/4} + \frac{H^6}{\gamma^2}K^{1/4} + \frac{dH^5}{\gamma}\right).
    \end{align*}
\end{lemma}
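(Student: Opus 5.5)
The plan is the standard per-state online mirror descent (OMD) argument applied inside the expectation $\bbE_{\bar\bbP^{k_e},\pi^*}$. The contracted transition $\bar\bbP^{k_e}$ is fixed within epoch $e$, so the occupancy weights do not change across $k\in K_e$; this lets us telescope KL divergences inside the expectation.

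Fix an epoch $e$, a step $h$ and a state $s_h$. Since $\widehat\pi_h^{k+1}$ is the KL-proximal minimizer of $\langle\pi,\widehat Q_{f,h}^k+Y_k\widehat Q_{g,h}^k\rangle+\frac1\alpha D(\pi\|\widetilde\pi_h^k)$, \Cref{lem:pushback} with $z=\pi_h^*$ gives
\begin{align*}
\langle \widehat Q_{f,h}^k+Y_k\widehat Q_{g,h}^k,\widehat\pi_h^k-\pi_h^*\rangle
&\le \tfrac1\alpha\big(D(\pi_h^*\|\widetilde\pi_h^k)-D(\pi_h^*\|\widehat\pi_h^{k+1})\big)\\
&\quad+\langle \widehat Q_{f,h}^k+Y_k\widehat Q_{g,h}^k,\widehat\pi_h^k-\widehat\pi_h^{k+1}\rangle-\tfrac1\alpha D(\widehat\pi_h^{k+1}\|\widetilde\pi_h^k).
\end{align*}
The last two terms are the stability terms. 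We bound them through the third statement of \Cref{lem:policy inequality}, applied with loss scale $(1+Y_k)\beta_Q$. This yields a per-step bound of order $\alpha H^2(1+Y_k)^2+H\theta(1+Y_k)$ on mixing episodes, and $\alpha H^2(1+Y_k)^2$ otherwise. Summing over $h$ and $k$, and using \Cref{lem:Yk bound} ($Y_k=\widetilde\calO(H^2/\gamma)$), the total is $\widetilde\calO(\alpha H^3 K(1+H^4/\gamma^2)+\theta H^2 K H^2/\gamma)$. With $\alpha=H^{-1}K^{-3/4}$ and $\theta=K^{-1}$ this is $\widetilde\calO(H^2K^{1/4}+H^6K^{1/4}/\gamma^2+H^4/\gamma)$, which matches the $\frac{H^6}{\gamma^2}K^{1/4}$ term. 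Because the bounds are uniform in $s_h$, \eqref{eq:contracted expectation} lets us pass them through the sub-probability expectation.

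For the KL terms, we reproduce the telescoping from the proof of \Cref{lem:Yk bound}. Whenever $k$ is not a mixing episode, $\widetilde\pi_h^k=\widehat\pi_h^k$, so within a period the terms $D(\pi^*\|\widetilde\pi_h^{k})-D(\pi^*\|\widehat\pi_h^{k+1})$ telescope. At each mixing episode we pay $D(\pi^*\|\widetilde\pi_h^k)-D(\pi^*\|\widehat\pi_h^k)$. By convexity of $-\log$, $\log\widetilde\pi\ge\log(1-\theta)+\log\widehat\pi$, so this jump is at most $-\log(1-\theta)\le2\theta$. The exception is the first mixing episode of each epoch, where we use \Cref{lem:KL mixing}, giving $\log(|\calA|/\theta)$. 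Per epoch and step $h$, the KL contribution is therefore at most $\frac1\alpha(\log(|\calA|K)+2\theta K^{1/4})$. The number of epochs is $|E|=\calO(dH\log K)$ by the determinant-doubling rule. Summing over $h$ and $e$ and applying \eqref{eq:contracted expectation} gives $\widetilde\calO(dH^2/\alpha)=\widetilde\calO(dH^3K^{3/4})$.

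The step I expect to be the main obstacle is the stability term, because $Y_k$ changes with $k$ while the bound on it is only $\widetilde\calO(H^2/\gamma)$. Squaring it naively is exactly what produces $H^6K^{1/4}/\gamma^2$, so the bookkeeping has to be done carefully. A second issue is the leftover mixing cross term, which yields the $dH^5/\gamma$-type contribution: it comes from $\theta H^2(1+Y_k)$ summed over $K$ episodes, and from the per-epoch $\log(|\calA|/\theta)$ cost multiplied by $|E|$. If the cross-term bound in \Cref{lem:policy inequality} instead carries $Y_k$ linearly times $H$, the constants change, but the order in $K$ stays $K^{3/4}$. Collecting the three pieces yields the stated bound.
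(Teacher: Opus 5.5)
Your proposal is correct and follows the paper's argument. The shared steps are: pushback with $z=\pi_h^*$; KL telescoping inside each epoch, with a jump paid only at mixing episodes (your $-\log(1-\theta)\le 2\theta$ replaces the paper's $\theta\log|\calA|$ from \Cref{lem:KL mixing}); stability terms bounded by $\alpha C_5^2/2$ plus a $\theta C_5$ cross term, with $C_5=\widetilde\calO(H^3/\gamma)$ from \Cref{lem:Yk bound}; and summation over $h$ and the $\widetilde\calO(dH)$ epochs via \eqref{eq:contracted expectation}.

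The one point you gloss over is the last episode of each epoch. There $\widehat\pi_h^{k_{e+1}}$ is reset to $\piunif$, so the pushback step does not literally apply. The paper pays $2C_5$ per epoch and step at that episode, and this is the actual origin of the $dH^5/\gamma$ term. Neither source you name produces it: they give $H^4/\gamma$ and $dH^3K^{3/4}$. In your route it is equally valid to use the pre-reset proximal output computed at that episode, whose KL term enters the telescope with a nonpositive sign.
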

\begin{proof}
    Consider $e\in E$. For any $s\in\calS$ and $k\in K_e$ such that $\widehat\pi_h^{k+1}\neq \piunif$ (i.e., $k=k_e,\ldots,k_{e+1}-2$), we have
    \begin{align*}
        \widehat\pi_h^{k+1}(\cdot\mid s) = \argmin_{\pi(\cdot\mid s)\in\Delta(\calA)}\ \langle\widehat Q_{f,h}^{k}(s,\cdot) + Y_k \widehat Q_{g,h}^k(s,\cdot), \pi(\cdot\mid s) \rangle + \frac{1}{\alpha} D(\pi(\cdot\mid s)||\widetilde\pi_h^k(\cdot\mid s)).
    \end{align*}
    For ease of notation, we omit $(s,\cdot)$ and $(\cdot\mid s)$. By \Cref{lem:pushback}, for any policy $\pi$,
    \begin{align*}
        \langle\widehat Q_{f,h}^{k} + Y_k \widehat Q_{g,h}^k, \widehat\pi_h^{k+1} - \pi \rangle \leq \frac{1}{\alpha}D(\pi||\widetilde\pi_h^k) - \frac{1}{\alpha}D(\pi||\widehat\pi_h^{k+1}) - \frac{1}{\alpha}D(\widehat\pi_h^{k+1}||\widetilde\pi_h^k).
    \end{align*}
    By adding $\langle\widehat Q_{f,h}^{k} + Y_k \widehat Q_{g,h}^k, \widehat\pi_h^k - \widehat\pi_h^{k+1} \rangle$ on both sides, we have for $k=k_e,\ldots, k_{e+1}-2$,
    \begin{align}\label{eq:omd eq 1}
    \begin{aligned}
        &\langle\widehat Q_{f,h}^{k} + Y_k \widehat Q_{g,h}^k, \widehat\pi_h^{k} - \pi \rangle\\
        &\leq \frac{1}{\alpha}D(\pi||\widetilde\pi_h^k) - \frac{1}{\alpha}D(\pi||\widehat\pi_h^{k+1}) - \frac{1}{\alpha}D(\widehat\pi_h^{k+1}||\widetilde\pi_h^k) + \langle\widehat Q_{f,h}^{k} + Y_k \widehat Q_{g,h}^k, \widehat\pi_h^k - \widehat\pi_h^{k+1} \rangle\\
        &\leq \frac{1}{\alpha}D(\pi||\widetilde\pi_h^k) - \frac{1}{\alpha}D(\pi||\widehat\pi_h^{k+1}) - \frac{1}{2\alpha}\|\widehat\pi_h^{k+1}-\widetilde\pi_h^k\|_1^2 + \|\widehat Q_{f,h}^{k} + Y_k \widehat Q_{g,h}^k\|_\infty \|\widehat\pi_h^k - \widehat\pi_h^{k+1}\|_1\\
        &\leq \frac{1}{\alpha}D(\pi||\widetilde\pi_h^k) - \frac{1}{\alpha}D(\pi||\widehat\pi_h^{k+1}) - \frac{1}{2\alpha}\|\widehat\pi_h^{k+1}-\widetilde\pi_h^k\|_1^2 + \|\widehat Q_{f,h}^{k} + Y_k \widehat Q_{g,h}^k\|_\infty \|\widehat\pi_h^{k+1} - \widetilde\pi_h^k \|_1 \\
        &\quad+\|\widehat Q_{f,h}^{k} + Y_k \widehat Q_{g,h}^k\|_\infty \|\widetilde\pi_h^k - \widehat \pi_h^k\|_1\\
        &= \frac{1}{\alpha}D(\pi||\widetilde\pi_h^k) - \frac{1}{\alpha}D(\pi||\widehat\pi_h^k) + \frac{1}{\alpha}D(\pi||\widehat\pi_h^k) - \frac{1}{\alpha}D(\pi||\widehat\pi_h^{k+1})\\
        &\quad- \frac{1}{2\alpha}\|\widehat\pi_h^{k+1}-\widetilde\pi_h^k\|_1^2 + \|\widehat Q_{f,h}^{k} + Y_k \widehat Q_{g,h}^k\|_\infty \|\widehat\pi_h^{k+1} - \widetilde\pi_h^k \|_1 + \|\widehat Q_{f,h}^{k} + Y_k \widehat Q_{g,h}^k\|_\infty \|\widetilde\pi_h^k - \widehat \pi_h^k\|_1
    \end{aligned}
    \end{align}
    where the second inequality follows from Pinsker's inequality and H\"older's inequality, and the last inequality is due to the triangle inequality. By taking $\sum_{k\in K_e}$ on both sides, we have
    \begin{align*}
        &\sum_{k\in K_e}\langle\widehat Q_{f,h}^{k} + Y_k \widehat Q_{g,h}^k, \widehat\pi_h^{k} - \pi \rangle\\
        &\leq \underbrace{\frac{1}{\alpha} \sum_{k=k_e}^{k_{e+1-2}}\left(D(\pi||\widetilde\pi_h^k) - D(\pi||\widehat\pi_h^k)\right)}_{\text{(I)}} + \underbrace{\frac{1}{\alpha}\sum_{k=k_e}^{k_{e+1-2}}\left(D(\pi||\widehat\pi_h^k) - D(\pi||\widehat\pi_h^{k+1})\right)}_{\text{(II)}}\\
        &\quad+ \underbrace{\sum_{k=k_e}^{k_{e+1-2}}\left( - \frac{1}{2\alpha}\|\widehat\pi_h^{k+1}-\widetilde\pi_h^k\|_1^2 + \|\widehat Q_{f,h}^{k} + Y_k \widehat Q_{g,h}^k\|_\infty \|\widehat\pi_h^{k+1} - \widetilde\pi_h^k \|_1\right)}_{\text{(III)}} + \underbrace{\sum_{k=k_e}^{k_{e+1-2}}\|\widehat Q_{f,h}^{k} + Y_k \widehat Q_{g,h}^k\|_\infty \|\widetilde\pi_h^k - \widehat \pi_h^k\|_1}_{\text{(IV)}}\\
        &\quad+ \langle \widehat Q_{f,h}^{k_{e+1}-1} + Y_{k_{e+1}-1} \widehat Q_{g,h}^{k_{e+1}-1}, \widehat \pi_h^{k_{e+1}-1} - \pi\rangle.
    \end{align*}
    Note that $\langle \widehat Q_{f,h}^{k_{e+1}-1} + Y_{k_{e+1}-1} \widehat Q_{g,h}^{k_{e+1}-1}, \widehat \pi_h^{k_{e+1}-1} - \pi\rangle$ is added, as \eqref{eq:omd eq 1} does not holds for $k_{e+1}-1$, i.e., the last episode in epoch $e$. Furthermore, this term can be bounded by $2\|\widehat Q_{f,h}^{k_{e+1}-1} + Y_{k_{e+1}-1} \widehat Q_{g,h}^{k_{e+1}-1}\|_\infty$ using H\"older's inequality.
    
    To bound (I), we observe the following. If $k - k _e\not\equiv 0 \mod K^B$, then $\widetilde \pi_h^k = \widehat\pi_h^k$. Thus, $D(\pi||\widetilde\pi_h^k) - D(\pi||\widehat\pi_h^k) = 0$. Otherwise, since $\widetilde \pi_h^k = (1-\theta)\widehat\pi_h^k + \theta\piunif$, we can apply \Cref{lem:KL mixing}, and thus $D(\pi||\widetilde\pi_h^k) - D(\pi||\widehat\pi_h^k) \leq \theta\log |\calA|$, i.e.,
    \begin{align*}
        D(\pi||\widetilde\pi_h^k) - D(\pi||\widehat\pi_h^k) \leq \begin{cases}
            \theta \log |\calA| & \text{if $k-k_e \equiv 0 \mod K^B$},\\
            0 &\text{otherwise}.
        \end{cases}
    \end{align*}
    It follows that 
    \begin{align*}
        \text{(I)} \leq \frac{1}{\alpha}\sum_{k\in K_e: k-k_e\equiv 0 \text{ mod} K^B}  \theta \log |\calA| \leq \frac{\theta|K_e|\log|\calA|}{\alpha K^B} + \frac{\theta\log|\calA|}{\alpha}
    \end{align*}
    where the second inequality is due to $|\{k\in K_e: k-k_e \equiv 0\mod K^B\}| \leq \ceil{|K_e|/K^B} \leq |K_e|/K^B + 1$. Furthermore, since (II) is in the form of a telescoping sum and $\widehat \pi_h^{k_e} = \piunif$, we have
    \begin{align*}
        \text{(II)} \leq \frac{1}{\alpha}D(\pi||\widehat\pi_h^{k_e}) \leq \frac{\log|\calA|}{\alpha}.
    \end{align*}
    To bound (III), since $-ax^2 + bx \leq b^2/(4a)$ for $a,b,x \geq 0$, we have
    \begin{align*}
        \text{(III)} \leq \frac{\alpha|K_e| C_5^2}{2}.
    \end{align*}
    where $C_5$ is a constant such that $\|\widehat Q_{f,h}^k + Y_k \widehat Q_{g,h}^k\|_\infty \leq C_5$ for all $h,k$. Again, by definition of $\widetilde\pi_h^k$, we have
    \begin{align*}
        \text{(IV)} 
        &= \sum_{k\in K_e: k-k_e\equiv 0 \text{ mod} K^B} \|\widehat Q_{f,h}^{k} + Y_k \widehat Q_{g,h}^k\|_\infty \|\widetilde\pi_h^k - \widehat \pi_h^k\|_1\\
        &= \sum_{k\in K_e: k-k_e\equiv 0 \text{ mod} K^B} \|\widehat Q_{f,h}^{k} + Y_k \widehat Q_{g,h}^k\|_\infty \theta\|\piunif - \widehat \pi_h^k\|_1\\
        &\leq \frac{2\theta |K_e| C_5}{ K^B} + 2\theta C_5.
    \end{align*} 
    Finally, we have for any policy $\pi$ and $s\in\calS$,
    \begin{align*}
        &\sum_{k\in K_e}\langle\widehat Q_{f,h}^{k}(s,\cdot) + Y_k \widehat Q_{g,h}^k(s,\cdot), \widehat\pi_h^{k}(\cdot\mid s) - \pi(\cdot\mid s) \rangle \\
        &\leq \frac{\theta|K_e|\log|\calA|}{\alpha K^B} + \frac{(1+\theta)\log|\calA|}{\alpha}+\frac{\alpha|K_e| C_5^2}{2}+\frac{2\theta |K_e| C_5}{\alpha K^B} + 2\theta C_5 + 2C_5.
    \end{align*}
     Let us take $\pi = \pi_h^*$ for each $h\in [H]$. Then, by taking $\sum_{h\in [H]}$ and $\bbE_{\bar\bbP^{k_e},\pi^*}$, it follows that
     \begin{align*}
         &\bbE_{\bar\bbP^{k_e},\pi^*}\left[\sum_{h\in [H]}\sum_{k\in K_e}\langle\widehat Q_{f,h}^{k}(s_h,\cdot) + Y_k \widehat Q_{g,h}^k(s_h,\cdot), \widehat\pi_h^{k}(\cdot\mid s_h) - \pi_h^*(\cdot\mid s_h) \rangle\right] \\
        &\leq \bbE_{\bar\bbP^{k_e},\pi^*}\left[H\left(\frac{\theta|K_e|\log|\calA|}{\alpha K^B} + \frac{(1+\theta)\log|\calA|}{\alpha}+\frac{\alpha|K_e| C_5^2}{2}+\frac{2\theta |K_e| C_5}{K^B} + 2\theta C_5 + 2C_5\right)\right]\\
        &\leq H\left(\frac{\theta|K_e|\log|\calA|}{\alpha K^B} + \frac{(1+\theta)\log|\calA|}{\alpha}+\frac{\alpha|K_e| C_5^2}{2}+\frac{2\theta |K_e| C_5}{K^B} + 2\theta C_5 + 2C_5\right).
     \end{align*}
     Finally, by $E_g$ and \Cref{lem:Yk bound}, we have $C_5=\widetilde\calO(H^3/\gamma)$. Furthermore, by \Cref{lem:cassel 8 num of epoch}, the number of epochs is at most $\widetilde\calO(dH)$. Then, by taking $\sum_{e\in E}$ to the above inequality, it follows that
     \begin{align*}
         &\sum_{e\in E}\sum_{k\in K_e} \bbE_{\bar\bbP^{k_e},\pi^*}\left[\sum_{h\in [H]}\langle\widehat Q_{f,h}^{k}(s_h,\cdot) + Y_k \widehat Q_{g,h}^k(s_h,\cdot), \widehat\pi_h^{k}(\cdot\mid s_h) - \pi_h^*(\cdot\mid s_h) \rangle\right] \\
         &=\widetilde\calO \left(dH^3 K^{3/4} + \frac{H^6}{\gamma^2}K^{1/4} + \frac{dH^5}{\gamma}\right).
     \end{align*}
\end{proof}

The next lemma claims that the regret terms associated with optimism are nonpositive, highlighting the effectiveness of our bonus terms. We closely follow the proof of Lemma 4 of \citet{cassel2024warm}.
\begin{lemma}\label{lem:optimism} 
Let $H^2\leq K$. Suppose that $E_g$ holds. For all  $(s,a,h,k, \ell)\in\calS\times\calA\times[H]\times [K] \times \{f,g\}$, 
    \[
        \widehat Q_{\ell,h}^k(s,a) - \bar \phi_h^{k_e}(s,a)^\top(\theta_{\ell,h}^k + \psi_h \widehat V_{\ell,h+1}^k)\leq 0
    \]
\end{lemma}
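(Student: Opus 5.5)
The plan is to expand $\widehat Q_{\ell,h}^k$ from line 17 of \Cref{alg:main-linear} and compare it directly with $\bar\phi_h^{k_e}(s,a)^\top(\theta_{\ell,h}^k+\psi_h\widehat V_{\ell,h+1}^k)$. Here $\theta_{g,h}^k$ is read as $\theta_{g,h}$. Subtracting gives
\begin{align*}
\widehat Q_{\ell,h}^k(s,a) - \bar\phi_h^{k_e}(s,a)^\top(\theta_{\ell,h}^k + \psi_h\widehat V_{\ell,h+1}^k)
= \bar\phi_h^{k_e}(s,a)^\top\bigl[(\widehat\theta_{\ell,h}^k-\theta_{\ell,h}^k) + (\widehat\psi_h^k-\psi_h)\widehat V_{\ell,h+1}^k\bigr] - \beta_b\|\bar\phi_h^{k_e}(s,a)\|_{(\Lambda_h^{k_e})^{-1}}.
\end{align*}
This is exactly the negative of the quantity already lower bounded by zero in the bound on term (V) in the proof of \Cref{lem:lyap 1}. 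The argument is therefore essentially the same sign-reversed computation.

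The steps are as follows. First, apply Cauchy--Schwarz in the pair of dual norms $\|\cdot\|_{\Lambda_h^{k_e}}$ and $\|\cdot\|_{(\Lambda_h^{k_e})^{-1}}$ to each inner product. This bounds the bracketed term by
\[
\bigl(\|\widehat\theta_{\ell,h}^k-\theta_{\ell,h}^k\|_{\Lambda_h^{k_e}}+\|(\widehat\psi_h^k-\psi_h)\widehat V_{\ell,h+1}^k\|_{\Lambda_h^{k_e}}\bigr)\,\|\bar\phi_h^{k_e}(s,a)\|_{(\Lambda_h^{k_e})^{-1}}.
\]
Second, use $\Lambda_h^{k_e}\preceq\Lambda_h^k$, which holds because $k\ge k_e$ and the design matrices only accumulate positive semidefinite rank-one terms. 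This lets us replace the $\Lambda_h^{k_e}$-norms of the errors by $\Lambda_h^k$-norms. Third, invoke the good event $E_g$. Event $E_1$ gives $\|\widehat\theta_{\ell,h}^k-\theta_{\ell,h}^k\|_{\Lambda_h^k}\le\beta_r$; this is trivially $0$ for $\ell=f$, since $\widehat\theta_{f,h}^k=\theta_{f,h}^k$. Event $E_2$ gives $\|(\psi_h-\widehat\psi_h^k)\widehat V_{\ell,h+1}^k\|_{\Lambda_h^k}\le\beta_p$. Finally, since $\beta_b=\beta_r+\beta_p$, the difference is at most $(\beta_r+\beta_p-\beta_b)\|\bar\phi_h^{k_e}(s,a)\|_{(\Lambda_h^{k_e})^{-1}}=0$.

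There is no real obstacle once $E_g$ is available. The one point needing care is the bound from $E_2$. It is stated for the actual estimate $\widehat V_{\ell,h+1}^k$, which is legitimate only because \Cref{lem:good event} (via \Cref{lem:cassel 12}) placed that estimate in the covered class $\widehat\calV$. I would simply cite $E_2$ as defined in \eqref{def:E2}, which already quantifies over the algorithm's iterates. The hypothesis $H^2\le K$ is not needed for this step beyond its role in establishing $E_g$.
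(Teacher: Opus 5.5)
Your proposal is correct and matches the paper's proof step for step. You expand $\widehat Q_{\ell,h}^k$, apply Cauchy--Schwarz in the $\Lambda_h^{k_e}$/$(\Lambda_h^{k_e})^{-1}$ norm pair, pass to $\Lambda_h^k$-norms via $\Lambda_h^{k_e}\preceq\Lambda_h^k$, and conclude from $E_1$, $E_2$ and $\beta_b=\beta_r+\beta_p$.
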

\begin{proof}
    By definition, we have
    \begin{align*}
        &\widehat Q_{\ell,h}^k(s,a) - \bar \phi_h^{k_e}(s,a)^\top(\theta_{\ell,h}^k + \psi_h \widehat V_{\ell,h+1}^k)\\ 
        &= \bar \phi_h^{k_e}(s,a)^\top(\widehat \theta_{\ell,h}^k - \theta_{\ell,h}^k) + \bar \phi_h^{k_e}(s,a)^\top(\widehat\psi_h^k -\psi_h) \widehat V_{\ell,h+1}^k - \beta_b \|\bar\phi_h^{k_e}(s,a)\|_{(\Lambda_h^{k_e})^{-1}}\\
        &\leq \|\widehat \theta_{\ell,h}^k - \theta_{\ell,h}^k\|_{\Lambda_h^{k_e}}\|\bar \phi_h^{k_e}(s,a)\|_{(\Lambda_h^{k_e})^{-1}} + \|(\widehat\psi_h^k -\psi_h) \widehat V_{\ell,h+1}^k\|_{\Lambda_h^{k_e}}\|\bar \phi_h^{k_e}(s,a)\|_{(\Lambda_h^{k_e})^{-1}} - \beta_b \|\bar\phi_h^{k_e}(s,a)\|_{(\Lambda_h^{k_e})^{-1}}\\
        &\leq \|\widehat \theta_{\ell,h}^k - \theta_{\ell,h}^k\|_{\Lambda_h^{k}}\|\bar \phi_h^{k_e}(s,a)\|_{(\Lambda_h^{k_e})^{-1}} + \|(\widehat\psi_h^k -\psi_h) \widehat V_{\ell,h+1}^k\|_{\Lambda_h^{k}}\|\bar \phi_h^{k_e}(s,a)\|_{(\Lambda_h^{k_e})^{-1}} - \beta_b \|\bar\phi_h^{k_e}(s,a)\|_{(\Lambda_h^{k_e})^{-1}}\\
        &\leq \beta_r \|\bar \phi_h^{k_e}(s,a)\|_{(\Lambda_h^{k_e})^{-1}} + \beta_p \|\bar \phi_h^{k_e}(s,a)\|_{(\Lambda_h^{k_e})^{-1}} - \beta_b \|\bar\phi_h^{k_e}(s,a)\|_{(\Lambda_h^{k_e})^{-1}}\\
        &=0
    \end{align*}
    where the first inequality is due to the Cauchy-Schwarz inequality, the second inequality follows from $\Lambda_h^{k_e} \preceq \Lambda_h^{k}$, and the last equality is because $\beta_b = \beta_r + \beta_p$.
\end{proof}

\paragraph{Proof of \Cref{lem:cost of opt}}
    By \Cref{lem:extended value diff},
    \begin{align*}
        V_{\ell,1}^{\pi^k}(s_1) - \widehat V_{\ell,1}^{k}(s_1) 
        &= \bbE_{\bbP,\widehat\pi^k}\left[\sum_{h\in[H]} \phi(s_h^k, a_h^k)^\top (\theta_{\ell,h}^k + \psi_h \widehat V_{\ell, h+1}^k) - \widehat Q_{\ell,h}^k(s_h^k, a_h^k)\right]\\
        &= \underbrace{\bbE_{\bbP,\widehat\pi^k}\left[\sum_{h\in[H]} \bar\phi_h^{k_e}(s_h^k, a_h^k)^\top (\theta_{\ell,h}^k + \psi_h \widehat V_{\ell, h+1}^k) - \widehat Q_{\ell,h}^k(s_h^k, a_h^k)\right]}_{\text{(I)}}\\
        &\quad+ \underbrace{\bbE_{\bbP,\widehat\pi^k}\left[\sum_{h\in[H]} (\phi(s_h^k,a_h^k)-\bar\phi_h^{k_e}(s_h^k, a_h^k))^\top (\theta_{\ell,h}^k + \psi_h \widehat V_{\ell, h+1}^k)\right]}_{\text{(II)}}.
    \end{align*}
    To bound (I), we have
    \begin{align*}
        &\bar\phi_h^{k_e}(s_h^k, a_h^k)^\top (\theta_{\ell,h}^k + \psi_h \widehat V_{\ell, h+1}^k) - \widehat Q_{\ell,h}^k(s_h^k, a_h^k)\\ 
        &= \bar\phi_h^{k_e}(s_h^k, a_h^k)^\top (\theta_{\ell,h}^k - \widehat \theta_{\ell,h}^k) +\bar\phi_h^{k_e}(s_h^k, a_h^k)^\top(\psi_h - \widehat \psi_h^k)\widehat V_{\ell,h+1}^k + \beta_b \|\bar \phi_h^{k_e}(s_h^k, a_h^k)\|_{(\Lambda_h^{k_e})^{-1}}\\
        &\leq \beta_b \|\bar \phi_h^{k_e}(s_h^k,a_h^k)\|_{(\Lambda_h^k)^{-1}} + \beta_b \|\bar \phi_h^{k_e}(s_h^k,a_h^k)\|_{(\Lambda_h^{k_e})^{-1}}
    \end{align*}
    where the inequality is due to the Cauchy-Schwarz inequality. Furthermore, since $k\in K_e$, it must hold $\det (\Lambda_h^k) \leq 2\det (\Lambda_h^{k_e})$, otherwise $k$ would belong to epoch $e+1$. As it is obvious that $(\Lambda_h^k)^{-1} \preceq (\Lambda_h^{k_e})^{-1}$, we can apply \Cref{lem:abbasi 12} for nonzero $\bar\phi_h^{k_e}(s_h^k, a_h^k)$ as follows.
    \begin{align*}
        \frac{\|\bar\phi_h^{k_e}(s_h^k, a_h^k)\|^2_{(\Lambda_h^{k_e})^{-1}}}{\|\bar\phi_h^{k_e}(s_h^k, a_h^k)\|^2_{(\Lambda_h^k)^{-1}}} \leq \frac{\det ((\Lambda_h^{k_e})^{-1})}{\det ((\Lambda_h^k)^{-1})}\leq 2.
    \end{align*}
    This implies that $\beta_b \|\bar \phi_h^{k_e}(s_h^k,a_h^k)\|_{(\Lambda_h^{k_e})^{-1}} \leq 2\beta_b \|\bar \phi_h^{k_e}(s_h^k,a_h^k)\|_{(\Lambda_h^{k})^{-1}}$ for nonzero $\bar\phi_h^{k_e}(s_h^k, a_h^k)$. If $\bar\phi_h^{k_e}(s_h^k, a_h^k)=0$, then the inequality is trivial. Then it follows that
    \begin{align*}
        \text{(I)}\leq \bbE_{\bbP,\widehat\pi^k}\left[\sum_{h\in[H]}3\beta_b \|\bar \phi_h^{k_e}(s_h^k,a_h^k)\|_{(\Lambda_h^k)^{-1}}\right]\leq \bbE_{\bbP,\widehat\pi^k}\left[\sum_{h\in[H]}3\beta_b \|\phi(s_h^k,a_h^k)\|_{(\Lambda_h^k)^{-1}}\right].
    \end{align*}
    where the second inequality is due to $\|\bar \phi_h^{k_e}(s_h^k,a_h^k)\|_{(\Lambda_h^k)^{-1}} \leq \|\phi(s_h^k,a_h^k)\|_{(\Lambda_h^k)^{-1}}$. To bound (II), by \Cref{lem:cassel 3}, we have
    \begin{align*}
        &(\phi(s_h^k,a_h^k)-\bar\phi_h^{k_e}(s_h^k, a_h^k))^\top (\theta_{\ell,h}^k + \psi_h \widehat V_{\ell, h+1}^k)\\ 
        &\leq (4\beta_w^2 \|\phi(s_h^k, a_h^k)\|_{(\Lambda_h^k)^{-1}}^2 + 2K^{-1}) |\phi(s_h^k, a_h^k)^\top (\theta_{\ell,h}^k + \psi_h \widehat V_{\ell, h+1}^k)| \\
        &\leq 16H\beta_w^2 \|\phi(s_h^k, a_h^k)\|_{(\Lambda_h^k)^{-1}}^2 + 8H K^{-1}
    \end{align*}
    where the second inequality follows from the fact that for any $\ell\in\{f,g\}$,
    $$\phi(s_h^k,a_h^k)^\top (\theta_{\ell,h}^k + \psi_h \widehat V_{\ell, h+1}^k) = \ell_h^k(s_h^k,a_h^k) + \sum_{s'\in\calS} \bbP_h(s'\mid s_h^k,a_h^k) \widehat V_{\ell,h+1}^k(s') \leq 1 + \|\widehat V_{\ell, h+1}^k\|_\infty \leq 4H.$$
    This implies that (II)$\leq 16H\beta_w^2 \|\phi(s_h^k, a_h^k)\|_{(\Lambda_h^k)^{-1}}^2 + 8H K^{-1}$. Finally, we have
    \begin{align*}
        V_{\ell,1}^{\pi^k}(s_1) - \widehat V_{\ell,1}^{k}(s_1) 
        &\leq \bbE_{\bbP,\widehat\pi^k}\left[\sum_{h\in[H]}3\beta_b \|\phi(s_h^k,a_h^k)\|_{(\Lambda_h^k)^{-1}} + 16H\beta_w^2 \|\phi(s_h^k, a_h^k)\|_{(\Lambda_h^k)^{-1}}^2\right] + 8H^2 K^{-1}.
    \end{align*}    
    Taking $\sum_{k=1}^K$ on both sides,
    \begin{align*}
        \sum_{k=1}^K(V_{\ell,1}^{\pi^k}(s_1) - \widehat V_{\ell,1}^{k}(s_1))
        &\leq \sum_{k=1}^K\bbE_{\bbP,\widehat\pi^k}\left[\sum_{h\in[H]}3\beta_b \|\phi(s_h^k,a_h^k)\|_{(\Lambda_h^k)^{-1}} + 16H\beta_w^2 \|\phi(s_h^k, a_h^k)\|_{(\Lambda_h^k)^{-1}}^2\right] + 8H^2\\
        &\leq \sum_{k=1}^K \sum_{h\in[H]} \left(6\beta_b \|\phi(s_h^k,a_h^k)\|_{(\Lambda_h^k)^{-1}} + 32H\beta_w^2 \|\phi(s_h^k, a_h^k)\|_{(\Lambda_h^k)^{-1}}^2\right)\\
        &\quad+ 8H(3\beta_b + 16H\beta_w^2)\log\frac{6K}{\delta} + 8H^2
    \end{align*}
    where the second inequality follows from $E_g$.
    By \Cref{lem: jin d2}, we have
    \begin{align*}
        \sum_{k\in[K]} \|\phi(s_h^k,a_h^k)\|_{(\Lambda_h^k)^{-1}}^2 \leq 2\log\frac{\det(\Lambda_h^{K+1})}{\det(\Lambda_h^1)} \leq 2d\log (K+1)
    \end{align*}
    where the second inequality follows from $\|\Lambda_h^{k+1}\|_2 = \|I + \sum_{\tau\in[k]}\phi(s_h^\tau, a_h^\tau)\phi(s_h^\tau, a_h^\tau)^\top\|_2 \leq 1 + k$, and thus $\det(\Lambda_h^{K+1}) \leq (K+1)^d$. Furthermore, the Cauchy-Schwarz inequality implies that $\sum_{k\in[K]} \|\phi(s_h^k,a_h^k)\|_{(\Lambda_h^k)^{-1}} \leq \sqrt{2dK\log(K+1)}$. Then we deduce that
    \begin{align*}
    \begin{aligned}
        \sum_{k=1}^K(V_{\ell,1}^{\pi^k}(s_1) - \widehat V_{\ell,1}^{k}(s_1)) &\leq 6 \beta_b H\sqrt{2dK\log(K+1)} + 64dH^2\beta_w^2\log(K+1)\\
        &\quad+ 8H(3\beta_b + 16H\beta_w^2)\log\frac{6K}{\delta} + 8H^2\\
        &=\widetilde\calO\left(\sqrt{d^3H^4}K^{3/4} + d^3H^4 K^{1/2}\right)
    \end{aligned}
    \end{align*} 
    where the last equality follows from $\beta_b,\beta_w = \widetilde\calO\left(K^{1/4}dH\right).$
    \qed

\paragraph{Proof of \Cref{lem:proj gd}}
Given $e\in E$, for any $k\in K_e$, the dual variable $Y_k$ is updated as
$$
    Y_{k+1} =\begin{cases}
    0 & \text{if $k+1 = k_e$}, \\
    \left[(1-4\alpha\eta H^3)Y_{k} + \eta\left(\widehat V_{g,1}^{k}(s_1)-b -4\alpha H^3- 4\theta H^2\right)\right]_{+} & \text{otherwise}.
\end{cases}
$$
Then it follows that
\begin{align*}
\begin{aligned}
    0 &\leq Y_{k_{e+1}-1}^2 \\
    &= \sum_{k=k_e}^{k_{e+1}-2} \left(Y_{k+1}^2 - Y_{k}^2\right)\\
    &= \sum_{k=k_e}^{k_{e+1}-2} \left(\left[(1-4\alpha\eta H^3)Y_{k} + \eta\left(\widehat V_{g,1}^{k}(s_1)-b -4\alpha H^3- 4\theta H^2\right)\right]_{+}^2  - Y_{k}^2\right)\\
    &= \sum_{k=k_e}^{k_{e+1}-2} \left(\left[Y_{k} + \eta\left(\widehat V_{g,1}^{k}(s_1)-b -4\alpha H^3- 4\theta H^2 - 4\alpha H^3Y_k\right)\right]_{+}^2  - Y_{k}^2\right)\\
    &\leq \sum_{k=k_e}^{k_{e+1}-2} \left(2Y_k\eta\left(\widehat V_{g,1}^{k}(s_1)-b -4\alpha H^3- 4\theta H^2 - 4\alpha H^3Y_k\right) + \eta^2\left(\widehat V_{g,1}^{k}(s_1)-b -4\alpha H^3- 4\theta H^2 - 4\alpha H^3Y_k\right)^2 \right)
\end{aligned}
\end{align*}
where the first equality is due to $Y_{k_e}=0$, and the last inequality is due to the fact that $\max\{0,z\}^2 \leq z^2$ for any $z\in\bbR$. This can be rewritten as
\begin{align}\label{eq:lem:proj gd:0}
\begin{aligned}
    &\sum_{k=k_e}^{k_{e+1}-2} Y_k(b-\widehat V_{g,1}^k(s_1))\\
    &\leq \sum_{k=k_e}^{k_{e+1}-2} Y_k(-4\alpha H^3 - 4\theta H^2 - 4\alpha H^3 Y_k) + \frac{\eta}{2}\sum_{k=k_e}^{k_{e+1}-2}\left(\widehat V_{g,1}^{k}(s_1)-b -4\alpha H^3- 4\theta H^2 - 4\alpha H^3Y_k\right)^2.
\end{aligned}
\end{align}
Note that the first term is nonpositive. Furthermore, the second term can be bounded as
\begin{align*}
    |\widehat V_{g,1}^{k}(s_1)-b -4\alpha H^3- 4\theta H^2 - 4\alpha H^3Y_k| 
    &\leq |\widehat V_{g,1}^{k}(s_1)-b| + 4\alpha H^3 + 4\theta H^2 + 4\alpha H^3Y_k\\
    &\leq 3H + 4H^2K^{-3/4} + 4H^2K^{-1} + 4H^2K^{-3/4}Y_k\\
    &\leq 3H + 4H^{1/2} + 4 + 4H^2K^{-3/4}Y_k\\
    &\leq 11H + 4H^2K^{-3/4}Y_k
\end{align*}
where the second inequality follows from $E_g$, the third inequality is because we assumed that $H^2 \leq K$. Thus, \eqref{eq:lem:proj gd:0} is bounded as
\begin{align*}
    \sum_{k=k_e}^{k_{e+1}-2} Y_k(b-\widehat V_{g,1}^k(s_1))
    &\leq \frac{\eta}{2} \sum_{k=k_e}^{k_{e+1}-2}\left(11H + 4H^2K^{-3/4}Y_k\right)^2\\
    &\leq \frac{\eta}{2}\sum_{k=k_e}^{k_{e+1}-2}2(121H^2 + 16H^4K^{-3/2}Y_k^2)
\end{align*}
where the second inequality is due to the Cauchy-Schwarz inequality. Then we have
\begin{align*}
    \sum_{k\in K_e} Y_k(b-\widehat V_{g,1}^k(s_1)) 
    &= \sum_{k=k_e}^{k_{e+1}-2} Y_k(b-\widehat V_{g,1}^k(s_1)) + Y_{k_{e+1}-1}(b-\widehat V_{g,1}^{k_{e+1}-1}(s_1))\\
    &\leq \frac{\eta}{2}\sum_{k=k_e}^{k_{e+1}-2}2(121H^2 + 16H^4K^{-3/2}Y_k^2) + 3HY_{k_{e+1}-1}\\
    &\leq 121\eta H^2 |K_e| + 16\eta H^4 K^{-3/2}\sum_{k=k_e}^{k_{e+1}-2}Y_k^2 + 3HY_{k_{e+1}-1}.
\end{align*}
By \Cref{lem:Yk bound}, we have $Y_k = \widetilde\calO(H^2/\gamma)$ for all $k\in [K]$. Furthermore, by \Cref{lem:cassel 8 num of epoch}, the number of epochs is at most $\widetilde\calO(dH)$. By taking $\sum_{e\in E}$, it follows that
\begin{align*}
    \sum_{k\in [K]} Y_k(b-\widehat V_{g,1}^k(s_1)) 
    &= \sum_{e\in E}\sum_{k\in K_e} Y_k(b-\widehat V_{g,1}^k(s_1))\\
    &=\widetilde\calO\left(K^{1/4} + \frac{dH^6}{\gamma^2}\right).
\end{align*}
\qed

\paragraph{Proof of \Cref{lem:omd+optimism}}
Note that
\begin{align*}
    &\sum_{k=1}^K \left(\widehat V_{f,1}^k(s_1) + Y_k \widehat V_{g,1}^k(s_1) - V_{f^k,1}^{\pi^*}(s_1) - Y_k V_{g,1}^{\pi^*}(s_1)\right) \\
    &\leq \sum_{e\in E}\sum_{k\in K_e} \left(\widehat V_{f,1}^k(s_1) + Y_k \widehat V_{g,1}^k(s_1) - \bar V_{f^k,1}^{\pi^*}(s_1) - Y_k \bar V_{g,1}^{\pi^*}(s_1)\right)
\end{align*}
where $\bar V_{f^k,1}^{\pi^*}, \bar V_{g,1}^{\pi^*}$ are the value functions with respect to a contracted MDP.
Furthermore, by \Cref{lem:extended value diff}, it follows that
    \begin{align*}
        &\sum_{e\in E}\sum_{k\in K_e} \left(\widehat V_{f,1}^k(s_1) + Y_k \widehat V_{g,1}^k(s_1) - \bar V_{f^k,1}^{\pi^*}(s_1) - Y_k \bar V_{g,1}^{\pi^*}(s_1)\right)\\
        &= \sum_{e\in E} \sum_{k\in K_e}\bbE_{\bar \bbP^{k_e}, \pi^*}\left[\sum_{h\in[H]}\sum_{a\in\calA} (\widehat Q_{f,h}^k(s_h^k,a) + Y_k\widehat Q_{g,h}^k(s_h^k,a))(\widehat\pi_h^k(a\mid s_h^k)-\pi_h^*(a\mid s_h^k))\right]\\
        &\quad+\sum_{e\in E} \sum_{k\in K_e} \bbE_{\bar \bbP^{k_e}, \pi^*}\left[
            \sum_{h\in[H]}\widehat Q_{f,h}^k(s_h^k,a_h^k) - \bar f_h^{k}(s_h^k,a_h^k) - \sum_{s'\in\calS}\bar \bbP_h^{k_e}(s'\mid s_h^k, a_h^k)\widehat V_{f,h+1}^k(s')
        \right]\\
        &\quad+\sum_{e\in E} \sum_{k\in K_e}\bbE_{\bar \bbP^{k_e}, \pi^*}\left[
            \sum_{h\in[H]} Y_k\left(\widehat Q_{g,h}^k(s_h^k,a_h^k) -  \bar g_h(s_h^k, a_h^k) - \sum_{s'\in\calS}\bar \bbP_h^{k_e}(s'\mid s_h^k, a_h^k) \widehat V_{g,h+1}^k(s')\right)
        \right].
    \end{align*}
    Note that by the definition of the contracted MDP, we have $\bar f_h^{k}(s,a) = \bar\phi_{h}^{k_e}(s,a)^\top \theta_{f,h}^k,\ \bar g_h(s,a) = \bar\phi_{h}^{k_e}(s,a)^\top \theta_{g, h}$, and $\bar \bbP_h^{k_e}(s'\mid s,a) = \bar\phi_h^{k_e}(s,a)^\top \psi_h(s')$. Then it follows that
    \begin{align*}
        &\bar f_h(s_h^k, a_h^k) + \sum_{s'\in\cal S} \bar \bbP_h^{k_e}(s'\mid s_h^k,a_h^k) \widehat V_{f,h+1}^k(s') = \bar \phi_h^{k_e}(s_h^k, a_h^k)^\top(\theta_{f,h}^k + \psi_h \widehat V_{f,h+1}^k),\\
        &\bar g_h(s_h^k, a_h^k) + \sum_{s'\in\cal S} \bar \bbP_h^{k_e}(s'\mid s_h^k,a_h^k) \widehat V_{g,h+1}^k(s') = \bar \phi_h^{k_e}(s_h^k, a_h^k)^\top(\theta_{g,h} + \psi_h \widehat V_{g,h+1}^k).
    \end{align*}
    We deduce that
    \begin{align*}
        &\sum_{k=1}^K \left(\widehat V_{f,1}^k(s_1) + Y_k \widehat V_{g,1}^k(s_1) - V_{f^k,1}^{\pi^*}(s_1) - Y_k V_{g,1}^{\pi^*}(s_1)\right)\\
        &\leq\sum_{e\in E}\sum_{k\in K_e} \left(\widehat V_{f,1}^k(s_1) + Y_k \widehat V_{g,1}^k(s_1) - \bar V_{f^k,1}^{\pi^*}(s_1) - Y_k \bar V_{g,1}^{\pi^*}(s_1)\right)\\
        &= \underbrace{\sum_{e\in E} \sum_{k\in K_e}\bbE_{\bar \bbP^{k_e}, \pi^*}\left[\sum_{h\in[H]}\sum_{a\in\calA} (\widehat Q_{f,h}^k(s_h^k,a) + Y_k\widehat Q_{g,h}^k(s_h^k,a))(\widehat\pi_h^k(a\mid s_h^k)-\pi_h^*(a\mid s_h^k))\right]}_{\text{(I)}}\\
        &\quad+\underbrace{\sum_{e\in E} \sum_{k\in K_e}\bbE_{\bar \bbP^{k_e}, \pi^*}\left[
            \sum_{h\in[H]}\widehat Q_{f,h}^k(s_h^k,a_h^k) - \bar \phi_h^{k_e}(s_h^k, a_h^k)^\top(\theta_{f,h}^k + \psi_h \widehat V_{f,h+1}^k)
        \right]}_{\text{(II)}}\\
        &\quad+\underbrace{\sum_{e\in E} \sum_{k\in K_e}\bbE_{\bar \bbP^{k_e}, \pi^*}\left[
            \sum_{h\in[H]} Y_k\left(\widehat Q_{g,h}^k(s_h^k,a_h^k) -  \bar \phi_h^{k_e}(s_h^k, a_h^k)^\top(\theta_{g,h} + \psi_h \widehat V_{g,h+1}^k)\right)
        \right]}_{\text{(III)}}.
    \end{align*}
    By \Cref{lem:omd with mixing}, 
    \begin{align*}
        \text{(I)}=\widetilde\calO \left(dH^3 K^{3/4} + \frac{H^6}{\gamma^2}K^{1/4} + \frac{dH^5}{\gamma}\right).
    \end{align*}
    Note that $Y_k\geq 0$ for all $k$. Then, by \Cref{lem:optimism}, for any $(s,a,h,k)\in \calS\times\calA\times[H]\times[K]$, we have
    \begin{align*}
        &\widehat Q_{f,h}^k(s,a) - \bar \phi_h^{k_e}(s, a)^\top(\theta_{f,h}^k + \psi_h \widehat V_{f,h+1}^k)\leq 0,\\
        &Y_k\left(\widehat Q_{g,h}^k(s,a) -  \bar \phi_h^{k_e}(s, a)^\top(\theta_{g,h} + \psi_h \widehat V_{g,h+1}^k)\right)\leq 0.
    \end{align*}
    This implies that
    \begin{align*}
        \text{(II), (III)} \leq 0.
    \end{align*}
    Finally, we have
    \begin{align*}
        \sum_{k=1}^K \left(\widehat V_{f,1}^k(s_1) + Y_k \widehat V_{g,1}^k(s_1) - V_{f^k,1}^{\pi^*}(s_1) - Y_k V_{g,1}^{\pi^*}(s_1)\right) = \widetilde\calO \left(dH^3 K^{3/4} + \frac{H^6}{\gamma^2}K^{1/4} + \frac{dH^5}{\gamma}\right).
    \end{align*}
\qed

\paragraph{Proof of \Cref{lem:violation optimism}}
    Note that the dual update is
    \begin{align*}
        Y_{k+1} = \begin{cases}
            0 & \text{if $k+1=k_e$},\\
            \left[(1-4\alpha\eta H^3)Y_k + \eta \left(\widehat V_{g,1}^k(s_1) - b - 4\alpha H^3 - 4\theta H^2\right)\right]_+ & \text{otherwise}.
        \end{cases}
    \end{align*}
    Then it follows that for any $e\in E$,
    \begin{align*}
        Y_{k_{e+1} -1} &= \left[(1-4\alpha\eta H^3)Y_{k_{e+1}-2} + \eta \left(\widehat V_{g,1}^{k_{e+1}-2}(s_1) - b - 4\alpha H^3  - 4\theta H^2 \right)\right]_+\\
        &\geq(1-4\alpha\eta H^3)Y_{k_{e+1}-2} + \eta \left(\widehat V_{g,1}^{k_{e+1}-2}(s_1) - b - 4\alpha H^3  - 4\theta H^2 \right)\\
        &=Y_{k_{e+1}-2} + \eta \left(\widehat V_{g,1}^{k_{e+1}-2}(s_1) - b - 4\alpha H^3(1+Y_{k_{e+1}-2})  - 4\theta H^2 \right)\\
        &\quad \vdots\\
        &\geq Y_{k_{e}} + \eta \sum_{k=k_e}^{k_{e+1}-2}\left(\widehat V_{g,1}^{k}(s_1) - b - 4\alpha H^3(1+Y_{k})  - 4\theta H^2\right).
    \end{align*}
    Note that $Y_{k_e}=0$. Then we have
    \begin{align*}
        \sum_{k=k_e}^{k_{e+1}-1}\left(\widehat V_{g,1}^k(s_1) - b\right) 
        &= \sum_{k=k_e}^{k_{e+1}-2}\left(\widehat V_{g,1}^k(s_1) - b\right) + \left(\widehat V_{g,1}^{k_{e+1}-1}(s_1) - b\right)\\
        &\leq \frac{Y_{k_{e+1}-1}}{\eta} + \sum_{k=k_e}^{k_{e+1}-2}\left(4\alpha H^3 (1+Y_k) + 4\theta H^2\right) + 3H.
    \end{align*}
    By \Cref{lem:Yk bound}, we have $Y_k = \widetilde\calO(H^2/\gamma)$ for all $k\in [K]$. Then it follows that
    \[
        4\alpha H^3 (1+Y_k) + 4\theta H^2 =4H^2K^{-3/4}(1+Y_k) + 4H^2K^{-1} = \widetilde\calO\left(\frac{H^4}{\gamma}K^{-3/4} + H^2K^{-1}\right).
    \]
    
    Furthermore, by \Cref{lem:cassel 8 num of epoch}, the number of epochs is at most $\widetilde\calO(dH)$. By taking $\sum_{e\in E}$, it follows that
    \begin{align*}
        \sum_{e\in E}\sum_{k\in K_e}\left(\widehat V_{g,1}^k(s_1) - b\right) =\widetilde\calO\left(\frac{dH^5}{\gamma}K^{3/4} + \frac{H^4}{\gamma}K^{1/4}\right).
    \end{align*}
    \qed

\paragraph{Proof of \Cref{thm:main}} 
If $K < \beta_w$, we cannot use \Cref{lem:good event}. Nevertheless, in this case, we have the following upper bounds for regret and violation.
\begin{align*}
    &\Regret \leq HK < H\beta_w = \widetilde\calO\left(dH^2 K^{1/4}\right),\\
    &\Violation \leq HK < H\beta_w = \widetilde\calO\left(dH^2 K^{1/4}\right).
\end{align*}
Otherwise, it is trivial that the conditions of \Cref{lem:good event} hold, i.e., $E_g$ holds with probability at least $1-\delta$. Furthermore, under $E_g$, we have the upper bound on $Y_k$ as in \Cref{lem:Yk bound}, i.e., 
\begin{equation}\label{eq:proof regret:Yk}
    Y_k = \widetilde\calO\left(\frac{H^2}{\gamma}\right)    
\end{equation}
Thus, with probability at least $1-\delta$, $E_g$ and \eqref{eq:proof regret:Yk} hold.

Now, we begin with the proof of the regret upper bound. Note that an optimal policy $\pi^*$ satisfies $V_{g,1}^{\pi^*}(s_1) \leq b$. Since $Y_k\geq 0$ for all $k$, it follows that
\begin{align*}
    &\Regret\\
    &= \sum_{k=1}^K \left(V_{f^k,1}^{\pi^k}(s_1) - \widehat V_{f,1}^k(s_1)\right)+\sum_{k=1}^K Y_k(b-\widehat V_{g,1}^k(s_1))+
    \sum_{k=1}^K \left(\widehat V_{f,1}^k(s_1) + Y_k \widehat V_{g,1}^k(s_1) - V_{f^k,1}^{\pi^*}(s_1) - Y_k b\right)\\
    &\leq \underbrace{\sum_{k=1}^K \left(V_{f^k,1}^{\pi^k}(s_1) - \widehat V_{f,1}^k(s_1)\right)}_{\text{(I)}}+\underbrace{\sum_{k=1}^K Y_k(b-\widehat V_{g,1}^k(s_1))}_{\text{(II)}}+\underbrace{\sum_{k=1}^K \left(\widehat V_{f,1}^k(s_1) + Y_k \widehat V_{g,1}^k(s_1) - V_{f^k,1}^{\pi^*}(s_1) - Y_k V_{g,1}^{\pi^*}(s_1)\right)}_{\text{(III)}}
\end{align*}
By \Cref{lem:cost of opt},
\begin{align*}
    \text{(I)} 
    &= \widetilde\calO\left(\sqrt{d^3H^4}K^{3/4} + d^3H^4 K^{1/2}\right).
\end{align*}
By \Cref{lem:proj gd},
\begin{align*}
    \text{(II)} =\widetilde\calO\left(K^{1/4} + \frac{dH^6}{\gamma^2}\right).
\end{align*}
By \Cref{lem:omd+optimism},
\begin{align*}
    \text{(III)} &=\widetilde\calO \left(dH^3 K^{3/4} + \frac{H^6}{\gamma^2}K^{1/4} + \frac{dH^5}{\gamma}\right).
\end{align*}
Thus, we have the following regret upper bound.
\begin{align*}
    \Regret &= \widetilde\calO\left(\sqrt{d^3 H^4}K^{3/4} + dH^3 K^{3/4} + d^3 H^4 K^{1/2} + \frac{H^6}{\gamma^2} K^{1/4} + \frac{dH^6}{\gamma^2}\right).
\end{align*} 
Next, we show the violation upper bound. If constraint violation is $0$, the statement is trivial. Otherwise, we decompose it as
\begin{align*}
    \Violation 
    &\leq \underbrace{\sum_{k=1}^K \left( V_{g,1}^{\pi^k}(s_1)-\widehat V_{g,1}^{k}(s_1)\right)}_{\text{(IV)}}+\underbrace{\sum_{k=1}^K \left(\widehat V_{g,1}^k(s_1) - b\right)}_{\text{(V)}}.
\end{align*}
By \Cref{lem:cost of opt},
\begin{align*}
    \text{(IV)} 
    &= \widetilde\calO\left(\sqrt{d^3H^4}K^{3/4} + d^3H^4 K^{1/2}\right).
\end{align*}
By \Cref{lem:violation optimism},
\begin{align*}
    \text{(V)}=\widetilde\calO\left(\frac{dH^5}{\gamma}K^{3/4} + \frac{H^4}{\gamma}K^{1/4}\right).
\end{align*}
Thus, we have the following violation upper bound.
\begin{align*}
    \Violation = \widetilde\calO\left(\frac{dH^5}{\gamma}K^{3/4} + \sqrt{d^3 H^4}K^{3/4} +  d^3 H^4 K^{1/2}\right).
\end{align*}
\qed

\section{Auxiliary Lemmas}

\begin{lemma}\label{lem:lip log}
    Let $c>0$. For $x\in [c,\infty)^d$, let $\log x = (\log x_1, \ldots, \log x_d)^\top$. For any $x,y \in [c,\infty)^d$, we have
    \begin{align*}
        \|\log x - \log y\|_\infty \leq \frac{1}{c}\|x-y\|_1
    \end{align*}
\end{lemma}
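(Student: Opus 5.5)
The statement to prove is Lemma~\ref{lem:lip log}: for $x,y\in[c,\infty)^d$ we want $\|\log x-\log y\|_\infty\le \frac1c\|x-y\|_1$. The plan is to reduce it to a scalar Lipschitz estimate applied coordinatewise, and then pass from the $\ell_\infty$-norm to the $\ell_1$-norm using the trivial comparison between the two.

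First fix a coordinate $i\in[d]$. Both $x_i$ and $y_i$ lie in $[c,\infty)$, and $\log$ is differentiable on $(0,\infty)$ with derivative $1/t$. By the mean value theorem there exists $\xi_i$ between $x_i$ and $y_i$ with
\begin{align*}
\log x_i-\log y_i=\frac{1}{\xi_i}(x_i-y_i).
\end{align*}
Since $\xi_i\ge \min\{x_i,y_i\}\ge c>0$, we have $1/\xi_i\le 1/c$. This gives $|\log x_i-\log y_i|\le \frac1c|x_i-y_i|$; if $x_i=y_i$ the bound is trivial. Alternatively, one can avoid the mean value theorem: assuming $x_i\ge y_i$, write $\log x_i-\log y_i=\int_{y_i}^{x_i}t^{-1}\,dt$ and bound the integrand by $1/c$.

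Next, take the maximum over $i$ to get
\begin{align*}
\|\log x-\log y\|_\infty\le \frac1c\max_i|x_i-y_i| = \frac1c\|x-y\|_\infty.
\end{align*}
Then use $\|x-y\|_\infty\le\|x-y\|_1$, which holds because the largest absolute entry is at most the sum of all absolute entries. Combining the two displays yields the claim.

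There is no real obstacle here. The only point that needs care is that the lower bound $c$ applies to every point between $x_i$ and $y_i$, and this follows because $[c,\infty)$ is an interval. In the application in Lemma~\ref{lem:lip pol}, one takes $c=\theta/|\calA|$, which is valid because the mixed policies have every entry at least $\theta/|\calA|$.
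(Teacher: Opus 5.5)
Your proposal is correct and follows essentially the same route as the paper: a coordinatewise bound $|\log x_i - \log y_i| \le |x_i-y_i|/c$ via the derivative $1/t \le 1/c$ on $[c,\infty)$, then a maximum over coordinates and $\|z\|_\infty \le \|z\|_1$. The only cosmetic difference is that the paper parametrizes the segment $p_i(t)=\log(y_i+t(x_i-y_i))$ and applies the fundamental theorem of calculus, where you use the mean value theorem or the integral of $t^{-1}$ directly.
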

\begin{proof}
    Fix $x,y\in[c,\infty)^d$ and $i\in\{1,\dots,d\}$. Consider the scalar function $p_i:[0,1]\to\mathbb{R}$ defined as
    \[
        p_i(t):=\log\bigl(y_i+t(x_i-y_i)\bigr).
    \]
    Since $[c,\infty)^d$ is convex, we have $y_i+t(x_i-y_i)\ge \min\{x_i,y_i\}\geq c$ for all $t\in[0,1]$, so $p_i$ is continuously differentiable and
    \[
    p_i'(t) \;=\; \frac{x_i-y_i}{\,y_i+t(x_i-y_i)\,}.
    \]
    Hence, for all $t\in[0,1]$,
    \[
    \bigl|p_i'(t)\bigr| \;=\; \frac{|x_i-y_i|}{\,y_i+t(x_i-y_i)\,} \;\le\; \frac{|x_i-y_i|}{c}.
    \]
    By the fundamental theorem of calculus,
    \[
    \bigl|\log x_i - \log y_i\bigr|
    \;=\; \bigl|p_i(1)-p_i(0)\bigr|
    \;=\; \Bigl|\int_0^1 p_i'(t)\,dt\Bigr|
    \;\le\; \int_0^1 \bigl|p_i'(t)\bigr|\,dt
    \;\le\; \frac{|x_i-y_i|}{c}.
    \]
    Taking the maximum over $i$ and using $\|z\|_\infty \le \|z\|_1$ for all $z\in\mathbb{R}^d$ yields
    \[
    \|\log x-\log y\|_\infty
    = \max_{1\le i\le d} |\log x_i - \log y_i|
    \le \frac{1}{c}\,\max_{1\le i\le d} |x_i-y_i|
    \le \frac{1}{c}\,\sum_{i=1}^d |x_i-y_i|
    = \frac{1}{c}\,\|x-y\|_1.
    \]
\end{proof}

\begin{lemma}\label{lem:policy inequality}
    Let $\widehat\pi_h^k:\calS \to \Delta(\calA)$ be any policies. For $\theta \in [0,1]$, let $\widetilde\pi_h^k(\cdot|s)=(1-\theta)\widehat\pi_h^k(\cdot|s) + \theta\piunif(\cdot|s)$. For $\widehat Q_{f,h}^k, \widehat Q_{g,h}^k:\calS \times\calA \to [-2H,2H]$, $Y_k \in \bbR_+$, and $\alpha >0$, let $\widehat\pi^{k+1}(\cdot\mid s) \propto \widetilde\pi^k(\cdot\mid s)\exp(-\alpha (\widehat Q_{f,h}^k(s,\cdot) + Y_k \widehat Q_{g,h}^k(s,\cdot))$. For any $s\in\calS$, we have
    \begin{enumerate}
        \item $\|\widehat\pi_h^{k+1}(\cdot\mid s) - \widetilde\pi_h^k(\cdot\mid s)\|_1 \leq 2\alpha H(1+Y_k),$
        \item $\left|\langle \widehat\pi_h^{k+1}(\cdot|s) - \widehat\pi_h^{k}(\cdot|s), \widehat Q_{g,h}^k(s,\cdot) \rangle\right| \leq 4\alpha H^2(1+Y_k) + 4\theta H.$
        \item $\langle \widehat\pi_h^{k}(\cdot|s) - \widehat\pi_h^{k+1}(\cdot|s), \widehat Q_{f,h}^{k}(s,\cdot) \rangle - \frac{1}{\alpha} D(\widehat\pi_h^{k+1}(\cdot|s)|| \widetilde\pi_h^{k}(\cdot|s)) \leq 2\alpha H^2 + 4H\theta.$
    \end{enumerate}
\end{lemma}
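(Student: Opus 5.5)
We write $q(a) = \widehat Q_{f,h}^k(s,a) + Y_k\widehat Q_{g,h}^k(s,a)$ and drop $s$, $h$, $k$ where it causes no ambiguity. The hypotheses give $\|q\|_\infty \le 2H(1+Y_k)$ and $\|\widehat Q_{g,h}^k\|_\infty,\|\widehat Q_{f,h}^k\|_\infty\le 2H$. The plan is to prove statement 1 by a continuous interpolation (path) argument. Statements 2 and 3 then follow from statement 1, Hölder's inequality, the explicit form of the mixing step, and Pinsker's inequality.

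\emph{Statement 1.} For $t\in[0,\alpha]$ we define the path $\pi_t(a)\propto \widetilde\pi_h^k(a)\exp(-t\,q(a))$. Then $\pi_0=\widetilde\pi_h^k$ and $\pi_\alpha=\widehat\pi_h^{k+1}$. Differentiating the softmax gives
\begin{align*}
\frac{d}{dt}\pi_t(a)=\pi_t(a)\bigl(\langle \pi_t,q\rangle - q(a)\bigr).
\end{align*}
Hence $\|\tfrac{d}{dt}\pi_t\|_1=\bbE_{a\sim\pi_t}\bigl|q(a)-\bbE_{\pi_t}q\bigr|\le \sqrt{\mathrm{Var}_{\pi_t}(q)}$. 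Since $q$ takes values in an interval of length at most $4H(1+Y_k)$, its variance is at most $(2H(1+Y_k))^2$ (Popoviciu). Therefore $\|\tfrac{d}{dt}\pi_t\|_1\le 2H(1+Y_k)$. Integrating over $t\in[0,\alpha]$ yields $\|\widehat\pi_h^{k+1}-\widetilde\pi_h^k\|_1\le 2\alpha H(1+Y_k)$. This is the only step where some care is needed: a cruder Pinsker/Hoeffding route loses a constant, and the path argument is what gives exactly the factor $2$.

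\emph{Statement 2.} We split the difference through $\widetilde\pi_h^k$:
\begin{align*}
\langle \widehat\pi_h^{k+1}-\widehat\pi_h^{k},\widehat Q_{g,h}^k\rangle=\langle \widehat\pi_h^{k+1}-\widetilde\pi_h^{k},\widehat Q_{g,h}^k\rangle+\langle \widetilde\pi_h^{k}-\widehat\pi_h^{k},\widehat Q_{g,h}^k\rangle.
\end{align*}
By Hölder's inequality and statement 1, the first term is bounded by $2H\cdot 2\alpha H(1+Y_k)$. For the second term, $\widetilde\pi_h^k-\widehat\pi_h^k=\theta(\piunif-\widehat\pi_h^k)$ has $\ell_1$-norm at most $2\theta$, so the term is bounded by $2H\cdot 2\theta=4\theta H$. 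Adding the two bounds gives statement 2.

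\emph{Statement 3.} We set $x=\|\widehat\pi_h^{k+1}-\widetilde\pi_h^k\|_1$. The same splitting and Hölder's inequality, now using only $\|\widehat Q_{f,h}^k\|_\infty\le 2H$, give
\begin{align*}
\langle \widehat\pi_h^{k}-\widehat\pi_h^{k+1},\widehat Q_{f,h}^k\rangle\le 2Hx+4\theta H.
\end{align*}
Pinsker's inequality gives $D(\widehat\pi_h^{k+1}\|\widetilde\pi_h^k)\ge x^2/2$. Hence the left-hand side of statement 3 is at most $2Hx - x^2/(2\alpha)+4\theta H$. Maximizing the quadratic over $x\ge 0$ gives $\sup_x(2Hx-x^2/(2\alpha))=2\alpha H^2$, so the bound is $2\alpha H^2+4H\theta$. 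This bound is independent of $Y_k$, as claimed.
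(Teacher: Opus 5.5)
Your proof is correct. Statements 2 and 3 follow the paper's argument essentially verbatim: split through $\widetilde\pi_h^k$, apply H\"older, use $\|\widetilde\pi_h^k-\widehat\pi_h^k\|_1\le 2\theta$, then apply Pinsker and maximize $2Hx-x^2/(2\alpha)$.

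The difference is in statement 1. The paper treats $\widehat\pi_h^{k+1}$ as the KL-prox step $\argmin_\pi \langle \pi,q\rangle+\frac{1}{\alpha}D(\pi\|\widetilde\pi_h^k)$. It applies the pushback (three-point) lemma with comparator $z=\widetilde\pi_h^k$, which gives $D(\widehat\pi_h^{k+1}\|\widetilde\pi_h^k)+D(\widetilde\pi_h^k\|\widehat\pi_h^{k+1})\le\alpha\langle\widetilde\pi_h^k-\widehat\pi_h^{k+1},q\rangle$. Pinsker on both divergences plus H\"older then yields $\|\widehat\pi_h^{k+1}-\widetilde\pi_h^k\|_1\le\alpha\|q\|_\infty\le 2\alpha H(1+Y_k)$. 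You instead integrate the softmax derivative along the exponential-weights path and bound its $\ell_1$-norm by the standard deviation of $q$ via Popoviciu.

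Each route has its advantage. The paper's reuses a lemma it needs anyway for the drift and mirror-descent analyses, and it works for any KL-regularized argmin. Yours is self-contained calculus on the closed form. It also avoids the interior-of-simplex hypothesis of the pushback lemma, which the statement's hypotheses do not guarantee when $\theta=0$ and $\widehat\pi_h^k$ is arbitrary. That case is harmless in the algorithm, whose iterates always have full support.

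One correction to your side remark: the Pinsker route does not lose a constant.
\begin{itemize}
\item The symmetrized-KL argument above gives exactly $2\alpha H(1+Y_k)$.
\item So does Hoeffding's lemma, $D(\widetilde\pi_h^k\|\widehat\pi_h^{k+1})=\log\bbE_{\widetilde\pi_h^k}\bigl[e^{-\alpha(q-\bbE_{\widetilde\pi_h^k} q)}\bigr]\le \alpha^2(4H(1+Y_k))^2/8$, followed by Pinsker.
\item Only the one-sided bound $D(\widehat\pi_h^{k+1}\|\widetilde\pi_h^k)\le\alpha\langle\widetilde\pi_h^k-\widehat\pi_h^{k+1},q\rangle$ with Pinsker costs a factor $2$.
\end{itemize}
So your path argument is a clean alternative, not something forced by the constant.
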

\begin{proof}
    \textbf{(Proof of the first statement)} 
    We show the first statement. Given $s\in\calS$, we omit $(\cdot\mid s)$ in notation for simplicity. Note that $\widehat\pi_h^{k+1}$ can be viewed as an optimal solution for $\min_{\pi} \ \langle \pi, \widehat Q_{f,h}^k + Y_k \widehat Q_{g,h}^k\rangle + (1/\alpha) D(\pi || \widetilde \pi_h^k)$. Due to the pushback lemma (\Cref{lem:pushback}), by taking $z=\widetilde \pi^k$,
    \begin{align*}
        \langle \widehat\pi_h^{k+1}, \widehat Q_{f,h}^k + Y_k \widehat Q_{g,h}^k \rangle + \frac{1}{\alpha}D(\widehat\pi_h^{k+1}||\widetilde \pi_h^k) \leq \langle \widetilde \pi_h^k, \widehat Q_{f,h}^k + Y_k \widehat Q_{g,h}^k \rangle + \frac{1}{\alpha}D(\widetilde\pi_h^k || \widetilde\pi_h^k) - \frac{1}{\alpha}D(\widetilde \pi_h^k || \widehat\pi_h^{k+1}).
    \end{align*}
    Note that $D(\widetilde\pi_h^k || \widetilde\pi_h^k) =0$. This can be rewritten as
    \begin{align*}
        \frac{1}{\alpha}D(\widehat\pi_h^{k+1}||\widetilde \pi_h^k) + \frac{1}{\alpha}D(\widetilde \pi_h^k||\widehat\pi_h^{k+1}) \leq \langle \widetilde\pi_h^{k} - \widehat\pi_h^{k+1}, \widehat Q_{f,h}^k + Y_k \widehat Q_{g,h}^k \rangle.
    \end{align*}
    To lower bound the left-hand side, Pinsker's inequality implies that 
    \begin{align*}
        \frac{1}{2}\|\widehat\pi_h^{k+1} - \widetilde\pi_h^{k}\|_1^2\leq D(\widehat\pi_h^{k+1}||\widetilde\pi_h^k), \quad \frac{1}{2}\|\widehat\pi_h^{k+1} - \widetilde\pi_h^{k}\|_1^2\leq D(\widetilde\pi_h^k||\widehat\pi_h^{k+1}).
    \end{align*}
    To upper bound the right-hand side, H\"older's inequality implies that
    \begin{align*}
        \langle \widetilde\pi_h^{k} - \widehat\pi_h^{k+1}, \widehat Q_{f,h}^k + Y_k \widehat Q_{g,h}^k \rangle \leq \|\widetilde\pi_h^k - \widehat\pi_h^{k+1}\|_1\|\widehat Q_{f,h}^k + Y_k \widehat Q_{g,h}^k\|_\infty.
    \end{align*}
    As a result, we deduce that
    \begin{align*}
        \frac{1}{\alpha}\|\widehat\pi_h^{k+1} - \widetilde\pi_h^{k}\|_1^2 \leq \|\widetilde\pi_h^k - \widehat\pi_h^{k+1}\|_1\|\widehat Q_{f,h}^k + Y_k \widehat Q_{g,h}^k\|_\infty.
    \end{align*}
    If $\|\widehat\pi_h^{k+1} - \widetilde\pi_h^{k}\|_1=0$, then the statement is trivial. Otherwise, it follows that
    \begin{align*}
        \|\widehat\pi_h^{k+1} - \widetilde\pi_h^{k}\|_1 \leq \alpha \|\widehat Q_{f,h}^k + Y_k \widehat Q_{g,h}^k\|_\infty.
    \end{align*}
    Since $\|\widehat Q_{f,h}^k\|_\infty,\|\widehat Q_{g,h}^k\|_\infty \leq 2H$ and $Y_k \geq 0$, we have
    \begin{align*}
        \|\widehat\pi_h^{k+1} - \widetilde\pi_h^{k}\|_1 \leq 2\alpha H(1+Y_k).
    \end{align*}
    
    \textbf{(Proof of the second statement)} 
    Now, we show the second statement. By H\"older's inequality and the triangle inequality,
    \begin{align*}
        \left|\langle \widehat\pi_h^{k+1} - \widehat\pi_h^{k}, \widehat Q_{g,h}^k\rangle\right| \leq \|\widehat\pi_h^{k+1} - \widehat\pi_h^{k}\|_1 \|\widehat Q_{g,h}^k\|_\infty \leq \|\widehat\pi_h^{k+1} - \widetilde\pi_h^{k}\|_1 \|\widehat Q_{g,h}^k\|_\infty + \|\widetilde\pi_h^{k}-\widehat\pi_h^{k}\|_1 \|\widehat Q_{g,h}^k\|_\infty.
    \end{align*}
    By the first statement,
    \begin{align*}
        &\|\widehat\pi_h^{k+1} - \widetilde\pi_h^{k}\|_1 \leq \alpha \| \widehat Q_{f,h}^k(s,\cdot) + Y_k \widehat Q_{g,h}^k(s,\cdot)\|_\infty.
    \end{align*}
    Furthermore, we have
    \begin{align*}
        \|\widetilde\pi_h^{k}-\widehat\pi_h^{k}\|_1 = \|(1-\theta)\widehat\pi_h^{k} + \theta\piunif-\widehat\pi_h^{k}\|_1 = \theta\|-\widehat\pi_h^k + \piunif\|_1 \leq 2\theta.
    \end{align*}
    Finally, we have
    \begin{align*}
        \left|\langle \widehat\pi_h^{k+1} - \widehat\pi_h^{k}, \widehat Q_{g,h}^k\rangle\right| \leq \alpha \| \widehat Q_{f,h}^k(s,\cdot) + Y_k \widehat Q_{g,h}^k(s,\cdot)\|_\infty\|\widehat Q_{g,h}^k\|_\infty + 2\theta\|\widehat Q_{g,h}^k\|_\infty.
    \end{align*}
    Since $\|\widehat Q_{f,h}^k\|_\infty,\|\widehat Q_{g,h}^k\|_\infty \leq 2H$ and $Y_k\geq 0$,
    \begin{align*}
        \left|\langle \widehat\pi_h^{k+1} - \widehat\pi_h^{k}, \widehat Q_{g,h}^k\rangle\right| \leq 4\alpha H^2(1+Y_k) + 4\theta H.
    \end{align*}
    
    \textbf{(Proof of the third statement)} 
    Note that 
    \begin{align*}
        \langle \widehat\pi_h^{k} - \widehat\pi_h^{k+1}, \widehat Q_{f,h}^{k} \rangle - \frac{1}{\alpha} D(\widehat\pi_h^{k+1}|| \widetilde\pi_h^{k})
        &\leq \|\widehat \pi_h^{k+1} - \widehat \pi_h^k\|_1 \|\widehat Q_{f,h}^k\|_\infty - \frac{1}{2\alpha} \|\widehat\pi_h^{k+1}-\widetilde\pi_h^k\|_1^2\\
        &\leq 2H\|\widehat \pi_h^{k+1} - \widehat \pi_h^k\|_1  - \frac{1}{2\alpha} \|\widehat\pi_h^{k+1}-\widetilde\pi_h^k\|_1^2\\
        &\leq 2H\|\widehat \pi_h^{k+1} - \widetilde \pi_h^k\|_1  - \frac{1}{2\alpha}\|\widehat\pi_h^{k+1}-\widetilde\pi_h^k\|_1^2 + 2H\|\widetilde \pi_h^k -\widehat\pi_h^k\|_1\\
        &\leq 2\alpha H^2 + 2H\|\widehat\pi_h^k - \widetilde \pi_h^k \|_1
    \end{align*}
    where the first inequality is due to H\"older's inequality and Pinsker's inequality, the third inequality is due to the triangle inequality, and the last inequality follows from the fact that $-ax^2 + bx \leq b^2 / (4a)$ for $a,b,x > 0$, i.e., $a = 1/(2\alpha),\ b= 2H,\ x=\|\widehat \pi_h^{k+1}-\widetilde\pi_h^k\|_1$. The following is true.
    \begin{align*}
        \|\widehat \pi_h^k - \widetilde\pi_h^k\|_1 \leq \theta\|\widehat\pi_h^k - \piunif\|_1 \leq \theta(\|\widehat\pi_h^k\|_1 + \|\piunif\|_1) \leq 2\theta.
    \end{align*}
    Finally, we have
    \begin{align*}
        \langle \widehat\pi_h^{k} - \widehat\pi_h^{k+1}, \widehat Q_{f,h}^{k} \rangle - \frac{1}{\alpha} D(\widehat\pi_h^{k+1}|| \widetilde\pi_h^{k})  \leq 2\alpha H^2 + 4H\theta
    \end{align*}
    as desired.
\end{proof}

\begin{lemma}[Lemma 1 of \citet{wei2020online}, Lemma C.3 of \citet{qiu2020upper}
]\label{lem:pushback}
    Let $\Delta,\ \interior(\Delta)$ be the probability simplex and its interior, respectively, and let $f:\calC \rightarrow \bbR$ be a convex function. Fix $\alpha >0,\ y\in \interior(\Delta)$. Suppose $x^* \in \argmin_{x\in \Delta} f(x) + (1/\alpha) D(x||y)$ and $x^*\in \interior(\Delta)$, then, for any $z\in \Delta$,
    \[
        f(x^*) + \frac{1}{\alpha} D(x^* ||y) \leq f(z) + \frac{1}{\alpha} D(z||y) - \frac{1}{\alpha} D(z||x^*).
    \]
\end{lemma}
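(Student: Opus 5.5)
The plan is to use the first-order optimality conditions of the KL-regularized problem, i.e.\ the standard three-point (Bregman) identity for the negative entropy. Write $\Phi(x)=\sum_a x_a\log x_a$, so that $D(x\|y)=\Phi(x)-\Phi(y)-\langle\nabla\Phi(y),x-y\rangle$. For any $u,v,w$ with $v,w\in\interior(\Delta)$, expanding the definitions gives
\[
D(u\|w)=D(u\|v)+D(v\|w)+\langle \nabla\Phi(v)-\nabla\Phi(w),u-v\rangle .
\]
Applied with $u=z$, $v=x^*$, $w=y$ this reads $D(z\|y)-D(x^*\|y)-D(z\|x^*)=\langle\nabla\Phi(x^*)-\nabla\Phi(y),z-x^*\rangle$. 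The claim therefore reduces to
\[
f(z)-f(x^*)+\tfrac1\alpha\langle \nabla\Phi(x^*)-\nabla\Phi(y),z-x^*\rangle\ge 0 .
\]

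Next I will obtain this from optimality. The objective $F(x)=f(x)+\frac1\alpha D(x\|y)$ is convex on $\Delta$, and $x^*$ is a minimizer. Because $x^*\in\interior(\Delta)$, $\Phi$ is differentiable at $x^*$. Hence, for $z\in\Delta$ and $t\in(0,1]$, the point $x_t=x^*+t(z-x^*)$ stays in $\Delta$ and $F(x_t)\ge F(x^*)$.

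Convexity of $f$ gives $f(x_t)\le (1-t)f(x^*)+tf(z)$. Therefore
\[
0\le F(x_t)-F(x^*)\le t\bigl(f(z)-f(x^*)\bigr)+\tfrac1\alpha\bigl(D(x_t\|y)-D(x^*\|y)\bigr).
\]
I then divide by $t$ and let $t\downarrow0$. Since $D(\cdot\|y)$ is differentiable at the interior point $x^*$ with gradient $\nabla\Phi(x^*)-\nabla\Phi(y)$, the second difference quotient tends to $\langle\nabla\Phi(x^*)-\nabla\Phi(y),z-x^*\rangle$. This is exactly the displayed inequality.

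The only delicate point is this limit along the segment. If $z$ lies on the boundary of $\Delta$, then $D(z\|\cdot)$ has $0\log0=0$ terms, so the three-point identity must be checked with that convention. It still holds, because only $\log x^*$ and $\log y$ appear in the cross term, and both are finite by interiority. The directional derivative is also fine, since $x_t$ is interior for $t<1$ near $0$. Using convexity of $f$ avoids needing differentiability of $f$, so no subgradient argument is required. Combining the inequality with the identity and rearranging then gives the stated bound.
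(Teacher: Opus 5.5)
Your proof is correct and complete. The paper gives no proof of this lemma; it is imported from the online primal-dual mirror descent literature. Your argument is the standard one behind it. You use the three-point identity for the negative-entropy Bregman divergence to reduce the claim to the variational inequality $f(z)-f(x^*)+\frac{1}{\alpha}\langle \log x^*-\log y,\, z-x^*\rangle\ge 0$, and then derive that inequality from the optimality of $x^*$.

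The only real difference from the usual presentation is how the variational inequality is obtained. The typical route invokes a first-order optimality condition with some $d\in\partial f(x^*)$, which implicitly needs a subdifferential sum rule with the indicator of $\Delta$. You instead take a one-sided directional derivative along $x^*+t(z-x^*)$ and use convexity of $f$ only on that segment. This needs nothing beyond differentiability of the entropy at the interior point $x^*$, so it is a slightly more elementary and self-contained derivation.

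Your treatment of boundary $z$ is right: only $\log x^*$ and $\log y$ enter the cross term. Your gradient claim is exact for $D$ in the Bregman form you set up. With $D$ written as $\sum_a x_a\log(x_a/y_a)$, the gradient differs by the all-ones vector, which is orthogonal to $z-x^*$, so the limit is unchanged.
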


\begin{lemma}[Lemma 33 of \citet{kitamura2025provably}]\label{lem:softmax lipschitz}
Let $Q_1, Q_2:\calA \rightarrow \bbR$ be two functions. For $\alpha > 0$, let $\pi_1 \propto \exp(\alpha Q_1),\ \pi_2 \propto \exp(\alpha Q_2)$. Then we have
\[
    \|\pi_1 - \pi_2\|_1 \leq 8\alpha\|Q_1 - Q_2\|_\infty.
\]
\end{lemma}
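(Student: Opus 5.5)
The plan is to interpolate linearly between the two score vectors and integrate the derivative of the softmax map along the segment. Let $\Delta = Q_1 - Q_2$. For $t\in[0,1]$, set $Q_t = Q_2 + t\Delta$ and $\pi_t(a) = \exp(\alpha Q_t(a))/\sum_{a'\in\calA}\exp(\alpha Q_t(a'))$. Then $\pi_0=\pi_2$ and $\pi_1=\pi_1$. Since $\calA$ is finite and every normalizer is strictly positive, each map $t\mapsto \pi_t(a)$ is smooth on $[0,1]$. By the fundamental theorem of calculus and the triangle inequality for the integral,
\[
\|\pi_1-\pi_2\|_1 = \sum_{a\in\calA}\Bigl|\int_0^1 \tfrac{d}{dt}\pi_t(a)\,dt\Bigr| \le \int_0^1 \sum_{a\in\calA}\Bigl|\tfrac{d}{dt}\pi_t(a)\Bigr|\,dt .
\]

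The key computation is the derivative of a softmax coordinate. Differentiating the numerator and the log-normalizer gives
\[
\tfrac{d}{dt}\pi_t(a) = \alpha\,\pi_t(a)\Bigl(\Delta(a) - \textstyle\sum_{a'\in\calA}\pi_t(a')\Delta(a')\Bigr).
\]
The centered term $\Delta(a) - \bbE_{a'\sim\pi_t}[\Delta(a')]$ is bounded in absolute value by $2\|\Delta\|_\infty$; in fact it is at most $\max\Delta-\min\Delta$. Because $\pi_t$ is a probability vector, this gives
\[
\sum_{a\in\calA}\Bigl|\tfrac{d}{dt}\pi_t(a)\Bigr| \le \alpha\sum_{a\in\calA}\pi_t(a)\cdot 2\|\Delta\|_\infty = 2\alpha\|\Delta\|_\infty
\]
uniformly in $t$.

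Integrating over $[0,1]$ then yields $\|\pi_1-\pi_2\|_1\le 2\alpha\|Q_1-Q_2\|_\infty\le 8\alpha\|Q_1-Q_2\|_\infty$. The stated constant $8$ is therefore loose, and no case analysis is needed.

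The only step requiring care is justifying that the $\ell_1$ norm can be moved inside the integral. This follows coordinatewise from $|\int g|\le\int|g|$ for each of the finitely many coordinates, after which the sum over $a$ and the integral are exchanged. I do not expect a real obstacle here. If a derivative-free argument is preferred, the same bound follows from the log-ratio representation together with Pinsker's inequality, or from the fact that the softmax of a Lipschitz potential has Jacobian $\alpha(\mathrm{diag}(\pi)-\pi\pi^\top)$, whose $\ell_\infty\to\ell_1$ norm is at most $2\alpha$.
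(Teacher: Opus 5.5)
Your proof is correct. The interpolation $Q_t=Q_2+t\Delta$, the derivative $\tfrac{d}{dt}\pi_t(a)=\alpha\pi_t(a)\bigl(\Delta(a)-\bbE_{\pi_t}[\Delta]\bigr)$, and the uniform bound $2\alpha\|\Delta\|_\infty$ on its $\ell_1$ norm all check out, so you get the lemma with constant $2$ instead of $8$.

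The paper contains no proof of this statement to compare against. It imports the lemma by citation from \citet{kitamura2025provably} and uses only the constant, in \Cref{lem:lip pol} with $\alpha=1$. Your argument therefore supplies a self-contained justification rather than competing with an in-paper one.

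Two remarks on what the available routes buy.
\begin{itemize}
\item You can sharpen your constant to $1$. Instead of bounding the centered term pointwise, bound the mean absolute deviation $\bbE_{\pi_t}\bigl|\Delta-\bbE_{\pi_t}[\Delta]\bigr|$ by $(\max\Delta-\min\Delta)/2\le\|\Delta\|_\infty$. This constant is tight: for two actions with $Q_1=(c,-c)$ and $Q_2=(-c,c)$, one gets $\|\pi_1-\pi_2\|_1=2\tanh(\alpha c)$, whose ratio to $\alpha\|\Delta\|_\infty=2\alpha c$ tends to $1$ as $c\to 0$.
\item Your derivative-free alternative already appears in the paper in another guise. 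The first statement of \Cref{lem:policy inequality} applies the pushback lemma with $z=\widetilde\pi_h^k$ and Pinsker to both KL terms, giving $\|\widehat\pi_h^{k+1}-\widetilde\pi_h^k\|_1\le\alpha\|\widehat Q_{f,h}^k+Y_k\widehat Q_{g,h}^k\|_\infty$. Since $\pi_1\propto\pi_2\exp(\alpha\Delta)$ and $\pi_2$ has full support, the same computation with $\widetilde\pi_h^k$ replaced by $\pi_2$ proves the cited lemma with constant $1$. So the external citation follows from machinery the paper already has.
\end{itemize}

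Downstream, any of these sharper constants only changes the recursion factor $8|\calA|/\theta$ in \Cref{lem:lip pol} to $2|\calA|/\theta$ or $|\calA|/\theta$. That affects constants inside the logarithm of the covering bound and nothing else.
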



\begin{lemma}[Lemma 31 of \citet{wei2020online}]\label{lem:KL mixing}
Let $\pi_1, \pi_2$ be two probability distributions in $\Delta(\mathcal{A})$.  
Let $\tilde{\pi}_2 = (1-\theta)\pi_2 + \theta/|\mathcal{A}|$ where $\theta \in (0,1)$.  
Then,
\begin{align*}
    D(\pi_1 \| \tilde{\pi}_2) - D(\pi_1 \| \pi_2) \leq \theta \log |\mathcal{A}|,\quad D(\pi_1 \| \tilde{\pi}_2) \leq \log(|\mathcal{A}|/\theta).
\end{align*}
\end{lemma}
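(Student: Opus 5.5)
Both inequalities are elementary properties of the mixed distribution $\tilde\pi_2 = (1-\theta)\pi_2 + \theta\piunif$, and I would prove them directly from the definition $D(\pi_1\|\pi) = \sum_{a\in\calA}\pi_1(a)\log(\pi_1(a)/\pi(a))$.

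\textbf{First inequality.} Subtracting the two divergences, the $\pi_1(a)\log\pi_1(a)$ terms cancel, leaving
\[
D(\pi_1\|\tilde\pi_2)-D(\pi_1\|\pi_2)=\sum_{a\in\calA}\pi_1(a)\log\frac{\pi_2(a)}{\tilde\pi_2(a)}.
\]
If $\pi_2(a)=0$ for some $a$ with $\pi_1(a)>0$, then $D(\pi_1\|\pi_2)=\infty$ and the inequality is trivial, so assume otherwise. Since $\tilde\pi_2(a)\ge(1-\theta)\pi_2(a)$, each ratio satisfies $\pi_2(a)/\tilde\pi_2(a)\le 1/(1-\theta)$. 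This gives the bound $\log\frac{1}{1-\theta}$, which is not yet the claimed $\theta\log|\calA|$.

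To obtain $\theta\log|\calA|$, I would instead use joint convexity of KL. Viewing $\pi_1=(1-\theta)\pi_1+\theta\pi_1$, convexity gives
\[
D(\pi_1\|\tilde\pi_2)\le (1-\theta)D(\pi_1\|\pi_2)+\theta D(\pi_1\|\piunif).
\]
Since $D(\pi_1\|\piunif)=\log|\calA|-\mathcal H(\pi_1)\le\log|\calA|$, this yields
\[
D(\pi_1\|\tilde\pi_2)-D(\pi_1\|\pi_2)\le -\theta D(\pi_1\|\pi_2)+\theta\log|\calA|\le\theta\log|\calA|,
\]
where the last step uses nonnegativity of KL.

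\textbf{Second inequality.} Here I use the pointwise lower bound $\tilde\pi_2(a)\ge\theta/|\calA|$. Then
\[
D(\pi_1\|\tilde\pi_2)\le\sum_{a\in\calA}\pi_1(a)\log\pi_1(a)+\log(|\calA|/\theta)\le\log(|\calA|/\theta),
\]
because $\sum_a\pi_1(a)\log\pi_1(a)=-\mathcal H(\pi_1)\le 0$.

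\textbf{Main obstacle.} The only step needing care is choosing the right tool for the first bound. The naive ratio bound gives $\log\frac{1}{1-\theta}$ and misses the $\log|\calA|$ form; the convexity argument fixes this and also handles the infinite-divergence case automatically.
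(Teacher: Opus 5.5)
Your proof is correct. The paper gives no proof of its own for this lemma; it imports it as Lemma~31 of Wei et al.\ (2020). So there is no in-paper argument to compare against, and your write-up supplies a complete, self-contained justification of both bounds. The second bound, via the pointwise floor $\tilde\pi_2(a)\ge\theta/|\calA|$ together with $\sum_a\pi_1(a)\log\pi_1(a)\le 0$, is immediate as you wrote it.

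I have three small remarks on the first bound:
\begin{itemize}
\item Your convexity step in the second argument of $D(\pi_1\|\cdot)$ is equivalent to applying concavity of $\log$ pointwise, $\log\tilde\pi_2(a)\ge(1-\theta)\log\pi_2(a)-\theta\log|\calA|$. Both routes give the same intermediate quantity, $-\theta D(\pi_1\|\pi_2)+\theta D(\pi_1\|\piunif)=\theta\sum_a\pi_1(a)\log\pi_2(a)+\theta\log|\calA|\le\theta\log|\calA|$.
\item The ratio route you abandoned is salvageable. You lost the $\log|\calA|$ form only by discarding the $\theta/|\calA|$ term. Since $x\mapsto x/\bigl((1-\theta)x+\theta/|\calA|\bigr)$ is increasing and $\pi_2(a)\le 1$, you get $\pi_2(a)/\tilde\pi_2(a)\le 1/(1-\theta+\theta/|\calA|)$. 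Concavity of $\log$ then gives $-\log(1-\theta+\theta/|\calA|)\le\theta\log|\calA|$.
\item The case $D(\pi_1\|\pi_2)=\infty$ is covered by your earlier separate remark, not ``automatically'' by the convexity argument. That argument subtracts $D(\pi_1\|\pi_2)$, which requires it to be finite.
\end{itemize}
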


\begin{lemma}[Lemma 24 of \citet{cassel2024warm}]\label{lem:cassel 24}
Let \( \mathcal{V} = \{ V(\cdot; \theta) : \|\theta\| \leq W \} \) denote a class of functions \( V : \mathcal{S} \to \mathbb{R} \). Suppose that any \( V \in \mathcal{V} \) is \( L \)-Lipschitz with respect to \( \theta \) and the supremum distance, i.e.,
\[
\| V(\cdot; \theta_1) - V(\cdot; \theta_2) \|_{\infty} \leq L \| \theta_1 - \theta_2 \|_1, \quad \|\theta_1\|_2, \|\theta_2\|_2 \leq W.
\]
Let \( \mathcal{N}_\epsilon \) be the \( \epsilon \)-covering number of \( \mathcal{V} \) with respect to the supremum distance. Then
\[
\log \mathcal{N}_\epsilon \leq d \log(1 + 2WL/\epsilon).
\]
\end{lemma}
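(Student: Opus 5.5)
The plan is to reduce the covering of the function class $\mathcal{V}$ to a covering of its parameter set, and then to use the standard volumetric bound for covering a norm ball in $\mathbb{R}^d$. Let $\Theta=\{\theta\in\mathbb{R}^d:\|\theta\|\le W\}$. Write $\|\cdot\|$ for the norm in which the Lipschitz condition is stated, and assume the constraint $\|\theta\|\le W$ is taken in that same norm. In every application in the paper (\Cref{lem:cov Vn}) both the Lipschitz estimate and the radius bound are in $\|\cdot\|_2$, so I will run the argument with a single norm throughout. Set $\delta=\epsilon/L$ and let $\mathcal{C}_\Theta\subseteq\Theta$ be a $\delta$-net of $\Theta$ in $\|\cdot\|$. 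Then for every $\theta\in\Theta$ there is some $\theta'\in\mathcal{C}_\Theta$ with $\|V(\cdot;\theta)-V(\cdot;\theta')\|_\infty\le L\|\theta-\theta'\|\le\epsilon$. Hence $\{V(\cdot;\theta'):\theta'\in\mathcal{C}_\Theta\}$ is an $\epsilon$-cover of $\mathcal{V}$ in the supremum distance, and $\mathcal{N}_\epsilon\le|\mathcal{C}_\Theta|$.

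To bound $|\mathcal{C}_\Theta|$, I will take $\mathcal{C}_\Theta$ to be a maximal $\delta$-separated subset of $\Theta$. By maximality, no further point of $\Theta$ can be added while keeping the separation, so every point of $\Theta$ lies within $\delta$ of $\mathcal{C}_\Theta$; that is, $\mathcal{C}_\Theta$ is a $\delta$-net. Because the points are $\delta$-separated, the balls of radius $\delta/2$ centred at them are pairwise disjoint. Each such ball lies inside the enlarged ball of radius $W+\delta/2$. Comparing Lebesgue volumes, and using that a norm ball of radius $r$ in $\mathbb{R}^d$ has volume $r^d$ times that of the unit ball, gives
\[
|\mathcal{C}_\Theta|\,(\delta/2)^d\le (W+\delta/2)^d .
\]
This rearranges to $|\mathcal{C}_\Theta|\le(1+2W/\delta)^d=(1+2WL/\epsilon)^d$. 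Taking logarithms then gives $\log\mathcal{N}_\epsilon\le d\log(1+2WL/\epsilon)$, which is the claim.

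The one delicate point is the mismatch between norms as the statement is written: the Lipschitz condition uses $\|\cdot\|_1$ while the radius is measured in $\|\cdot\|_2$. If these are genuinely meant to be different norms, covering an $\ell_2$ ball with an $\ell_1$ net picks up extra dimension-dependent factors inside the logarithm. My first choice is therefore to read both as the same norm, which is how the lemma is invoked in the paper. If instead a mixed-norm version is really needed, I would use $\|\theta_1-\theta_2\|_1\le\sqrt d\,\|\theta_1-\theta_2\|_2$ to convert the Lipschitz condition to $\ell_2$ with constant $L\sqrt d$. That route yields the same bound with $L$ replaced by $L\sqrt d$, which only affects logarithmic factors downstream. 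Apart from this, the argument is the routine volumetric packing computation.
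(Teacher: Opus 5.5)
Your proof is correct and is the standard argument behind this lemma. The paper gives no proof of its own and imports the result from \citet{cassel2024warm}. As you do, one pulls an $(\epsilon/L)$-net of the parameter ball back through the Lipschitz map and bounds the net by the volumetric estimate $(1+2W/\delta)^d$ with $\delta=\epsilon/L$.

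Your single-norm reading is the right one and matches how the lemma is used in \Cref{lem:cov Vn}, where both Lipschitzness and the radius are in $\ell_2$. It is in fact forced, because the literal mixed-norm statement is false for large $d$. Take $V(s;\theta)=L\langle\theta,u_s\rangle$ with $u_s$ ranging over $\{\pm1\}^d$. This class is isometric to $\{\|\theta\|_2\le W\}$ under $L\|\cdot\|_1$, and its covering number is at least of order $(c\sqrt{d}\,WL/\epsilon)^d$ for an absolute constant $c>0$.
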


\begin{lemma}[Lemma 18 of \citet{cassel2024warm}]\label{lem:cassel 18}
For any \( K \geq 1 \), \( \beta > 0 \), we have that
\[
\max_{y \geq 0} \left[ y \cdot \sigma(-\beta y + \log K) \right] \leq \frac{2 \log K}{\beta}.
\]
\end{lemma}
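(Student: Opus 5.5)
The plan is to compute the sigmoid explicitly and split on the size of $y$. Since $\sigma(z)=1/(1+e^{-z})$, we have
\[
\sigma(-\beta y+\log K)=\frac{K}{K+e^{\beta y}}\le \min\{1,\;K e^{-\beta y}\}.
\]
Fix $y\ge 0$ and set the threshold $y_0=2\log K/\beta$.

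\textbf{Case $y\le y_0$.} Use the first bound: $y\,\sigma(-\beta y+\log K)\le y\le 2\log K/\beta$, and we are done.

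\textbf{Case $y>y_0$.} Use the second bound and substitute $t=\beta y>2\log K$. This gives
\[
y\,\sigma(-\beta y+\log K)\le \frac{1}{\beta}\,tKe^{-t}.
\]
The map $t\mapsto te^{-t}$ is decreasing on $[1,\infty)$. So whenever $2\log K\ge 1$, we get
\[
tKe^{-t}\le 2\log K\cdot K\cdot K^{-2}=\frac{2\log K}{K}\le 2\log K .
\]
Hence $y\,\sigma(-\beta y+\log K)\le 2\log K/\beta$ here as well. Taking the supremum over $y\ge 0$ yields the claim.

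\textbf{Main obstacle.} The delicate point is the range of $K$, and I expect it to be the real difficulty. For $K=1$ the right-hand side is $0$, but $y\,\sigma(-\beta y)>0$ for every $y>0$, so the inequality fails as literally stated. The argument above needs $2\log K\ge 1$, i.e.\ $K\ge \sqrt e$, which covers every integer $K\ge 2$. For $K$ just above $1$ one can check that the bound again fails, since the maximum is of order $1/\beta$ while $2\log K\to 0$.

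I would therefore state and prove the lemma for $K\ge 2$. This is harmless wherever it is used in the paper, e.g.\ in Lemma~\ref{lem:cassel 12}: there $K$ is the number of episodes, and $\beta_w\ge 1$ already forces $\log K>0$ in all nontrivial regimes. The trivial case $K=1$ can be handled separately by the bound $\mathrm{Regret}(K),\mathrm{Violation}(K)\le HK$.
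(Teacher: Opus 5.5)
Your proof is correct and complete for $K\ge\sqrt e$. The paper gives no proof of this lemma; it imports it verbatim from \citet{cassel2024warm}, so there is no in-paper argument to compare yours against. Your two-case split is the natural elementary proof: use $\sigma\le 1$ below $y_0=2\log K/\beta$, and above it use $\sigma(z)\le e^{z}$ together with the monotonicity of $te^{-t}$ on $[1,\infty)$.

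Your caveat is also correct and worth keeping: as literally stated, for $K\ge 1$, the lemma is false. The maximum can be computed exactly. The first-order condition $\beta y\,\bigl(1-\sigma(-\beta y+\log K)\bigr)=1$ gives $u=\beta y^\star$ with $(u-1)e^{u}=K$. The maximum value is then $(u-1)/\beta=W(K/e)/\beta$, where $W$ is the Lambert function. This equals about $0.278/\beta>0$ at $K=1$, and the stated bound holds exactly when $2eK\log K\ge 1$, i.e.\ roughly $K\ge 1.17$. That confirms both your counterexample near $K=1$ and that $K\ge 2$ is safe.

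As you say, this does not affect the paper. The lemma is invoked only in the proof of \Cref{lem:cassel 12}, with $\beta=\beta_w=4\beta_b\log K$. The hypothesis $\beta>0$ already rules out $K=1$, and $\beta_w\ge 1$, which is assumed in \Cref{lem:proxy good event} and \Cref{lem:good event}, does so as well. The leftover case $K=1$ in \Cref{thm:main} is handled by the trivial bound $HK$, exactly as you propose.
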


\begin{lemma}[Lemma D.4 of \citet{rosenberg2020near}]\label{lem:rosen d4}
Let \( \{X_t\}_{t \geq 1} \) be a sequence of random variables with expectation adapted to a filtration \( \mathcal{F}_t \). Suppose that \( 0 \leq X_t \leq C\) almost surely. Then with probability at least \( 1 - \delta \),
\[
\sum_{t=1}^T \mathbb{E}[X_t \mid \mathcal{F}_{t-1}] \leq 2 \sum_{t=1}^T X_t + 4 C\log \frac{2T}{\delta}.
\]
\end{lemma}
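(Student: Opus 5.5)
The statement to prove is the concentration lemma for random variables $0\le X_t\le C$: with probability at least $1-\delta$, $\sum_{t}\bbE[X_t\mid\calF_{t-1}]\le 2\sum_t X_t+4C\log(2T/\delta)$. My plan is a multiplicative (Freedman/Bernstein-type) exponential supermartingale argument, followed by Markov's inequality.

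First, rescale to $Z_t=X_t/C\in[0,1]$ and write $\mu_t=\bbE[Z_t\mid\calF_{t-1}]$. The target then reads $\sum_t \mu_t\le 2\sum_t Z_t+4\log(2T/\delta)$.

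The key pointwise inequality is the following. For $\lambda>0$ and $z\in[0,1]$, convexity of $z\mapsto e^{-\lambda z}$ gives $e^{-\lambda z}\le 1-(1-e^{-\lambda})z$. Taking conditional expectations and using $1+x\le e^x$,
\[
\bbE[e^{-\lambda Z_t}\mid\calF_{t-1}]\le \exp\bigl(-(1-e^{-\lambda})\mu_t\bigr).
\]
Hence $M_n=\exp\bigl(\sum_{t\le n}((1-e^{-\lambda})\mu_t-\lambda Z_t)\bigr)$ is a nonnegative supermartingale with $\bbE M_T\le 1$.

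Markov's inequality then gives, with probability at least $1-\delta'$,
\[
(1-e^{-\lambda})\sum_t\mu_t\le \lambda\sum_t Z_t+\log(1/\delta').
\]
Take $\lambda=1/2$. Then $1-e^{-1/2}\ge 0.39$, and $\lambda/(1-e^{-\lambda})\approx 1.27\le 2$, so $\sum_t\mu_t\le 2\sum_t Z_t+2.6\log(1/\delta')$. Setting $\delta'=\delta$ and multiplying back by $C$ yields the bound with constant $4$ and $\log(1/\delta)\le\log(2T/\delta)$.

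The stated $\log(2T/\delta)$ suggests the original proof takes a union bound over $T$ time horizons, giving an anytime version. I would remark that Ville's maximal inequality applied to $M_n$ gives the anytime statement directly, so no extra $T$ factor is needed.

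The only delicate point is checking the constants. One must confirm that $\lambda/(1-e^{-\lambda})\le 2$ and $1/(1-e^{-\lambda})\le 4$ hold simultaneously; both do at $\lambda=1/2$ (or anywhere in $\lambda\in[0.3,1]$). The rest is routine.
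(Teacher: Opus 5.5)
Your proof is correct, and it takes a different route from the one behind the cited result. The paper does not prove this lemma; it imports it from Rosenberg et al., where it is presented as a consequence of Freedman's inequality.

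The standard derivation goes as follows:
\begin{itemize}
\item Apply an anytime Freedman inequality to the martingale differences $Y_t=\mathbb{E}[X_t\mid\mathcal{F}_{t-1}]-X_t\le C$.
\item Use the self-bounding variance estimate $\mathbb{E}[Y_t^2\mid\mathcal{F}_{t-1}]\le\mathbb{E}[X_t^2\mid\mathcal{F}_{t-1}]\le C\,\mathbb{E}[X_t\mid\mathcal{F}_{t-1}]$.
\item Take the Freedman parameter $\eta=1/(2C)$ and move the resulting $\tfrac12\sum_t\mathbb{E}[X_t\mid\mathcal{F}_{t-1}]$ to the left-hand side. This is where the constants $2$ and $4$ come from.
\item The $\log(2T/\delta)$ is, presumably, the price of the union bound over time in the anytime Freedman inequality.
\end{itemize}

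You bypass both Freedman and the variance step. You bound the conditional moment generating function of a $[0,1]$-valued variable directly via the chord inequality $e^{-\lambda z}\le 1-(1-e^{-\lambda})z$, which is the multiplicative-Chernoff lower-tail argument. Freedman's inequality is itself proved with an exponential supermartingale, so the mechanism is the same underneath. Your version, however, is self-contained and uses $X_t\ge 0$ more tightly. It yields
$\sum_t\mathbb{E}[X_t\mid\mathcal{F}_{t-1}]\le\frac{\lambda}{1-e^{-\lambda}}\sum_t X_t+\frac{C}{1-e^{-\lambda}}\log(1/\delta)$, with the following advantages:
\begin{itemize}
\item there is no $\log T$;
\item it holds simultaneously for all $n$ via Ville's inequality;
\item the multiplicative factor can be pushed toward $1$, at the cost of a larger additive term.
\end{itemize}
The Freedman route is more modular: the same template works whenever a conditional variance bound is available, not only for nonnegative variables.

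Both arguments need $X_t$ to be $\mathcal{F}_t$-measurable, so that the supermartingale is adapted. This is the intended reading of the hypothesis.
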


\begin{lemma}[Lemma 21 of \citet{cassel2024warm}]\label{lem:cassel 21}
Let \( \widehat{\theta}_{g,h}^k \) be as in line 14 of \Cref{alg:main-linear}. With probability at least \( 1 - \delta \), for all \( k \geq 1 \), \( h \in [H] \),
\[
\| \theta_{g,h} - \widehat{\theta}_{g,h}^k \|_{\Lambda_h^k} \leq 2 \sqrt{2d \log(2KH/\delta)}.
\]
\end{lemma}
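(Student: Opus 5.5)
The plan is to reduce the claim to the self-normalized martingale bound of Abbasi-Yadkori et al.\ for vector-valued martingales, applied separately for each $h\in[H]$ and then combined by a union bound over $h$. Fix $h$ and write $\phi_\tau = \phi(s_h^\tau,a_h^\tau)$ and $\varepsilon_\tau = g_h^\tau(s_h^\tau,a_h^\tau) - \phi_\tau^\top\theta_{g,h}$. By linearity, $\bbE[g_h^\tau(s,a)\mid s,a] = g_h(s,a) = \phi(s,a)^\top\theta_{g,h}$, so each $\varepsilon_\tau$ has conditional mean zero. Since $g_h^\tau\in[0,1]$ and $g_h\in[0,1]$, each $\varepsilon_\tau$ is bounded in an interval of length $1$, hence conditionally $1/2$-sub-Gaussian (and in particular $1$-sub-Gaussian). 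Take the filtration $\mathcal{F}_\tau$ generated by everything observed up to and including $(s_h^{\tau+1},a_h^{\tau+1})$. Then $\phi_{\tau+1}$ is $\mathcal{F}_\tau$-measurable and $\varepsilon_{\tau+1}$ is a martingale difference with respect to it. This uses that $g^\tau$ is drawn i.i.d.\ from $G$, independently of the adversarial losses and of the policy choice.

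Next I would use the algebraic decomposition. Since $\Lambda_h^k = I + \sum_{\tau\in[k-1]}\phi_\tau\phi_\tau^\top$, we have
\begin{align*}
\widehat\theta_{g,h}^k = (\Lambda_h^k)^{-1}\sum_{\tau\in[k-1]}\phi_\tau(\phi_\tau^\top\theta_{g,h}+\varepsilon_\tau) = \theta_{g,h} - (\Lambda_h^k)^{-1}\theta_{g,h} + (\Lambda_h^k)^{-1}S_k,
\end{align*}
where $S_k=\sum_{\tau\in[k-1]}\phi_\tau\varepsilon_\tau$. Taking the $\Lambda_h^k$-norm and applying the triangle inequality gives
\begin{align*}
\|\theta_{g,h}-\widehat\theta_{g,h}^k\|_{\Lambda_h^k} \le \|\theta_{g,h}\|_{(\Lambda_h^k)^{-1}} + \|S_k\|_{(\Lambda_h^k)^{-1}}.
\end{align*}
The first term is at most $\|\theta_{g,h}\|_2\le\sqrt d$, because $\Lambda_h^k\succeq I$.

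For the second term I would invoke the anytime self-normalized inequality (Theorem 1 of Abbasi-Yadkori et al., 2011) with regularizer $I$ and confidence $\delta/H$. It gives, simultaneously for all $k\ge1$ with probability at least $1-\delta/H$,
\begin{align*}
\|S_k\|^2_{(\Lambda_h^k)^{-1}} \le 2\log\frac{H\det(\Lambda_h^k)^{1/2}}{\delta}.
\end{align*}
Using $\det(\Lambda_h^k)\le(1+K)^d$ for $k\le K+1$ (the same bound as in the proof of \Cref{lem:cost of opt}), the right-hand side is at most $d\log(K+1)+2\log(H/\delta)$. A union bound over $h\in[H]$ then yields probability at least $1-\delta$ for all $h$ and $k$.

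The final step is arithmetic: show that
\begin{align*}
\sqrt d+\sqrt{d\log(K+1)+2\log(H/\delta)} \le 2\sqrt{2d\log(2KH/\delta)}.
\end{align*}
Each summand is at most $\sqrt{2d\log(2KH/\delta)}$, using $K+1\le 2K$, $d\ge1$, and $\log(2KH/\delta)\ge\log 2\ge 1/2$. The main obstacle I expect is bookkeeping rather than ideas. Specifically, I need to confirm that the measurability and index conventions match: $\Lambda_h^k$ and $\widehat\theta_{g,h}^k$ both use episodes $\tau\le k-1$, and $\phi_\tau$ is predictable while $\varepsilon_\tau$ is revealed afterwards. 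I also need the self-normalized bound to hold uniformly in $k$, so that no additional union bound over episodes is needed. Adding such a union bound would only introduce an extra $\log K$ factor, which the target constant already absorbs.
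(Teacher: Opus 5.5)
Your route is the standard argument behind the cited lemma; the paper itself gives no proof, only the citation. You split off the regularization bias $\|\theta_{g,h}\|_{(\Lambda_h^k)^{-1}}\le\sqrt d$, apply the anytime self-normalized bound of Abbasi-Yadkori et al.\ at confidence $\delta/H$ with $\det(\Lambda_h^k)\le(1+K)^d$, and take a union bound over $h$. Your decomposition, determinant bound and final arithmetic are all correct.

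The one step that fails as written is the martingale-difference claim for your filtration. You let $\mathcal{F}_\tau$ contain everything observed up to and including $(s_h^{\tau+1},a_h^{\tau+1})$. This includes the cost feedback $g_{h'}^{\tau+1}(s_{h'}^{\tau+1},a_{h'}^{\tau+1})$ for $h'<h$ from the same episode. The model only says that the whole vector $g^k=\{g_h^k\}_{h=1}^H$ is drawn i.i.d.\ across episodes from $G$ with $\bbE[g_h^k(s,a)\mid s,a]=g_h(s,a)$. Nothing prevents $G$ from correlating different steps within an episode. For instance, take $d=1$, $\phi\equiv 1$, and let $G$ put mass $1/2$ on $g\equiv 0$ and mass $1/2$ on $g\equiv 1$, so that $\theta_{g,h}=1/2$. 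Then the step-$1$ cost reveals the step-$h$ cost, and $\bbE[\varepsilon_{\tau+1}\mid\mathcal{F}_\tau]=\pm 1/2\neq 0$ for $h\ge 2$. The independence of $g^{\tau}$ from the past and from the policy choice, which you invoke, does not cover conditioning on costs already revealed earlier in the same episode.

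The repair is a coarser filtration for each fixed $h$. Let $\mathcal{G}_\tau$ be generated by all data of episodes $1,\dots,\tau$ together with the states and actions $s_{1:h}^{\tau+1},a_{1:h}^{\tau+1}$ of episode $\tau+1$, but not its cost feedback. Then $\phi_{\tau+1}$ is $\mathcal{G}_\tau$-measurable, $\varepsilon_{\tau+1}$ is $\mathcal{G}_{\tau+1}$-measurable, and the $\sigma$-fields are nested. The policy $\widehat\pi^{\tau+1}$ is a function of episodes $1,\dots,\tau$, and the partial trajectory of episode $\tau+1$ depends only on this policy, the action sampling and the transitions. Hence $g^{\tau+1}$ is independent of $\mathcal{G}_\tau$, which gives $\bbE[\varepsilon_{\tau+1}\mid\mathcal{G}_\tau]=g_h(s_h^{\tau+1},a_h^{\tau+1})-\phi_{\tau+1}^\top\theta_{g,h}=0$. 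Moreover, $\varepsilon_{\tau+1}$ is conditionally $1/2$-sub-Gaussian, since it is confined to an interval of length one. With this filtration the rest of your proof goes through verbatim.
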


\begin{lemma}[Lemma 22 of \citet{cassel2024warm}]\label{lem:cassel 22}
Let \( \widehat{\psi}_h^k : \mathbb{R}^{|\calS|} \to \mathbb{R}^d \) be the linear operator defined in line 16 of \Cref{alg:main-linear}. For all \( h \in [H] \), let \( \widehat \calV_h \subset \mathbb{R}^{|\calS|} \) be a set of mappings \( \widehat V : \calS \to \mathbb{R} \) such that \( \|\widehat V\|_{\infty} \leq \beta_Q \) and \( \beta_Q \geq 1 \). With probability at least \( 1 - \delta \), for all \( h \in [H] \), \( \widehat V \in \widehat\calV_{h+1} \), and \( k \geq 1 \),
\[
\| (\psi_h - \widehat{\psi}_h^k)\widehat V \|_{\Lambda_h^k} \leq 4 \beta_Q \sqrt{d \log(K+1) + 2 \log(H \calN_\epsilon/\delta)},
\]
where \( \epsilon \leq \beta_Q \sqrt{d}/(2K) \), \( \mathcal{N}_{\epsilon} = \sum_{h \in [H]} \mathcal{N}_{\epsilon}(\widehat\calV_{h}) \), and \( \mathcal{N}_{\epsilon}(\widehat\calV_{h}) \) is the \( \epsilon \)-covering number of \( \widehat\calV_h \) with respect to the $\ell_\infty$-norm.
\end{lemma}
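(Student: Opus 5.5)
This is a self-normalized concentration bound for the least-squares transition estimate, uniform over a function class. I would follow the standard route of \citet{jin2020provably} (their Lemma D.4), adapted to the epoch-free design matrix $\Lambda_h^k$.

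\textbf{Step 1: decompose the error.} Fix $h$ and a deterministic $\widehat V$ with $\|\widehat V\|_\infty\le\beta_Q$. Write $\Lambda_h^k=I+\sum_{\tau<k}\phi_\tau\phi_\tau^\top$ with $\phi_\tau=\phi(s_h^\tau,a_h^\tau)$. Using $\psi_h\widehat V=\Lambda_h^{k}{}^{-1}\bigl(\psi_h\widehat V+\sum_{\tau<k}\phi_\tau\phi_\tau^\top\psi_h\widehat V\bigr)$, we get
\[
(\psi_h-\widehat\psi_h^k)\widehat V=(\Lambda_h^k)^{-1}\psi_h\widehat V-(\Lambda_h^k)^{-1}\sum_{\tau<k}\phi_\tau\,\epsilon_\tau(\widehat V),
\]
where $\epsilon_\tau(\widehat V)=\widehat V(s_{h+1}^\tau)-\sum_{s'}\bbP_h(s'|s_h^\tau,a_h^\tau)\widehat V(s')$. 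The first term has $\Lambda_h^k$-norm $\|\psi_h\widehat V\|_{(\Lambda_h^k)^{-1}}\le\|\psi_h\widehat V\|_2\le\beta_Q\sqrt d$, by the bound on $\sum_{s'}|\psi_h|(s')$. The second term has $\Lambda_h^k$-norm $\|\sum_\tau\phi_\tau\epsilon_\tau\|_{(\Lambda_h^k)^{-1}}$.

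\textbf{Step 2: fixed function, then union over the net.} For fixed $\widehat V$, the $\epsilon_\tau$ are martingale differences bounded by $2\beta_Q$, hence $2\beta_Q$-subgaussian, with respect to the natural filtration. The self-normalized bound of Abbasi-Yadkori et al., which holds uniformly in $k$, gives with probability $1-\delta'$
\[
\Bigl\|\sum_\tau\phi_\tau\epsilon_\tau\Bigr\|^2_{(\Lambda_h^k)^{-1}}\le 8\beta_Q^2\bigl(\tfrac d2\log(K+1)+\log(1/\delta')\bigr),
\]
where we use $\det\Lambda_h^k\le (K+1)^d$. Take an $\epsilon$-net $\calC_h$ of $\widehat\calV_{h+1}$ in $\ell_\infty$, and set $\delta'=\delta/(H\calN_\epsilon)$, union bounding over $h$ and over the net points.

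\textbf{Step 3: extend to the whole class.} This is the main obstacle, because $\widehat V$ may be data dependent, so the martingale argument does not apply to it directly. For arbitrary $\widehat V\in\widehat\calV_{h+1}$, pick $V'\in\calC_h$ with $\|\widehat V-V'\|_\infty\le\epsilon$. Then $|\epsilon_\tau(\widehat V)-\epsilon_\tau(V')|\le2\epsilon$, and
\[
\Bigl\|\sum_\tau\phi_\tau(\epsilon_\tau(\widehat V)-\epsilon_\tau(V'))\Bigr\|_{(\Lambda_h^k)^{-1}}\le 2\epsilon\sum_\tau\|\phi_\tau\|_{(\Lambda_h^k)^{-1}}\le 2\epsilon\sqrt{K\cdot d}.
\]
Here Cauchy--Schwarz is combined with $\sum_\tau\|\phi_\tau\|^2_{(\Lambda_h^k)^{-1}}\le d$, which follows from the trace identity. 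A cruder bound of $2\epsilon K$ also suffices. With $\epsilon\le\beta_Q\sqrt d/(2K)$, this perturbation is at most $\beta_Q\sqrt d$.

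\textbf{Step 4: combine.} Summing the three contributions gives
\[
\beta_Q\sqrt d+\beta_Q\sqrt d+2\sqrt2\,\beta_Q\sqrt{d\log(K+1)+2\log(H\calN_\epsilon/\delta)},
\]
which is at most $4\beta_Q\sqrt{d\log(K+1)+2\log(H\calN_\epsilon/\delta)}$, because $d\log(K+1)\ge d$ up to constants. The remaining risk is bookkeeping constants, for example ensuring $\log(K+1)\ge$ a suitable constant; if this fails, I would absorb it by slightly loosening the constant inside the square root.
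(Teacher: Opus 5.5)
Your argument is correct and is the standard one. The paper does not prove this lemma itself; it imports it from \citet{cassel2024warm}. That source proves it the way you do: split off the bias term $\|\psi_h\widehat V\|_{(\Lambda_h^k)^{-1}}\le\beta_Q\sqrt d$, then control the noise term with a self-normalized bound over a fixed $\epsilon$-net plus a discretization error, i.e.\ Lemma D.4 of \citet{jin2020provably}.

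Two bookkeeping points need fixing.

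First, $\sqrt{8\beta_Q^2(\tfrac d2\log(K+1)+\log(1/\delta'))}=2\beta_Q\sqrt{d\log(K+1)+2\log(1/\delta')}$, not $2\sqrt2\,\beta_Q\sqrt{\cdot}$. With the correct factor, the final combination only needs $d\le d\log(K+1)+2\log(H\calN_\epsilon/\delta)$, which holds for $K\ge 2$.
\begin{itemize}
\item You can also sharpen the subgaussian parameter: $\widehat V(s_{h+1}^\tau)$ lies in an interval of length $2\beta_Q$, so by Hoeffding's lemma the noise is $\beta_Q$-subgaussian, not just $2\beta_Q$.
\item Then, writing $X$ for the quantity under the square root, $2\beta_Q\sqrt d+\beta_Q\sqrt X\le 4\beta_Q\sqrt X$ holds for every $K\ge1$.
\item So your fallback of ``loosening the constant inside the square root'' is unnecessary, and it would not prove the lemma as stated anyway, since the constants are fixed.
\end{itemize}

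Second, in Step 3 the refined bound $2\epsilon\sqrt{Kd}$ gives $\beta_Q d/\sqrt K$, which is at most $\beta_Q\sqrt d$ only when $d\le K$. The cruder bound $2\epsilon K\le\beta_Q\sqrt d$ is the one that works unconditionally with $\epsilon\le\beta_Q\sqrt d/(2K)$, so rely on that one.
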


\begin{lemma}[Lemma 17 of \citet{cassel2024warm}]\label{lem:cassel 17}
    For any \( \lambda > 0 \) and matrices \( \Lambda, \Lambda' \in \mathbb{R}^{d \times d} \) satisfying \( \Lambda, \Lambda' \succeq \lambda I \), we have that
    \[
    \|\Lambda^{1/2} - (\Lambda')^{1/2}\|_2 \leq \frac{1}{2\sqrt{\lambda}} \|\Lambda - \Lambda'\|_2.
    \]
\end{lemma}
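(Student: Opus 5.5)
The approach is the standard Sylvester-type identity for matrix square roots, combined with testing against an extremal eigenvector of the difference of the roots. Throughout, I read $\Lambda,\Lambda'\succeq\lambda I$ as including that both matrices are symmetric, as is implicit in the Loewner order. Then $A:=\Lambda^{1/2}$ and $B:=(\Lambda')^{1/2}$ are well-defined symmetric positive definite matrices. Since the eigenvalues of $A$ are square roots of those of $\Lambda$, we have $A\succeq\sqrt{\lambda}I$, and likewise $B\succeq\sqrt{\lambda}I$.

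Set $X:=A-B$, which is symmetric. The key algebraic step is the identity
\[
\Lambda-\Lambda' = A^2-B^2 = A(A-B)+(A-B)B = AX+XB ,
\]
which can be checked by expanding: $AA-AB+AB-BB$. Note that we cannot simply factor $A^2-B^2=(A+B)(A-B)$, because $A$ and $B$ need not commute. The split $AX+XB$ avoids this issue.

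Because $X$ is symmetric, $\|X\|_2=|\mu|$ for some eigenvalue $\mu$ of $X$ with a unit eigenvector $v$. Testing the identity against $v$ gives
\[
v^\top(\Lambda-\Lambda')v = v^\top A X v + v^\top X B v = \mu\, v^\top A v + \mu\, v^\top B v .
\]
The second term uses $v^\top X = (Xv)^\top = \mu v^\top$, again by symmetry of $X$. Since $v^\top A v\ge\sqrt{\lambda}$ and $v^\top B v\ge\sqrt{\lambda}$, this yields
\[
\|\Lambda-\Lambda'\|_2 \;\ge\; |v^\top(\Lambda-\Lambda')v| \;=\; |\mu|\,(v^\top A v+v^\top B v) \;\ge\; 2\sqrt{\lambda}\,\|X\|_2 .
\]
Rearranging gives $\|\Lambda^{1/2}-(\Lambda')^{1/2}\|_2\le \frac{1}{2\sqrt\lambda}\|\Lambda-\Lambda'\|_2$.

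The only delicate point is the non-commutativity of $A$ and $B$. It is handled by two choices made above: the asymmetric split $AX+XB$, and using an eigenvector of $X$ rather than of $A$ or $B$. Everything else is routine.
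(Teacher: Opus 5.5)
Your proof is correct and complete. The identity $\Lambda-\Lambda' = AX+XB$ with $X=\Lambda^{1/2}-(\Lambda')^{1/2}$, tested against a unit eigenvector of $X$ for its largest-magnitude eigenvalue $\mu$, gives $\|\Lambda-\Lambda'\|_2 \ge |\mu|\,v^\top(A+B)v \ge 2\sqrt{\lambda}\,\|X\|_2$. Your symmetric reading of the hypothesis also matches how the lemma is used here, namely for inverses of design matrices. The paper gives no proof of its own and simply imports the lemma from \citet{cassel2024warm}. Your eigenvector argument is the standard proof of this fact, so there is nothing to add.
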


\begin{lemma}[Lemma 3 of \citet{cassel2024warm}]\label{lem:cassel 3}
For any \( e \in [E] \) and \( v \in \mathbb{R}^d \), we have that
\[
\left( \phi(s_h, a_h) - \bar{\phi}_h^{k_e}(s_h, a_h) \right)^\top v 
\leq \left( 4 \beta_w^2 \| \phi(s_h, a_h) \|^2_{(\Lambda_h^k)^{-1}} + 2K^{-1} \right) 
\left| \phi(s_h, a_h)^\top v \right|.
\]
\end{lemma}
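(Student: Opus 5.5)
The plan is to reduce the claim to a scalar inequality about the sigmoid and then convert the epoch-start norm into the current norm via the determinant doubling rule. Throughout I take $k\in K_e$, which is the setting in which the lemma is applied, for example in the proof of \Cref{lem:cost of opt}. Abbreviate $\phi=\phi(s_h,a_h)$ and $y=\|\phi\|_{(\Lambda_h^{k_e})^{-1}}$. By the definition of the contracted feature (line 6 of \Cref{alg:main-linear}),
\[
\phi-\bar\phi_h^{k_e}(s_h,a_h)=\bigl(1-\sigma(-\beta_w y+\log K)\bigr)\phi .
\]
The scalar factor lies in $[0,1]$, so
\[
\bigl(\phi-\bar\phi_h^{k_e}\bigr)^\top v\le \bigl(1-\sigma(-\beta_w y+\log K)\bigr)\,|\phi^\top v| .
\]
It therefore suffices to bound $1-\sigma(-\beta_w y+\log K)$.

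\textbf{Step 1: the scalar bound.} Write $x=\beta_w y\ge 0$. Then
\[
1-\sigma(-x+\log K)=\sigma(x-\log K)=\frac{1}{1+Ke^{-x}} .
\]
I will prove $1/(1+Ke^{-x})\le 2x^2+2/K$ by splitting into two cases.
\begin{itemize}
\item If $x>1$: the left side is at most $1<x^2$.
\item If $x\le 1$: the left side is at most $e^{x}/K$, and $e^x\le 1+x+x^2$. Since $x\le 1+x^2$, this gives $1+x+x^2\le 2+2x^2$. Hence the left side is at most $(2+2x^2)/K\le 2/K+2x^2$, using $K\ge 1$.
\end{itemize}
In both cases,
\[
1-\sigma(-\beta_w y+\log K)\le 2\beta_w^2\|\phi\|^2_{(\Lambda_h^{k_e})^{-1}}+2K^{-1}.
\]

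\textbf{Step 2: from the epoch-start matrix to the current one.} Because $k\in K_e$, no new epoch was triggered, so $\det(\Lambda_h^k)\le 2\det(\Lambda_h^{k_e})$. Also $\Lambda_h^{k_e}\preceq\Lambda_h^k$. Applying \Cref{lem:abbasi 12} exactly as in the proof of \Cref{lem:cost of opt} gives
\[
\|\phi\|^2_{(\Lambda_h^{k_e})^{-1}}\le 2\|\phi\|^2_{(\Lambda_h^k)^{-1}} .
\]
The case $\phi=0$ is trivial. Substituting into Step 1 yields the factor $4\beta_w^2\|\phi\|^2_{(\Lambda_h^k)^{-1}}+2K^{-1}$, which completes the argument.

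The only mildly delicate point is the elementary sigmoid inequality in Step 1. In particular, the case $x\le 1$ needs the bound $e^x\le 1+x+x^2$, which holds on $[0,1]$. Everything else is a direct substitution.
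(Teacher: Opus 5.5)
Your proof is correct and takes essentially the standard route behind Lemma 3 of \citet{cassel2024warm}, which the paper cites without reproducing. You write $\phi-\bar\phi_h^{k_e}=\sigma(\beta_w\|\phi\|_{(\Lambda_h^{k_e})^{-1}}-\log K)\,\phi$, bound that sigmoid by $2\beta_w^2\|\phi\|^2_{(\Lambda_h^{k_e})^{-1}}+2K^{-1}$ with an elementary case split, and pass to $(\Lambda_h^k)^{-1}$ via the epoch's determinant-doubling rule, as the paper itself does in the proof of \Cref{lem:cost of opt}; reading the statement with $k\in K_e$ is the intended interpretation.
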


\begin{lemma}[Lemma 8 of \citet{cassel2024warm}]\label{lem:cassel 8 num of epoch}
    The number of epochs $|E|$ is bounded by $(3/2)dH \log(2K)$.
\end{lemma}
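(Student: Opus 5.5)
This is Lemma 8 of \citet{cassel2024warm}, and the proof is a determinant-doubling count that does not depend on the constrained structure.

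By line 2 of \Cref{alg:main-linear}, a new epoch starts at episode $k>1$ only if some step $h'\in[H]$ has $\det(\Lambda_{h'}^k)\ge 2\det(\Lambda_{h'}^{k_e})$, where $k_e$ is the start of the current epoch. I would therefore charge each epoch switch after the first to one step $h'$ at which the determinant at least doubled since the previous switch.

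Fix $h\in[H]$ and let $e_1<e_2<\dots<e_m$ be the epochs whose start was charged to $h$. The design matrices only grow, since $\Lambda_h^{k+1}\succeq\Lambda_h^k$. Hence $\det(\Lambda_h^{k_e})$ is nondecreasing in $e$. Between consecutive charged switches the value $\det(\Lambda_h^{\cdot})$ at least doubles, because the reference point $\Lambda_h^{k_{e_j-1}}$ comes no earlier than $\Lambda_h^{k_{e_{j-1}}}$. It follows that $\det(\Lambda_h^{k_{e_m}})\ge 2^m\det(\Lambda_h^1)=2^m$.

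The upper bound comes from the growth of the design matrix. We have $\Lambda_h^{K+1}=I+\sum_{\tau\le K}\phi\phi^\top$ and $\|\phi\|_2\le 1$, so $\|\Lambda_h^{K+1}\|_2\le 1+K$. This gives $\det(\Lambda_h^{k})\le (K+1)^d$ for every episode, as already used in the proof of \Cref{lem:cost of opt}.

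Comparing the two bounds gives $m\le d\log_2(K+1)$. Summing over the $H$ steps and adding $1$ for the first epoch yields
\[
|E|\le 1+dH\log_2(K+1)\le 1+\tfrac{dH}{\log 2}\log(2K).
\]

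The only delicate step is matching the stated constant $\tfrac32 dH\log(2K)$, given that $1/\log 2\approx1.4427<1.5$. The slack $(1.5-1.4427)\,dH\log(2K)\ge 0.057\,dH\log 2\cdot$ (for $K\ge1$) must absorb both the additive $1$ and the gap $\log(K+1)\le\log(2K)$. For $K\ge 2$ or $dH\ge 18$ this slack is enough. The remaining tiny case is $K=1$, where there is only one epoch and the claim is trivial up to checking $1\le\tfrac32 dH\log 2$.

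So the main obstacle is just this constant bookkeeping. If it fails in some corner case, I would instead sharpen the count by noting that the first epoch's determinant is already $1$, so the $+1$ is covered by using $2^{m}\le(K+1)^d$ with strict inequality.
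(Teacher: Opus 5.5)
Your charging argument is correct and is the standard determinant-doubling proof. The paper does not prove this lemma itself; it imports it from \citet{cassel2024warm}. Charging each switch after the first to a step $h'$ and using that the reference determinant is nondecreasing does give $\det(\Lambda_h^{k_{e_{m_h}}})\ge 2^{m_h}$ for every $h$.

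The gap is in the final constant comparison. Your bound $1+dH\log_2(K+1)$ does not imply $\tfrac32 dH\log(2K)$ in general, and your claim that the slack suffices ``for $K\ge 2$ or $dH\ge 18$'' is false. For $d=H=1$ and $K=2$ you get $1+\log_2 3\approx 2.585$, which exceeds $\tfrac32\log 4\approx 2.079$. The same failure occurs for $d=H=1$ and every $2\le K\le 8$. With your crude slack $0.057\,dH\log(2K)$ you would actually need $dH\log(2K)\gtrsim 17.5$. The inequality $\log(K+1)\le\log(2K)$ works in your favour, so it needs no absorbing. Your fallback, strict inequality in $2^m\le (K+1)^d$, does not remove an additive $+1$ either.

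The fix is one line. The epoch test at episode $k\le K$ only involves $\Lambda_h^k=I+\sum_{\tau<k}\phi\phi^\top$, which has at most $K-1$ rank-one terms. Hence $\|\Lambda_h^k\|_2\le K$ and $\det(\Lambda_h^k)\le K^d$; you never need $\Lambda_h^{K+1}$. Then $m_h\le d\log_2 K$, and
\[
|E|\le 1+dH\log_2 K\le \tfrac32 dH\log 2+\tfrac32 dH\log K=\tfrac32 dH\log(2K),
\]
using $1\le \tfrac32\log 2\approx 1.04\le \tfrac32 dH\log 2$ and $1/\log 2<\tfrac32$. This holds for all $d,H,K\ge 1$ with no case analysis.
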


\begin{lemma}[Lemma 12 of \citet{abbasi2011improved}]\label{lem:abbasi 12}
Let \( A, B, C \) be positive semi-definite matrices such that \( A = B + C \). Then, we have that
\[
\sup_{x \neq 0} \frac{x^\top A x}{x^\top B x} \leq \frac{\det(A)}{\det(B)}.
\]
\end{lemma}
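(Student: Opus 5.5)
We will prove the statement under the implicit assumption that $B$ is positive definite (otherwise $\det(B)=0$ and the ratio $x^\top A x / x^\top B x$ is not well defined for every $x\neq 0$). This is the only case used in the paper, namely $B=\Lambda_h^{k_e}\succeq I$. The plan is to whiten with respect to $B$, which reduces both sides to spectral quantities of a single PSD matrix.

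First, let $B^{1/2}$ be the symmetric positive definite square root of $B$. Define
\[
M = B^{-1/2} C B^{-1/2}.
\]
Since $C\succeq 0$, the matrix $M$ is symmetric and PSD. Let $\lambda_1,\ldots,\lambda_d\geq 0$ be its eigenvalues.

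Second, for $x\neq 0$ substitute $y=B^{1/2}x\neq 0$. Then $x^\top B x=\|y\|_2^2$ and $x^\top A x = \|y\|_2^2 + y^\top M y$. Hence
\[
\frac{x^\top A x}{x^\top B x} = 1 + \frac{y^\top M y}{y^\top y}.
\]
By the Rayleigh quotient characterization, the supremum over $y\neq 0$ of the right-hand side equals $1+\lambda_{\max}(M)$.

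Third, compute the determinant ratio. From $A = B^{1/2}(I+M)B^{1/2}$ we get
\[
\frac{\det(A)}{\det(B)}=\det(I+M)=\prod_{i=1}^d(1+\lambda_i).
\]
Every factor satisfies $1+\lambda_i\geq 1$, so the product is at least $1+\lambda_{\max}(M)$. This is exactly the supremum from the second step, which completes the argument.

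There is no substantial obstacle. The only point needing care is the degenerate case in which $B$ is singular, and we handle it by the positive-definiteness convention above. Everything else is a direct eigenvalue computation.
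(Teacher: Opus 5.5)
Your argument is correct and complete. After whitening, the left-hand side equals $1+\lambda_{\max}(M)$ and the right-hand side equals $\det(I+M)=\prod_i(1+\lambda_i)$. Since every $\lambda_i\ge 0$, the product dominates $1+\lambda_{\max}(M)$.

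The paper gives no proof of its own here; the lemma is simply imported from \citet{abbasi2011improved}. So there is no competing route to compare, and your whitening computation supplies exactly what the citation stands in for. Restricting to $B\succ 0$ is the right reading, since otherwise both sides can be undefined (zero denominators, $\det(B)=0$).

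One small correction to your side remark about where the lemma is used. In the proof of \Cref{lem:cost of opt} it is applied with $A=(\Lambda_h^{k_e})^{-1}$ and $B=(\Lambda_h^{k})^{-1}$, not with $B=\Lambda_h^{k_e}$. So $B$ satisfies $0\prec B\preceq I$ rather than $B\succeq I$. This is still positive definite, so your proof covers the case the paper actually needs.
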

\begin{lemma}[Lemma D.2 of \citet{jin2020provably}]\label{lem: jin d2}
Let \( \{\phi_t\}_{t \geq 0} \) be a bounded sequence in \( \mathbb{R}^d \) satisfying \( \sup_{t \geq 0} \|\phi_t\| \leq 1 \). Let \( \Lambda_0 \in \mathbb{R}^{d \times d} \) be a positive definite matrix. For any \( t \geq 0 \), define
\[
\Lambda_t = \Lambda_0 + \sum_{j=1}^t \phi_j \phi_j^\top.
\]
Then, if the smallest eigenvalue of \( \Lambda_0 \) satisfies \( \lambda_{\min}(\Lambda_0) \geq 1 \), we have
\[
\log \left[ \frac{\det(\Lambda_t)}{\det(\Lambda_0)} \right]
\leq \sum_{j=1}^t \phi_j^\top \Lambda_{j-1}^{-1} \phi_j
\leq 2 \log \left[ \frac{\det(\Lambda_t)}{\det(\Lambda_0)} \right].
\]
\end{lemma}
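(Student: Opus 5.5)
The plan is to reduce the determinant ratio to a telescoping product via the matrix determinant lemma, and then compare the two sides term by term with a scalar inequality. Set $x_j := \phi_j^\top \Lambda_{j-1}^{-1}\phi_j$ for $j\ge 1$. Since $\Lambda_j = \Lambda_{j-1} + \phi_j\phi_j^\top$ and $\Lambda_{j-1}\succ 0$, the matrix determinant lemma gives
\begin{align*}
\det(\Lambda_j) = \det(\Lambda_{j-1})\bigl(1 + \phi_j^\top \Lambda_{j-1}^{-1}\phi_j\bigr) = \det(\Lambda_{j-1})(1+x_j).
\end{align*}
Taking logarithms and telescoping from $j=1$ to $t$ yields the identity
\begin{align*}
\log\left[\frac{\det(\Lambda_t)}{\det(\Lambda_0)}\right] = \sum_{j=1}^t \log(1+x_j).
\end{align*}

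Next, I will show that each $x_j\in[0,1]$. Nonnegativity is immediate from $\Lambda_{j-1}^{-1}\succ 0$. For the upper bound, note that $\Lambda_{j-1}\succeq \Lambda_0$, since it is $\Lambda_0$ plus a sum of positive semidefinite rank-one terms. Hence $\lambda_{\min}(\Lambda_{j-1})\ge \lambda_{\min}(\Lambda_0)\ge 1$, so $\Lambda_{j-1}^{-1}\preceq I$. Combined with $\|\phi_j\|_2\le 1$, this gives $x_j\le \|\phi_j\|_2^2\le 1$. This is the only place the eigenvalue hypothesis enters, and it is exactly what makes the factor $2$ available.

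Finally, the two bounds follow from scalar inequalities applied termwise.
\begin{itemize}
\item \emph{Lower bound.} The inequality $\log(1+x)\le x$ holds for all $x\ge 0$, so $\sum_j\log(1+x_j)\le \sum_j x_j$.
\item \emph{Upper bound.} On $[0,1]$ the function $\log(1+x)$ is concave and vanishes at $0$. It therefore lies above its chord, giving $\log(1+x)\ge x\log 2\ge x/2$ because $\log 2 > 1/2$. Thus $x_j\le 2\log(1+x_j)$, and summing gives $\sum_j x_j\le 2\log[\det(\Lambda_t)/\det(\Lambda_0)]$.
\end{itemize}

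I do not expect a genuine obstacle. The only point needing care is establishing $x_j\le 1$ through the monotonicity $\Lambda_{j-1}\succeq\Lambda_0$, because without it the chord bound on $[0,1]$ is unavailable.
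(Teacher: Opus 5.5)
Your proof is correct and matches the standard argument for Lemma D.2 in \citet{jin2020provably}; the paper does not prove the lemma itself and only cites that source. The cited argument uses the same steps as yours: the rank-one determinant identity $\det(\Lambda_j)=\det(\Lambda_{j-1})(1+\phi_j^\top\Lambda_{j-1}^{-1}\phi_j)$, telescoped over $j$; the bound $\phi_j^\top\Lambda_{j-1}^{-1}\phi_j\le 1$, obtained from $\Lambda_{j-1}\succeq\Lambda_0$ and $\lambda_{\min}(\Lambda_0)\ge 1$; and the scalar inequality $\log(1+x)\le x\le 2\log(1+x)$ on $[0,1]$.
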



\section{Numerical Experiment}
\label{sec:exp}
We evaluate \Cref{alg:main-linear} on a finite-horizon job-scheduling CMDP closely following the setup of \citet{ghosh2022provably} with modifications to incorporate adversarial losses. The number of episodes and the horizon are set to $K=100{,}000$ and $H=10$, respectively, and the state space is $\calS = \{0,1,\ldots,9\}$. At step $h$, $s_h$ denotes the number of remaining jobs in the stack, and each episode begins with the initial state $s_1=9$. The agent chooses $a_h\in\calA = \{0,1\}$, where $a_h=1$ corresponds to processing the current job and $a_h=0$ corresponds to idling. Specifically, if $a_h=1$, then $s_{h+1}=\max\{s_h-2,0\}$ with probability $0.8$, $s_{h+1}=\max\{s_h-1,0\}$ with probability $0.1$, and $s_{h+1}=s_h$ otherwise. If $a_h=0$, then $s_{h+1}=s_h$.

The loss and cost functions are defined as follows. To simulate an adversarial setting, in each episode $k$, the loss is chosen between two functions, $f^{(1)}$ and $f^{(2)}$, with probabilities $0.9-0.9(k-1)/(K-1)$ and $0.1+0.9(k-1)/(K-1)$, respectively. These functions are defined as
\begin{align*}
    f^{(1)}_h(a_h)=
    \begin{cases}
    1 & a_h=0,\\
    0.55 & a_h=1 \text{ and } h\in\{3,4,5,6\},\\
    0.2 & a_h=1 \text{ and } h\notin\{3,4,5,6\},
    \end{cases}
    \quad f^{(2)}_h(a_h)=
    \begin{cases}
    1 & a_h=0,\\
    0.6 & a_h=1 \text{ and } h\in\{4,5,6\},\\
    0.2 & a_h=1 \text{ and } h\notin\{4,5,6\}.
    \end{cases}
\end{align*}
The cost is defined as $g_h(s_h,a_h,s_{h+1}) = 1-(s_h-s_{h+1})/2$ for all $h$, and the cost budget is set to $b=5.6$. 

Figure~\ref{fig:main} summarizes the results of running \Cref{alg:main-linear} for $K=100{,}000$ episodes. To promote learning, we set the parameters as $\alpha = 0.1,\ \beta_b = K^{1/4},$ and $\beta_w = \beta_b\log K$, while keeping the other parameters same as in the setup of \Cref{alg:main-linear}. As shown in \Cref{fig:regret}, the regret grows sublinearly in $K$, despite the fact that the losses are not sampled from a fixed distribution. Furthermore, \Cref{fig:violation} shows that while the constraint violation grows rapidly in the early phase, it eventually converges to $0$ approximately after episode 45,000. These results support our main claim that both regret and constraint violation are bounded by sublinear terms.

\section{The Use of Large Language Models}
Portions of the text were polished using ChatGPT-5, which was employed for grammar checking and sentence refinement. 

\end{document}